\DeclareMathOperator*{\argmin}{arg\,min}
\newtheorem{defn}{Definition}[section]
\newcommand{\alg}{AdaCRR\xspace}
\newcommand{\adaht}{AdaHT\xspace}
\newcommand{\tor}{\textsc{Torrent}\xspace}
\newcommand{\crr}{\textsc{CRR}\xspace}
\newcommand{\rob}{RobGrad\xspace}
\newcommand{\indep}{\rotatebox[origin=c]{90}{$\models$}}
    \newcommand{\intl}{I}
    \newcommand{\intlub}{{I}}
    \newcommand{\x}{\mathbf{x}}
    \newcommand{\tx}{\tilde{\mathbf{x}}}
    \newcommand{\tX}{\tilde{X}}
    \newcommand{\uu}{\mathbf{u}}
    \newcommand{\tT}{t'}
    \newcommand{\w}{\mathbf{w}}
    \newcommand{\dw}{\Delta\mathbf{w}}
    \newcommand{\tn}{\Tilde{n}}
    \newcommand{\bb}{\mathbf{b}}
    \newcommand{\vb}{\mathbf{v}}
    \newcommand{\rb}{\mathbf{r}}
    \newcommand{\y}{\mathbf{ y}}
    \newcommand{\z}{\mathbf{z}}
    \newcommand{\E}{\mathbb{E}}
    \newcommand{\epb}{\pmb{\epsilon}}
    \newcommand{\lb}{\pmb{\lambda}}
    \newcommand{\T}{\mathbf{t}}
    \newcommand{\A}{\mathbf{a}}
	\renewcommand{\a}{\mathbf{a}}
    \newcommand{\nocontentsline}[3]{}
    \newcommand{\tocless}[2]{\bgroup\let\addcontentsline=\nocontentsline#1{#2}\egroup}
\newcommand{\adacrr}{\alg\xspace}
\newcommand{\tO}{\widetilde{O}}
\title[Near-optimal Consistent Robust Regression]{Adaptive Hard Thresholding for \\ Near-optimal Consistent Robust Regression}
\thanks{Part of the work done while interning at Microsoft Research, India.} \Email{asuggala@cs.cmu.edu}\\
\begin{document}
\maketitle
\vspace{-20pt}
\begin{abstract}
We study the problem of robust linear regression with response variable corruptions. We consider the oblivious adversary model, where the adversary corrupts a fraction of the responses in complete ignorance of the data. We provide a nearly linear time estimator which consistently estimates the true regression vector, even with $1-o(1)$ fraction of corruptions. Existing results in this setting either don't guarantee consistent estimates or can only handle a small fraction of corruptions. We also extend our estimator to robust sparse linear regression and show that similar guarantees hold in this setting. Finally, we apply our estimator to the problem of linear regression with heavy-tailed noise and show that our estimator consistently estimates the regression vector even when the noise has unbounded variance (e.g., Cauchy distribution), for which most existing results don't even apply. Our estimator is based on a novel variant of outlier removal via hard thresholding in which the threshold is chosen adaptively and crucially relies on randomness to escape bad fixed points of the non-convex hard thresholding operation.
\end{abstract}
\begin{keywords}%
  Robust regression, heavy tails, hard thresholding, outlier removal.
\end{keywords}

\tocless

\section{Introduction}
\label{sec:intro}
We study robust least squares regression, where the goal is to robustly estimate a linear predictor from data which is potentially corrupted by an adversary. We focus on the setting where response variables are corrupted via an oblivious adversary. Such a setting has numerous applications such as click-fraud in a typical ads system, ratings-fraud in recommendation systems, as well as the less obvious application of regression with heavy tailed noise.
%

For the problem of oblivious adversarial corruptions, our goal is to design an estimator 
that satisfies three key criteria: (a) (\textbf{statistical efficiency}) estimates the optimal solution {\em consistently} with nearly optimal statistical rates, (b) (\textbf{robustness efficiency}) allows a high amount of corruption, \emph{i.e.,} fraction of corruptions is $1-o(1)$, (c) (\textbf{computational efficiency}) has the same or nearly the same computational complexity as the standard ordinary least squares (OLS) estimator. Most existing techniques do not even provide consistent estimates in this adversary model \citep{bhatia2015robust,nasrabadi2011robust,nguyen2013exact,prasad2018robust,diakonikolas2018sever,wright2010dense}. \cite{bhatia2017consistent} provides statistically consistent and computationally efficient estimator, but requires the fraction of corruptions to be less than a small constant ($\leq 1/100$). \cite{tsakonas2014convergence} study Huber-loss based regression to provide nearly optimal statistical rate with nearly optimal fraction of corruptions. But their sample complexity is sub-optimal, and more critically, the algorithm has super-linear computational complexity (in terms of number of points) and is significantly slower than the standard least squares estimator.

So the following is still an open question: {\em ``Can we design a linear time consistent estimator for  robust regression that allows almost all responses to be corrupted by an oblivious adversary?''}

We answer this question in affirmative, \emph{i.e.,} we design a novel outlier removal technique that can ensure consistent estimation at nearly optimal statistical rates, assuming Gaussian data and sub-Gaussian noise. Our results hold as long as the number of points $n$ is larger than the input dimensionality $p$ by logarithmic factors, i.e., $n\geq p\log^2 p$, and allows $n-\frac{n}{\log \log n}$ responses to be corrupted; the number of corrupted responses can be increased to $n-\frac{n}{\log n}$ with a slightly worse generalization error rate.

\begin{figure}[t]
    \centering
    \includegraphics[scale=0.21]{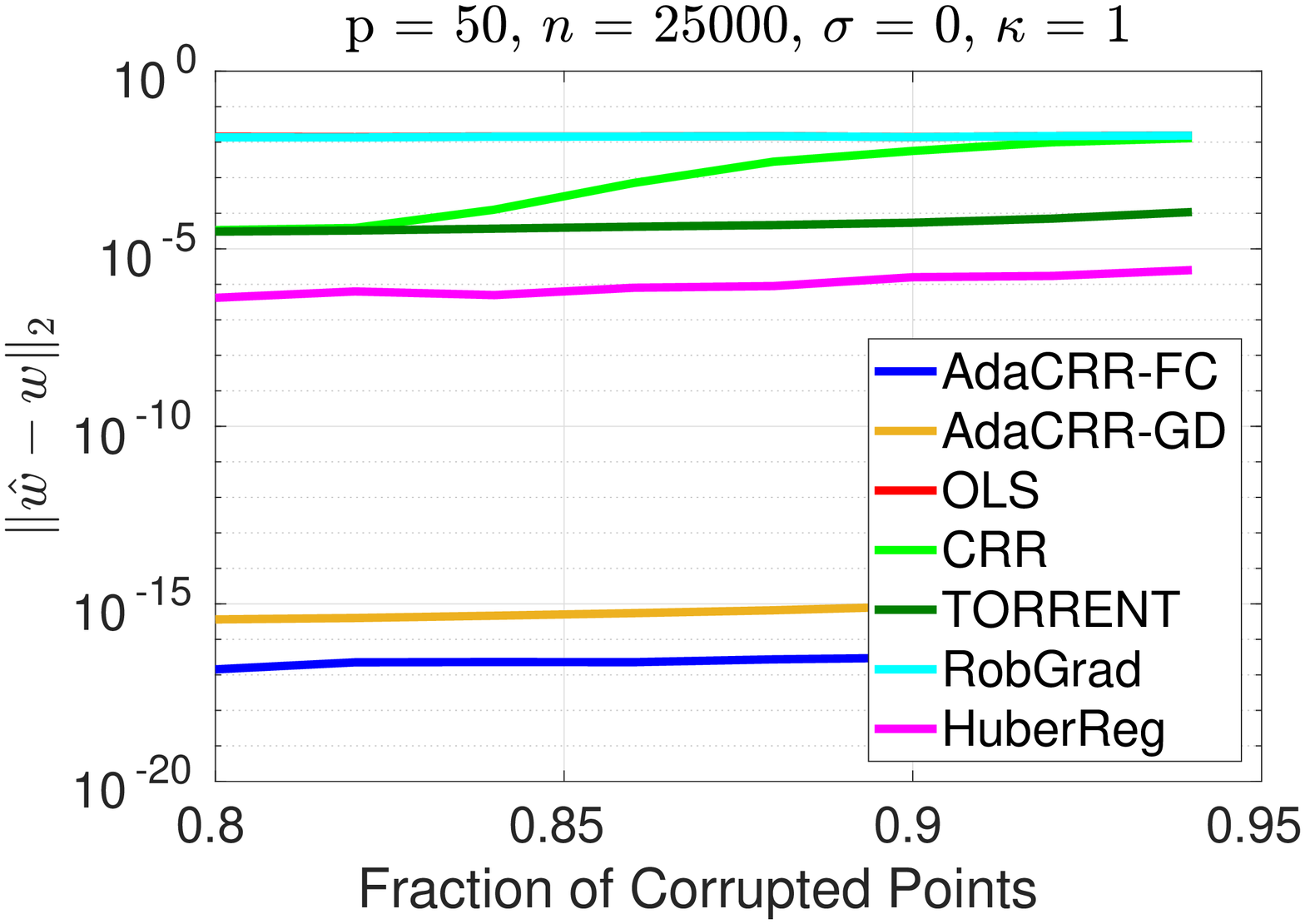}
    \includegraphics[scale=0.21]{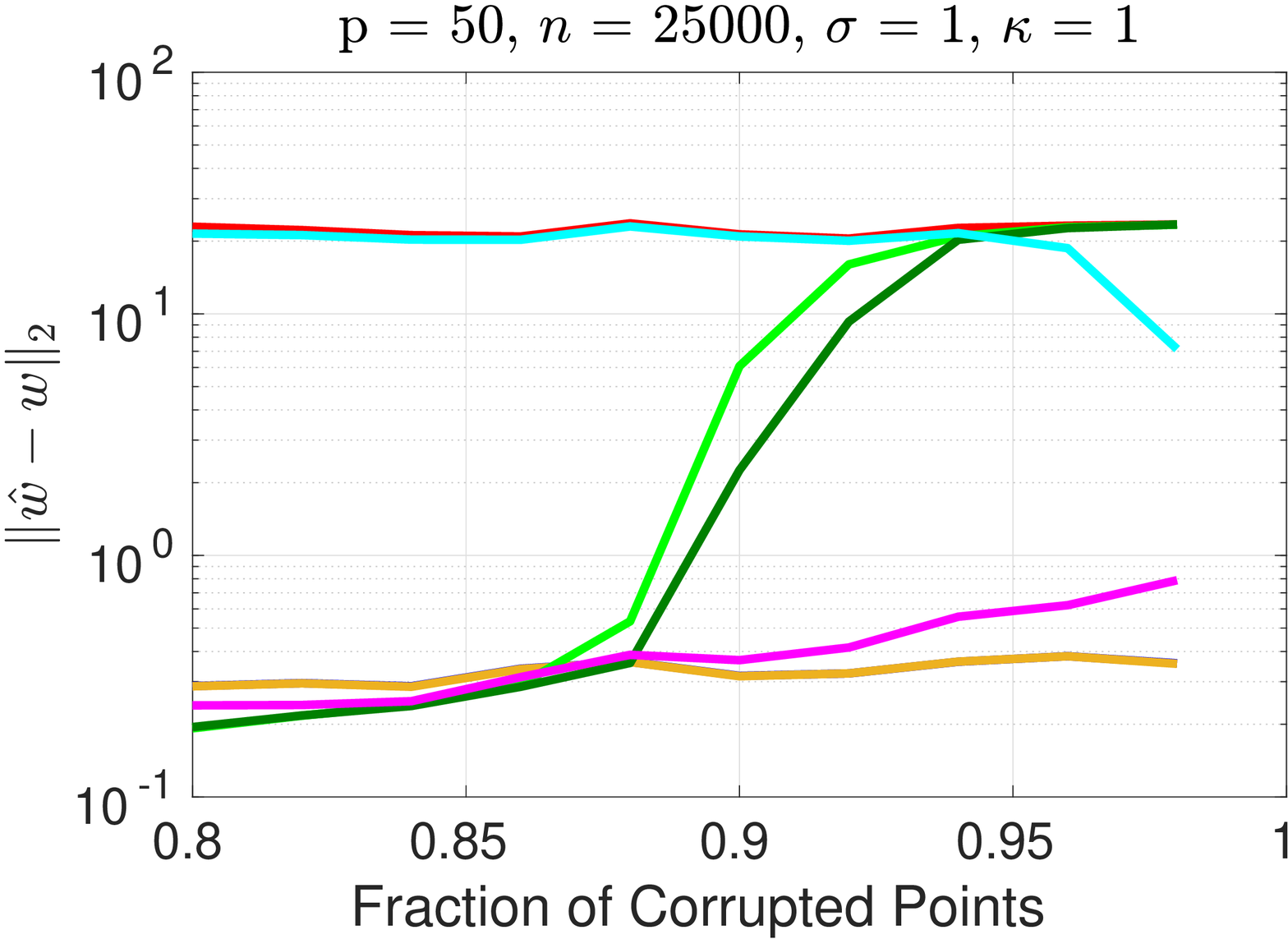}
    \includegraphics[scale=0.21]{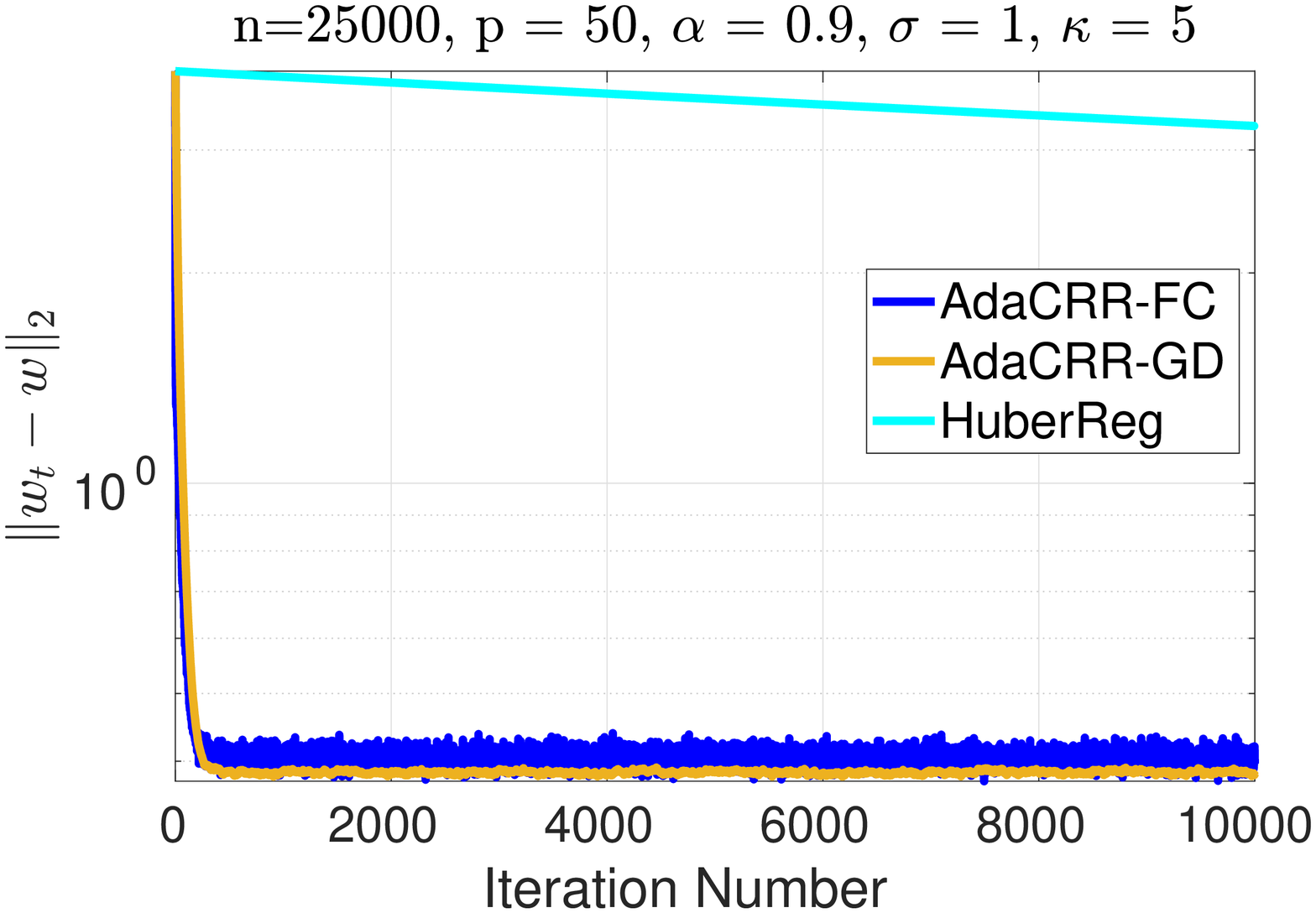}\vspace*{-3pt}
    \caption{\small{The first two plots show the parameter error (y-axis) of various estimators as we vary fraction of corruptions $\alpha$ in the robust regression setting (x-axis); noise variance is $0$ for the first plot and $1$ for the second. Plots indicate that \alg is able to tolerate significantly higher fraction of outliers than most existing methods. The last plot shows parameter error over number of iterations for robust regression, indicating \alg can be upto 100x faster as compared to Huber regression.}}\vspace{-3ex}
    \label{fig:main}
\end{figure}

Our algorithm, which we refer to as \adacrr\footnote{To be more precise, \alg is a framework and we study two algorithms instantiated from this framework, namely \alg-FC, \alg-GD which differ in how they update $\w$.}, uses a similar technique as \citet{bhatia2015robust, bhatia2017consistent}, where we threshold out points that we estimate as outliers in each iteration. However, we show that fixed thresholding operators as in \citet{bhatia2015robust, bhatia2017consistent} can get stuck at poor fixed-points in presence of a large number of outliers (see Section~\ref{sec:algorithm}). Instead, we rely on an adaptive thresholding operator that uses noise in each iteration to avoid such sub-optimal fixed-points. Similar to \citet{bhatia2015robust,bhatia2017consistent}, \alg-FC solves a standard OLS problem in each iteration, so the overall complexity is $O(T \cdot T_{OLS})$ where $T$ is the number of iterations and $T_{OLS}$ is the time-complexity of an OLS solver. We show that $T=O(\log 1/\epsilon)$ iterations are enough to obtain $\epsilon$-optimal solution, i.e., the algorithm is almost as efficient as the standard OLS solvers. Our simulations also demonstrate our claim, \emph{i.e.,} we observe that \alg-FC is significantly more efficient than Huber-loss based approaches \citep{tsakonas2014convergence}  while still ensuring consistency in presence of a large number of corruptions unlike existing thresholding techniques \citep{bhatia2015robust, bhatia2017consistent} (see Figure~\ref{fig:main}).

The above result requires $n\geq p\log^2 p$ which is prohibitively large for high-dimensional problems. Instead, we study the  problem with sparsity structure on the regression vector \citep{wainwright2009sharp}. That is, we study the problem of sparse linear regression  with oblivious response corruptions. We provide {\em first} (to the best of our knowledge) consistent estimator for the problem under standard RSC assumptions. Similar to the low-d case, we allow $1-o(1)$ fraction of points to be corrupted, but the sample complexity requirement is only $n\geq k^*\log^2 p$,  where $k^*$ is the number of non-zero entries in the optimal sparse regression vector. Existing Huber-loss based estimators \citep{tsakonas2014convergence} would be difficult to extend to this setting due to the additional non-smooth $L_1$ regularization of the regression vector. Existing hard-thresholding based consistent estimators \citep{bhatia2017consistent} marginalize out the regression vector, which is possible only in low-d due to the closed form representation of the least squares solution, and hence, do not trivially extend to sparse regression.

Finally, we enhance and apply our technique to the problem of regression with heavy-tailed noise. By treating the tail as oblivious adversarial corruptions, we obtain consistent estimators for a large class of heavy-tailed noise distributions that might not even have well-defined first or second moments. Despite being a well-studied problem, to the best of our knowledge, this is the first such result in this domain of learning with heavy tailed noise. For example, our results provide consistent estimators  with Cauchy noise, for which even the mean is not well defined, with rates which are very similar to that of standard sub-Gaussian distributions.  In contrast, most existing results~\citep{sun2018adaptive,hsu2016loss} do not even hold for Cauchy noise as they require the variance of the noise to be bounded. Furthermore, existing results mostly rely on {\em median of means} technique~\citep{hsu2016loss, lecue2017robust,prasad2018robust}, while we present a novel but natural viewpoint of modeling the tail of noise as adversarial but oblivious corruptions.

\paragraph{ Paper Organization.} Next section presents the problem setup and our main results. Section~\ref{sec:related_work} discusses some of the related works. Section~\ref{sec:algorithm} presents our algorithm and discusses why adaptive thresholding is necessary. Our extension to sparse linear regression is presented in Section~\ref{sec:sparse}. Section~\ref{sec:heavy} presents our results for the regression with heavy tailed noise problem. We conclude with Section~\ref{sec:conc}. Due to the lack of space, most proofs  and experiments are presented in the appendix.

\tocless
\section{Problem Setup and Main Results}
\label{sec:setup}
We are given $n$ independent data points $\x_1, \dots, \x_n \sim D$ sampled from a Gaussian distribution $D=\mathcal{N}(0, \Sigma)$ and their corrupted responses $y_1, \dots, y_n$, where,
\begin{equation}\label{eq:main_model}
y_i = \x_i^T\w^* + \epsilon_i + b_i^*,
\end{equation}
$\w^*$ is the true regression vector, $\epsilon_i$  - the white noise - is independent of $\x_i$ and is sampled from a sub-Gaussian distribution with parameter $\sigma$, and $b_i^*$ is the corruption in the response of $\x_i$. $\{b_i^*\}_{i =1}^n$ is a sparse corruption set, i.e., $\|b^*\|_0=|\{i,\ s.t.,\ b^*_i\neq 0\}|\leq \alpha \cdot n$ where $\alpha<1$. Also, $\{b_i^*\}_{i =1}^n$ is {\em  independent} of $\{\x_i, \epsilon_i\}_{i = 1}^n$. Apart from this independence we do not impose any restrictions on the values of corruptions added by the adversary. Our goal is to robustly estimate $\w^*$ from the corrupted data $\{\x_i,y_i\}_{i = 1}^n$. In particular, following are the key criteria in evaluating an estimator's performance:
%
\begin{itemize}
\item \textbf{Breakdown point:} It is the maximum fraction of corruption, $\alpha$, above which the estimator is not guaranteed to recover $\w^*$ with small error, even as $n \to \infty$ \citep{hampel1971general}.
\item \textbf{Statistical rates and sample complexity:} We are interested in the generalization error ($\mathbb{E}_{x\sim D}[(\langle \x, \w\rangle - \langle \x, \w^*\rangle)^2]$) of the estimator and its scaling with  problem dependent quantities like $n$, $p$, noise variance $\sigma^2$ as well as the fraction of corruption $\alpha$.
\item \textbf{Computational complexity:} The number of computational steps taken to compute the estimator. The goal is to obtain nearly linear time estimators similar to the standard OLS solvers.
\end{itemize}
As discussed later in the section, our \adacrr estimator is near optimal with respect to all three criteria above.

\paragraph{Heavy-tailed Regression.} We also study the heavy-tailed regression problem where $y_i = \x_i^T\w^* + \epsilon_i$ for all $\x_i \sim D$ and $i\in [n]$. Noise $\epsilon_i \stackrel{i.i.d.}{\sim} \mathcal{E}$ where $\mathcal{E}$ is a heavy-tailed distribution, such as the Cauchy distribution which does not even have bounded first moment. The goal is to design an efficient estimator that provides nearly optimal statistical rates.


\paragraph{Notation.} Let $X = [\x_1, \x_2, \dots \x_n]^T$ be the matrix whose $i^{th}$ row is equal to $\x_i\in \mathbb{R}^p$. Let $\y = [y_1, y_2 \dots y_n]^T, \epb = [\epsilon_1, \dots \epsilon_n]^T$, and $\bb^* = [b_1^*, \dots b_n^*]^T$. For any matrix $X \in \mathbb{R}^{n \times p}$ and subset $S \subseteq [n]$,  we use $X_{S}$ to denote the submatrix of $X$ obtained by selecting the rows corresponding to $S$. Throughout the paper, we denote vectors by bold-faced letters ($\a$), and matrices by capital letters ($A$). $\|\a\|_\Sigma^2:=\a^T \Sigma \a$ for a positive definite matrix $\Sigma$. $\|\a\|_0$ denotes the $L_0$ norm of $\a$, i.e., the number of non-zero elements in $\a$. $b=\tO(a)$ implies, $b\leq C a\log a$ for a large enough constant $C>0$ independent of $a$. We use $SG(\sigma^2)$ to denote the set of random variables whose Moment Generating Function (MGF) is less than the MGF of $\mathcal{N}(0, \sigma^2)$.
\vspace{-1ex}
\subsection{Main Results}
{\bf Robust Regression}: For robust regression with oblivious response variable corruptions, we propose the first efficient consistent estimator with break-down point of $1$. That is,
\begin{theorem}[Robust Regression]
	\label{thm:rr}
	Let $\{\x_i, y_i\}_{i = 1}^n$ be $n$ observations generated from the oblivious adversary model, i.e., $\y=X\w^*+\epb+\bb^*$ where $\epsilon_i \in SG(\sigma^2)$, $\x_i\sim \mathcal{N}(0,\Sigma)$, $\|\bb^*\|_0\leq \alpha\cdot n$ and $\bb^*$ is selected independently of $X, \epb$. Suppose \alg-FC  is run for $T$ iterations with appropariate choice of hyperparameters. Then with probability at least $1-T/n^6$, the $T$-th iterate $\w_T$ produced by the \alg-FC algorithm satisfies:
	\begin{equation*}
	\displaystyle    \|\w_{T}-\w^*\|_{\Sigma} \leq \tO\left(\frac{\sigma}{1-\alpha}\sqrt{\frac{p\log^2{n}+ (\log{n})^3}{n}}\right),
\end{equation*}\vspace{-1ex}
	for any $\alpha\leq 1-\frac{\Theta(1)}{\log\log n}$, where the number of iterations $T=\tO\left(\log \left(\frac{n}{p}\cdot \frac{\|\w_0-\w^*\|_\Sigma}{\sigma}\right)\right)$.
\end{theorem}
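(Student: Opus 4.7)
The strategy is a contraction analysis. Let $\delta_t := \|\w_t - \w^*\|_\Sigma$. The plan is to establish a one-step recursion of the form
\begin{equation*}
\delta_{t+1} \le \rho\,\delta_t + \eta,
\end{equation*}
with a contraction factor $\rho < 1$ bounded away from $1$ uniformly in $t$, and an additive statistical term $\eta = \tO\bigl(\tfrac{\sigma}{1-\alpha}\sqrt{(p\log^2 n + \log^3 n)/n}\bigr)$ matching the claimed rate. Unrolling the recursion for $T = \tO\bigl(\log((n/p)\,\delta_0/\sigma)\bigr)$ iterations so that $\rho^T \delta_0 \lesssim \eta$ then yields the theorem.

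First I would isolate the error contributed by one OLS step on the active set $S_t$ selected by the adaptive thresholder. Writing $\w_{t+1} = (X_{S_t}^T X_{S_t})^{-1} X_{S_t}^T \y_{S_t}$ and substituting $\y = X\w^* + \epb + \bb^*$ gives
\begin{equation*}
\w_{t+1} - \w^* \;=\; (X_{S_t}^T X_{S_t})^{-1} X_{S_t}^T\bigl(\epb_{S_t} + \bb^*_{S_t}\bigr).
\end{equation*}
A restricted-eigenvalue bound of the form $X_{S_t}^T X_{S_t} \succeq c(1-\alpha)\,n\,\Sigma$ that holds uniformly over all subsets of size $\approx (1-\alpha)n$ — a standard $\binom{n}{\alpha n}$ union bound combined with Gaussian concentration of $n^{-1}X^TX$, absorbing the combinatorial $\log$ into $\tO$ — controls the denominator. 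The sub-Gaussian contribution $\|X_{S_t}^T\epb_{S_t}\|_{\Sigma^{-1}}$ then concentrates at the Gaussian rate $\sigma\sqrt{n(p+\log n)}$ via an $\epsilon$-net on the sphere (again paying a subset union bound), since $\epb$ is independent of $X$ and of $S_t$ conditionally on the randomness used to form $S_t$. Dividing by the restricted eigenvalue gives the $\sigma\sqrt{p\log^2 n/n}/(1-\alpha)$ term in $\eta$.

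The hard part — and the one place where the paper's specific mechanism must be used — is bounding the corruption leakage $\|X_{S_t}^T\bb^*_{S_t}\|_{\Sigma^{-1}}$. The thresholding residuals have the form $r_i = \langle \x_i, \w^* - \w_t\rangle + \epsilon_i + b_i^* + \zeta_i$ where $\zeta_i$ is the per-iteration auxiliary randomness \alg-FC injects into the threshold. Without $\zeta_i$, an adversary can choose $\bb^*$ so that for many corrupted indices $\langle \x_i, \w^*-\w_t\rangle + b_i^*$ lies just inside the current threshold — a fixed point of the deterministic CRR operator — destroying contraction once $\alpha$ exceeds a small constant. The independent random $\zeta_i$ decouples $r_i$ from this adversarial configuration: one shows that conditional on $X,\epb,\bb^*,\w_t$, the event $\{i \in S_t\}$ has a smoothly varying probability in $b_i^*$, so that the leaked bias vector $\bb^*_{S_t}$ behaves as a bounded sub-Gaussian-like object whose inner product with $X_{S_t}$ concentrates. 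Quantitatively, this yields a contraction contribution $\rho_{\mathrm{corr}} = O(\alpha \cdot \kappa)$ where $\kappa = O(1/\log\log n)$ comes from the tail trade-off between the injected noise scale and the fraction of corruptions it can suppress; this is precisely why the breakdown point comes in at $1 - \Theta(1)/\log\log n$ rather than a sharp $1$. Combined with a small non-contractive term from the sub-Gaussian noise inflation, $\rho = \rho_{\mathrm{corr}} + o(1) < 1$.

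Finally I would aggregate: each high-probability event above fails with probability at most $1/n^7$ per iteration, so a union bound over the $T$ iterations gives overall failure probability $T/n^6$, as claimed. Combining the per-step recursion $\delta_{t+1} \le \rho \delta_t + \eta$ and choosing $T$ as in the theorem statement closes the argument. The main obstacle is genuinely Step~3 above: all other steps are standard Gaussian/sub-Gaussian concentration with subset union bounds, but the quantitative control of the adaptive randomized thresholder — showing it escapes the bad fixed points of hard thresholding while still suppressing the corrupted residuals enough to give $\rho<1$ when $\alpha$ is arbitrarily close to $1$ — is the novel technical heart of the proof.
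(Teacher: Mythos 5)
Your high-level skeleton (one-step contraction $\delta_{t+1}\le\rho\delta_t+\eta$, the OLS error identity, the role of the injected threshold randomness in escaping fixed points, and the origin of the $1-\Theta(1)/\log\log n$ breakdown point) matches the paper. But there is a genuine gap in how you propose to control the random index set: a union bound over all $\binom{n}{\alpha n}$ subsets is \emph{not} a "combinatorial $\log$ absorbable into $\tO$" --- it costs $\log\binom{n}{\alpha n}=\Theta(n)$ in the exponent, which turns every deviation term into $O(1)$ rather than $O(\sqrt{p\log^2 n/n})$, and for $\alpha\to 1$ the minimum of $\lambda_{\min}(X_S^TX_S)$ over all subsets of size $(1-\alpha)n$ does not even stay bounded below by $c(1-\alpha)n\lambda_{\min}(\Sigma)$ (cf.\ Lemma~\ref{lem:sss}, whose lower bound becomes vacuous in this regime). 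The paper sidesteps subset union bounds entirely. The denominator is handled by $S_t^*\subseteq S_t$, where $S_t^*$ is the \emph{fixed, obliviously chosen} set of uncorrupted points, so plain eigenvalue concentration applies. The numerator is decomposed over sets $Q_1,\dots,Q_4$ defined by the magnitudes $|\bb^*_t(i)|$ relative to the selected interval; since $\bb^*$ is independent of $(X,\epb)$, each candidate set $Q_{2,j}$ is a deterministic function of $\bb^*$ and the interval index $j$, and the union bound is only over the $\tn^{1/\gamma}$ possible intervals. This interval-selection device (with the guarantee $|Q_3\cup Q_4|\le\gamma\tn/(j\log\tn)$ and the cap $|b_i^*|\le j I_t$ on the surviving corruptions) is the mechanism that makes the corruption-leakage term tractable; your description of the leaked bias as a "sub-Gaussian-like object" does not by itself supply the magnitude cap needed for the worst-case bound on $T_1$ nor the cardinality bound needed for $T_3$.

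A second, related gap: you assert $\epb_{S_t}$ is independent of $X_{S_t}$ "conditionally on the randomness used to form $S_t$," but $S_t$ is computed from residuals that depend on $X$, $\epb$, and $\w_{t-1}$, so this conditioning does not deliver independence. The paper resolves this by resampling: Algorithm~\ref{alg:torrent} splits the data into $T+1$ fresh batches so that $\w_{t-1}$ is independent of the batch used at step $t$, and then the $Q$-decomposition (plus the small union over intervals) restores independence of the selected covariates from the selected noise. Finally, a minor quantitative point: the contraction factor in the paper is $O(\gamma/((1-\alpha)\log\tn))$ --- note the $1/(1-\alpha)$ coming from the restricted eigenvalue --- not $O(\alpha\cdot\kappa)$; this is exactly what forces $\alpha<1-c\gamma/\log\tn$ and, with $\gamma=2\log\tn/\log\log\tn$, yields the stated breakdown point.
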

{\bf Remarks:} a) \alg-FC solves an OLS problem in each iteration and the number of iterations is $\approx \log n$, so the overall time complexity of the algorithm is still nearly linear in $n$. In contrast, standard Huber-loss or $L_1$ loss based methods \citep{tsakonas2014convergence,nasrabadi2011robust} have iteration complexity of $1/\sqrt{\epsilon}$ for $\epsilon$-suboptimality and require $\epsilon\approx 1/\sqrt{n}$, which implies super-linear $O(n^{1.25})$ time complexity. Our experiments (Section~\ref{sec:exp_rr}) also agree with this observation.\\[3pt]
b) Break-down point $\alpha$ of \alg-FC satisfies:  $\alpha\rightarrow 1$ for $n\rightarrow \infty$. In contrast, similar consistent estimator by \cite{bhatia2017consistent} requires $\alpha<1/100$. In fact, Proposition~\ref{prop:torrent_negative} shows that fixed hard thresholding operators like the ones used by \citep{bhatia2015robust,bhatia2017consistent} {\em cannot} provide consistent estimator for $\alpha\rightarrow 1$; instead, we propose and analyze a randomized and adaptive thresholding operator (Algorithm~\ref{alg:ht_oracle}) to avoid sub-optimal fixed-points.\\[3pt]
c) Generalization error of \alg-FC is $O(\sigma^2 \cdot p\log^2 n/n)$, which is information theoretically optimal up to $\log^2 n$ factors. In contrast, most of the existing analysis for $L_1$-loss do not guarantee such consistent estimators \citep{nasrabadi2011robust,wright2010dense,nguyen2013exact}.\\[3pt]
d) Our result is presented for Gaussian covariates and sub-Gaussian response noise. However, the technique is significantly more general and can apply to a large class of sub-Gaussian data distributions. Furthermore, we can relax the assumption on independence of  $\epsilon_i, \x_i$. It suffices to have $\mathbb{E}[\epsilon_i|\x_i] = 0$.\\[3pt]
e) Sample complexity of \alg-FC is nearly optimal $n=O(p\log^2 p)$ and can be improved to $n=O(k^*\log^2 p)$ for $k^*$-sparse estimators with the data that satisfies RSC/RSS  (Theorem~\ref{thm:real_hd}). \\[3pt]
See Table~\ref{table:comparison} for a detailed comparison with the existing works. \\
\begin{table}[t]
	\centering
	\resizebox{\textwidth}{!}{
		\begin{tabular}{|c c c c c|}
			\hline
			Paper & Breadkdown Point & Consistent & \begin{tabular}{c} Optimal \\  Sample Comlexity\end{tabular} & Computational Rates\\ [0.5ex]
			\hline
			\citet{wright2010dense} & $\alpha \to 1$ & No & Yes & $O(1/\sqrt{\epsilon})$\\
			\citet{nasrabadi2011robust} & $\alpha \to 1$ & No & Yes & $O(1/\sqrt{\epsilon})$ \\
			\citet{tsakonas2014convergence} & $\alpha \to 1$ & Yes & No & $O(1/\sqrt{\epsilon})$\\
			\citet{bhatia2017consistent} & $\alpha = \Theta(1)$ & Yes & Yes & $O(\log(1/{\epsilon}))$\\
			\textbf{This paper} & $\alpha \to 1$ & Yes & Yes & $O(\log(1/{\epsilon}))$\\
			\hline
		\end{tabular}
	}
	\caption{Comparison of various approaches for regression under oblivious adversary model. The computational rates represents the time taken by estimator to compute an $\epsilon$-approximate solution.}\vspace*{-20pt}
	\label{table:comparison}
\end{table}

\noindent{\bf Regression with Heavy-tailed Noise:} We present our result for regression with heavy-tailed noise.
\begin{theorem}[Heavy-tailed Regression]
	\label{thm:ht}
	Let $\{\x_i, y_i\}_{i = 1}^n$ be $n$ observations generated from the linear model, i.e., $y_i =\x_i^T \w^*+\epsilon_i$ where $\x_i\sim \mathcal{N}(0,\Sigma)$, $\epsilon_i$'s are sampled i.i.d. from a distribution s.t. $\mathbb{E}[|\epsilon|^{\delta}] \leq C$ for a constant $\delta>0$ and are independent of $\x_i$. Then, for \mbox{$T=\tO\left(\log \left(\frac{n}{p}\cdot \frac{\|\w_0-\w^*\|_\Sigma}{\sigma}\right)\right)$}, the $\w_{T}$-th iterate of \alg-FC guarantees the following with probability $\geq 1-T/n^6$:
	\begin{equation*}\|\w_{T}-\w^*\|_{\Sigma} \leq O\left(C^{1/\delta}\sqrt{\frac{p\log{n} + \log^2{n}}{n}}\right).
	\end{equation*}
%
\end{theorem}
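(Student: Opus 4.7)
The plan is to reduce heavy-tailed regression to the oblivious robust regression setting of Theorem~\ref{thm:rr} by viewing the tail of the noise as adversarial sparse corruptions, and then to invoke Theorem~\ref{thm:rr} directly.

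First I would fix a truncation level $\tau$ (to be tuned at the end, but roughly $\tau = \Theta(C^{1/\delta})$) and decompose the noise as
\[
\tilde{\epsilon}_i = \epsilon_i \,\mathbb{1}\{|\epsilon_i|\le\tau\},\qquad b_i^* = \epsilon_i \,\mathbb{1}\{|\epsilon_i|>\tau\},
\]
so that $y_i = \x_i^T\w^* + \tilde{\epsilon}_i + b_i^*$, matching the model \eqref{eq:main_model}. Since $\epsilon_i \indep \x_i$, both $\tilde\epsilon_i$ and $b_i^*$ are independent of $X$, which gives the obliviousness needed for Theorem~\ref{thm:rr}. The clean component $\tilde{\epsilon}_i$ is bounded by $\tau$ and therefore sub-Gaussian with parameter $O(\tau)$; this plays the role of $\sigma$ in Theorem~\ref{thm:rr}.

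Next I would control the sparsity of $\bb^*$. Markov's inequality with $\mathbb{E}[|\epsilon|^\delta]\le C$ gives $\Pr[b_i^*\neq 0]\le C/\tau^\delta$. Choosing, e.g., $\tau=(4C)^{1/\delta}$ makes this probability at most $1/4$, and a Chernoff bound on the i.i.d.\ indicators $\mathbb{1}\{b_i^*\neq 0\}$ yields $\|\bb^*\|_0 \le \alpha n$ for some constant $\alpha<1$ with probability at least $1-n^{-c}$. In particular $\alpha \le 1-\Theta(1)/\log\log n$ for all $n$ large, so the break-down condition of Theorem~\ref{thm:rr} is satisfied with an absolute constant $1-\alpha$.

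Plugging $\sigma = O(\tau) = O(C^{1/\delta})$ and $1-\alpha = \Theta(1)$ into Theorem~\ref{thm:rr} (which also absorbs the high-probability events above into the failure probability via a union bound with the stated $T/n^6$) yields
\[
\|\w_T-\w^*\|_\Sigma \;\le\; \tO\!\left(C^{1/\delta}\sqrt{\frac{p\log^2 n+\log^3 n}{n}}\right),
\]
which matches the claimed bound after collecting logarithmic factors. The number of iterations $T$ is inherited directly from Theorem~\ref{thm:rr}.

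The main obstacle I anticipate is the centering of $\tilde{\epsilon}_i$: Theorem~\ref{thm:rr} requires conditional zero-mean clean noise, but $\mu_\tau := \mathbb{E}[\tilde{\epsilon}_i]$ need not vanish (indeed $\mathbb{E}[\epsilon]$ may not exist when $\delta<1$). I would address this by replacing $\tilde{\epsilon}_i$ with $\tilde{\epsilon}_i - \mu_\tau$ and incorporating the constant shift $\mu_\tau\mathbf{1}$ into the analysis; since $|\mu_\tau|\le \tau = O(C^{1/\delta})$ and $\x_i$ is zero-mean Gaussian, the contribution of this shift to $\|\w_T-\w^*\|_\Sigma$ scales like $|\mu_\tau|\cdot\|(X^TX)^{-1}X^T\mathbf{1}\|_\Sigma = O(C^{1/\delta}/\sqrt{n})$, which is absorbed in the target rate. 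A secondary care point is tightening the log exponents from Theorem~\ref{thm:rr}'s $\log^2 n,\log^3 n$ to the stated $\log n,\log^2 n$; this should follow by observing that here $1-\alpha$ is a constant rather than $\Theta(1/\log\log n)$, which removes one of the logarithmic factors picked up in the general proof.
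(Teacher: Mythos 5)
Your high-level strategy is exactly the paper's: truncate the noise at a level $\rho$, treat the tail $\epsilon_i\mathbb{1}\{|\epsilon_i|>\rho\}$ as oblivious corruptions, and use Markov's inequality ($\mathbb{E}[|\epsilon|^\delta]\ge\alpha_\rho\rho^\delta$, hence $\rho\le(C/\alpha_\rho)^{1/\delta}$) to make the corruption fraction a constant. However, the step where you invoke Theorem~\ref{thm:rr} as a black box has a genuine gap. Theorem~\ref{thm:rr} assumes $\bb^*$ is selected independently of $\epb$, and its proof uses this: the sets $Q_{2,j}$ are defined by the magnitudes $|\bb^*(i)|$, and the argument that the residual noise restricted to $Q_{2,j}$ retains its marginal (sub-Gaussian, conditionally zero-mean) distribution relies on $\bb^*$ being chosen without reference to $\epb$. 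In your decomposition, $b_i^*$ and $\tilde\epsilon_i$ are deterministic functions of the \emph{same} $\epsilon_i$ (one is nonzero exactly when the other vanishes), so this independence fails outright. The paper explicitly flags this ("this implies dependence between $\bar\epsilon_i$ and $b_i^*$") and, rather than citing Theorem~\ref{thm:rr}, reproves the contraction lemma from scratch (Theorem~\ref{thm:heavy_tails} and its Lemma in Appendix~\ref{sec:app_heavy}): the "uncorrupted" set is redefined as $S_t^*=\{i:|\epb_t(i)|\le\rho\}$, the sets $Q_1,\dots,Q_4$ are defined by $|\epb_t(i)|$ instead of $|\bb_t^*(i)|$, and the error splits into two terms rather than three, with the $Q_2$ term bounded using only $\|[\epb_t]_{Q_{2,j}}\|_\infty\le (j-2/9)\intl_t$ and the independence of $\epb$ from the covariates.

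A second, quantitative problem: even granting the black-box application, Theorem~\ref{thm:rr} with $\sigma=O(C^{1/\delta})$ gives $O(C^{1/\delta}\sqrt{(p\log^2 n+\log^3 n)/n})$, a factor $\sqrt{\log n}$ worse than the claimed bound, and your proposed repair (that the extra log disappears because $1-\alpha$ is constant here) is not the right mechanism --- $1-\alpha$ enters as a multiplicative $\tfrac{1}{1-\alpha}$ outside the square root, not as a log factor inside it. The actual saving in the paper comes from the interval length: in the sub-Gaussian case $\intl_t\asymp\sqrt{(\sigma^2+\|\dw_{t-1}\|^2)\log\tn}$ because controlling $\|\epb_t\|_\infty$ costs a $\sqrt{\log\tn}$, whereas the truncated noise is bounded by $\rho$ \emph{deterministically}, so the paper can take $\intl_t=18(\rho/\sqrt{8}+\beta^{t-1}\hat d_0\sqrt{\log\tn})$ (Lemma~\ref{prop:HT_properties_heavy}) with no $\sqrt{\log\tn}$ attached to $\rho$; since the final error is proportional to $\intl_t$, this is where the sharper $\sqrt{(p\log n+\log^2 n)/n}$ rate comes from. (Your worry about centering $\tilde\epsilon_i$ is, by contrast, a non-issue: the noise only enters through terms of the form $\sum_i\epsilon_i\tx_{t,i}$ with zero-mean Gaussian $\tx_{t,i}$ independent of $\epb_t$, so no zero-mean assumption on the truncated noise is ever used.)
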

{\bf Remarks}: a) Note that our technique does not even require the first moment to exist. In contrast, existing results hold only when the  variance is bounded~\citep{hsu2016loss}. In fact, the general requirement on distribution of $\epsilon$ is significantly weaker and holds for almost every distribution whose parameters are independent of $n$. Also, we present a similar result for mean estimation with symmetric noise $\epsilon$. \\[3pt]
b) For Cauchy noise \citep{johnson2005univariate} with location parameter $0$, and scale parameter $\sigma$, we can guarantee error rate of $\approx \sigma \sqrt{\frac{p \log^2 n}{n}}$, i.e., we can obtain almost same rate as sub-Gaussian noise despite unbounded variance which precludes most of the existing results. Our empirical results also agree with the theoretical claims, i.e., they show small generalization error for \alg-FC while almost trivial error for several  heavy-tailed regression algorithms (see Figure~\ref{fig:cauchy_noise}).\\[3pt]
c) Similar to robust regression, the estimator is nearly linear in $n$, $p$. Moreover, we can extend our analysis to  sparse linear regression with heavy-tailed response noise.

\tocless

\section{Related Work}
\label{sec:related_work}
The problems of robust regression and heavy tailed regression have been extensively studied in the fields of robust statistics and statistical learning theory. We now review some of the relevant works in the literature and discuss their applicability to our setup.

\paragraph{Robust Regression.} The problem of response corrupted robust regression can be written as the following equivalent optimization problems:
\begin{equation}
\label{eqn:torrent}
\min_{\begin{subarray}{c} \w, S\\ S\subset [n], |S|= (1-\alpha) n\end{subarray}} \sum_{i \in S} \left(y_i -\left\langle \x_i, \w\right\rangle \right)^2 \Leftrightarrow \quad \min_{\begin{subarray}{c} \w, \bb\\ \|\bb\|_0 \leq \alpha n\end{subarray}} \|\y -X\w -\bb\|^2_2.
\end{equation}
The problem is NP-hard in general 	due to it's combinatorial nature~\citep{studer2012recovery}. 
\citet{rousseeuw1984least} introduced the Least Trimmed Squares (LTS) estimator which computes OLS estimator over all subsets of points and selects the best estimator. Naturally, the estimator's computational complexity is exponential in $n$ and is not practical. There are some practical variants like RANSAC \citep{ransac} but they are mostly heuristics and do not come with strong guarantees. 

A number of approaches have been proposed which relax \eqref{eqn:torrent} with $L_1$ loss \citep{wright2010dense} or Huber loss \citep{huber1973robust}. \citet{tsakonas2014convergence} analyze Huber regression estimator under the oblivious adversary model and show that it tolerates any constant fraction of corruptions, while being consistent. However, their analysis requires $\Tilde{\Omega}(p^2)$ samples.  \citet{wright2010dense, nasrabadi2011robust} also study convex relaxations of~\eqref{eqn:torrent}, albeit in the sparse regression setting. While their estimators tolerate any constant fraction of corruptions, they do not guarantee consistency in presence of white noise.  Statistical properties aside, a major drawback of Huber's M-estimator and other convex relaxation based approaches is that they are computationally expensive due to  sublinear convergence rates to the global optimum.   Another class of approaches use greedy or local search heuristics to approximately solve the $\ell_0$ constrained objectives. For example, the estimator of ~\citet{bhatia2017consistent} uses alternating minimization to optimize objective~\eqref{eqn:torrent}.  While this estimator is consistent and converges linearly to the optimal solution, it only tolerates a small fraction of corruptions and breaks down when $\alpha$ is greater than a small constant.

Another active line of research on robust regression has focused on handling more challenging adversary models. One such popular  model is the malicious adversary model, where the adversary looks at the data before adding corruptions. Recently there has been a flurry of research on designing robust estimators that are both  computationally and statistically efficient in this setting \citep{bhatia2015robust, prasad2018robust, diakonikolas2018sever, klivans2018efficient}. While the approach by \cite{bhatia2015robust} is based on an alternating minimization procedure, \cite{prasad2018robust} and \cite{diakonikolas2018sever} derive robust regression estimators based on  robust mean estimation \citep{lai2016agnostic,diakonikolas2016robust}. However, for such an  adaptive adversary, we cannot expect to achieve consistent estimator. In fact, it is easy to show that we cannot expect to obtain generalization error better than $O(\alpha \sigma)$ where $\alpha$ is the fraction of corruptions and $\sigma$ is the noise variance. Furthermore, as we show in our experiments, these techniques fail to recover the parameter vector in the oblivious adversary model when the fraction of corruption is $1-o(1)$.

\paragraph{Heavy-tailed Regression.} Robustness to heavy-tailed noise distribution is another regression setting that is actively studied in the statistics community. The objective here is to construct estimators which work without the sub-Gaussian distributional assumptions that are typically imposed on the data distribution, and allow it to be a heavy tailed distribution. For the setting where the noise $\epsilon$ is heavy-tailed with bounded variance, Huber's estimator is known to achieve sub-Gaussian style rates~\citep{fan2017estimation, sun2018adaptive}.  Several other non-convex losses such as Tukey's biweight and Cauchy loss have also been proposed for this setting~\citep[see][and references therein]{loh2017statistical}. 
 For the case where both the covariates and noise are heavy-tailed, several recent works have proposed computationally efficient estimators that achieve sub-Gaussian style rates~\citep{hsu2016loss, lecue2017robust, prasad2018robust}.
As noted earlier, all of these results require bounded variance. Moreover, many of the Huber-loss style estimators typically do not have linear time computational complexity. In contrast, our result holds even if the $\delta$-th moment of noise is bounded where $\delta>0$ is any arbitrary small constant. Furthermore, the estimation algorithm is nearly linear in the number of data points as well as data dimensionality.

\tocless
\section{\alg Algorithm}
\label{sec:algorithm}

In this section we describe our algorithm \alg (see Algorithm~\ref{alg:torrent}), for estimating the regression vector in the oblivious adversary model. At a high level, \alg uses alternating minimization to optimize objective~\eqref{eqn:torrent}. That is, \alg maintains an estimate of the coefficient vector $\w_t$ and the set of corrupted responses $S_t$, and alternatively updates them at every iteration.
\noindent \paragraph{Updating $\w_t$.}
Given any subset $S_t$, $\w_{t}$ is updated using the points in $S_t$. We study two variants of \alg which differ in how we update $\w_t$.  In \alg-FC (Algorithm~\ref{alg:torrent_fc}) we perform a fully corrective linear regression step on points from $S_t$. In \alg-GD (Algorithm~\ref{alg:torrent_gd}) we take a gradient descent step to update $\w_t$. While these two variants have similar statistical properties, the GD variant is computationally more efficient, especially for large $n$ and $p$.
\noindent \paragraph{Updating $S_t$.} For any given $\w_t$, \alg updates $S_t$ using a novel hard thresholding procedure, which adds all the points whose absolute residual is larger than an adaptively chosen threshold, to the set $S_{t+1}$. Hard thresholding based algorithms for robust regression have been explored in the literature \citep{bhatia2017consistent,bhatia2015robust} but they use thresholding with a fixed threshold or at a fixed level and are unable to guarantee $\alpha =1-o(1)$ break-down point. In fact, as we show in Proposition~\ref{prop:torrent_negative}, such fixed hard thresholding operators cannot in general tolerate such large fraction of corruption.

In contrast, our hard thresholding routine (detailed in Section~\ref{sec:adaht}) selects the threshold adaptively and adds randomness to escape bad fixed points.  While randomness has proven to be useful in escaping first and second order saddle points in unconstrained optimization \citep{ge2015escaping,jin2017accelerated}, to the best of our knowledge, our result is the first such result for a constrained optimization problem with randomness in the projection step.

Before we proceed, note that Algorithm~\ref{alg:torrent} relies on a new set of samples for each iteration. This ensures independence of the current iterate $\w_{t-1}$ from the samples and is done mainly for theoretical convenience. We believe this can be eliminated using more complex arguments in the analysis.

\subsection{\adaht: Adaptive Hard Thresholding Operator}
\label{sec:adaht}
In this section we describe our hard thresholding operator \adaht.
There are two key steps involved in \adaht, which we describe below.  Consider the call to \adaht in $t^{th}$ iteration of Algorithm~\ref{alg:torrent}.
\noindent\paragraph{Interval Selection.} In the first step we find an interval on positive real line which acts as a ``crude'' threshold for our hard thresholding operator. We partition the positive real line into intervals of width $\intl_t$. We then  place points in these intervals based on the magnitude of their residuals. Finally, we pick the smallest $j$ such that the $j^{th}$ interval has fewer than  $\frac{\gamma\tn}{j\log{\tn}}$ elements in it, for some $\gamma$ such that $1 < \gamma < \log{\tn}$. Let $j_t$ be the chosen interval. This interval acts as a crude threshold. All the points to the left of $j_t^{th}$ interval are considered as un-corrupted points and added to $S_t$ (line 7-9, Algorithm~\ref{alg:ht_oracle}); all the points to the right of $j_t^{th}$ interval are considered as corrupted points.
The goal of such interval selection is to ensure: a) all the true un-corrupted points lie to the left of the interval and are included in $S_t$, and, b) not many points fall in interval $j_t$ so that a large fraction of the points that are in set $S_t$ remain independent of each other. This independence allow us to exploit sub-Gaussian concentration results rather than employing a worse case-bounds and helps achieve optimal consistent rates.

Let $\beta\in (0,1)$ be a constant. Then, we select interval length as:
\begin{equation}
\label{eqn:dist_ub}
\intlub_t = 18\sqrt{(2\hat{\sigma}^2 + 2\beta^{2(t-1)}\hat{d}_0^2)\log{\tn}},
\end{equation}
where, $\hat{\sigma}$ and $\hat{d}_0$ are approximate upper bounds of $\sigma$ and $\|\dw_0\|_2=\|\w_0-\w^*\|_{\Sigma}$:
\begin{equation}
\label{eqn:sigma_ub}
\sigma \leq \hat{\sigma} \leq \mu \sigma \qquad \text{and} \qquad \|\dw_0\|_2 \leq \hat{d}_0 \leq \nu \|\dw_0\|_2,
\end{equation}
where $\mu\geq 1$, $\nu\geq 1$. In Section~\ref{sec:analysis} we show that for appropriate choice of $\beta$, all the true un-corrupted points lie to the left of $j_{t}^{th}$ interval. In Appendix~\ref{sec:aux_additional} we present techniques to estimate $\hat{d}_0$ with a constant $\nu$. Estimating the noise variance $\sigma^2$ (and $\hat{\sigma}^2$) is significantly more tricky and it is not clear if it is even possible apriori. So, in practice one can either use prior knowledge or treat $\hat{\sigma}$ as a hyper-parameter that is selected using cross-validation.

\newcommand{\upd}{\textsc{Update}}
\hspace*{-15pt}
\begin{minipage}[t]{0.5\textwidth}
\begin{algorithm}[H]
\caption{\alg}
\label{alg:torrent}
\begin{algorithmic}[1]
  \small
  \STATE \textbf{Input:} Training data $(X, \y)$, iterations $T$, Update-routine: \upd
  \STATE Randomly split $(X, \y)$ into $T$ sets $\{(X_t, \y_t)\}_{t = 0}^T$ of size $\tn = \lfloor\frac{n}{T+1}\rfloor$ each
 \STATE $\w_0 \leftarrow (X_{0}^TX_{0})^{-1}X_{0}^T\y_{0}$
  \STATE $t \leftarrow 1$ 
  \WHILE{$t \leq T$}
        \STATE \textbf{Get new set of samples} $(X_{t}, \y_{t})$
        \STATE $S_{t} \leftarrow \text{AdaHT}\left(\y_{t} - X_{t}\w_{t-1}\right)$
        \STATE $\w_t \leftarrow \upd \left(X_{t,S_t}, \y_{t,S_t}, \w_{t-1}\right)$
        \STATE $t \leftarrow t + 1$
  \ENDWHILE
\end{algorithmic}
\end{algorithm}\vspace*{-20pt}
\begin{algorithm}[H]
\caption{\alg-FC (Fully Corrective)}
\label{alg:torrent_fc}
\begin{algorithmic}[1]
  \small
  \STATE \textbf{Input:}  Training data $(X, \y)$, iterations $T$
  \STATE \textsf{OLS}($X, \y$)$:=\argmin_{\w}\|\y-X\w\|_2^2$
  \STATE \textbf{Return:} \alg($X,\ \y,\ T$, \textsf{OLS}$(\cdot,\cdot)$)
\end{algorithmic}
\end{algorithm}\vspace*{-20pt}
\begin{algorithm}[H]
\caption{\alg-GD (Gradient Descent)}
\label{alg:torrent_gd}
\begin{algorithmic}[1]
  \small
  \STATE \textbf{Input:}  Train data $(X, \y)$, $T$, step length $\eta$
  \STATE GD($X, \y, \w$)$:=w-\eta X^T (Xw-y)$
  \STATE \textbf{Return:} \alg($X,\ \y,\ T$, GD$(\cdot,\cdot,\cdot)$)
\end{algorithmic}
\end{algorithm}
\end{minipage}
\hspace*{10pt}
\begin{minipage}[t]{.48\textwidth}
\begin{algorithm}[H]
	\caption{Adaptive Hard Thresholding (\adaht)}
	\label{alg:ht_oracle}
	\begin{algorithmic}[1]
		\small
		\STATE \textbf{Input:} Residual $\rb_{t} = \y_{t} - X_{t}\w_{t-1}\in \mathbb{R}^{\tn}$
		\STATE \textbf{Output:} Estimated Subset $S_t$
		\STATE \textbf{Hyperparameter:} $\gamma$, decay factor $\beta$, noise variance estimate: $\hat{\sigma}^2$, error estimate: $\hat{d}_0$, randomness range $[-a,a]$, interval length: $\intl_t$
		\STATE Divide real line into intervals of width $\intl_t$ 
		\STATE Place each point in the appropriate interval based on its residual $|\rb_{t}(i)|$
		\STATE Find smallest $j_t$ such that the $j_t^{th}$ interval has fewer than $\frac{\gamma \tn}{j_t\log{\tn}}$ points. Denote its midpoint by $\tau_t = (j_t -\frac{1}{2})\intl_t$
		\FOR{each element i to left of $j_t^{th}$ interval}
		\STATE $S_t \leftarrow S_t \cup \{i\}$
		\ENDFOR
		\FOR{each element i in $j_t^{th}$ interval}
		\STATE Sample $\eta_{i,t}$ uniformly from $[-a,a]$
		\IF{$|\rb_{t}(i)| < \tau_t + \eta_{i, t} \intl_t$}
		\STATE $S_t \leftarrow S_t \cup \{i\}$
		\ENDIF
		\ENDFOR
		\STATE \textbf{return} $S_t$
	\end{algorithmic}
\end{algorithm}
\end{minipage}

\noindent\paragraph{Points in Selected Interval.} This step decides inclusion in $S_t$ of points which fall in the selected interval $j_t$. Let $\tau_t = (j_t -\frac{1}{2})\intl_t$ be the mid-point of this interval. For each point in this interval we sample $\eta$ uniformly from $[-a,a]$, for some universal constant $a \in (0, 0.1]$. If the magnitude of its residual is smaller than $\tau_t + \eta \intl_t$ we consider it as un-corrupted and add it to $S_t$ (see line 10-15, Algorithm~\ref{alg:ht_oracle}). As we show in the proof of Theorem~\ref{thm:rr}, this additional randomness is critical in avoiding poor fixed points and in obtaining  the desired statistical rates for the problem.

\subsection{Fixed Hard Thresholding doesn't work}
In this section we show that algorithms of  \citet{bhatia2015robust,bhatia2017consistent} that rely on \emph{fixed} hard thresholding operators pruning out a fixed number of elements, need not recover the true parameter $\w^*$ when $\alpha \to 1$. We prove this for \tor~\citep{bhatia2015robust}; proof for \crr~\citep{bhatia2017consistent} can be similarly worked out.

\tor is based on a similar alternating minimization procedure as \alg, but differs from it in the subset selection routine: instead of adaptive hard thresholding, \tor always chooses the smallest $(1-\alpha)n$ elements from the residual vector $(\y_t-X_t\w_{t-1})$. The following proposition provides an example where \tor fails to recover the underlying estimate for $\alpha = 0.8$.

\begin{wrapfigure}{r}{0.45\textwidth}
    \centering
    \vspace{-0.2in}
    \includegraphics[scale=0.22, trim={0 0 0 1cm},clip]{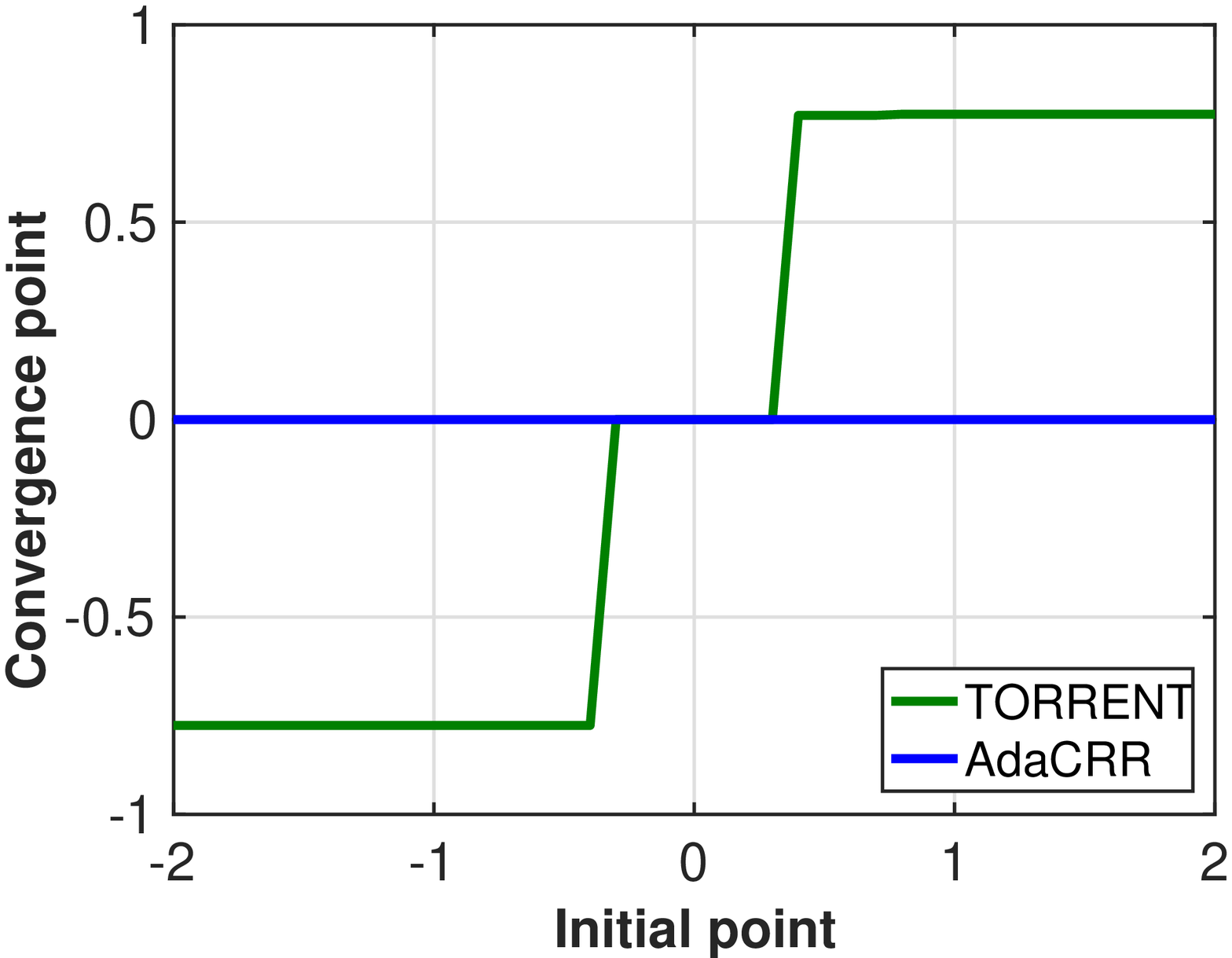}
\end{wrapfigure}
\begin{proposition}[Lower Bound for \tor]
\label{prop:torrent_negative}
Let $y_i = x_i w^* + b_i^*$, $i\in [n]$,  where $w^*=0$, $x_i \stackrel{i.i.d}{\sim} \mathcal{N}(0,1)$. Let $b_i^*=1$ for $1\leq i\leq \alpha\cdot n$  and $0$ otherwise. Consider the limit as $n \to \infty$ and suppose $\alpha = 0.8$. Then $\exists \w\in \mathbb{R}$ which is far from $\w^*$ (\textit{i.e.,} $|\w-\w^*| = \Omega(1)$) such that if \tor is initialized at $\w$, it remains at $\w$ even after infinite iterations.
\end{proposition}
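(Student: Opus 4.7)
The approach is to pass to the population ($n \to \infty$) limit, in which the \tor iteration becomes a deterministic scalar map $T_\infty : \mathbb{R} \to \mathbb{R}$, and to exhibit a fixed point of $T_\infty$ that is $\Omega(1)$ away from $w^*=0$ via the intermediate value theorem. Note that $w=0$ is already a (trivial) fixed point: initialized at $w=0$, the $(1-\alpha)n = 0.2n$ smallest residuals are precisely the clean points (all with $y_i=0$), so OLS on them returns $0$. The content of the proposition is thus the existence of a \emph{second} fixed point.

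For any $w \in \mathbb{R}$, the residual is $|1 - x_i w|$ on the corrupted points and $|x_i w|$ on the clean points, so Glivenko--Cantelli implies the empirical \tor cutoff converges a.s.\ to the unique $r(w) \ge 0$ satisfying
\[
F(w, r) \;:=\; \alpha \Pr\!\bigl(|1 - X w| \le r\bigr) + (1-\alpha) \Pr\!\bigl(|X w| \le r\bigr) \;=\; 1-\alpha,\quad X \sim \mathcal{N}(0,1).
\]
Applying the LLN to the OLS sums on the retained set, the one-step update converges to
\[
T_\infty(w) \;=\; \frac{\alpha\, \E\bigl[X\, \mathbb{1}(|1 - Xw| \le r(w))\bigr]}{\alpha\, \E\bigl[X^2\, \mathbb{1}(|1 - Xw| \le r(w))\bigr] + (1-\alpha)\, \E\bigl[X^2\, \mathbb{1}(|Xw| \le r(w))\bigr]},
\]
and $T_\infty$ is continuous on $(0, \infty)$ by strict monotonicity of $F(w,\cdot)$ in $r$ together with the implicit function theorem. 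Specializing to $\alpha = 0.8$, I would evaluate the retained-window Gaussian moments using $\int x \phi = -\phi$ and $\int x^2 \phi = \Phi - x\phi$ to verify that $r(1/2) \approx 0.39$ with $T_\infty(1/2) \approx 0.53 > 1/2$, and $r(1) \approx 0.37$ with $T_\infty(1) \approx 0.98 < 1$. By IVT there is $w^\dagger \in (1/2, 1)$ with $T_\infty(w^\dagger) = w^\dagger$, hence $|w^\dagger - w^*| = \Omega(1)$. Uniform-on-compacts convergence $T_n \to T_\infty$ and induction in $t$ finally give that \tor initialized at $w^\dagger$ stays within $o(1)$ of $w^\dagger$ for all iterates with probability $1 - o(1)$ as $n \to \infty$.

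The main obstacle is the numerical sign-change check in the previous paragraph: proving \emph{rigorously} (rather than from table values) that $T_\infty(1/2) > 1/2$ and $T_\infty(1) < 1$. All integrals are closed-form in $\Phi$ and $\phi$, so the difficulty is careful bookkeeping of the two disjoint retained intervals $[(1-r)/w,\, (1+r)/w]$ (corrupted) and $[-r/w,\, r/w]$ (clean), and turning the computations into sharp inequalities at the two chosen test values. A more robust alternative is to argue asymptotically: show $T_\infty(w)$ remains bounded as $w \to \infty$ (so $T_\infty(w) < w$ eventually) while $T_\infty(w) > w$ holds on an intermediate interval (e.g.\ once the kept set contains enough corrupted points with $x_i \approx 1/w$ to dominate the OLS numerator), which forces a crossing without relying on two exact evaluations.
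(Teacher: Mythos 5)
Your proposal is correct and follows essentially the same route as the paper: pass to the population limit, characterize the retained set via the threshold equation $\alpha\,\Pr(|1-Xw|\le r)+(1-\alpha)\Pr(|Xw|\le r)=1-\alpha$, and show the resulting one-step OLS map has a nonzero fixed point (your formula for $T_\infty$ matches the paper's equations \eqref{eqn:torrent_neg_eq1}--\eqref{eqn:torrent_neg_eq2}). The only difference is cosmetic: the paper directly exhibits an approximate solution $(\w,\tau_{\w})\approx(0.79,0.354)$ of the two equations at $\alpha=0.8$, whereas you bracket the fixed point in $(1/2,1)$ by an intermediate-value argument — both ultimately rest on the same numerical evaluation of Gaussian window moments, and your values at the two test points check out.
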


See Appendix~\ref{sec:aux_torr_neg} for a detailed proof of the proposition. Figure on the right shows the performance of \tor on the $1$-d regression problem described in Proposition~\ref{prop:torrent_negative}. The x-axis denotes the initial point while the y-axis denotes the point of convergence of \tor. Clearly TORRENT fails with several initializations despite $10^6$ samples.

\tocless

\section{Analysis}
\label{sec:analysis}
In this section we provide an outline of the proof of our main result stated in Theorem~\ref{thm:rr}. We prove a more general result in Theorem~\ref{thm:real} from which Theorem~\ref{thm:rr} follows readily.
\begin{theorem}[\alg-FC for Robust Regression]
\label{thm:real}
Consider the setting of Theorem~\ref{thm:rr}. Set $\tn = \frac{n}{T+1}$, $a = 1/18$, $\gamma \in (1, \log{\tn})$, $\beta\geq\frac{c\gamma}{(1-\alpha)\log{\tn}}$ for a universal constant $c>0$. Let $\hat{\sigma}, \hat{d}_0$ be given s.t. $\hat{\sigma}/\sigma\in [1,\mu]$ and $\hat{d}_0/\|\dw_0\|_2 \in [1,\nu]$ with $\|\dw_0\|_2=\|\w_0-\w^*\|_{\Sigma}$. Set $\intlub_t = 18\sqrt{(2\hat{\sigma}^2 + 2\beta^{2(t-1)}\hat{d}_0^2)\log{\tn}}$, $t\in [T]$. Then the  iterates $\{\w_{t}\}_{t = 1}^T$ of \alg-FC (Algorithm~\ref{alg:torrent_fc}) executed with the above given hyperparameters, satisfy the following (w.p. $\geq 1-T/\tn^6$):

\begin{equation}
    \label{eqn:descent}
         \displaystyle \|\w_{t}-\w^*\|_{\Sigma} \leq \beta^{t}\|\w_0-\w^*\|_{\Sigma} + O\left(\frac{\mu\sigma \tn^{1/\gamma}}{(1-\beta)(1-\alpha)}\sqrt{\frac{p\log{\tn} + \log^2{\tn}}{\tn}}\right)
\end{equation}
where break-down point $\alpha < 1 - \frac{c\gamma}{\log{\tn}}$ and $\tn^{1-2/\gamma} \geq c_1\max\left\lbrace\frac{\mu^2}{(1-\beta)^2(1-\alpha)^2}, \frac{\nu^2\log^2{\tn}}{\gamma^2}\right\rbrace\left(p\log{\tn} + \log^2{\tn}\right)$.
\end{theorem}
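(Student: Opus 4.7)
}

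The plan is to establish the contraction bound \eqref{eqn:descent} by strong induction on $t$. The base case is the OLS initialization $\w_0$; for the inductive step, I assume the bound holds at iteration $t-1$ and show it propagates to $t$. Throughout, I condition on the event that, for every iteration $s \le t$, the interval length $\intl_s$ is a valid upper bound for the typical magnitude of residuals at uncorrupted points, i.e.\ $\sqrt{\sigma^2 + \|\dw_{s-1}\|_\Sigma^2}$ up to $\sqrt{\log \tn}$ factors. Since the algorithm uses fresh samples $(X_t,\y_t)$ each iteration, $\w_{t-1}$ is independent of $(X_t,\y_t,\epb_t,\bb^*_t)$, which is what lets us treat the residual $\rb_t(i) = \epsilon_i - \x_i^T\dw_{t-1} + b^*_i$ as, for uncorrupted coordinates, a sub-Gaussian with variance proxy $\sigma^2 + \|\dw_{t-1}\|_\Sigma^2$.

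The first technical block concerns the \adaht step. I would show three things: (i) with high probability every uncorrupted point has $|\rb_t(i)| \le \frac{1}{2}\intl_t$ (by a sub-Gaussian tail bound and the definition of $\intl_t$ together with $\hat\sigma \ge \sigma$ and the inductive bound on $\|\dw_{t-1}\|_\Sigma$), so all uncorrupted points lie in the first interval; (ii) consequently, for $j \ge 2$, any point in interval $j$ must be corrupted, and the selected $j_t$ is at most $\tn^{1/\gamma}$, since the ``dense'' intervals $1,\dots,j_t-1$ together contain at least $\gamma \tn \log(j_t)/\log \tn$ points which cannot exceed $\tn$; (iii) thus every uncorrupted point is placed in $S_t$ (the first interval is to the left of $j_t$), while the number of corrupted points included in $S_t$ is at most the mass in interval $j_t$, which is bounded by $\gamma \tn/(j_t \log \tn)$. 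This already pins down $|S_t| \ge (1-\alpha)\tn$ and an upper bound on corrupted elements in $S_t$ of roughly $\gamma \tn/(j_t \log \tn)$.

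The second block analyzes the fully corrective step $\w_t = (X_{t,S_t}^T X_{t,S_t})^{-1} X_{t,S_t}^T \y_{t,S_t}$. Writing $\y_{t,S_t} = X_{t,S_t}\w^* + \epb_{t,S_t} + \bb^*_{t,S_t}$, we get
\[
\dw_t = (X_{t,S_t}^T X_{t,S_t})^{-1} X_{t,S_t}^T (\epb_{t,S_t} + \bb^*_{t,S_t}).
\]
Standard Gaussian covariance concentration on the random submatrix $X_{t,S_t}$ (possible because $|S_t| \gtrsim (1-\alpha)\tn \gg p\log\tn$) gives $\lambda_{\min}(X_{t,S_t}^T X_{t,S_t}) \gtrsim (1-\alpha)\tn$ in the $\Sigma$ norm. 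The noise contribution $X_{t,S_t}^T \epb_{t,S_t}$ gives the standard $O(\sigma\sqrt{p\log\tn/\tn})$ term, multiplied by the conditioning factor $1/(1-\alpha)$. The crucial and hardest term is the bias $X_{t,S_t}^T \bb^*_{t,S_t}$. Here I use that every surviving corrupted index satisfies $|\rb_t(i)| \le \tau_t + \intl_t \le j_t \intl_t \le \tn^{1/\gamma}\intl_t$, and that the \emph{randomized} threshold on interval $j_t$ decorrelates inclusion from $\x_i$: conditioning on $\{|\rb_t(i)|\}$, the event $i\in S_t$ depends only on the independent uniform $\eta_{i,t}$, so $\x_i \cdot \mathbf{1}[i\in S_t] \cdot b^*_i$ becomes a zero-mean sub-Gaussian vector that can be controlled by a Bernstein/Hoeffding argument over at most $\gamma\tn/(j_t \log\tn)$ summands, each of scale $j_t \intl_t$. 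Multiplying scale by count produces a bound proportional to $\gamma \tn \intl_t/\log\tn$, and after dividing by $\lambda_{\min} \gtrsim (1-\alpha)\tn$ and plugging in $\intl_t$ we obtain a contribution of the form $\beta \|\dw_{t-1}\|_\Sigma + \tn^{1/\gamma}\sigma/((1-\alpha)\sqrt{\tn/p\log\tn})$, closing the induction for $\beta \ge c\gamma/((1-\alpha)\log\tn)$.

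The main obstacle, as flagged above, is bounding the corrupted bias $X_{t,S_t}^T\bb^*_{t,S_t}$. Without the injected randomness in \adaht, the fixed-threshold argument of Proposition~\ref{prop:torrent_negative} would block the contraction because a hostile fraction of corrupted points could align with $\x_i$ and produce an $\Omega(1)$ bias; the uniform $\eta_{i,t}$ on interval $j_t$ makes the inclusion indicator of each boundary point conditionally independent of $\x_i$, restoring the Hoeffding-type cancellation. A secondary care point is iterating the high-probability statement: one applies a union bound across the $T$ iterations, which costs a factor $T$ in the failure probability but, thanks to the independence of the sample splits, no blow-up in the error; this yields the $1 - T/\tn^6$ guarantee. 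Substituting the resulting per-iteration recursion $\|\dw_t\|_\Sigma \le \beta\|\dw_{t-1}\|_\Sigma + (\text{noise term})$ and unrolling geometrically gives the stated bound \eqref{eqn:descent}.
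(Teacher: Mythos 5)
Your high-level architecture (induction on $t$, validity of the interval length, spectral lower bound on $X_{t,S_t}^TX_{t,S_t}$, and isolating the corrupted bias as the hard term) matches the paper, but there are two genuine gaps in your treatment of $X_{t,S_t}^T\bb^*_{t,S_t}$, which is exactly where the difficulty lives. The first is an accounting error: you claim the number of corrupted points included in $S_t$ is at most the mass of interval $j_t$, i.e.\ $\gamma\tn/(j_t\log\tn)$. But \adaht admits \emph{every} point to the left of interval $j_t$, and since $\alpha\to1$ the adversary can place $\Theta(\tn)$ corrupted points with moderate $|\bb^*_t(i)|$ (anything up to roughly $(j_t-1)\intl_t$) in intervals $1,\dots,j_t-1$; all of them enter $S_t$. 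The paper handles this population as a separate term $T_1=\sum_{i\in Q_2}\bb^*_t(i)\tx_{t,i}$: membership in $Q_2$ is determined by $|\bb^*_t(i)|$ alone, and since $\bb^*$ is oblivious, conditionally on $\bb^*$ this sum is Gaussian with norm $O(\sqrt{p+\log\tn})\,\|[\bb^*_t]_{Q_2}\|_2=O(\sqrt{(p+\log\tn)\tn})\,\tn^{1/\gamma}\intl_t$ — this is in fact the source of the dominant $\tn^{1/\gamma}$ factor in \eqref{eqn:descent}. Your proposal has no counterpart for these points.

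Second, the mechanism you ascribe to the injected randomness is not the right one. The inclusion indicator $\mathbb{I}[|\rb_t(i)|<\tau_t+\eta_{i,t}\intl_t]$ is not conditionally independent of $\x_i$ given the corruption values — the residual contains $\langle\tx_{t,i},\dw_{t-1}\rangle$ — and $\tx_{t,i}\,\mathbb{I}[i\in S_t]\,b^*_i$ is \emph{not} zero-mean, so there is no Hoeffding cancellation to invoke. The paper computes this expectation explicitly (Lemmas~\ref{lem:expectation_T3_aux1} and~\ref{lem:expectation_T3_aux2}): for a fixed threshold it is a difference of two Gaussian density values and can be of order $|b^*_i|$ independently of $\|\dw_{t-1}\|_2$, which would preclude consistency; averaging over the uniform $\eta_{i,t}$ collapses it to $c_i\frac{b^*_i+\epsilon_i}{\intl_t/9}\dw_{t-1}$ with $|c_i|\le1$, i.e.\ a vector \emph{proportional to} $\dw_{t-1}$ with scalar coefficient $O(j_t)$, so that summing over $|Q_4|\le\gamma\tn/(j_t\log\tn)$ points gives $O(\gamma\tn/\log\tn)\|\dw_{t-1}\|_2$ and, after normalizing by $(1-\alpha)\tn$, the contraction factor. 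Your fallback of ``multiplying scale by count'' yields $O(\gamma\tn\intl_t/\log\tn)$ instead; since $\intl_t\gtrsim\hat\sigma\sqrt{\log\tn}$, that leaves a non-vanishing residual of order $\gamma\mu\sigma/((1-\alpha)\sqrt{\log\tn})$ and the induction would not close to a consistent rate. (A smaller omission: $Q_4$ depends on the data through $j_t$, so all of these bounds must be taken uniformly over the $\tn^{1/\gamma}$ candidate intervals via a union bound.)
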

\begin{proof} (Sketch)
Consider the $(t+1)^{th}$ iteration of \alg-FC. We first divide $(X_{t+1}, \y_{t+1})$ into the following mutually exclusive sets:
\begin{equation}
\begin{array}{c}
Q_1 = \left\lbrace i: |\bb_{t+1}^{*}(i)| \geq  \tau_{t+1} + \frac{5}{18}\intlub_{t+1}\right\rbrace,\quad Q_2 = \left\lbrace i: |\bb_{t+1}^{*}(i)| <  \tau_{t+1} -  \frac{5}{18}\intlub_{t+1}\right\rbrace, \vspace{0.05in}\\
Q_3 = \left\lbrace i: |\bb_{t+1}^{*}(i) -\tau_{t+1}| \leq \frac{5}{18}\intlub_{t+1}, \text{ and } |\y_{t+1}(i) - \left\langle \x_{t+1,i}, \w_{t}\right\rangle | \geq \tau_{t+1} + \eta_{i,t+1}\intlub_{t+1}\right\rbrace,\vspace{0.05in}\\
Q_4 = \left\lbrace i:
|\bb_{t+1}^{*}(i) -\tau_{t+1}| \leq \frac{5}{18}\intlub_{t+1}, \text{ and } |\y_{t+1}(i) - \left\langle \x_{t+1,i}, \w_{t}\right\rangle | < \tau_{t+1} + \eta_{i,t+1}\intlub_{t+1}\right\rbrace,
\end{array}\label{eq:setsQ}
\end{equation}
where $\tau_{t+1}:=(j_t-1){I}_{t+1}$ is as defined in Line 6, Algorithm~\ref{alg:ht_oracle}. Note that $Q_1$ contains the egregious outliers in $(X_{t+1}, \y_{t+1})$ and $Q_2$ contains all the ``true'' uncorrupted points. Our proof first shows that $\intlub_{t+1}$ satisfies the properties described in Section~\ref{sec:adaht}. Specifically, the output $S_{t+1}$ of \adaht satisfies: a) $Q_2 \subseteq S_{t+1}$, b) $Q_1 \cap S_{t+1} = \{\}$ and, c) $S_{t+1} = Q_2 \cup Q_4$. Next, we show that $\w_{t+1}-\w^*$ can be written in terms of $Q_2, Q_4$:{\small
\begin{equation*}
        \w_{t+1}-\w^* = -(X_{t+1,S_{t+1}}^TX_{t+1,S_{t+1}})^{-1}\left(\sum_{i \in Q_2\cup Q_4}(\bb_{t+1}^*(i)+\epb_{t+1}(i)) X_{t+1, i}	\right).
\end{equation*}}
The rest of the proof focuses on bounding the two terms in the RHS of the above equation. To bound the first term involving $Q_2$ we use the observation that $\bb_{t+1}^*, \epb_{t+1}$ are independent of $X_{t+1}$ and rely on concentration properties of sub-Gaussian random variables. To bound the other term involving $Q_4$,
we rely on a crucial property of our algorithm which guarantees $|Q_4| \leq \frac{\gamma\tn}{j_{t+1}\log{\tn}}$ and perform a worst case analysis to bound $Q_4$. See Appendix~\ref{sec:aux_thm_real} for a detailed proof.
\end{proof}
{\bf Discussion:} Theorem~\ref{thm:real} characterizes both the computational as well as statistical guarantees of \alg-FC. More specifically, consider setting $\gamma = \frac{2\log{\tn}}{\log\log{\tn}}$. Then, if the number of samples $\tn = \tilde{\Omega}\left(\max\left\lbrace \mu^2, \nu^2\right\rbrace p\right)$, \alg-FC after $T = O\left(\log\left(\frac{\tn}{p}\frac{\|\w_0-\w^*\|_{\Sigma}}{\sigma}\right)\right)$ iterations (and hence nearly linear computation time) produces an iterate $\w_T$ satisfying,
\begin{equation*}
  \|\w_T-\w^*\|_{\Sigma} = O\left(\mu\sigma \sqrt{\frac{p\log^2{\tn} + \log^3{\tn}}{\tn}}\right),
\end{equation*}
for  $\alpha < 1-\frac{c}{\log\log{\tn}}$ where $c>0$ is a universal constant. 
This shows that constant-factor estimates of $\sigma$, $\|\dw_0\|$ suffices to achieve information theoretically optimal rates, up to $\log^2{n}$ factors, even with $1-\frac{\Theta(1)}{\log\log{\tn}}$ fraction of corruptions.
In fact, \alg-FC can tolerate~$1-\frac{\Theta(1)}{\log{\tn}}$ fraction of corruptions by setting $\gamma = O(1)$, although with a slightly worse parameter estimation error.

\tocless
\section{Consistent Robust Sparse Regression}
\label{sec:sparse}
In this section, we extend our algorithm to the problem of sparse regression with oblivious response variable corruptions. In this setting the dimension of the data $p$ is allowed to exceed the sample size $n$.  When $p > n$, the linear regression model is unidentifiable. Consequently, to make the model identifiable, certain structural assumptions need to be imposed on the parameter vector $\w^*$. Following \citet{wainwright2009sharp}, this work assumes that $\w^*$ is $k^*$-sparse i.e. has at most $k^*$ non-zero entries. Our objective now is to recover a sparse $\hat{\w}$ with small generalization error.  In this setting, we modify the update step of $\w$ in Algorithm~\ref{alg:torrent} as follows \vspace{-0.05in}
\begin{equation}\label{eq:sparse}
\w_t \leftarrow \argmin_{\w:\|\w\|_0 \leq k} \|\y_{t,S_t} - X_{t,S_t}\w\|_2^2,
\vspace{-0.05in}
\end{equation}
and start the algorithm at $\w_0 = 0$. We refer to this modified algorithm as \alg-HD.
A huge number of techniques have been proposed to solve the above optimization problem efficiently. Under certain properties of the design matrix (Restricted Eigenvalue property), these techniques estimate $\w^*$ up to statistical precision, using just $O(k^*\log{p})$ samples.  In this work we use the Iterative Hard Thresholding (IHT) technique of \citet{jain2014iterative} to solve the above problem. More details about the IHT Algorithm can be found in Appendix~\ref{sec:aux_proof_hd}.


\begin{theorem}[\alg-HD for Sparse Robust Regression]
\label{thm:real_hd}
Consider the setting of Theorem~\ref{thm:rr}. In addition assume $\|\w^*\|_0 \leq k^*$. Use IHT (Algorithm~\ref{alg:iht}) to solve \eqref{eq:sparse} in each iteration of \alg-HD with parameter $k= \frac{\Omega(1)}{(1-\alpha)^4}\frac{\lambda_{max}(\Sigma)^2}{\lambda_{min}(\Sigma)^2}k^*$. Set $\gamma = \frac{2\log{\tn}}{\log{\log{\tn}}}, \intlub_t = 18\sqrt{(2\hat{\sigma}^2 + 2\beta^{2(t-1)}\hat{d}_0^2)\log{p}}$, and all the other parameters as in Theorem \ref{thm:real}. 
  Then the $T^{th}$ iterate produced by \alg-HD, for $T = O\left(\log\left(\frac{\tn}{k}\frac{\|\w_0-\w^*\|_{\Sigma}}{\sigma}\right)\right)$ satisfies the following bound with probability at least $1-T/p^{6}$: 
\begin{equation*}
         \displaystyle \|\w_{T}-\w^*\|_{\Sigma} = O\left(\frac{\mu\sigma }{(1-\alpha-2c(\log\log{\tn})^{-1})} \sqrt{\frac{k\log{\tn}\log^2{p}}{\tn}}\right).
\end{equation*}
where $\tn=n/(T+1)$,  $\tn = \tilde{\Omega}\left(\max\left\lbrace\mu^2,\nu^2\right\rbrace k\log{\tn}\log^2{p}\right)$, and $\alpha < 1 - \frac{2c}{\log\log{\tn}}$. 
\end{theorem}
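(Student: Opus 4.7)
I would mirror the inductive structure of the proof of Theorem~\ref{thm:real}, establishing a one-step contraction of the form $\|\w_{t+1}-\w^*\|_\Sigma \leq \beta\|\w_t-\w^*\|_\Sigma + \text{(statistical error)}$ with high probability, and then iterating it for $T = O(\log(\tn \|\dw_0\|_\Sigma/(k\sigma)))$ steps. Two features of the sparse setting must be reconciled with the original argument: (i) the OLS closed form used to write $\w_{t+1}-\w^*$ explicitly is no longer available, because the update is now a $k$-sparse constrained minimization solved by the IHT routine of \citet{jain2014iterative}; and (ii) all sub-Gaussian concentration statements must union-bound over $k$-sparse support patterns, which is precisely why $\intlub_t$ is enlarged to depend on $\log p$ rather than $\log\tn$.

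The first block of the per-iteration analysis is essentially the adaptive thresholding argument, which I would transplant with only cosmetic changes. Fixing iteration $t+1$ and exploiting the data splitting to treat $\w_t$ as independent of $(X_{t+1}, \y_{t+1})$, I would define $Q_1, Q_2, Q_3, Q_4$ exactly as in \eqref{eq:setsQ} but with the sparse-setting $\intlub_{t+1}$. The four structural properties established in Theorem~\ref{thm:real} (namely $Q_2 \subseteq S_{t+1}$, $Q_1 \cap S_{t+1} = \emptyset$, $S_{t+1} = Q_2 \cup Q_4$, and $|Q_4| \leq \gamma\tn/(j_{t+1}\log\tn)$) depend only on the distribution of the residuals on clean coordinates and on the interval-selection rule, so they survive unchanged once the sub-Gaussian tail bounds for the clean residuals absorb the extra $\log p$ factor hidden in $\intlub_{t+1}$.

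The second block is where the proof must genuinely deviate. Instead of writing $\w_{t+1}-\w^* = -(X_{t+1,S_{t+1}}^\top X_{t+1,S_{t+1}})^{-1} X_{t+1,S_{t+1}}^\top(\bb^*_{t+1,S_{t+1}}+\epb_{t+1,S_{t+1}})$, I would invoke the IHT convergence guarantee to get an analogous decomposition on the joint support $\mathrm{supp}(\w_{t+1})\cup\mathrm{supp}(\w^*)$ of size at most $k+k^*$: under RSC/RSS of $X_{t+1,S_{t+1}}$ restricted to $(2k+k^*)$-sparse vectors, IHT outputs a $\w_{t+1}$ satisfying $\|\w_{t+1}-\w^*\|_\Sigma \leq \tfrac{1}{2}\|\w_{t+1}^\star - \w^*\|_\Sigma + O(\|X_{t+1,S_{t+1}}^\top (\epb_{t+1,S_{t+1}}+\bb_{t+1,S_{t+1}}^*)\|_\infty\cdot\sqrt{k}/\lambda_{\min})$, where $\w_{t+1}^\star$ is the unconstrained least squares solution on the joint support. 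I would then repeat the low-dimensional gradient-noise split: the $Q_2$ contribution is bounded by sub-Gaussian concentration of $X^\top(\epb+\bb^*)$ uniformly over $(k+k^*)$-sparse supports, which contributes the $\sqrt{k\log p/\tn}$ factor via a $\log\binom{p}{k}\lesssim k\log p$ union bound; the $Q_4$ contribution is bounded in worst case by $|Q_4|\cdot \intlub_{t+1}\cdot\max_i\|X_{t+1,i}\|_\infty$, which after using $|Q_4|\leq \gamma\tn/(j_{t+1}\log\tn)$ and $\intlub_{t+1} = O(\sqrt{(\sigma^2+\beta^{2t}\hat d_0^2)\log p})$ contributes the remaining $\log^2 p$ and the $\beta$-contracting term.

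The step I expect to be the main obstacle is establishing Restricted Strong Convexity/Smoothness of $X_{t+1,S_{t+1}}$ on $(2k+k^*)$-sparse vectors, since $S_{t+1}$ is a \emph{data-dependent} subset of the rows of $X_{t+1}$. The clean way out, and the reason for the $k = \Omega(1)/(1-\alpha)^4 \cdot (\lambda_{\max}(\Sigma)/\lambda_{\min}(\Sigma))^2 \cdot k^*$ inflation in the statement, is to prove RSC/RSS \emph{uniformly} over all row subsets of size at least $(1-\alpha)\tn/2$, by union-bounding standard Gaussian restricted-eigenvalue concentration over both $\binom{\tn}{(1-\alpha)\tn/2}$ subsets and $\binom{p}{2k+k^*}$ supports; this absorbs a factor of $(1-\alpha)^{-2}$ into the effective condition number, and IHT contraction requires the condition number squared, explaining the $(1-\alpha)^{-4}$ dependence of $k$. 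Once uniform RSC/RSS is in hand, everything composes: the per-iteration contraction $\beta \geq c\gamma/((1-\alpha)\log\tn)$ drives the first term of the recursion, the statistical residual gives the second term, and telescoping $T = O(\log(\tn\|\dw_0\|_\Sigma/(k\sigma)))$ iterations with a failure-probability union bound of $T/p^6$ yields the stated error $O(\mu\sigma/(1-\alpha-2c/\log\log\tn)\cdot\sqrt{k\log\tn\log^2 p/\tn})$.
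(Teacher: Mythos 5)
Your overall architecture — the $Q_1$--$Q_4$ decomposition carried over verbatim, the induction on a one-step contraction, and the union bound over $\binom{p}{O(k)}$ supports to pay the extra $\log p$ — matches the paper's proof. The genuine problem is the step you yourself flag as the main obstacle: establishing RSC of $X_{t,S_t}$ for the data-dependent set $S_t$. Your proposed fix, a union bound over all $\binom{\tn}{(1-\alpha)\tn/2}$ row subsets, does not work in the regime the theorem targets. With $\alpha \to 1$ the entropy of that union, $\log\binom{\tn}{(1-\alpha)\tn/2} \approx \frac{(1-\alpha)\tn}{2}\log\frac{2e}{1-\alpha}$, outpaces the available concentration, and the failure is not merely an artifact of the bound: the adversarially worst subset of $(1-\alpha)\tn/2$ rows (those with smallest projections onto a fixed sparse direction) has restricted eigenvalue of order $(1-\alpha)^2\lambda_{min}(\Sigma)$, not $(1-\alpha)\lambda_{min}(\Sigma)$ — compare Lemma~\ref{lem:sss}, whose lower bound becomes vacuous when the retained fraction is $o(1)$. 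So your route would at best yield a strictly worse requirement on $k$ (roughly $(1-\alpha)^{-6}$ rather than $(1-\alpha)^{-4}$) and at worst cannot be closed at all for the claimed breakdown point.

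The paper sidesteps this entirely with a monotonicity sandwich that you should adopt: since $S_t^*\subseteq S_t\subseteq[\tn]$ and $\|X_{t,S}\uu\|_2^2$ is monotone in $S$, the RSC constant of $X_{t,S_t}$ (normalized by $|S_t|\leq\tn$) is at least $\frac{|S_t^*|}{|S_t|}\geq 1-\alpha$ times that of $X_{t,S_t^*}$, and $S_t^*$ is chosen by the \emph{oblivious} adversary, so $X_{t,S_t^*}$ is a plain i.i.d.\ Gaussian matrix to which Corollary~\ref{cor:gaussian_rsc} applies with no union bound over row subsets; dually, RSS follows from the full matrix $X_t$ at the cost of a factor $\tn/|S_t|\leq(1-\alpha)^{-1}$. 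This gives $L/\alpha_k\approx\kappa(\Sigma)(1-\alpha)^{-2}$ and, via the IHT requirement $k\gtrsim(L/\alpha_k)^2k^*$, exactly the stated $(1-\alpha)^{-4}$. A secondary, more minor discrepancy: Theorem 1 of \citet{jain2014iterative} guarantees function-value suboptimality, not your claimed contraction toward the restricted least-squares solution; the paper uses the function-value form directly, rearranges to $\|\tX_{t,S_t}\dw_t\|_2^2\leq 2\langle[\bb_t^*+\epb_t]_{S_t},\tX_{t,S_t}\dw_t\rangle$, lower-bounds the left side with the restricted eigenvalue property on the joint support $K_t$, and only then runs your $T_1,T_2,T_3$ analysis. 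Your variant of the IHT step would need its own justification.
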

We would like to highlight nearly linear sample complexity in $k^*$ for well-conditioned covariates. Furthermore, the total time complexity of the algorithm is still nearly linear in $n$ and $p$. Finally, to the best of our knowledge, this is the first result for the sparse regression setting with oblivious response corruptions and break-down point $\alpha\rightarrow 1$. 
\tocless
\section{Regression with Heavy-tailed Noise}
\label{sec:heavy}
In this section we consider the problem of linear regression with heavy-tailed noise. We consider the heavy-tailed model from Section~\ref{sec:setup} where we observe $n$ i.i.d samples from the linear model: $y_i = \left\langle \x_i, \w^*\right\rangle + \epsilon_i,$
where $\epsilon_i$ is sampled from a heavy-tailed distribution. We now show that our estimator from Section~\ref{sec:algorithm} can be adapted to this setting to estimate $\w^*$ with sub-Gaussian error rates, even when the noise lacks the first moment.

In this setting, although there is no adversary corrupting the data, we consider any point with noise greater than a threshold $\rho$ as a ``corrupted'' point, and try not to use these points to estimate $\w^*$. That is, we decompose $\epsilon_i=\bar{\epsilon}_i+b_i^*$ where $|\bar{\epsilon}_i|\leq \rho$. Note that this implies dependence between $\bar{\epsilon}_i$ and $b_i^*$, but as we show later in Appendix~\ref{sec:app_heavy}, our proof still goes through with minor modifications, and in fact, provides similar rates as  the case where $\epsilon$ is sampled from a Gaussian distribution. Below, we provide a more general result than Theorem~\ref{thm:ht}, from which Theorem~\ref{thm:ht} follows by appropriate choice of $\rho$. Define $\alpha_{\rho}$, the tail probability of $\epsilon$ as $\alpha_{\rho}\coloneqq \mathbb{P}(|\epsilon| > \rho)$.
\begin{theorem}[\alg-FC for Heavy-tailed Noise]
\label{thm:heavy_tails}
Consider the setting of Theorem~\ref{thm:ht}. Let $\rho>0$ be any threshold and $\tn = \frac{n}{T+1}$. Set $a = 1/18,  \gamma =\frac{2\log{\tn}}{\log\log{\tn}}$, estimate $\hat{d}_0$ that satisfies \eqref{eqn:sigma_ub}, set $\beta = \frac{\Omega(1)}{(1-\alpha_{\rho})\log\log{\tn}}$, $\intl_t=18\left(\frac{\rho}{\sqrt{8}} + \beta^{t-1}\hat{d}_0\sqrt{\log{\tn}}\right). $ 
Then, for any $\rho$ such that $\alpha_{\rho} < 1 - \frac{2c}{\log\log{\tn}}$, and $\tn = \tilde{\Omega}\left(\nu^2(p\log^2{\tn} + \log^3{\tn})\right)$, the $T^{th}$ iterate produced by \alg-FC executed with above parameters and $T = O\left(\log\left(\frac{\tn}{p}\frac{\|\w_0-\w^*\|_{\Sigma}}{\rho}\right)\right)$ satisfies the following  w.p. $\geq 1-T/\tn^{6}$:
\begin{equation*}
         \|\w_{T}-\w^*\|_{\Sigma} = O\left(\frac{\rho}{(1-\alpha_{\rho}-2c(\log\log{\tn})^{-1})} \sqrt{\frac{p\log{\tn} + \log^2{\tn}}{\tn}}\right).
\end{equation*}
\end{theorem}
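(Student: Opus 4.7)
The plan is to cast the heavy-tailed regression problem as an instance of the oblivious adversarial corruption model of Theorem~\ref{thm:real}. For each $i$ I would split $\epsilon_i = \bar{\epsilon}_i + b_i^*$ where $b_i^* := \epsilon_i\,\mathbf{1}\{|\epsilon_i| > \rho\}$ and $\bar{\epsilon}_i := \epsilon_i\,\mathbf{1}\{|\epsilon_i| \le \rho\}$, so that $y_i = \langle \x_i, \w^*\rangle + \bar{\epsilon}_i + b_i^*$ with $|\bar{\epsilon}_i| \le \rho$ almost surely (hence $\bar{\epsilon}_i \in SG(\rho^2)$ up to a universal constant) and $b_i^*$ independent of $\x_i$. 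The interval length $\intl_t = 18\bigl(\rho/\sqrt{8} + \beta^{t-1}\hat d_0\sqrt{\log \tn}\bigr)$ is exactly the choice~\eqref{eqn:dist_ub} with $\hat\sigma$ replaced by the effective sub-Gaussian parameter $\rho$, so the goal reduces to rerunning the proof of Theorem~\ref{thm:real} under the substitutions $\sigma\!\leftarrow\!\rho$ and $\alpha\!\leftarrow\!\alpha_\rho$.

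First I would control the empirical corruption fraction $\hat\alpha := |\{i : b_i^* \ne 0\}|/\tn$. Since the indicators $\mathbf{1}\{|\epsilon_i| > \rho\}$ are i.i.d.\ Bernoulli$(\alpha_\rho)$ and independent of $\{\x_i\}$, a standard Chernoff bound gives $\hat\alpha \le \alpha_\rho + O\bigl(\sqrt{\log \tn / \tn}\bigr)$ with probability at least $1 - \tn^{-7}$. Combined with the hypothesis $1 - \alpha_\rho \ge 2c/\log\log \tn$ and the sample-size assumption, this yields $1 - \hat\alpha \ge c/\log\log \tn$ once $\tn$ is large enough, which is exactly the breakdown condition required by Theorem~\ref{thm:real}.

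The hard part will be dealing with the fact that $\bar{\epsilon}_i$ and $b_i^*$ are \emph{not} independent, being deterministic functions of the same $\epsilon_i$, whereas Theorem~\ref{thm:real} assumes independence of noise and corruption. I would resolve this by conditioning on the entire sequence $\{\epsilon_i\}_{i\in[\tn]}$ of the current iteration. Because \alg uses fresh samples per iteration, $\w_t$ is independent of the current $(X_{t+1}, \epb_{t+1})$, so after conditioning the partition $Q_1,Q_2,Q_3,Q_4$ in~\eqref{eq:setsQ} becomes deterministic while $X_{t+1}$ remains i.i.d.\ $\mathcal{N}(0,\Sigma)$ independent of everything conditioned upon. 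The only places that independence of noise and corruption entered the proof of Theorem~\ref{thm:real} are the sub-Gaussian concentration bounds on $\sum_{i\in Q_2}\bar{\epsilon}_i \x_i$ and $(X_{Q_2}^T X_{Q_2})^{-1}\sum_{i\in Q_2}\bar\epsilon_i\x_i$; conditional on $\{\epsilon_i\}$ these sums have deterministic coefficients bounded in magnitude by $\rho$ and independent Gaussian design vectors, so the same concentration inequalities go through with $\sigma$ replaced by $\rho$. The worst-case $Q_4$ term is bounded purely through $|Q_4|\le \gamma\tn/(j_{t+1}\log \tn)$ and never invokes noise--corruption independence, so it carries over unchanged.

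With these two modifications, the one-step contraction $\|\w_{t+1}-\w^*\|_\Sigma \le \beta\|\w_t-\w^*\|_\Sigma + (\text{statistical term in }\rho)$ unfolds exactly as in Theorem~\ref{thm:real}, yielding the displayed bound after $T = O\bigl(\log(\tn\|\w_0-\w^*\|_\Sigma/(p\rho))\bigr)$ iterations. Theorem~\ref{thm:ht} then follows as a corollary: applying Markov's inequality to $|\epsilon|^\delta$ gives $\alpha_\rho \le C\rho^{-\delta}$, so choosing $\rho = \Theta\bigl((C\log\log \tn)^{1/\delta}\bigr)$ forces $\alpha_\rho \le c/\log\log \tn$ and produces the advertised rate $O\bigl(C^{1/\delta}\sqrt{(p\log n + \log^2 n)/n}\bigr)$ once polylog factors in $\rho$ and the substitution $\tn = n/(T+1)$ are absorbed into the constants and $\tO(\cdot)$ notation.
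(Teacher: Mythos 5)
Your overall strategy is exactly the paper's: reinterpret the points with $|\epsilon_i|>\rho$ as obliviously corrupted, verify that \adaht still isolates them, and rerun the contraction argument of Theorem~\ref{thm:real} with breakdown parameter $\alpha_\rho$. Two of your worries are handled more simply in the paper: the dependence between $\bar{\epsilon}_i$ and $b_i^*$ never arises in the formal proof because the paper does not split $\epsilon_i$ at all --- it defines $S_t^*$ and the sets $Q_1,\dots,Q_4$ directly in terms of $|\epb_t(i)|$, and the only independence it needs is between $\epb$ and $X$, which holds by assumption; your conditioning on $\{\epsilon_i\}$ is a correct, equivalent way of saying the same thing, and your Chernoff bound on the empirical fraction $|S_t^*|/\tn$ is a step the paper actually elides.

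The one genuine slip is quantitative. The prescribed interval length $\intl_t = 18\bigl(\rho/\sqrt{8} + \beta^{t-1}\hat{d}_0\sqrt{\log{\tn}}\bigr)$ is \emph{not} \eqref{eqn:dist_ub} with $\hat{\sigma}\leftarrow\rho$: that substitution gives $18\sqrt{(2\rho^2+2\beta^{2(t-1)}\hat{d}_0^2)\log{\tn}}$, i.e.\ an extra $\sqrt{\log{\tn}}$ on the noise part, and a literal rerun of Theorem~\ref{thm:real} with $\sigma\leftarrow\rho$ therefore yields the rate $\rho\sqrt{(p\log^2{\tn}+\log^3{\tn})/\tn}$, a factor $\sqrt{\log{\tn}}$ weaker than the theorem claims. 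The stated rate requires exploiting that $|\bar{\epsilon}_i|\le\rho$ is a \emph{deterministic} bound, so the residuals of the good points are controlled by $\rho + O(\|\dw_{t-1}\|_2\sqrt{\log{\tn}})$ rather than by a sub-Gaussian maximum of order $\rho\sqrt{\log{\tn}}$ (this is the content of Lemma~\ref{prop:HT_properties_heavy}); the statistical term is then $\intl_t\cdot\tn^{1/\gamma}\sqrt{(p+\log{\tn})/\tn}$ with $\intl_t=O(\rho)$ at convergence, which is where $\sqrt{p\log{\tn}+\log^2{\tn}}$ rather than $\sqrt{p\log^2{\tn}+\log^3{\tn}}$ comes from. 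Relatedly, in your derivation of Theorem~\ref{thm:ht} you should take $\alpha_\rho$ to be a fixed constant such as $1/2$ (giving $\rho\le(2C)^{1/\delta}$), not $\Theta(1/\log\log{\tn})$, which needlessly inflates $\rho$ by $(\log\log{\tn})^{1/\delta}$. These are bookkeeping issues, not structural ones, but as written your argument proves a slightly weaker bound than the statement.
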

Note that if the distribution of $\epsilon$ is independent of $n$,  we should always be able to find constants $\rho$ and $\alpha_\rho$ to obtain nearly optimal rates. We instantiate this claim for the popular Cauchy noise, for which the existing results do not even apply due to unbounded variance.
\begin{corollary}[Cauchy noise]
\label{cor:heavy_tails_cauchy}
Consider the similar setting as in Theorem~\ref{thm:ht}. Suppose the noise follows a Cauchy distribution with location parameter $0$ and scale parameter $\sigma$. Then, the $T^{th}$ iterate of \alg-FC, for $T = O\left(\log\left(\frac{\tn \|\w_0-\w^*\|_{\Sigma}}{p}\right)\right)$ satisfies the following, w.p.  $\geq 1-T/\tn^{6}$:
\begin{equation*}
        \|\w_{T}-\w^*\|_{\Sigma} = O(\sigma)\cdot \sqrt{\frac{p\log{\tn} + \log^2{\tn}}{\tn}}.
\end{equation*}
\end{corollary}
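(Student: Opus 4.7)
The plan is to derive Corollary~\ref{cor:heavy_tails_cauchy} as a direct specialization of Theorem~\ref{thm:heavy_tails}, simply by choosing the truncation threshold $\rho$ to be a universal constant multiple of the Cauchy scale parameter $\sigma$. Concretely, I would first check that the Cauchy distribution with location $0$ and scale $\sigma$ fits the framework: although no moments of order $\geq 1$ exist, the $\delta$-th moment is finite for every $\delta \in (0, 1)$, with $\mathbb{E}[|\epsilon|^\delta] \leq C'\sigma^\delta$ for a constant $C'$ depending only on $\delta$. This verifies that the setting of Theorem~\ref{thm:ht} (and hence Theorem~\ref{thm:heavy_tails}) applies, so \alg-FC can be run with the hyperparameter choices prescribed there.

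The heart of the argument is a one-line tail computation. Using the Cauchy CDF,
\begin{equation*}
\alpha_\rho = \mathbb{P}(|\epsilon| > \rho) = 1 - \tfrac{2}{\pi}\arctan(\rho/\sigma).
\end{equation*}
I would choose $\rho = \sigma$, which gives $\arctan(1) = \pi/4$ and hence $\alpha_\rho = 1/2$. Since this value is bounded away from $1$ by a universal constant, the condition $\alpha_\rho < 1 - 2c/\log\log\tn$ required by Theorem~\ref{thm:heavy_tails} holds for all sufficiently large $\tn$, and $1 - \alpha_\rho - 2c(\log\log\tn)^{-1} = \tfrac{1}{2} - o(1) = \Theta(1)$. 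The initial-error estimate $\hat{d}_0$ satisfying \eqref{eqn:sigma_ub} is obtained via the estimator discussed in Appendix~\ref{sec:aux_additional}, and the other parameters ($a$, $\gamma$, $\beta$, $\intl_t$) are inherited from the theorem.

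With these choices, the sample-complexity hypothesis $\tn = \tilde{\Omega}(\nu^2(p\log^2\tn + \log^3\tn))$ is unchanged, and plugging $\rho = \sigma$ and $1-\alpha_\rho = \Theta(1)$ into the error bound of Theorem~\ref{thm:heavy_tails} yields
\begin{equation*}
\|\w_T - \w^*\|_\Sigma = O(\sigma)\sqrt{\frac{p\log\tn + \log^2\tn}{\tn}},
\end{equation*}
which matches the conclusion. The iteration count likewise specializes: $T = O(\log(\tn\|\w_0-\w^*\|_\Sigma/(p\rho))) = O(\log(\tn\|\w_0-\w^*\|_\Sigma/p))$, treating $\sigma$ as a constant inside the logarithm.

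There is essentially no substantial obstacle here; the corollary is a plug-in application of Theorem~\ref{thm:heavy_tails}. The only ``non-routine'' step is recognizing that the truncation framework can absorb a distribution with no finite mean, because Theorem~\ref{thm:heavy_tails} only ever uses the tail probability $\alpha_\rho$ (not any moment) and treats the truncated part $|\bar\epsilon_i|\leq \rho$ as bounded noise. Once one observes that the Cauchy tail decays like $1/\rho$, the trade-off between $\rho$ and $1/(1-\alpha_\rho)$ is resolved optimally at $\rho \asymp \sigma$, giving sub-Gaussian-style rates despite the absence of a mean.
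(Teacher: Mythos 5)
Your proposal is correct and matches the paper's own proof: the paper likewise computes $\alpha_\rho = 1 - \tfrac{2}{\pi}\tan^{-1}(\rho/\sigma)$ from the Cauchy CDF, sets $\alpha_\rho = 1/2$ (equivalently $\rho = \sigma\tan(\pi/4) = \sigma$), and plugs into Theorem~\ref{thm:heavy_tails}. The extra verification of finite $\delta$-th moments is harmless but not needed, since Theorem~\ref{thm:heavy_tails} only uses the tail probability $\alpha_\rho$.
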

We would like to note that despite sub-Gaussian style rates for Cauchy noise, the sample and time complexity of the algorithm is still nearly optimal. \\
{\bf Mean estimation:} Although our result holds for regression, we can extend our result to solve the mean estimation problem as well. That is, suppose $\y_i=\w^*+\epb_i\in \mathbb{R}^p$ where $i\in [n]$, $\w^*$ is the mean of a distribution and $\epb_i$ is a zero mean random variable which follows a heavy-tailed distribution. Then by using a simple symmetrization reduction, we can show that we can compute $\w_T$ such that $\|\w_T-\w^*\|\leq C^{\frac{1}{1+\delta}}\sqrt{\frac{p\log^2{n}}{n}}$, if $\E[|\epb_{i}(j)|^{1+\delta}|]\leq C$ and $\epb_{i}(j)$ is a symmetric random variable, i.e., $P(\epb_{i}(j))=P(-\epb_{i}(j))$, $\forall j\in [p]$.

This result seems to be counter-intuitive as \citet{devroye2016sub} derive lower bounds for heavy tailed mean estimation and show that over the set of all $(1+\delta)^{th}$ moment bounded distributions, no estimator can achieve faster rates than $O\left(n^{-\min\{\delta/(1+\delta), 1/2\}}\right)$ while we can obtain $O(n^{-1/2})$ rates. However, we additionally require noise distribution to be symmetric, while the lower bound construction uses asymmetric noise distribution. We further discuss this problem in Appendix~\ref{sec:app_mean}. Similarly, our result avoids regression lower-bound by \citet{sun2018adaptive}, as we do not estimate the bias term in our regression model.

\tocless
\section{Conclusion}
\label{sec:conc}
In this paper, we studied the problem of response robust regression with oblivious adversary. For this problem, we presented a simple outlier removal based algorithm that uses a novel randomized and adaptive thresholding algorithm. We proved that our algorithm provides a consistent estimator with break-down point (fraction of   corruptions) of $1-o(1)$ while still ensuring a nearly linear-time computational complexity. Empirical results on synthetic data agrees with our results and show computational advantage of our algorithm over Huber-loss based algorithms \citep{tsakonas2014convergence} as well as better break-down point than thresholding techniques \citep{bhatia2015robust,bhatia2017consistent}. We also provided an extension of our approach to the high-dimensional setting. Finally, our technique extends to the problem of linear regression with heavy-tailed noise, where we provide nearly optimal rates for a general class of noise distributions that need not have a well-defined first moment.

The finite sample break-down point of our method is $1-O(1/\log n)$ which is still sub-optimal compared to the information theoretic limit of $1-\Omega(d/n)$. Obtaining efficient estimators for nearly optimal break-down point is an interesting open question. Furthermore, our algorithm requires an approximate estimate of noise variance $\sigma^2$ which can sometimes be difficult to select in practice. A completely parameter-free algorithm for robust regression (similar to OLS) is an interesting research direction that should have significant impact in practice as well.

\clearpage

\bibliography{local}

\clearpage
\appendix
\onecolumn
\setcounter{tocdepth}{2}
\tableofcontents
\newpage

\section{Proof of Proposition~\ref{prop:torrent_negative}}
\label{sec:aux_torr_neg}
Let $\{\x_i,y_i\}_{i=1}^n$  be the $n$ points we observe, out of which at most $\alpha n$ points are corrupted. Note that the true linear model is such that $\w^* = 0$, $\sigma = 0$. Based on this model, we have: 
\[
\y = \bb^*, \quad \text{where } \bb^*(i) = \begin{cases} 1,& \quad \text{if } $i$ \text{ is corrupted}\\ 0,&\quad \text{otherwise}\end{cases}.
\]
Let's suppose we start the TORRENT algorithm at $\w$. Given $\w$, TORRENT computes its estimate of the un-corrupted  points as:
\begin{equation}
    \label{eqn:torrent_neg_intd1}
    S = HT_{(1-\alpha)n}(\y-X\w) = HT_{(1-\alpha)n}(\bb^*-X\w),
\end{equation}
where $HT_{(1-\alpha)n}(\vb)$ returns the $(1-\alpha)n$ points in $\vb$ with smallest magnitude. Given $S$, TORRENT updates its estimate of parameter vector as:
 \[
 \w^+ = \left(X_S^TX_S\right)^{-1}X_S^T\y_S' = \left(X_S^TX_S\right)^{-1}X_S^T\bb_S^* = \frac{\left\langle X_S, \bb_S^*\right\rangle}{\|X_S\|^2_2}.
 \]
Note that if $\w^+ = \w$, then TORRENT will be stuck at $\w$ and will not make any progress. We now show that for large $\alpha$ there in fact exists a $\w > 0$ such that $\w^+ = \w$.

Let $\tau_{\w}$ be the threshold used in the hard thresholding operator to compute $S$ in Equation~\eqref{eqn:torrent_neg_intd1}; that is, $\tau_{\w}$ is such that the magnitude of residuals of all the points in $S$ is less than $\tau_{\w}$ and magnitude of residuals of all the points in $S^c$ is greater than $\tau_{\w}$.  Note that there are $(1-\alpha)$ fraction of points with residuals less than $\tau_{\w}$. Since we are working in the $n \to \infty$ setting, this implies
\[
\mathbb{P}_{x\sim\mathcal{N}(0,1),b^*}(|b^*-x\w| < \tau_{\w}) = (1-\alpha).
\]
Rewriting the LHS of the above expression, we get:{\small
\begin{equation*}
    \begin{array}{lll}
         \displaystyle\mathbb{P}_{x\sim\mathcal{N}(0,1),b^*}(|b^*-x\w| < \tau_{\w}) &=&  \displaystyle\mathbb{P}(b^* = 0)\mathbb{P}\left(|b^* - x\w| < \tau_{\w}|b^* = 0\right) + \mathbb{P}(b^* = 1)\mathbb{P}\left(|b^*-x\w| < \tau_{\w}|b^* = 1\right)\\
         & = &\displaystyle \mathbb{P}(b^* = 0)\mathbb{P}\left(|x\w| < \tau_{\w}\right) + \mathbb{P}(b^* = 1)\mathbb{P}\left(|1-x\w| < \tau_{\w}\right)\\
         &=& (1-\alpha)\left(\Phi\left(\frac{\tau_{\w}}{\w}\right) - \Phi\left(-\frac{\tau_{\w}}{\w}\right)\right) + \alpha \left(\Phi\left(\frac{1+\tau_{\w}}{\w}\right) - \Phi\left(\frac{1-\tau_{\w}}{\w}\right)\right).
    \end{array}
\end{equation*}}
Combining the above two equations, we get
\begin{equation}
    \label{eqn:torrent_neg_eq1}
    (1-\alpha)\left(\Phi\left(\frac{\tau_{\w}}{\w}\right) - \Phi\left(-\frac{\tau_{\w}}{\w}\right)\right) + \alpha \left(\Phi\left(\frac{1+\tau_{\w}}{\w}\right) - \Phi\left(\frac{1-\tau_{\w}}{\w}\right)\right) = 1-\alpha.
\end{equation}
For TORRENT to be stuck at $\w$, we require $\w = \w^+$, i.e.,  $\w=\frac{\left\langle X_S, \bb_S^*\right\rangle}{\|X_S\|^2_2}=\frac{\mathbb{E}\left[b^*x\Big||b^*-x\w| < \tau_{\w}\right]}{\mathbb{E}\left[x^2\Big||b^*-x\w| < \tau_{\w}\right]}$. As $b^*=1$ uniformly at random with probability $\alpha$, the final term reduces to: 
\begin{equation}
\w=\frac{\alpha \mathbb{E}\left[x\Big||1-x\w| < \tau_{\w}\right]}{(1-\alpha)\mathbb{E}\left[x^2\Big||x\w| < \tau_{\w}\right] + \alpha \mathbb{E}\left[x^2\Big||1-x\w| < \tau_{\w}\right]}.
\label{eqn:torrent_neg_eq2}
\end{equation}
This shows that TORRENT will be stuck at $\w$ iff there exists a $\tau_{\w} > 0$ such that Equations~\eqref{eqn:torrent_neg_eq1},~\eqref{eqn:torrent_neg_eq2} hold. The two are essentially system of linear equations in $\alpha$. And it is easy to verify feasibility of this system for various $\tau_w$. For example, for $\alpha = 0.8$ the equations are feasible and $\w = 0.79$, $\tau_{\w} = 0.354$ are approximate feasible points. 



\section{Proof of Theorem~\ref{thm:real}}
\label{sec:proof_oracle}
Before we present the proof of the Theorem, we introduce some notation and present useful intermediate results which we require in our proof.
The proofs of all the Lemmas in this section can be found in Appendix~\ref{sec:aux_thm_real}.
\paragraph{Notation}
Recall that $(X_t, \y_t)$ are the new points obtained in $t^{th}$ iteration of Algorithm~\ref{alg:torrent}. Let $\bb_t^*$ be the corruption vector added to these points and $\epb_t$ be the noise vector. Let $\tX_{t}$ be obtained from $X_{t}$ by applying the whitening transformation:
\[
\tX_{t} := X_{t}\Sigma^{-1/2},\ \dw_{t} \coloneqq \Sigma^{1/2}(\w^* - \w_{t}).
\]
 Let $S_t^*$ be the set of un-corrupted points in $(X_t, \y_t)$. Let $S_t$ be the output of \adaht in the $t^{th}$ iteration of \alg-FC and  $j_t$ be the interval chosen. For any $S \subseteq [\tn]$, let $X_{t,S}$  be the $|S|\times p$ matrix with $\{\x_{t,i}, i \in S\}$ as rows.
  Finally, let us define $\zeta := \frac{c\gamma}{(1-\alpha)\log{\tn}}$.
 \subsection{Intermediate Results}
 \label{sec:aux_proof_rr_main_itd}
\begin{lemma}
\label{lem:update_rewrite}
The input $\rb_{t} = \y_{t} - X_{t}\w_{t-1}$ to \adaht can be written in terms of $\dw_{t-1}$ as
\[
\rb_{t} = \bb^{*}_{t} + \tX_{t}\dw_{t-1} + \epb_{t},
\]
where $\bb^*_t$ is the corruption vector of points $(X_t, \y_t)$.
\end{lemma}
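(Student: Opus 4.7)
The plan is to establish this lemma by a direct algebraic substitution using only the data-generating model \eqref{eq:main_model} and the definition of the whitening transformation introduced just above the lemma. I expect no genuine obstacle; the entire content is bookkeeping.

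First, I would start from the model equation for the $t$-th batch, namely $\y_t = X_t \w^* + \epb_t + \bb^*_t$, which is simply the per-batch restatement of \eqref{eq:main_model}. Substituting this into the definition $\rb_t = \y_t - X_t \w_{t-1}$ immediately yields
\[
\rb_t = X_t(\w^* - \w_{t-1}) + \epb_t + \bb^*_t.
\]

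Next, I would use the whitening identities introduced in the notation paragraph: $\tX_t = X_t \Sigma^{-1/2}$ and $\dw_{t-1} = \Sigma^{1/2}(\w^* - \w_{t-1})$. Since $\Sigma$ is positive definite (it is the covariance of the Gaussian covariate distribution $\mathcal{N}(0,\Sigma)$), $\Sigma^{1/2}$ is invertible and we have $\w^* - \w_{t-1} = \Sigma^{-1/2} \dw_{t-1}$. Plugging this in gives $X_t(\w^* - \w_{t-1}) = X_t \Sigma^{-1/2} \dw_{t-1} = \tX_t \dw_{t-1}$, from which the claimed identity $\rb_t = \bb^*_t + \tX_t \dw_{t-1} + \epb_t$ follows. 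The independence of $(X_t, \y_t)$ from earlier iterates (by the sample-splitting in Algorithm~\ref{alg:torrent}) is not needed for the algebraic statement itself, so I would keep the proof purely algebraic.
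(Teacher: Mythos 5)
Your proof is correct and follows essentially the same route as the paper's: substitute the model $\y_t = X_t\w^* + \epb_t + \bb^*_t$ into the definition of $\rb_t$ and then rewrite $X_t(\w^*-\w_{t-1})$ as $\tX_t\dw_{t-1}$ via the whitening identities. No issues.
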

The following Lemma obtains a bound on $j_t$, the interval number, chosen by Algorithm~\ref{alg:ht_oracle}.
\begin{lemma}[Interval Number]
\label{lem:bucket_number}
Let $j_t$ be the interval chosen by \adaht in the $t^{th}$ iteration of \alg-FC.  Then $j_t \leq \tn^{1/\gamma}$.
\end{lemma}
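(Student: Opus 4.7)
\textbf{Proof plan for Lemma~\ref{lem:bucket_number}.} The plan is a simple pigeonhole/harmonic-sum argument against the total number of points $\tn$. Recall the algorithm picks $j_t$ as the smallest index $j$ for which the $j$-th interval contains strictly fewer than $\frac{\gamma \tn}{j\log \tn}$ residuals, so it suffices to show that at least one $j\in\{1,\ldots,\lfloor \tn^{1/\gamma}\rfloor\}$ satisfies this condition.

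First, I would proceed by contradiction: suppose every interval $j$ with $1\le j\le J:=\lfloor \tn^{1/\gamma}\rfloor$ contains at least $\frac{\gamma \tn}{j\log \tn}$ points. Summing the lower bounds over these $J$ disjoint intervals and using the harmonic-sum estimate $\sum_{j=1}^{J}\frac{1}{j}\ge \log J$ gives a lower bound on the total number of residuals placed in the first $J$ intervals of
\[
\sum_{j=1}^{J}\frac{\gamma \tn}{j\log \tn}\;\ge\;\frac{\gamma \tn}{\log \tn}\cdot \log J \;=\; \frac{\gamma \tn}{\log \tn}\cdot \log\!\bigl(\lfloor \tn^{1/\gamma}\rfloor\bigr).
\]
Since $\gamma\in(1,\log\tn)$ so that $\tn^{1/\gamma}$ is large, $\log\lfloor \tn^{1/\gamma}\rfloor$ is arbitrarily close to $\frac{\log \tn}{\gamma}$; in particular for $\tn$ above an absolute constant one has $\log\lfloor \tn^{1/\gamma}\rfloor\ge \frac{\log \tn}{\gamma}$ (losing at most a negligible additive $O(1)$ which can be absorbed into the strict inequality by using $\lceil \tn^{1/\gamma}\rceil$ if needed). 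Plugging this in yields a total of at least $\tn$ points strictly, contradicting the fact that only $\tn$ residuals are being placed.

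Therefore some $j\le \tn^{1/\gamma}$ must satisfy the thresholding condition, and hence $j_t\le \tn^{1/\gamma}$, as claimed. The only minor obstacle is the floor/rounding issue in $\log \lfloor \tn^{1/\gamma}\rfloor$ versus $\frac{\log \tn}{\gamma}$; I would handle it by noting the contradiction is derived from a strict inequality and using the standing lower bound on $\tn$ from Theorem~\ref{thm:real} to absorb the constant gap (or equivalently, redefine $J=\lceil \tn^{1/\gamma}\rceil$, which is permissible since $j_t$ is an integer). No probabilistic argument is required here: the statement is deterministic given the partition induced by the residual magnitudes.
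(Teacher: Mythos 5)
Your proposal is correct and follows essentially the same argument as the paper: assume for contradiction that every interval $j\le \tn^{1/\gamma}$ holds at least $\frac{\gamma\tn}{j\log\tn}$ points, sum these lower bounds, use the harmonic-sum bound $\sum_{j\le \tn^{1/\gamma}} 1/j \ge \log \tn^{1/\gamma} = \frac{\log\tn}{\gamma}$, and conclude the first $\tn^{1/\gamma}$ intervals would contain at least $\tn$ points, contradicting the total sample count. Your added care about the floor/rounding in $\log\lfloor \tn^{1/\gamma}\rfloor$ is a detail the paper glosses over, but it does not change the argument.
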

The following Lemma presents a condition on $\intl_t$ which ensures that all the uncorrupted points fall to the left of $j_{t}^{th}$ interval.
\begin{lemma}[Interval Length]
\label{prop:HT_properties}
Consider the $t^{th}$ iteration of \alg-FC. Suppose \adaht is run with the interval length $\intl_t$ such that: $\intl_t \geq 18\sqrt{(\sigma^2+\|\dw_{t-1}\|_2^2) \log{\tn}},$ and $a = 1/18$ and $\gamma \in (1, \log{\tn})$.
Define sets $Q_1, Q_2$, which are subsets of points in $(X_{t}, \y_{t})$, as follows: 
\begin{equation*}
Q_1 = \left\lbrace i: |\bb_{t}^{*}(i)| >  (j_t - 2/9)\intl_t \right\rbrace \quad \text{and} \quad
Q_2 = \left\lbrace i: |\bb_{t}^{*}(i)| < (j_t - 7/9)\intl_t  \right\rbrace.
\end{equation*}
 Then the following statements hold with probability at least $1-1/\tn^7$: 
 \begin{equation*}
 Q_1 \cap S_t = \{\}, \quad S_t^* \subseteq Q_2 \subseteq S_t.
\end{equation*}
 Moreover, all the points in $(Q_1\cup Q_2)^c$ fall in the $j_t^{th}$ interval.
\end{lemma}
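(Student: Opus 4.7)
The plan is to first invoke the preceding update-rewrite lemma so that for every point in batch $t$, $\rb_t(i) = \bb_t^*(i) + Z_i$, where $Z_i := \langle \tX_{t,i}, \dw_{t-1}\rangle + \epb_t(i)$. Because Algorithm~\ref{alg:torrent} draws a fresh split $(X_t, \y_t)$ at every iteration, $\dw_{t-1}$ is independent of $(\tX_t, \epb_t)$, so conditional on $\dw_{t-1}$ each $Z_i$ is zero-mean sub-Gaussian with parameter $\sigma^2 + \|\dw_{t-1}\|_2^2$. A standard tail bound together with a union bound over the $\tn$ indices gives, with probability at least $1 - 1/\tn^7$,
\[
|Z_i| \leq 4\sqrt{(\sigma^2 + \|\dw_{t-1}\|_2^2)\log \tn} \leq (2/9)\intl_t \quad \text{for all } i \in [\tn],
\]
where the last inequality uses the hypothesis $\intl_t \geq 18\sqrt{(\sigma^2+\|\dw_{t-1}\|_2^2)\log \tn}$ together with $18\cdot(2/9)=4$. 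I would condition on this event for the remainder of the argument.

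Next, the triangle inequality yields the two-sided sandwich $|\rb_t(i)| \in [\,|\bb_t^*(i)| - (2/9)\intl_t,\; |\bb_t^*(i)| + (2/9)\intl_t\,]$ for every $i$, which is essentially all that is needed, after which the three cases are just arithmetic. For $i \in (Q_1\cup Q_2)^c$ the definition forces $(j_t-7/9)\intl_t \leq |\bb_t^*(i)| \leq (j_t-2/9)\intl_t$, and the $(2/9)\intl_t$ slack places $|\rb_t(i)|$ into $[(j_t-1)\intl_t,\,j_t \intl_t]$, i.e., the $j_t$-th interval. For $i\in Q_2$, $|\rb_t(i)| < (j_t - 5/9)\intl_t$, and since $a=1/18$ the randomized threshold satisfies $\tau_t + \eta_{i,t}\intl_t \geq (j_t - 1/2 - 1/18)\intl_t = (j_t-5/9)\intl_t$, so the point either sits in an interval strictly left of $j_t$ (and is added unconditionally) or clears the $j_t$-th interval test; either way $i\in S_t$. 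For $i\in Q_1$, $|\rb_t(i)| > (j_t - 4/9)\intl_t$ exceeds the best-case threshold $\tau_t + \eta_{i,t}\intl_t \leq (j_t - 1/2 + 1/18)\intl_t = (j_t-4/9)\intl_t$, so it is rejected (and residuals exceeding $j_t \intl_t$ are discarded outright). Finally $S_t^*\subseteq Q_2$ is free: if $\bb_t^*(i)=0$ and $j_t\geq 1$, then $|\bb_t^*(i)|=0 < (j_t-7/9)\intl_t$.

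The only step I expect to need genuine care is justifying the sub-Gaussian parameter of $Z_i$ uniformly in $i$: this rests entirely on the fresh-batch independence between $\dw_{t-1}$ and $(\tX_t, \epb_t)$, which is exactly why Algorithm~\ref{alg:torrent} splits the data into $T$ disjoint batches. All the remaining accounting is driven by the fortunate numerology $a + 2/9 + 2/9 + a = 1/18 + 4/9 + 1/18 = 1$: the random perturbation of width $2a=1/9$, together with the two sub-Gaussian-slack buffers of width $2/9$ on either side, exactly fills one interval of width $\intl_t$, which is precisely what separates the acceptance region for $Q_2$ from the rejection region for $Q_1$ while leaving the $j_t$-th interval free to absorb the ambiguous indices in $(Q_1\cup Q_2)^c$.
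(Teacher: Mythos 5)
Your proof is correct and follows essentially the same route as the paper's: bound the sub-Gaussian part $\tX_t\dw_{t-1}+\epb_t$ uniformly by $(2/9)\intl_t$ using fresh-batch independence and a union bound, then do case arithmetic against the randomized threshold $(j_t-\tfrac{1}{2}+\eta)\intl_t$ with $\eta\in[-1/18,1/18]$. The only blemish is the closing "numerology" remark: $a+2/9+2/9+a = 1/18+4/9+1/18 = 5/9$, not $1$ (which is consistent with the $5/9$-wide ambiguous band between the $Q_1$ and $Q_2$ boundaries), but this does not affect any step of the argument.
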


\subsection{Main Argument}
We first prove the following Lemma, which obtains a bound on the progress made by \alg-FC in each iteration, assuming  $\intl_t \geq 18\sqrt{(\sigma^2 + \|\dw_{t-1}\|^2)\log{\tn}}$. In Section~\ref{sec:aux_real_induction} we use this Lemma to prove Theorem~\ref{thm:real}.
\begin{lemma}\label{lem:real_intd_lemma}
	Consider the setting of Theorem~\ref{thm:real}. Let $\intl_t  \geq  18\sqrt{(\sigma^2 + \|\dw_{t-1}\|^2)\log{\tn}}.$ Then, $\forall t \in [T]$, w.p. $\geq 1-1/\tn^6$: 
	 $$\|\dw_t\|_2 = O\left(\frac{\gamma}{(1-\alpha)\log{\tn}}\right)\|\dw_{t-1}\|_2 + O\left(\frac{\tn^{1/\gamma}}{1-\alpha} \sqrt{\frac{p + \log{\tn}}{\tn}}\right)\intl_t + O\left(\frac{\sigma}{1-\alpha}\sqrt{\frac{\alpha p\log{\tn}}{\tn}}\right),$$
	 where $\dw_t=\Sigma^{1/2}(\w_t-\w^*)$. 
\end{lemma}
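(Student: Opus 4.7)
I would follow the structure of the proof sketched for Theorem~\ref{thm:real}, specialized to a single iteration of \alg-FC. Since the hypothesis on $\intl_t$ is exactly the condition of Lemma~\ref{prop:HT_properties}, I first apply that lemma to conclude (w.p.\ $\geq 1-\tn^{-7}$) that $S^*_t \subseteq Q_2 \subseteq S_t$, $Q_1 \cap S_t = \emptyset$, and $S_t = Q_2 \cup Q_4$ where $Q_4 := S_t \setminus Q_2$ is contained in the $j_t$-th interval. This, combined with Lemma~\ref{lem:bucket_number}, yields the three crucial combinatorial bounds I will use throughout: $|Q_4| \leq \gamma\tn/(j_t\log\tn)$, $j_t \leq \tn^{1/\gamma}$, and $|\bb^*_t(i)| \leq j_t \intl_t$ on $Q_4$.

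Next I would write down the OLS normal equations on $(X_{t,S_t},\y_{t,S_t})$ and use the whitening $\tX_t := X_t\Sigma^{-1/2}$ to obtain
\[
\dw_t \;=\; (\tX_{t,S_t}^T\tX_{t,S_t})^{-1}\,\tX_{t,S_t}^T\bigl(\epb_{t,S_t} + \bb^*_{t,S_t}\bigr).
\]
A standard $\varepsilon$-net argument with a union bound over subsets of size $\geq (1-\alpha)\tn$ gives $\lambda_{\min}(\tX_{t,S_t}^T\tX_{t,S_t}) \geq c_1(1-\alpha)\tn$ w.h.p. For the numerator I split along $S_t = S^*_t \cup (Q_2\setminus S^*_t) \cup Q_4$ (using $\bb^*_{S^*_t}=0$) into three pieces which I label (I), (II), (III).

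Piece (I), the clean-noise term on $S^*_t$, is the easiest: because $S^*_t$ is determined by $\bb^*$ and therefore independent of $(\tX_t, \epb_t)$, standard sub-Gaussian concentration for iid vector sums gives $\|(\text{I})\| = O\bigl(\sigma\sqrt{(1-\alpha)\tn(p\log\tn+\log^2\tn)}\bigr)$, producing the third summand after division by $\lambda_{\min}$. For piece (II) I would union-bound over the discrete values $j\in\{1,\ldots,\tn^{1/\gamma}\}$: for each fixed $j$ the candidate set $T_j := \{i : 0 < |\bb^*(i)| < (j-7/9)\intl_t\}$ is deterministic given $\bb^*$ with $|T_j| \leq \alpha\tn$, and $\tX_{T_j}^T\bb^*_{T_j}$ is a $p$-dimensional Gaussian with covariance $\|\bb^*_{T_j}\|_2^2\,I_p$ and $\|\bb^*_{T_j}\|_2 \leq \sqrt{\alpha\tn}\,j\intl_t$. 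Gaussian norm concentration together with the cheap log-factor union bound over $j$ then yields $\|(\text{II})\| = O\bigl(\sqrt{\alpha\tn}\,\tn^{1/\gamma}\intl_t\sqrt{p+\log\tn}\bigr)$, producing the second summand.

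The main obstacle is piece (III): it must produce the contraction $O(\gamma/((1-\alpha)\log\tn))\|\dw_{t-1}\|$, and the naive Cauchy--Schwarz bound $\|\tX_{Q_4}\|_{\text{op}}\|(\bb^*+\epb)_{Q_4}\|_2$ with a uniform operator-norm bound over supports is too loose: the combinatorial penalty $\sqrt{|Q_4|\log(\tn/|Q_4|)}$ inflates the contraction constant by an unacceptable factor. The resolution is the algebraic identity $\bb^*(i)+\epb(i) = \rb_t(i) - \tX_i^T\dw_{t-1}$, giving the decomposition
\[
(\text{III}) \;=\; \sum_{i\in Q_4} \rb_t(i)\,\tX_i \;-\; \tX_{Q_4}^T\tX_{Q_4}\,\dw_{t-1}.
\]
The second piece satisfies $\|\tX_{Q_4}^T\tX_{Q_4}\,\dw_{t-1}\| \leq \|\tX_{Q_4}^T\tX_{Q_4}\|_{\text{op}}\|\dw_{t-1}\| \leq O(|Q_4|+p)\|\dw_{t-1}\|$ (handling adaptivity by an $\varepsilon$-net over supports, whose combinatorial cost is absorbed into polylog factors); since $|Q_4| \leq \gamma\tn/\log\tn$, this contributes exactly the first summand $O(\gamma/((1-\alpha)\log\tn))\|\dw_{t-1}\|$ after division by $(1-\alpha)\tn$. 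The first piece uses $|\rb_t(i)| \leq j_t\intl_t$ on $Q_4$ together with the algorithmic cancellation $|Q_4|\cdot(j_t\intl_t) \leq \gamma\tn\intl_t/\log\tn$ (which is $j_t$-free, and is exactly what the adaptive interval selection of \adaht\ purchases) to give, via the same net-based Gaussian concentration used for (II), a contribution that is absorbed into the second summand. Combining the three pieces and dividing by $\lambda_{\min}$ yields the claimed inequality.
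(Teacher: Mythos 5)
Your overall architecture matches the paper's proof of this lemma: the same partition into $Q_1,\dots,Q_4$ via Lemma~\ref{prop:HT_properties}, the same normal-equations identity for $\dw_t$, and essentially the same treatment of the corruption and noise sums over $Q_2$ (the paper's $T_1,T_2$). The genuine gap is in your piece (III). After the (correct) algebraic split, the sub-piece $\sum_{i\in Q_4}\rb_t(i)\tx_{t,i}$ cannot be handled by ``the same net-based Gaussian concentration used for (II)'': unlike $\bb^*_{T_j}$, both the coefficient $\rb_t(i)$ and the membership event $i\in Q_4$ depend on $\tx_{t,i}$, so this sum is not conditionally mean-zero, and its mean is where the whole difficulty of the lemma lives. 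By the computation of Lemma~\ref{lem:expectation_T3_aux1}, the per-point conditional mean is of order $|\bb_t^*(i)+\epb_t(i)|=O(j_t\intl_t)$ in the worst case over the threshold, so your ``algorithmic cancellation'' $|Q_4|\cdot j_t\intl_t\leq \gamma\tn\intl_t/\log{\tn}$ only yields, after dividing by $(1-\alpha)\tn$, a term of order $\frac{\gamma\intl_t}{(1-\alpha)\log{\tn}}\gtrsim \frac{\gamma\hat{\sigma}}{(1-\alpha)\sqrt{\log{\tn}}}$ --- a non-vanishing additive error that is not absorbed into the second summand, plus a $\|\dw_{t-1}\|$-coefficient inflated by $\sqrt{\log{\tn}}$ over the claimed contraction. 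This is precisely the fixed-point obstruction of Proposition~\ref{prop:torrent_negative}. The missing ingredient is the randomized threshold: averaging the indicator over $\eta_{i,t}$ uniform on an interval of width $\intl_t/9$ (Lemmas~\ref{lem:expectation_T3_aux2} and~\ref{lem:expectation_T3}) replaces the per-point mean $O(j_t\intl_t)$ by $O\bigl(|\bb_t^*(i)+\epb_t(i)|/(\intl_t/9)\bigr)\|\dw_{t-1}\|=O(j_t)\|\dw_{t-1}\|$, and only then does $|Q_4|\leq \gamma\tn/(j_t\log{\tn})$ deliver the contraction $O(\gamma\tn/\log{\tn})\|\dw_{t-1}\|$. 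Your proposal never invokes $\eta_{i,t}$, so it cannot establish the stated bound.

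Two secondary issues are instances of the same error --- using worst-case union bounds over adaptively chosen index sets where the paper instead exploits that the candidate sets are functions of $\bb_t^*$ alone (hence independent of $\tX_t,\epb_t$) together with PSD monotonicity. (i) A union bound over all subsets of size $(1-\alpha)\tn$ does \emph{not} give $\lambda_{\min}(\tX_{t,S_t}^T\tX_{t,S_t})\geq c(1-\alpha)\tn$ when $\alpha$ is large: the adversarial minimum over such subsets is of order $(1-\alpha)^3\tn$ (cf.\ Lemma~\ref{lem:sss}); the paper uses $S_t^*\subseteq S_t$ with $S_t^*$ determined by $\bb_t^*$. (ii) $\sup_{|S|\leq m}\|\tX_{t,S}^T\tX_{t,S}\|_2$ scales like $m\log(e\tn/m)+p$, not $O(m+p)$ --- the combinatorial cost of a net over supports multiplies the leading term rather than adding a polylog; the fix is $\tX_{t,Q_4}^T\tX_{t,Q_4}\preceq \tX_{t,Q_{j_t}}^T\tX_{t,Q_{j_t}}$ with a union over only the $\tn^{1/\gamma}$ candidate sets $Q_j$. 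Finally, your three-way split drops the noise $\epb_t$ on $Q_2\setminus S_t^*$; extend piece (I) to all of $Q_2$ as in the paper's $T_2$.
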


\begin{proof} 
Consider the $t^{th}$ iteration of \alg-FC.
We divide the $\tn$ points in $(X_{t}, \y_{t})$ into the following mutually exclusive sets $Q_1, Q_2, Q_3, Q_4$
$$Q_1 = \left\lbrace i: |\bb_{t}^{*}(i)| >  \tau_{t} + \frac{5}{18}\intl_{t}\right\rbrace, \quad Q_2 = \left\lbrace i: |\bb_{t}^{*}(i)| <  \tau_{t} - \frac{5}{18}\intl_{t}\right\rbrace,$$ $$Q_3 = \left\lbrace i: |\bb_{t}^{*}(i) -\tau_{t}| \leq \frac{5}{18}\intl_{t}, \text{ and } |\y_{t}(i) - \left\langle \x_{t,i}, \w_{t-1}\right\rangle | \geq \tau_{t} + \eta_{i,t}\intl_{t}\right\rbrace,$$ $$Q_4 = \left\lbrace i:
|\bb_{t}^{*}(i) -\tau_{t}| \leq \frac{5}{18}\intl_{t}, \text{ and } |\y_{t}(i) - \left\langle \x_{t,i}, \w_{t-1}\right\rangle | < \tau_{t} + \eta_{i,t}\intl_{t}\right\rbrace,$$
where $\tau_t=(j_t-0.5)I_t$ is as defined by Line 6 of Algorithm~\ref{alg:ht_oracle}. 
We now highlight some key properties of the sets $Q_1, Q_2, Q_3, Q_4$, which follow from Lemma~\ref{prop:HT_properties} and hold with probability at least $1-1/\tn^7$.
 \begin{enumerate}
     \item  Since $\intl_t \geq 18\sqrt{(\sigma^2 + \|\dw_{t-1}\|^2)\log{\tn}}$,  from Lemma~\ref{prop:HT_properties} we have
 \[
 Q_1 \cap S_{t}^* = \{\}, \quad Q_1 \cap S_{t} = \{\}, \quad S_{t}^* \subseteq Q_2 \subseteq S_{t}.
 \]
 \item $S_{t} = Q_2 \cup Q_4$.
 \item Since $|S_{t}^*| \geq (1-\alpha)\tn$, we have $ |S_{t}| \geq |Q_2| \geq (1-\alpha)\tn$.
 \item Since any point in $Q_3\cup Q_4$ lies in the $j_{t}^{th}$ interval (see Lemma~\ref{prop:HT_properties}), we have $|Q_3 \cup Q_4| \leq \frac{\gamma \tn}{j_{t}\log{\tn}}$.
 \item For a given set of points $(X_{t}, \y_{t})$, there are at most $\tn^{1/\gamma}$ possible choices for $Q_1,Q_2$ and $Q_3 \cup Q_4$; one for each possible choice of $j_{t}$.
 \end{enumerate}
 We often use the above properties in the proof. Using sets $Q_1, Q_2, Q_3, Q_4$, we rewrite $\dw_{t}$ as:
\begin{equation*}
        \dw_{t} =  -(\tX_{t,S_{t}}^T\tX_{t,S_{t}})^{-1}\left(\underbrace{\left(\sum_{i \in Q_2}\bb_{t}^*(i) \tx_{t, i}\right)}_{T_1}
         -\underbrace{\left(\sum_{i \in Q_2}\epb_{t}(i) \tx_{t, i}\right)}_{T_2}
         -\underbrace{\left(\sum_{i \in Q_4}\left(\bb_{t}^*(i) + \epb_{t}(i) \right)\tx_{t, i}\right)}_{T_3}\right).
\end{equation*}
So we have: 
\begin{equation}\label{eq:rr_1}
\|\dw_t\|_2 \leq \frac{1}{\lb_{min}\left(\tX_{t,S_{t}}^T\tX_{t,S_{t}}\right)}\left(\|T_1\|_2 + \|T_2\|_2 + \|T_3\|_2\right).
\end{equation}
We now derive high probability upper bounds for each of the terms in the above expression.
\paragraph{Bounding $\lb_{min}\left(\tX_{t,S_{t}}^T\tX_{t,S_{t}}\right)$.}
Since $S_{t}^* \subseteq S_{t}$, we have
\[
\lb_{min}\left(\tX_{t,S_{t}}^T\tX_{t,S_{t}}\right) \geq \lb_{min}\left(\tX_{t,S_{t}^*}^T\tX_{t,S_{t}^*}\right)
\]
We now use the concentration properties of the smallest eigenvalue of covariance matrix to bound the above quantity. Using Lemma~\ref{lem:ss} in Appendix~\ref{sec:std_concentration},  we obtain the following inequality, which holds with probability at least $1-\delta$
\[
\frac{1}{|S_{t}^*|}\lb_{min}\left(\tX_{t,S_{t}^*}^T\tX_{t,S_{t}^*}\right) \geq 1 - \frac{1}{2}\sqrt{\frac{p}{|S_{t}^*|}} - \sqrt{\frac{\log{\frac{2}{\delta}}}{|S_{t}^*|}}.
\]
Since $|S_{t}^*| \geq (1-\alpha)\tn$, we have
\[
\lb_{min}\left(\tX_{t,S_{t}}^T\tX_{t,S_{t}}\right) \geq (1-\alpha)\tn \left[1 - \frac{1}{2}\sqrt{\frac{p}{(1-\alpha)\tn}} - \sqrt{\frac{\log{\frac{2}{\delta}}}{(1-\alpha)\tn}}\right].
\]
\paragraph{Bounding $T_1$.}
Define set $Q_{2,j}$ and term $T_{1,j}$ as follows: 
\[
Q_{2,j} = \{i: |\bb_{t}^{*}(i)| < (j-2/9)\intl_{t}\},\ T_{1,j} = \sum_{i \in Q_{2,j}}\bb_{t}^*(i) \tx_{t, i}.
\]
Note that $Q_2 = Q_{2,j_{t}}$ and $\|T_1\|_2 \leq \sup_{j \in [\tn^{1/\gamma}]} \|T_{1,j}\|_2.$

First, note that the distribution of covariates ($\x$) and dense noise $\epsilon$ of points in $Q_{2,j}$ is the same and equal to their corresponding distributions on entire data. This follows from the fact that $Q_{2,j}$ is formed based on the magnitude of corruptions $|\bb^*_{t}(i)|$, which is chosen independent of the data. We use this observation to derive upper bound for $T_{1,j}$.
Using chi-square concentration result from Lemma~\ref{lem:chi_conc} in Appendix~\ref{sec:std_concentration}, we obtain the following upper bound for $T_{1,j}$ (w.p. $\geq 1-\delta$): 
\[
\|T_{1,j}\|_2^2 \leq \left(p + O\left(\sqrt{p\log{\frac{1}{\delta}}} + \log{\frac{1}{\delta}} \right)\right)\|[\bb^{*}_{t}]_{Q_{2,j}}\|^2.
\]
Combining this result with the upper bound on $|\bb^{*}_{t}(i)|$, we have (w.p. $\geq 1-1/\tn^8$): 
\begin{multline}
     \|T_{1,j}\|_2 = O\left(\sqrt{p+\log{\tn}}\right)\|[\bb^{*}_{t}]_{Q_{2,j}}\|_2 = O\left(\sqrt{(p+\log{\tn})|Q_{2,j}|} \right)\|[\bb^{*}_{t}]_{Q_{2,j}}\|_{\infty}\\
     = O\left(\tn^{1/2}j\sqrt{p+\log{\tn}}\right)\intl_t= O\left(\tn^{1/2+1/\gamma}\sqrt{p+\log{\tn}}\right)\intl_t,
\end{multline}
where the third equality follows from the fact that $|Q_{2,j}| \leq \tn$ and the definition of $Q_{2,j}$. Last equality follows from $j\leq n^{1/\gamma}$. 
This shows that with probability at least $1-1/\tn^7$: 
\begin{equation}
\label{eqn:t1_bound_base_real}
\|T_1\|_2 = O\left(\tn^{1/2+1/\gamma}\sqrt{p + \log{\tn}}\right)\intl_t.
\end{equation}

\paragraph{Bounding $T_2$.} We use a similar technique as above to bound $T_2$. We first upper bound $\|T_2\|$ as
\[
\|T_2\| \leq \sup_{j \in [\tn^{1/\gamma}]}\|T_{2,j}\|_2, \ \ T_{2,j} = \sum_{i \in Q_{2,j}}\epb_{t}(i) \tx_{t, i}.
\]

To bound $T_{2,j}$ we make use of the fact that $\epb_{t}$ is independent of $\tX_{t}$. Conditioned on $[\epb_{t}]_{Q_{2,j}}$, $T_{2,j}$ follows a Gaussian distribution with covariance $\|[\epb_{t}]_{Q_{2,j}}\|_2^2 I$.
Using concentration results for sum of chi-square random variables (see Lemma~\ref{lem:chi_conc}) along with upper bound on $\|[\epb_{t}]_{Q_{2,j}}\|_2$ (see Lemma~\ref{lem:sg_norm_conc}), we can show that for any  given $Q_{2,j}$, the following holds with probability at least $1-\delta$: 
\[
\frac{1}{\tn}\|T_{2,j}\|_2 = O\left(\frac{\sqrt{p+\log{\frac{1}{\delta}}}}{\tn}\right)\|[\epb_{t}]_{Q_{2,j}}\|_2=O\left(\sigma\sqrt{\frac{\alpha p\log{\frac{1}{\delta}}}{\tn}}\right).
\]
Taking a union bound over all possible choices of $j$, we obtain the following bound, which holds with probability at least $1-1/\tn^{7}$: 
\begin{equation}
\label{eqn:t2_bound_base_real}
\frac{1}{\tn}\|T_2\|_2 =  O\left(\sigma\sqrt{\frac{\alpha p\log{\tn}}{\tn}}\right).
\end{equation}

\paragraph{Bounding $T_3$.}
    Bounding $T_3$ requires more careful arguments that we present in  Section~\ref{sec:t3_bound} where by using Equation~\eqref{eq:app_t3_bound}, Lemma~\ref{lem:expectation_T3}, and Lemma~\ref{lem:t3_concentration}, we have w.p. $\geq 1-2/\tn^{9}$: 
\begin{equation}
\label{eqn:t3_bound_base_real}
     \frac{1}{\tn}\|T_3\| 
     = \displaystyle O\left(\frac{\gamma }{\log{\tn}}\right)\|\dw_{t-1}\|_2 + O\left(\tn^{1/2\gamma}\sqrt{\frac{\gamma p}{\tn}}\right)\intl_t.
\end{equation}
Combining the bounds in Equations~\eqref{eqn:t1_bound_base_real},~\eqref{eqn:t2_bound_base_real},~\eqref{eqn:t3_bound_base_real}, we get the following bound, which holds with probability at least $1-1/\tn^6$: 
\begin{equation*}
\begin{array}{c}
\|\dw_t\|_2 = O\left(\frac{\gamma}{(1-\alpha)\log{\tn}}\right)\|\dw_{t-1}\|_2 + O\left(\frac{\tn^{1/\gamma}}{1-\alpha} \sqrt{\frac{p + \log{\tn}}{\tn}}\right)\intl_t + O\left(\frac{\sigma}{1-\alpha}\sqrt{\frac{\alpha p\log{\tn}}{\tn}}\right).
\end{array}
\end{equation*}
This finishes the proof of the Lemma. What remains now is to bound $T_3$, which we do in Section~\ref{sec:t3_bound}. 
\subsubsection{Bounding $T_3$}
\label{sec:t3_bound}
We first re-write $T_3$ as: 
\begin{equation}
    \label{eqn:t3_rewritten}
    T_3 = \sum_{i \in Q_3 \cup Q_4}  \mathbb{I}\left[|\bb_{t}^{*}(i) + \epb_{t}(i) + \left\langle \tx_{t,i}, \dw_{t-1}\right\rangle | < \tau_{t,i}\right](\bb_{t}^{*}(i) + \epb_{t}(i))\tx_{t,i},
\end{equation}
where $\tau_{t,i} = \left(j_{t}-\frac{1}{2}+\eta_{i,t}\right)\intl_{t}$ and $\eta_{i,t}$ is sampled uniformly from $[-1/18,1/18]$.
First, note that $Q_3 \cup Q_4$ depends on $j_{t}$ - the interval chosen by \adaht in the first iteration - which in turn depends on $X_{t}$. This dependence of $Q_3 \cup Q_4$ on $X_{t}$ complicates the analysis. So to simplify the analysis, we bound $\|T_3\|_2$ by bounding the quantity over all intervals. The bound would then follow by union bound over all possible intervals, whose number is bounded by $n^{1/\gamma}$. 

To this end, we first define $Q_j$ and  $T_{3,j}$ for any $j \in [\tn^{1/\gamma}]$: {\small
\begin{multline}\label{eqn:t3j_rewritten}
Q_{j} = \left\lbrace i: \Big|\bb_{t}^{*}(i) -(j-1/2)\intl_{t}\Big| \leq\frac{5}{18}\intl_{t}\right\rbrace,\\ T_{3,j} \coloneqq \sum_{i \in Q_j}  \mathbb{I}\left[|\bb_{t}^{*}(i) + \epb_{t}(i) + \left\langle \tx_{t,i}, \dw_{t-1}\right\rangle | < \tau_{1,j,i}\right](\bb_{t}^{*}(i) + \epb_{t}(i))\tx_{t,i},
\end{multline}}
where $\tau_{t,j,i} = \left(j-\frac{1}{2}+\eta_{i,t}\right)\intl_{t}$. Note that if $j = j_{t}$, then $Q_{j} = Q_3 \cup Q_4$. Now by taking supremum over all $j$'s and using triangular inequality, we have: 
\begin{equation}\label{eq:app_t3_bound}
\|T_3\|_2 \leq \sup_{\begin{subarray}{c}j \in [\tn^{1/\gamma}]\\ \text{s.t. } |Q_j| \leq \frac{\gamma \tn}{j\log{\tn}} \end{subarray}} \|T_{3,j}\|_2\leq \sup_{\begin{subarray}{c}j \in [\tn^{1/\gamma}]\\ \text{s.t. } |Q_j| \leq \frac{\gamma \tn}{j\log{\tn}} \end{subarray}} \|\E_{\tX_t}\left[T_{3,j}|\dw_{t-1}, \epb_t\right]\|_2 + \|T_{3,j}-\E_{\tX_t}\left[T_{3,j}|\dw_{t-1}, \epb_t\right]\|_2.
\end{equation}
The first term above is bounded by Lemma~\ref{lem:expectation_T3} and the second term by Lemma~\ref{lem:t3_concentration}.  Finally, taking a union bound over all possible choices of $j$, we get a high probability upper bound for $\|T_3\|_2$.
\end{proof}
\paragraph{Expectation of $T_{3,j}$.}
Before bounding the expectation of $T_{3,j}$ we present two auxiliary Lemmas, the proofs of which can be found in Appendix~\ref{sec:aux_thm_real}.
\begin{lemma}
\label{lem:expectation_T3_aux1}
Let $\tx\sim \mathcal{N}(0, I_{p \times p})$ be a random vector. For any given $\uu \in \mathbb{R}^p$ and scalars $v \geq 0,b$, consider the following random vector: $\A = \mathbb{I}(|b + \left\langle \tx_, \uu \right\rangle | < v)b\tx.$ 
 Then the expected value of $\A$ satisfies: $\E[\A] = \frac{b}{\sqrt{2\pi}}\left[e^{-\frac{\left(v+b\right)^2}{2\|\uu\|^2}} - e^{-\frac{\left(v-b\right)^2}{2\|\uu\|^2}}\right]\frac{\uu}{\|\uu\|}.$
\end{lemma}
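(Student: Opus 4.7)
The plan is to exploit the rotational symmetry of the standard Gaussian by decomposing $\tx$ into its component along $\uu$ and its component orthogonal to $\uu$, namely $\tx = \tx_{\parallel} + \tx_{\perp}$ where $\tx_{\parallel} = \frac{\langle \tx, \uu\rangle}{\|\uu\|^2}\uu$. Since $\tx$ is a standard Gaussian vector, $\tx_{\parallel}$ and $\tx_{\perp}$ are independent, and $\tx_{\perp}$ has mean zero. The crucial observation is that the indicator $\mathbb{I}(|b + \langle \tx, \uu\rangle| < v)$ is measurable with respect to $\tx_{\parallel}$ alone.

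Using independence, I would first show
\[
\E[\A] = b\,\E\bigl[\mathbb{I}(|b + \langle \tx,\uu\rangle| < v)\,\tx_{\parallel}\bigr] + b\,\E\bigl[\mathbb{I}(|b + \langle \tx,\uu\rangle| < v)\bigr]\cdot \E[\tx_{\perp}] = b\,\E\bigl[\mathbb{I}(|b + \langle \tx,\uu\rangle| < v)\,\tx_{\parallel}\bigr],
\]
so the problem reduces to a one-dimensional integral along the $\uu$ direction. Writing $s = \langle \tx, \uu\rangle \sim \mathcal{N}(0, \|\uu\|^2)$ and $\tx_{\parallel} = (s/\|\uu\|^2)\uu$, the quantity to compute is
\[
\E[\A] = \frac{b\,\uu}{\|\uu\|^2}\int_{-(v+b)}^{v-b} s\cdot \frac{1}{\sqrt{2\pi}\,\|\uu\|}\,e^{-s^2/(2\|\uu\|^2)}\,ds.
\]

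The inner integral is an elementary one: the antiderivative of $s\,e^{-s^2/(2\|\uu\|^2)}$ is $-\|\uu\|^2\,e^{-s^2/(2\|\uu\|^2)}$. Evaluating at the endpoints $s = v-b$ and $s = -(v+b)$ and simplifying yields
\[
\int_{-(v+b)}^{v-b} s\, e^{-s^2/(2\|\uu\|^2)}\,ds \;=\; \|\uu\|^2\left[e^{-(v+b)^2/(2\|\uu\|^2)} - e^{-(v-b)^2/(2\|\uu\|^2)}\right].
\]
Substituting this back and canceling $\|\uu\|^2$ gives exactly the claimed formula, with the factor $\uu/\|\uu\|$ coming from $\uu/\|\uu\|^2$ combined with the leftover $\|\uu\|$. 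There is no real obstacle here: the entire argument is a standard decomposition followed by a closed-form Gaussian integral, and the only thing to be careful about is tracking the signs/limits of integration and the powers of $\|\uu\|$ so that the final expression matches the stated normalization.
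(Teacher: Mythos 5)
Your proposal is correct and follows essentially the same route as the paper's proof: decompose $\tx$ into its component along $\uu$ and an independent mean-zero orthogonal component, observe that the indicator depends only on the parallel part, and reduce to a one-dimensional Gaussian integral. You carry out the elementary integral explicitly (with the correct limits $-(v+b)$ to $v-b$ and correct bookkeeping of powers of $\|\uu\|$), which the paper leaves implicit, but the argument is the same.
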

\begin{lemma}
\label{lem:expectation_T3_aux2}
Let $\tx\sim \mathcal{N}(0, I_{p \times p})$  and $v \in \mathbb{R}$ be uniformly sampled from $[s,t]$ for some $ t > s \geq 0$, and is independent of $\tx$. For any given $\uu \in \mathbb{R}^p, b\in \mathbb{R}$, consider the following random vector: $\A =  \mathbb{I}\left(|b + \left\langle \tx, \uu \right\rangle | < v\right)b\tx.$ Then the expected value of $\A$ satisfies: $\E\left[\A\right] = c\frac{b}{(t-s)}\uu, $ for some $c$ such that $|c| \leq 1$.
\end{lemma}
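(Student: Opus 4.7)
The plan is to condition on $v$ first, apply Lemma~\ref{lem:expectation_T3_aux1}, and then integrate over $v \sim \text{Unif}[s,t]$.

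First, I would dispose of the degenerate case $\uu = \mathbf{0}$: then $\langle \tx, \uu\rangle = 0$, so $\A = \mathbb{I}(|b|<v)\, b\, \tx$, which has mean zero by the symmetry of $\tx$; the claim holds trivially with any $c$.

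For $\uu \neq \mathbf{0}$, using the tower property and independence of $v$ from $\tx$, together with Lemma~\ref{lem:expectation_T3_aux1} applied with the deterministic threshold $v$, gives
\begin{equation*}
\E[\A] \;=\; \E_v\!\left[\E_{\tx}[\A \mid v]\right]
\;=\; \frac{b}{\sqrt{2\pi}\,\|\uu\|\,(t-s)} \left(\int_s^t \Big[e^{-(v+b)^2/(2\|\uu\|^2)} - e^{-(v-b)^2/(2\|\uu\|^2)}\Big]\,dv\right) \uu.
\end{equation*}
Denoting the integral above by $I$, this is of the form $c\,\frac{b}{t-s}\uu$ with $c = \frac{I}{\sqrt{2\pi}\,\|\uu\|}$, so the problem reduces to showing $|I| \leq \sqrt{2\pi}\,\|\uu\|$.

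To bound $I$, I would observe that each of the two integrands is (up to the factor $\sqrt{2\pi}\,\|\uu\|$) the density of a univariate Gaussian with variance $\|\uu\|^2$ centered at $\mp b$. Hence
\begin{equation*}
0 \;\leq\; \int_s^t e^{-(v\pm b)^2/(2\|\uu\|^2)}\,dv \;\leq\; \int_{\mathbb{R}} e^{-(v\pm b)^2/(2\|\uu\|^2)}\,dv \;=\; \sqrt{2\pi}\,\|\uu\|.
\end{equation*}
Since $I$ is the difference of two quantities each lying in $[0,\sqrt{2\pi}\,\|\uu\|]$, we conclude $|I| \leq \sqrt{2\pi}\,\|\uu\|$, which yields $|c| \leq 1$ as desired.

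The argument is essentially bookkeeping once Lemma~\ref{lem:expectation_T3_aux1} is in hand; I do not anticipate a real obstacle. The only subtle point is noticing that one should bound $|I|$ by the maximum of the two Gaussian integrals separately rather than attempting to estimate their difference directly (which would require more delicate monotonicity arguments and is unnecessary for the $|c|\leq 1$ claim).
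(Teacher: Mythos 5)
Your proposal is correct and follows exactly the paper's route: condition on $v$, apply Lemma~\ref{lem:expectation_T3_aux1}, average over $v\sim\mathrm{Unif}[s,t]$, and bound the resulting integral by $\sqrt{2\pi}\|\uu\|$ since each Gaussian-kernel term integrates to at most the full Gaussian integral. Your explicit justification of $|c|\leq 1$ (and the handling of $\uu=\mathbf{0}$) is slightly more careful than the paper's, which asserts the bound after a change of variables without spelling it out, but the argument is the same.
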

We are now ready to present the main result on expectation of $T_{3,j}$.
\begin{lemma}[Expectation of $T_{3,j}$]
\label{lem:expectation_T3}
Conditioned on $\dw_{t-1}$, the following holds w.p. at least $1-1/\tn^{10}$: 
\[
\|\mathbb{E}_{\tX_t}\left[T_{3,j}|\dw_{t-1}, \epb_t\right]\|_2 =   O\left(\frac{\gamma \tn}{\log{\tn}}\right)\|\dw_{t-1}\|_2.
\]
\end{lemma}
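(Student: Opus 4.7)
The plan is to apply Lemma~\ref{lem:expectation_T3_aux2} termwise. Conditioning on $\dw_{t-1}$ and $\epb_t$, each summand of $T_{3,j}$ has exactly the form covered by that lemma: set $\tx = \tx_{t,i}$, $b = \bb_t^*(i) + \epb_t(i)$, $\uu = \dw_{t-1}$, and $v = \tau_{t,j,i} = (j - 1/2 + \eta_{i,t})\intl_t$, where $\eta_{i,t}$ is uniform on $[-1/18, 1/18]$ and independent of $\tx_{t,i}$. Since $j \geq 1$, the range $[s,t] = [(j-1/2-1/18)\intl_t,(j-1/2+1/18)\intl_t]$ has $s > 0$ and $t - s = \intl_t/9$, so all hypotheses are met and we obtain
\begin{equation*}
\E_{\tX_t, \eta_{i,t}}\!\left[\mathbb{I}(|b + \langle \tx_{t,i}, \uu\rangle| < v)\,b\,\tx_{t,i}\,\Big|\,\dw_{t-1},\epb_t\right] \;=\; c_i\,\frac{9(\bb_t^*(i)+\epb_t(i))}{\intl_t}\,\dw_{t-1},
\end{equation*}
with $|c_i| \leq 1$. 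Summing over $i \in Q_j$ and using the triangle inequality yields
\begin{equation*}
\|\E_{\tX_t}[T_{3,j}\mid \dw_{t-1},\epb_t]\|_2 \;\leq\; \frac{9\|\dw_{t-1}\|_2}{\intl_t}\sum_{i \in Q_j} |\bb_t^*(i) + \epb_t(i)|.
\end{equation*}

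Next, I would bound the scalar sum $\sum_{i \in Q_j} |\bb_t^*(i) + \epb_t(i)|$ by splitting into corruption and noise contributions. By definition of $Q_j$, every $i \in Q_j$ satisfies $|\bb_t^*(i)| \leq (j-1/2)\intl_t + \tfrac{5}{18}\intl_t \leq j\intl_t$. Combined with the interval-length assumption $\intl_t \geq 18\sqrt{(\sigma^2+\|\dw_{t-1}\|_2^2)\log \tn}$ from Lemma~\ref{lem:real_intd_lemma} and standard sub-Gaussian tail bounds applied with a union bound over $i \in [\tn]$, we get $|\epb_t(i)| = O(\sigma\sqrt{\log \tn}) = O(\intl_t)$ uniformly, w.p.\ at least $1 - 1/\tn^{10}$. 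Thus
\begin{equation*}
\sum_{i \in Q_j} |\bb_t^*(i)+\epb_t(i)| \;\leq\; |Q_j|\cdot j\intl_t + |Q_j|\cdot O(\intl_t).
\end{equation*}

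Finally, I would invoke the constraint imposed by the supremum in Equation~\eqref{eq:app_t3_bound}, namely $|Q_j| \leq \gamma\tn/(j\log \tn)$, so both terms above reduce to $O(\gamma\tn\,\intl_t/\log\tn)$. Plugging this back in, the factor of $\intl_t$ cancels, leaving $\|\E_{\tX_t}[T_{3,j}\mid \dw_{t-1},\epb_t]\|_2 = O(\gamma \tn/\log \tn)\,\|\dw_{t-1}\|_2$, which is exactly the claimed bound. The only delicate step is verifying that Lemma~\ref{lem:expectation_T3_aux2} applies on a per-index basis with the right independence structure (in particular that $\eta_{i,t}$ is independent of $\tx_{t,i}$ and that $\bb_t^*(i) + \epb_t(i)$ is fixed by the conditioning), after which the remainder is routine arithmetic bookkeeping using the interval-count property of \adaht and the sub-Gaussian tail bound on $\epb_t$.
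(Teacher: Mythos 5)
Your proposal is correct and follows essentially the same route as the paper's proof: apply Lemma~\ref{lem:expectation_T3_aux2} termwise to get $\E_{\tX_t}[T_{3,j}\mid\dw_{t-1},\epb_t]=\bigl[\sum_{i\in Q_j}c_i\,9(\bb_t^*(i)+\epb_t(i))/\intl_t\bigr]\dw_{t-1}$, then bound $|\bb_t^*(i)|\leq j\intl_t$ by definition of $Q_j$, bound $\|\epb_t\|_\infty=O(\intl_t)$ via sub-Gaussian tails with probability $1-1/\tn^{10}$, and finish with $j|Q_j|\leq\gamma\tn/\log\tn$. The parameter identification in the auxiliary lemma (including $s>0$ and $t-s=\intl_t/9$) and the bookkeeping are all as in the paper.
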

\begin{proof}
	 Using the expression for $T_{3,j}$ in Equation~\eqref{eqn:t3j_rewritten}, we have
\[
\mathbb{E}_{\tX_t}\left[T_{3,j}|\dw_{t-1},\epb_t\right] = \sum_{i \in Q_j}  \E_{\tX_t}\left[\mathbb{I}\left[|\bb_{t}^{*}(i) + \epb_{t}(i) + \left\langle \tx_{t,i}, \dw_{t-1}\right\rangle | < \tau_{t,j,i}\right](\bb_{t}^{*}(i) + \epb_{t}(i))\tx_{t,i}\Big| \dw_{t-1},\epb_t\right],
\]
where $\tau_{t,j,i} = \left(j-\frac{1}{2}+\eta_{i,t}\right)\intl_{t}$. Invoking Lemma~\ref{lem:expectation_T3_aux2} with $\A_i$ being the $i^{th}$ term of $T_{3,j}$, we have: 
\[
\mathbb{E}_{\tX_t}\left[T_{3,j}| \dw_{t-1}, \epb_t\right] = \left[\sum_{i \in Q_j} c_i \frac{\bb_{t}^{*}(i) + \epb_{t}(i)}{\intl_t/9}\right] \dw_{t-1}.
\]
Moreover, $\forall i \in Q_j$, $|\bb_{t}^{*}(i)| \leq j\intl_t$ by definition. Furthermore, using standard concentration of sub-Gaussian random variables, we have:   $\|\epb_{t}\|_{\infty}\leq 4\sigma \sqrt{\log{\tn}} \leq 2/9\intl_t,$
 with probability at least $1-1/\tn^{10}$.
Using these two observations, we get (w.p. $\geq 1-1/\tn^{10}$):
\[
\|\mathbb{E}_{\tX_t}[T_{3,j}|\dw_{t-1},\epb_t]\|_2 \leq  11j|Q_j|\|\dw_{t-1}\|_2 = \left(\frac{11\gamma \tn}{\log{\tn}}\right)\|\dw_{t-1}\|_2.
\]
\end{proof}
\paragraph{Concentration of $T_{3,j}$.}
We first present some auxiliary Lemmas which will help us derive concentration results for $T_{3,j}$. The following Lemmas help us show that $T_{3,j}$ is a sub-Gaussian random variable. The proofs of these Lemmas can be found in Appendix~\ref{sec:aux_thm_real}.
\begin{lemma}
\label{lem:concentration_T3_aux1}
Let $\x\sim \mathcal{N}(0, I_{p \times p})$ be a random vector. For any given vector $\uu \in \mathbb{R}^p$, and scalars $b,v$, the following random vector is sub-Gaussian: $\A =  b  \mathbb{I}(|b + \left\langle \x, \uu \right\rangle | < v)\x.$ Moreover, there exists a universal constant $c > 0$, such that the following  holds for any $\T \in \mathbb{R}^p$: $\E\left[e^{\left\langle \T, \A-\E[\A]\right\rangle }\right] \leq e^{\frac{cb^2\|\T\|^2}{2}}.$
\end{lemma}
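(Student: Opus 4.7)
The plan is to exploit the rotational symmetry of the standard Gaussian to reduce the claim to a one-dimensional sub-Gaussianity fact. Write $\hat{\uu}=\uu/\|\uu\|$, $\alpha=\langle\x,\hat{\uu}\rangle$ and $\x_\perp=\x-\alpha\hat{\uu}$. Then $\alpha\sim\mathcal{N}(0,1)$, $\x_\perp\sim\mathcal{N}(0,I-\hat\uu\hat\uu^T)$, and crucially $\alpha$ and $\x_\perp$ are independent. The indicator $\mathbb{I}(|b+\langle\x,\uu\rangle|<v)$ depends only on $\alpha$, so I will denote it by $\mathbb{I}(\alpha)$. Correspondingly decompose the test vector as $\T=\gamma\hat\uu+\T_\perp$ with $\gamma=\langle\T,\hat\uu\rangle$, so that both $|\gamma|\le\|\T\|$ and $\|\T_\perp\|\le\|\T\|$.

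Substituting into $\A=b\mathbb{I}(\alpha)\x$ and using $\E[\x_\perp]=0$ with $\x_\perp\perp\alpha$ yields $\E[\A]=b\,\E[\alpha\mathbb{I}(\alpha)]\hat\uu$, and hence
\[
\langle\T,\A-\E[\A]\rangle=b\gamma\bigl(\alpha\mathbb{I}(\alpha)-\E[\alpha\mathbb{I}(\alpha)]\bigr)+b\,\mathbb{I}(\alpha)\langle\T_\perp,\x_\perp\rangle.
\]
I then compute the MGF by conditioning on $\alpha$. Given $\alpha$, the first summand is deterministic and $\langle\T_\perp,\x_\perp\rangle\sim\mathcal{N}(0,\|\T_\perp\|^2)$ since $\T_\perp$ lies in the subspace on which $\x_\perp$ acts as a standard Gaussian. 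The conditional MGF is therefore at most $\exp\!\bigl(b\gamma(\alpha\mathbb{I}(\alpha)-\E[\alpha\mathbb{I}(\alpha)])\bigr)\cdot\exp(b^2\|\T_\perp\|^2/2)$, reducing the problem to bounding the MGF of the scalar random variable $b\gamma\xi$ where $\xi:=\alpha\mathbb{I}(\alpha)-\E[\alpha\mathbb{I}(\alpha)]$.

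The main obstacle, and the only place where the parameters $b,v,\|\uu\|$ could a priori conspire against a universal constant, is controlling $\xi$. The key observation is that $|\alpha\mathbb{I}(\alpha)|\le|\alpha|$ almost surely, irrespective of the truncation thresholds. Hence the Orlicz $\psi_2$-norm satisfies $\|\alpha\mathbb{I}(\alpha)\|_{\psi_2}\le\|\alpha\|_{\psi_2}$, which is an absolute constant, and centering costs at most a factor of two. The standard equivalence between $\psi_2$-norm and centered MGF control then gives $\E[e^{s\xi}]\le\exp(C_0 s^2)$ for a universal $C_0$. Applying this with $s=b\gamma$ and combining with the Gaussian piece yields
\[
\E\bigl[e^{\langle\T,\A-\E[\A]\rangle}\bigr]\le\exp\!\bigl((1/2+C_0)\,b^2\|\T\|^2\bigr),
\]
after the trivial bounds $\gamma^2\le\|\T\|^2$ and $\|\T_\perp\|^2\le\|\T\|^2$, establishing the claim with $c=1+2C_0$. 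If a fully self-contained argument for the last $\psi_2$ step is desired, one can replace it with a symmetrization/decoupling bound: introduce an independent copy $\xi'$ and a Rademacher $\epsilon$, apply Jensen to pass from $\xi$ to $\epsilon(\xi-\xi')$, dominate $\cosh$ by a Gaussian, and then use $(\xi-\xi')^2\le 2(\alpha^2+\alpha'^2)$ together with the explicit chi-square MGF of $\alpha^2$ near the origin.
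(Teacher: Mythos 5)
Your proof is correct and follows essentially the same route as the paper's: both decompose $\x$ into its component $\alpha$ along $\uu$ and an independent orthogonal Gaussian part, bound the parallel piece via the pointwise domination $|\alpha\,\mathbb{I}(\alpha)|\leq|\alpha|$ (so its centered MGF is controlled by a universal constant), and treat the orthogonal piece as a Gaussian of variance at most $b^2\|\T\|^2$. The only difference is in the assembly: you condition on $\alpha$ and multiply conditional MGFs, whereas the paper bounds the two components' MGFs separately and invokes the fact that a sum of two dependent sub-Gaussian vectors is sub-Gaussian; your version is marginally tighter in the constants but the argument is the same.
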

\begin{lemma}
\label{lem:concentration_T3_aux2}
Let $\tx\sim \mathcal{N}(0, I_{p \times p})$  and $v \in \mathbb{R}$ be uniformly sampled from $[s,t]$ for some $t > s \geq 0$ and is independent of $\tx$. For any given vector $\uu \in \mathbb{R}^p$, and $b\in \mathbb{R}$, the following random vector is sub-Gaussian: $\A =  b  \mathbb{I}(|b + \left\langle \x, \uu \right\rangle | <  v)\x.$ Moreover, we have: $\E\left[e^{\left\langle \T, \A-\E[\A]\right\rangle }\right] \leq e^{\frac{cb^2\|\T\|^2}{2}}, \quad \forall \T.$
\end{lemma}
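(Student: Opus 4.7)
}
The plan is to decouple the two sources of randomness ($\tx$ and $v$) by conditioning on $v$, and handle each piece separately. Write $\A = \A_v$ where, for each fixed value of $v$, $\A_v = b\,\mathbb{I}(|b+\langle \tx,\uu\rangle| < v)\tx$. Denote $\mu_v := \E[\A_v \mid v]$ (the fixed-$v$ mean, given explicitly by Lemma~\ref{lem:expectation_T3_aux1}) and $\mu := \E[\A] = \E_v[\mu_v]$. For any $\T\in\mathbb{R}^p$, decompose
\begin{equation*}
\langle \T,\A-\mu\rangle \;=\; \underbrace{\langle \T,\A_v-\mu_v\rangle}_{\text{mean-zero given }v} \;+\; \underbrace{\langle \T,\mu_v-\mu\rangle}_{\text{function of }v\text{ only}}.
\end{equation*}

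First I would apply Lemma~\ref{lem:concentration_T3_aux1} conditionally on $v$: for fixed $v\geq 0$, the vector $\A_v$ is exactly the random vector covered by that lemma, so $\E_{\tx}\bigl[e^{\langle \T,\A_v-\mu_v\rangle}\,\big|\,v\bigr] \leq e^{cb^2\|\T\|^2/2}$ for the same universal constant $c$. Taking the outer expectation over $v$ and pulling out the $v$-measurable factor yields
\begin{equation*}
\E\!\left[e^{\langle \T,\A-\mu\rangle}\right] \;\leq\; e^{cb^2\|\T\|^2/2}\cdot \E_v\!\left[e^{\langle \T,\mu_v-\mu\rangle}\right].
\end{equation*}

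Next I would bound the residual MGF. By Lemma~\ref{lem:expectation_T3_aux1}, $\mu_v$ is parallel to $\uu/\|\uu\|$ with scalar coefficient $f(v) = \tfrac{b}{\sqrt{2\pi}}\bigl(e^{-(v+b)^2/(2\|\uu\|^2)} - e^{-(v-b)^2/(2\|\uu\|^2)}\bigr)$, so $|f(v)|\leq |b|/\sqrt{2\pi}$ uniformly in $v$. Consequently $\langle \T,\mu_v-\mu\rangle$ is a mean-zero random variable (by the definition of $\mu$) that is almost surely bounded in absolute value by $2|b|\|\T\|/\sqrt{2\pi}$. Hoeffding's lemma then gives $\E_v[e^{\langle \T,\mu_v-\mu\rangle}] \leq e^{b^2\|\T\|^2/(2\pi)}$. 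Substituting back yields the desired sub-Gaussian MGF bound with an adjusted universal constant $c'$ in place of $c$.

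The step that requires the most care is verifying the boundedness of $\mu_v-\mu$ and confirming that the conditional application of Lemma~\ref{lem:concentration_T3_aux1} is valid: that lemma is stated for a deterministic threshold $v$, so one must check that $\tx$ remains standard Gaussian after conditioning on $v$ (which holds by the stated independence of $v$ and $\tx$), and that the universal constant $c$ obtained does not depend on $v$ (which it does not, as inspection of the proof of Lemma~\ref{lem:concentration_T3_aux1} shows). Beyond this, the argument is essentially a tower-property calculation combined with Hoeffding's inequality, and the assumption $s\geq 0$ enters only to ensure the indicator is well-defined (nothing in the bound depends on $s$ or $t$).
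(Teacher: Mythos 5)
Your proposal is correct and follows essentially the same route as the paper's proof: condition on $v$, apply Lemma~\ref{lem:concentration_T3_aux1} to the conditionally centered part, and control the remaining $v$-measurable term $\langle \T, \E[\A\mid v]-\E[\A]\rangle$ by observing (via Lemma~\ref{lem:expectation_T3_aux1}) that it is a mean-zero bounded random variable, hence sub-Gaussian with parameter $O(|b|\|\T\|)$. The only differences are cosmetic (your explicit invocation of Hoeffding's lemma versus the paper's appeal to "bounded implies sub-Gaussian", and an immaterial constant in the exponent), so no changes are needed.
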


\begin{lemma}
\label{lem:concentration_T3_aux3}
Let $\{\tx_i\}_{i=1}^{\tn}$ be independent samples from $\mathcal{N}(0, I_{p \times p})$ and $\{v_i\}_{i=1}^{\tn}$ be  independent samples from the uniform distribution on $[s,t]$ and are independent of $\{\tx_i\}_{i=1}^{\tn}$. For any given vectors $\bb\in \mathbb{R}^{\tn}, \uu \in \mathbb{R}^p$ and set $Q \subseteq [\tn]$, the following random vector is sub-Gaussian: $\A = \sum_{i \in Q} \bb(i)  \mathbb{I}(|\bb(i) + \left\langle \tx_{i}, \uu \right\rangle | < v_i)\tx_{i}.$
Moreover, with probability at least $1 - \frac{1}{\tn^p}$ we have: $\|\A - \mathbb{E}[\A]\|^2 \leq c_1p\log{\tn}\|\bb_Q\|^2_2,$
where $c_1$ is a universal constant.
\end{lemma}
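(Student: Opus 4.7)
The plan is to leverage Lemma~\ref{lem:concentration_T3_aux2} to establish that each summand in $\A$ is a sub-Gaussian random vector with parameter proportional to $\bb(i)^2$, exploit independence across $i \in Q$ to conclude that $\A - \E[\A]$ is sub-Gaussian with parameter proportional to $\|\bb_Q\|_2^2$, and then apply a standard $\epsilon$-net argument on the unit sphere in $\mathbb{R}^p$ to convert the scalar sub-Gaussian bound into the stated norm bound.

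First, define $\A_i := \bb(i)\, \mathbb{I}\!\left(|\bb(i) + \langle \tx_i, \uu\rangle| < v_i\right) \tx_i$, so that $\A = \sum_{i \in Q} \A_i$. By Lemma~\ref{lem:concentration_T3_aux2} applied with $b = \bb(i)$, vector $\uu$, and randomness $v_i$ uniform on $[s,t]$, each $\A_i$ satisfies
\begin{equation*}
\E\!\left[e^{\langle \T,\, \A_i - \E[\A_i]\rangle}\right] \leq \exp\!\left(\tfrac{c\, \bb(i)^2 \|\T\|_2^2}{2}\right), \qquad \forall \T \in \mathbb{R}^p,
\end{equation*}
for a universal constant $c>0$. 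Since the pairs $\{(\tx_i, v_i)\}_{i \in Q}$ are mutually independent, the $\A_i$'s are independent, and multiplying moment generating functions yields
\begin{equation*}
\E\!\left[e^{\langle \T,\, \A - \E[\A]\rangle}\right] \;\leq\; \prod_{i \in Q}\exp\!\left(\tfrac{c\, \bb(i)^2 \|\T\|_2^2}{2}\right) \;=\; \exp\!\left(\tfrac{c\,\|\bb_Q\|_2^2 \|\T\|_2^2}{2}\right),
\end{equation*}
which proves that $\A - \E[\A]$ is sub-Gaussian with parameter $c\|\bb_Q\|_2^2$.

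For the high-probability norm bound, fix a $\tfrac{1}{2}$-net $\mathcal{N}$ of the unit sphere $S^{p-1}$ with $|\mathcal{N}| \leq 5^p$. For any fixed $\T \in \mathcal{N}$, the scalar sub-Gaussian bound and a Chernoff argument give $\Pr\!\left(|\langle \T, \A - \E[\A]\rangle| > \tau\right) \leq 2 \exp\!\left(-\tau^2/(2c\|\bb_Q\|_2^2)\right)$. Union-bounding over $\mathcal{N}$ and using the standard fact $\|\z\|_2 \leq 2 \max_{\T \in \mathcal{N}} |\langle \T, \z\rangle|$, we obtain
\begin{equation*}
\Pr\!\left(\|\A - \E[\A]\|_2^2 > 4\tau^2\right) \;\leq\; 2 \cdot 5^p \exp\!\left(-\tfrac{\tau^2}{2c\|\bb_Q\|_2^2}\right).
\end{equation*}
Choosing $\tau^2 = 2c\|\bb_Q\|_2^2\bigl(p\log 5 + p\log \tn + \log 2\bigr)$ forces the right-hand side to be at most $\tn^{-p}$, and absorbing constants yields $\|\A - \E[\A]\|_2^2 \leq c_1\, p \log \tn\, \|\bb_Q\|_2^2$ for a universal constant $c_1 > 0$.

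I do not expect a substantive obstacle: the two ingredients (termwise sub-Gaussianity from Lemma~\ref{lem:concentration_T3_aux2} and the $\epsilon$-net trick) are standard. The only points requiring care are (i) verifying that Lemma~\ref{lem:concentration_T3_aux2} really produces a sub-Gaussian parameter that depends only on $\bb(i)$ (and not on $\uu$, $s$, $t$), so that independence gives the clean $\|\bb_Q\|_2^2$ aggregation, and (ii) tracking constants through the covering argument so that the final $c_1$ is universal.
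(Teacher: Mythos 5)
Your proposal is correct and follows essentially the same route as the paper's proof: both decompose $\A$ into the independent summands $\A_i$, apply Lemma~\ref{lem:concentration_T3_aux2} termwise, and multiply moment generating functions to get the sub-Gaussian parameter $c\|\bb_Q\|_2^2$. The only difference is in the final step, where the paper invokes the quadratic-form tail bound of \citet{hsu2012tail} (Lemma~\ref{lem:sg_norm_conc}) with $s=p\log\tn$, while you give a self-contained $\tfrac{1}{2}$-net argument; both yield the stated $c_1\,p\log\tn\,\|\bb_Q\|_2^2$ bound (your version implicitly absorbs the $p\log 5$ term into $c_1 p\log\tn$, which is fine for $\tn$ bounded away from $1$).
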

\begin{lemma}[Concentration of $T_{3,j}$]
\label{lem:t3_concentration}
Conditioned on $\dw_{t-1}$, the following holds w.p. $\geq 1-1/\tn^{10}$: 
\[
\|T_{3,j}-\mathbb{E}_{\tX_{t}}\left[T_{3,j}|\dw_{t-1},\epb_t\right]\|_2 =   O\left(\sqrt{\gamma  p\tn^{1+1/\gamma}}\right)\intl_t.
\]
\end{lemma}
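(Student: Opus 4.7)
The plan is to recognize that $T_{3,j}$ fits exactly into the framework of Lemma~\ref{lem:concentration_T3_aux3}, and then just supply the size of the coefficient vector. Recall
\[
T_{3,j} = \sum_{i \in Q_j} \mathbb{I}\!\left[\big|\bb^{*}_{t}(i) + \epb_{t}(i) + \langle \tx_{t,i}, \dw_{t-1}\rangle\big| < \tau_{t,j,i}\right]\big(\bb^{*}_{t}(i)+\epb_{t}(i)\big)\,\tx_{t,i},
\]
where $\tau_{t,j,i} = \bigl(j - \tfrac{1}{2} + \eta_{i,t}\bigr)\intl_{t}$ with $\eta_{i,t}\sim \mathrm{Unif}[-\tfrac{1}{18},\tfrac{1}{18}]$. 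Conditioning on $(\dw_{t-1},\epb_t)$ and on the event that $Q_j$ is well-defined (recall $Q_j$ is determined only by $\bb^*_t$, not by $\tX_t$), the randomness we integrate out is the joint law of $\{\tx_{t,i}\}_{i\in Q_j}$ and the independent thresholds $\{\tau_{t,j,i}\}_{i\in Q_j}$, each uniform on the fixed interval $\bigl[(j-\tfrac{1}{2}-\tfrac{1}{18})\intl_t,\,(j-\tfrac{1}{2}+\tfrac{1}{18})\intl_t\bigr]$. This is precisely the setting of Lemma~\ref{lem:concentration_T3_aux3}, with $\bb(i) := \bb^*_t(i) + \epb_t(i)$, $\uu := \dw_{t-1}$, and $Q := Q_j$.

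Next I need to bound $\|\bb_{Q_j}\|_2^2$. For $i\in Q_j$ the definition gives $|\bb^{*}_t(i) - (j-\tfrac{1}{2})\intl_t|\le \tfrac{5}{18}\intl_t$, so $|\bb^{*}_t(i)| \le j\,\intl_t$. By standard sub-Gaussian tail bounds, $\|\epb_t\|_\infty \le 4\sigma\sqrt{\log\tn}$ with probability $\ge 1 - 1/\tn^{10}$; since the hypothesis of Lemma~\ref{lem:real_intd_lemma} enforces $\intl_t \ge 18\sqrt{\sigma^2 \log\tn}$, this upper bound is $O(\intl_t)$. Combining, $|\bb^{*}_t(i) + \epb_t(i)| = O(j\,\intl_t)$ for every $i\in Q_j$. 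Since the interval-selection rule guarantees $|Q_j| \le \gamma \tn/(j\log\tn)$ and Lemma~\ref{lem:bucket_number} gives $j \le \tn^{1/\gamma}$, I obtain
\[
\|\bb_{Q_j}\|_2^2 \;\le\; |Q_j|\cdot O(j^2 \intl_t^2) \;\le\; O\!\left(\frac{\gamma\, \tn\, j}{\log\tn}\right)\intl_t^2 \;\le\; O\!\left(\frac{\gamma\, \tn^{1+1/\gamma}}{\log\tn}\right)\intl_t^2.
\]

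Applying Lemma~\ref{lem:concentration_T3_aux3} (with its constant absorbed) then yields, with probability at least $1 - 1/\tn^{p}$,
\[
\|T_{3,j} - \mathbb{E}_{\tX_t}[T_{3,j}\mid \dw_{t-1},\epb_t]\|_2^2 \;\le\; c_1\, p \log\tn \cdot \|\bb_{Q_j}\|_2^2 \;=\; O\!\left(\gamma\, p\, \tn^{1+1/\gamma}\right)\intl_t^2,
\]
which is the desired bound after taking square roots. A union bound over the tail event on $\|\epb_t\|_\infty$ and the event from Lemma~\ref{lem:concentration_T3_aux3} keeps the failure probability at $1/\tn^{10}$.

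The only subtlety I would be careful about is the conditioning: $Q_j$ is defined purely in terms of $\bb^*_t$ (it does not involve $\tX_t$), and the noise $\eta_{i,t}$ is independent of $\tX_t$, so conditioning on $(\dw_{t-1},\epb_t,\bb^*_t)$ leaves the joint law of $(\{\tx_{t,i}\},\{\eta_{i,t}\})$ unchanged and in particular matches the hypotheses of Lemma~\ref{lem:concentration_T3_aux3}. No additional work is required since the heavy lifting has already been done in proving Lemma~\ref{lem:concentration_T3_aux3}; the present lemma is essentially just plugging in the correct $\|\bb_{Q_j}\|_2$ bound using the cardinality constraint on $Q_j$ and the bound $j \le \tn^{1/\gamma}$ from Lemma~\ref{lem:bucket_number}.
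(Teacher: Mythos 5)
Your proposal is correct and follows essentially the same route as the paper's proof: invoke Lemma~\ref{lem:concentration_T3_aux3} conditioned on $(\dw_{t-1},\epb_t)$, bound $\|[\bb_t^*+\epb_t]_{Q_j}\|_2$ via $|Q_j|\le \gamma\tn/(j\log\tn)$ together with the entrywise bound $O(j\intl_t)$, and finish with $j\le \tn^{1/\gamma}$ from Lemma~\ref{lem:bucket_number}. Your explicit remarks on the conditioning (that $Q_j$ depends only on $\bb_t^*$ and that the thresholds $\eta_{i,t}$ are independent of $\tX_t$) are a welcome clarification of a point the paper leaves implicit, but the argument is the same.
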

\begin{proof}
The proof follows similar path to the proof of Lemma~\ref{lem:concentration_T3_aux3}. 
From Equation~\eqref{eqn:t3j_rewritten} we have the following expression for $T_{3,j}$:
\[
T_{3,j} = \sum_{i \in Q_j}  \A_i,\quad \A_i := \mathbb{I}\left[|\bb_{t}^{*}(i) + \epb_{t}(i) + \left\langle \tx_{t,i}, \dw_{t-1}\right\rangle | < \tau_{t,j,i}\right](\bb_{t}^{*}(i) + \epb_{t}(i))\tx_{t,i}.
\]
From Lemma~\ref{lem:concentration_T3_aux3} we know that conditioned on $(\dw_{t-1}, \epb_{t})$, $T_{3,j}$ is a sub-gaussian random variable. Moreover, the following bound holds with probability at least $1-\frac{2}{\tn^{10}}$: 
\begin{align*}
         \|T_{3,j}-\E_{\tX_t}\left[T_{3,j}|\dw_{t-1},\epb_t\right]\|_2 &= O(\sqrt{p\log{\tn}})\|[\bb_t^*+\epb_t]_{Q_j}\|_2 = O(\sqrt{|Q_j|p\log{\tn}})\|[\bb_t^*+\epb_t]_{Q_j}\|_{\infty} \vspace{0.1in}\\
         &=  O\left(\sqrt{\gamma  \cdot p\cdot \tn^{1+1/\gamma}}\right)\intl_t,
\end{align*}
where the third equality follows from the fact that $|Q_j|\leq \frac{\gamma \tn}{j\log{\tn}}$ and $\|[\bb_1^*+\epb_1]_{Q_j}\|_{\infty} = O(j\intl_t)$ w.p. $\geq 1-1/\tn^{10}$; recall using sub-Gaussian tail bounds, we have (w.p. $\geq 1-1/\tn^{10}$):  $\|\epb_{t}\|_{\infty}\leq 4\sigma \sqrt{\log{\tn}} \leq 2/9\intl_t$.
\end{proof}
\subsubsection{Proof of Theorem~\ref{thm:real}}
\label{sec:aux_real_induction}
We now proceed to the proof of Theorem~\ref{thm:real}. 
To prove Theorem~\ref{thm:real}, we prove the following bound on $\|\dw_t\|_{\Sigma}$, which is slightly stronger than the bound in Theorem~\ref{thm:real}: 
\begin{equation}
    \label{eqn:descent_tighter}
         \displaystyle \|\dw_t\|_2 \leq \beta^{t}\|\dw_0\|_{2} + \left(\sum_{i=0}^{t-1}\zeta^i\right)O\left(\frac{\mu\sigma \tn^{1/\gamma}}{(1-\alpha)} \sqrt{\frac{p\log{\tn} + \log^2{\tn}}{\tn}}\right)
\end{equation}
Theorem~\ref{thm:real} directly follows from the above result by observing that $\sum_{i=1}^t\zeta^i \leq \frac{1}{1-\zeta}$, when $\zeta < 1$ and $\beta \geq \zeta$. 
We use induction on iteration $t$ to prove this result. 
\paragraph{Base Case ($t=1$).} 
First note that by definition of the interval length $\intl_{1}$ in Equation~\eqref{eqn:dist_ub}, we have
\begin{equation*}
 \intl_{1} = 18\sqrt{(2\hat{\sigma}^2 + 2\hat{d}_0^2)\log{\tn}} \geq  18\sqrt{(\sigma^2 + \|\dw_0\|^2)\log{\tn}}.
 \end{equation*}
So from Lemma~\ref{lem:real_intd_lemma} we have the following bound on $\|\dw_1\|_2$, which holds with probability at least $1-1/\tn^6$
$$\|\dw_1\|_2 = O\left(\frac{\gamma}{(1-\alpha)\log{\tn}}\right)\|\dw_{0}\|_2 + O\left(\frac{\tn^{1/\gamma}}{1-\alpha} \sqrt{\frac{p + \log{\tn}}{\tn}}\right)\intl_1 + O\left(\frac{\sigma}{1-\alpha}\sqrt{\frac{\alpha p\log{\tn}}{\tn}}\right).$$
Using definitions of $\intl_1, \beta$ in Theorem~\ref{thm:real}, and $\hat{\sigma} \leq \mu \sigma$,  $\hat{d}_0 \leq \nu \|\dw_0\|_2$ , and  $\tn^{1-2/\gamma}= \tilde{\Omega}(\nu^2 p/\gamma^2)$, we get:
\[
\|\dw_1\|_2 \leq \beta \|\dw_0\|_2 + O\left(\frac{\mu\sigma  \tn^{1/\gamma}}{1-\alpha} \sqrt{\frac{ p\log{\tn} + (\log{\tn})^2}{\tn}}\right).
\]
\paragraph{Induction Step.}  
 Suppose the Theorem holds for $t\leq \tT$, we show that it also holds for $t=\tT+1$, with high probability. We first show that $\intl_{\tT+1} \geq 18\sqrt{(\sigma^2+\|\dw_{\tT}\|_2^2) \log{\tn}}$. Consider the difference $\left(2\hat{\sigma}^2 + 2\beta^{2\tT}\hat{d}_0^2\right) - \left(\sigma^2 + \|\dw_{\tT}\|^2\right)$
\begin{equation*}
    \begin{array}{lll}
          2\hat{\sigma}^2 + 2\beta^{2\tT}\hat{d}_0^2 -   \left(\sigma^2 + \|\dw_{\tT}\|^2\right)
         &\geq&  \hat{\sigma}^2 + 2\beta^{2\tT}\hat{d}_0^2 - \|\dw_{\tT}\|^2 \\
         &\geq&  \hat{\sigma}^2 + 2\beta^{2\tT}\hat{d}_0^2 - 2\beta^{2\tT} \|\dw_0\|^2 - O\left(\frac{\mu\sigma  \tn^{1/\gamma}}{(1-\zeta)(1-\alpha)} \sqrt{\frac{ p\log{\tn} + (\log{\tn})^2}{\tn}}\right)^2\\
         &\geq& \hat{\sigma}^2 - O\left(\frac{\mu\sigma  \tn^{1/\gamma}}{(1-\zeta)(1-\alpha)} \sqrt{\frac{ p\log{\tn} + (\log{\tn})^2}{\tn}}\right)^2\\
         &\geq&0,
    \end{array}
\end{equation*}
where the first inequality follows from the fact that $\hat{\sigma} \geq \sigma$, the second inequality uses the bound on $\|\dw_{\tT}\|_2$ in inequality~\eqref{eqn:descent_tighter}, and the last inequality holds whenever $\tn^{1-2/\gamma} \geq c\frac{\mu^2(p\log{\tn}+(\log{\tn})^2)}{(1-\zeta)^2(1-\alpha)^2}$, for some universal constant $c$. This shows that $\intl_{\tT+1} \geq 18\sqrt{(\sigma^2+\|\dw_{\tT}\|_2^2) \log{\tn}}$, with probability at least $1-\tT/\tn^{6}$. 
We now use Lemma~\ref{lem:real_intd_lemma} to get the following bound on $\|\dw_{\tT+1}\|_2$
$$\|\dw_{\tT+1}\|_2 = O\left(\frac{\gamma}{(1-\alpha)\log{\tn}}\right)\|\dw_{\tT}\|_2 + O\left(\frac{\tn^{1/\gamma}}{1-\alpha} \sqrt{\frac{p + \log{\tn}}{\tn}}\right)\intl_{\tT+1} + O\left(\frac{\sigma}{1-\alpha}\sqrt{\frac{\alpha p\log{\tn}}{\tn}}\right).$$
Using definitions of $\intl_t, \beta$ in Theorem~\ref{thm:real}, and $\hat{\sigma} \leq \mu \sigma$,  $\hat{d}_0 \leq \nu \|\dw_0\|_2$ , and  the bound on $\|\dw_{\tT}\|_2$ from Equation~\eqref{eqn:descent_tighter}, we get:
\[
\|\dw_{\tT+1}\|_2 \leq \beta^{\tT+1} \|\dw_{0}\|_2 + \left(\sum_{i = 0}^{\tT}\zeta^i\right)O\left(\frac{\mu\sigma  \tn^{1/\gamma}}{1-\alpha} \sqrt{\frac{ p\log{\tn} + (\log{\tn})^2}{\tn}}\right).
\]
\section{Proof of Theorem~\ref{thm:real_hd}}
\label{sec:aux_proof_hd}
\begin{minipage}[t]{0.5\textwidth}
\begin{algorithm}[H]
\caption{\alg-HD}
\label{alg:torrent_hd}
\begin{algorithmic}[1]
  \small
  \STATE \textbf{Input:} Training data $(X, \y)$, iterations $T$, sparsity $k$
  \STATE Randomly split $(X, \y)$ into $T$ sets $\{(X_t, \y_t)\}_{t = 0}^T$ of size $\tn = \lfloor\frac{n}{T+1}\rfloor$ each
 \STATE $\w_0 \leftarrow 0$
  \STATE $t \leftarrow 1$ 
  \WHILE{$t \leq T$}
        \STATE \textbf{Get new set of samples} $(X_{t}, \y_{t})$
        \STATE $S_{t} \leftarrow \text{AdaHT}\left(\y_{t} - X_{t}\w_{t-1}\right)$
        \STATE $\w_t \leftarrow \textsf{IHT}\left(X_{t,S_t}, \y_{t,S_t}, k\right)$
        \STATE $t \leftarrow t + 1$
  \ENDWHILE
\end{algorithmic}
\end{algorithm}\vspace*{5pt}
\end{minipage}
\hspace*{10pt}
\begin{minipage}[t]{.48\textwidth}
\begin{algorithm}[H]
\caption{Iterative Hard Thresholding}
\label{alg:iht}
\begin{algorithmic}[1]
\small
    \STATE \textbf{Input:} $X, \y$, desired sparsity $k$, step size $\eta$.
    \STATE $\w_1 \leftarrow 0, t = 1$
    \WHILE{not converged}
    \STATE $\Tilde{\w}_{t} \leftarrow \w_t - \eta X^T(X\w_t - \y)$.
    \STATE $\w_{t+1} \leftarrow \displaystyle \argmin_{\w:\|\w\|_0 \leq k} \|\w - \Tilde{\w}_{t}\|_2$.
    \STATE $t \leftarrow t+1$.
    \ENDWHILE
    \STATE \textbf{return} $\w_t$
\end{algorithmic}
\end{algorithm}
\end{minipage}
 The \alg-HD algorithm for consistent robust sparse regression is given in Algorithm~\ref{alg:torrent_hd}. Before we present the proof of Theorem~\ref{thm:real_hd}, we first recall some notation and introduce some additional ingredients which we require for the proof of the Theorem.

\paragraph{Notation.} Recall that $\hat{\sigma}, \hat{d}_0$ are approximate upper bounds of $\sigma, \|\dw_0\|_2$ which satisfy the following inequalities
\[
\sigma \leq \hat{\sigma} \leq \mu \sigma, \quad \|\dw_0\|_2 \leq \hat{d}_0 \leq \nu \|\dw_0\|_2.
\]
The interval length we choose in this setting is given by: $ \intl_{t+1} = 18\sqrt{(2\hat{\sigma}^2 + 2\beta^{2t}\hat{d}_0^2)\log{p}}.$
Let $k^*$ be the sparsity of $\w^*$. The rest of the notation is same as in Theorems~\ref{thm:real}, which we recall here for convenience. Let $\tX = \Sigma^{-1/2}X$ and $j_t$ be the interval chosen in $t^{th}$ iteration. Let $\tau_t = \left(j_t-\frac{1}{2}\right)\intl_{t}$ be the midpoint of $j_{t}^{th}$ interval. Let $S_t^*$ be the set of un-corrupted points in $(X_t, \y_t)$ and $S_t$ be the output of \adaht in the $t^{th}$ iteration of \alg-HD. Let $\zeta = \frac{c\gamma}{(1-\alpha)\log{\tn}}$, for some universal constant $c> 0$.
\subsection{Background on Iterative Hard Thresholding (IHT)}
The IHT algorithm for solving the following sparse regression problem is given in Algorithm~\ref{alg:iht}
\[
\min_{\|\w\|_0 \leq k^*} \|\y - X\w\|_2^2.
\]
\citet{jain2014iterative} show that if the design matrix $X$ satisfies the  Restricted Strong Convexity (RSC) and Restricted Strong Smoothness (RSS) properties (defined below), then IHT can efficiently solve the above optimization problem.
\begin{defn}
\label{def:rsc}
\textbf{(RSC Property)}. A  matrix $X \in \mathbb{R}^{n \times p}$ is said to satisfy Restricted Strong Convexity (RSC) at sparsity level $k$ with strong convexity constraint $\alpha_{k}$ if the following holds for all $\w_1$, $\w_2$ s.t. $\|\w_1\|_0 \leq k_1$ and $\|\w_2\|_0 \leq k_2$, $k = k_1 + k_2$:
\[
\frac{1}{2n}\|X(\w_2-\w_1)\|_2^2 \geq \alpha_k\|\w_2-\w_1\|_2^2.
\]
\end{defn}
\begin{defn}
\label{def:rss}
\textbf{(RSS Property)}. A  matrix $X \in \mathbb{R}^{n \times p}$ is said to satisfy Restricted Strong Smoothness (RSS) at sparsity level $k$ with strong smoothness constraint $L_{k}$ if the following holds for all $\w_1$, $\w_2$ s.t. $\|\w_1\|_0 \leq k_1$ and $\|\w_2\|_0 \leq k_2$, $k = k_1 + k_2$:
\[
\frac{1}{2n}\|X(\w_2-\w_1)\|_2^2 \leq L_k\|\w_2-\w_1\|_2^2.
\]
\end{defn}
\begin{theorem}[Theorem 1, \citet{jain2014iterative}]
\label{thm:iht_guarantees}
 Let $X$ have RSS, RSC parameters given by $L_{2k+k^*} = L, \alpha_{2k+k^*} = \alpha$. Suppose IHT is run with $k \geq 32\left(\frac{L}{\alpha}\right)^2k^*$ and $\eta  = \frac{2}{3L}$. Then the $t^{th}$ iterate of the IHT algorithm, for $t = O\left(\frac{L}{\alpha} \log{\frac{\|\y\|_2}{\epsilon}}\right)$ satisfies
\[
\frac{1}{2n}\|\y - X\w_t\|_2^2 \leq \min_{\w:\|\w\|_0 \leq k^*}\frac{1}{2n}\|\y - X\w\|_2^2 + \epsilon,
\]
\end{theorem}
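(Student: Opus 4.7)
The plan is to prove linear convergence of the objective value $f(\w) := \frac{1}{2n}\|\y - X\w\|_2^2$, reducing the theorem to establishing a per-iteration contraction on the suboptimality $f(\w_t) - f(\w^*_{k^*})$, where $\w^*_{k^*}$ denotes the optimal $k^*$-sparse minimizer on the right-hand side. Throughout the analysis, all RSS/RSC inequalities will be invoked on the union of supports $\mathrm{supp}(\w_t) \cup \mathrm{supp}(\w_{t+1}) \cup \mathrm{supp}(\w^*_{k^*})$, which has cardinality at most $2k + k^*$, so they use precisely the stated constants $\alpha_{2k+k^*} = \alpha$ and $L_{2k+k^*} = L$.

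First I would write the RSS descent inequality
\[
f(\w_{t+1}) \leq f(\w_t) + \langle \nabla f(\w_t), \w_{t+1}-\w_t\rangle + \tfrac{L}{2}\|\w_{t+1}-\w_t\|_2^2,
\]
and then exploit the optimality of hard thresholding: since $\w_{t+1}$ is the best $k$-sparse approximation of $\tilde{\w}_t = \w_t - \eta \nabla f(\w_t)$ and $\|\w^*_{k^*}\|_0 \leq k^* \leq k$, we have $\|\w_{t+1} - \tilde{\w}_t\|_2^2 \leq \|\w^*_{k^*} - \tilde{\w}_t\|_2^2$. Expanding both squares and cancelling the common $\|\eta \nabla f(\w_t)\|_2^2$ terms produces a linear inequality relating $\langle \nabla f(\w_t), \w_{t+1} - \w^*_{k^*}\rangle$ to $\|\w_{t+1} - \w_t\|_2^2$ and $\|\w^*_{k^*} - \w_t\|_2^2$. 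Combining this with the descent inequality and with RSC applied to $\|\w_t - \w^*_{k^*}\|_2^2$ should yield a recursion of the form $f(\w_{t+1}) - f(\w^*_{k^*}) \leq (1-c)\bigl(f(\w_t) - f(\w^*_{k^*})\bigr)$ for a suitable $c>0$, provided the sparsity budget $k$ is chosen large enough.

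The main obstacle is obtaining a contraction factor $c = \Omega(\alpha/L)$ without resorting to restrictive RIP-style assumptions such as $L/\alpha \approx 1$. The core trick, following Jain, Tewari, and Kar, is to exploit the slack $k \gg k^*$ to sharpen the naive HT inequality above: top-$k$ hard thresholding, when $k \geq 32(L/\alpha)^2 k^*$, captures the mass of $\tilde{\w}_t$ on $\mathrm{supp}(\w^*_{k^*})$ up to a shrinkage factor that scales with $\sqrt{k^*/k}$, so the projection error is dominated by (rather than inflating) the RSC term $\tfrac{\alpha}{2}\|\w_t - \w^*_{k^*}\|_2^2 \geq \alpha(f(\w_t) - f(\w^*_{k^*}))/L$. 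Plugging in the prescribed step size $\eta = 2/(3L)$ turns this into the desired contraction. Finally, since $\w_1 = 0$ gives $f(\w_1) = \tfrac{1}{2n}\|\y\|_2^2$, iterating the contraction yields $f(\w_t) - f(\w^*_{k^*}) \leq (1-c)^{t-1}\,\tfrac{1}{2n}\|\y\|_2^2$, and choosing $t = O((L/\alpha)\log(\|\y\|_2/\epsilon))$ drives the suboptimality below $\epsilon$, completing the proof.
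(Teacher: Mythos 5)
First, a point of reference: the paper does not prove this statement at all --- it is quoted, with citation, from Jain, Tewari and Kar (2014), so there is no in-paper proof to compare against. Your proposal is in effect a reconstruction of that external proof, and it does follow its general shape: an RSS descent step, the optimality property of the top-$k$ projection, and the observation that the slack $k \gg k^*$ is what rescues the argument when $L/\alpha$ is large.

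As a proof, however, it has a genuine gap at exactly the step that carries all the difficulty. Your ``naive'' combination --- RSS descent, plus $\|\w_{t+1}-\tilde{\w}_t\|_2^2 \le \|\w^*_{k^*}-\tilde{\w}_t\|_2^2$, plus RSC --- does not ``yield a recursion of the form $f(\w_{t+1})-f(\w^*_{k^*}) \le (1-c)\bigl(f(\w_t)-f(\w^*_{k^*})\bigr)$'' for your objective $f$: carrying the expansion out gives $f(\w_{t+1}) \le f(\w^*_{k^*}) + \tfrac{1}{2}\bigl(\tfrac{1}{\eta}-\alpha\bigr)\|\w_t-\w^*_{k^*}\|_2^2 + \tfrac{1}{2}\bigl(L-\tfrac{1}{\eta}\bigr)\|\w_{t+1}-\w_t\|_2^2$, and with $\eta = 2/(3L)$ the leftover positive term is of order $L\|\w_t-\w^*_{k^*}\|_2^2$, which RSC of modulus $\alpha \ll L$ cannot absorb. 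You acknowledge this obstacle and appeal to ``the core trick, following Jain, Tewari, and Kar'' --- that top-$k$ thresholding with $k \ge 32(L/\alpha)^2k^*$ incurs projection error smaller by a factor of roughly $k^*/(k-k^*)$ than the distance to any $k^*$-sparse point --- but you neither state this lemma precisely nor prove it, nor do you carry out the bookkeeping showing that such a shrinkage, with the prescribed $\eta=2/(3L)$ and the constant $32$, actually converts the displayed inequality into a contraction with $c = \Omega(\alpha/L)$. Since that lemma and that accounting are the entire content of the theorem (the remaining steps --- telescoping from $\w_1=0$ with $f(\w_1)=\tfrac{1}{2n}\|\y\|_2^2$ and choosing $t$ --- are routine), the proposal identifies the right idea but does not constitute a proof. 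To complete it you would need to prove the hard-thresholding lemma (compare the smallest retained magnitude of $\tilde{\w}_t$ with the largest discarded one, and count the coordinates of $\mathrm{supp}(\w^*_{k^*})\setminus\mathrm{supp}(\w_{t+1})$ against those of $\mathrm{supp}(\w_{t+1})\setminus\mathrm{supp}(\w^*_{k^*})$) and then redo the one-step inequality with that improved bound in place of the naive one.
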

When the covariates $\{\x_i\}_{i = 1}^n$ are sampled from a Gaussian distribution, the following result of \citet{agarwal2010fast} implies that $X$ satisfies RSC and RSS properties with high probability.
\begin{theorem}[\citet{agarwal2010fast}]
Suppose the rows of $X \in \mathbb{R}^{n \times p}$ are sampled from a Gaussian distribution with covariance $\Sigma$. Then the following statements hold with probability at least $1-e^{-cn}$
\[
\forall \uu, \quad \frac{1}{n}\|X\uu\|_2^2 \geq \frac{1}{2}\|\Sigma^{1/2}\uu\|_2^2 - c_1\rho(\Sigma)\frac{\log{p}}{n}\|\uu\|_1^2,
\]
\[
\forall \uu, \quad \frac{1}{n}\|X\uu\|_2^2 \leq 4\|\Sigma^{1/2}\uu\|_2^2 + c_1\rho(\Sigma)\frac{\log{p}}{n}\|\uu\|_1^2,
\]
where $\rho(\Sigma) = \max_{i}\Sigma_{ii}$.
\end{theorem}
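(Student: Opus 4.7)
The plan is to establish both one-sided bounds by a combination of a covering argument for sparse vectors and a peeling (block-decomposition) argument to extend to all $\uu$. The inequality $\tfrac{1}{n}\|X\uu\|_2^2 \approx \|\Sigma^{1/2}\uu\|_2^2$ cannot hold uniformly in $\uu$ when $n \ll p$, so the $\ell_1$-penalty term serves exactly to absorb the failure over ``dense'' directions. First I would reduce to the case of standard Gaussian rows by writing $\x_i = \Sigma^{1/2}\z_i$ with $\z_i \sim \mathcal{N}(0,I_p)$, so that $\|X\uu\|_2^2 = \|Z\Sigma^{1/2}\uu\|_2^2$; however, the $\|\uu\|_1$ appearing on the right is taken before whitening, so the right penalty constant is $\rho(\Sigma) = \max_i \Sigma_{ii}$, which upper-bounds $\max_i \|\Sigma^{1/2} \mathbf{e}_i\|_2^2$.

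Next (Step 1, the sparse case) I would fix a sparsity level $s = \Theta(n/\log p)$ and show that uniformly over all $s$-sparse $\uu$,
\begin{equation*}
\tfrac{1}{2}\|\Sigma^{1/2}\uu\|_2^2 \leq \tfrac{1}{n}\|X\uu\|_2^2 \leq 4\|\Sigma^{1/2}\uu\|_2^2
\end{equation*}
with probability at least $1 - e^{-cn}$. For a fixed support $T$ of size $s$, $X_T$ is Gaussian with covariance $\Sigma_{TT}$, and Davidson--Szarek gives exponential concentration of its singular values around $\sqrt{n}\cdot \sigma_{\min/\max}(\Sigma_{TT}^{1/2})$ at rate $\exp(-c n)$ provided $n \gtrsim s$. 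A union bound over all $\binom{p}{s} \leq (ep/s)^s$ supports succeeds when $s\log(ep/s) \ll n$, which fixes the choice $s = \Theta(n/\log p)$.

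Step 2 is the peeling/decomposition argument. Given arbitrary $\uu$, sort coordinates by magnitude and split into consecutive blocks $\uu = \uu_{(1)} + \uu_{(2)} + \dots$ of size $s$ each. The monotonicity construction gives the classical inequality $\|\uu_{(k+1)}\|_2 \leq \|\uu_{(k)}\|_1/\sqrt{s}$, and hence $\sum_{k \geq 2}\|\uu_{(k)}\|_2 \leq \|\uu\|_1/\sqrt{s}$. Writing $\tfrac{1}{n}\|X\uu\|_2^2 = \sum_{j,k}\tfrac{1}{n}\langle X\uu_{(j)}, X\uu_{(k)}\rangle$, Step~1 controls the diagonal terms on each $s$-sparse block, while the off-diagonal cross terms are bounded via Cauchy--Schwarz against $\|X\uu_{(j)}\|_2\|X\uu_{(k)}\|_2$, whose operator-norm factors are controlled again by Step~1 uniformly. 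Collecting terms, the ``tail'' of the expansion contributes a term scaling like $(\|\uu\|_1/\sqrt{s})^2 \cdot \sup \tfrac{1}{n}\|X\vb\|_2^2/\|\vb\|_2^2 \lesssim \rho(\Sigma) \cdot (\log p / n)\|\uu\|_1^2$, yielding the stated penalty constant $c_1\rho(\Sigma)\log p / n$.

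The main obstacle is the last step: to get $\rho(\Sigma) = \max_i \Sigma_{ii}$ in the penalty (rather than $\lambda_{\max}(\Sigma)$, which could be much larger or even infinite if $\Sigma$ is only PSD), one must avoid crude operator-norm bounds on $\Sigma^{1/2}$. Instead the cross-term estimate has to be organized so that the only covariance quantity that enters is $\max_i \Sigma_{ii}$, which arises naturally as an upper bound on $\mathbb{E}\,x_{ij}^2$ and hence on the typical size of a single coordinate; this is the sharp form that relates the $\ell_1$-penalty to the sup-norm variance of the design. A secondary technical point is to absorb the small multiplicative constant from the cross-terms (which slightly contracts $\|\Sigma^{1/2}\uu\|^2$ in the lower bound and inflates it in the upper bound) into the constants $1/2$ and $4$ by choosing $s$ with an appropriate implied constant; this is the reason the constants in the statement are $1/2$ and $4$ rather than, say, $1$ in both places.
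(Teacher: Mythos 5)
First, note that the paper does not prove this statement at all: it is imported verbatim as a citation to \citet{agarwal2010fast} (the result originates in the Gaussian comparison-inequality analysis of Raskutti--Wainwright--Yu and Agarwal--Negahban--Wainwright), so there is no in-paper proof to match against; your proposal must therefore stand on its own, and as written it has a genuine gap exactly at the point you yourself flag as ``the main obstacle.'' In your Step 2, the diagonal and cross terms over $s$-sparse blocks are controlled (after recentering around $\uu_{(j)}^T\Sigma\,\uu_{(k)}$) at a scale proportional to $\|\Sigma^{1/2}\uu_{(j)}\|_2\|\Sigma^{1/2}\uu_{(k)}\|_2$, and the tail blocks are then converted to $\ell_1$ via $\|\uu_{(k+1)}\|_2\leq\|\uu_{(k)}\|_1/\sqrt{s}$. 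But passing from $\|\Sigma^{1/2}\uu_{(k)}\|_2$ to $\|\uu_{(k)}\|_2$ costs a factor $\sqrt{\lambda_{\max}(\Sigma_{TT})}$ over supports $T$ of size $s$, so the penalty you actually obtain is of order $\max_{|T|\leq s}\lambda_{\max}(\Sigma_{TT})\cdot\frac{\log p}{n}\|\uu\|_1^2$, not $\rho(\Sigma)\frac{\log p}{n}\|\uu\|_1^2$. For correlated designs these differ badly: for an equicorrelated $\Sigma$ with correlations near one, $\lambda_{\max}(\Sigma_{TT})\approx s\,\rho(\Sigma)=\Theta(n/\log p)\,\rho(\Sigma)$, which wipes out the entire $\log p/n$ gain. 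Your closing remark that the cross-term estimate ``has to be organized so that the only covariance quantity that enters is $\max_i\Sigma_{ii}$'' names the difficulty but supplies no mechanism; within a covering-plus-block-decomposition (RIP-style) argument, the submatrix operator norm is intrinsic to the Cauchy--Schwarz step, and I do not see how to avoid it.

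The mechanism that actually produces $\rho(\Sigma)$ is different in kind: one works with the random process $\vb\mapsto\|X\vb\|_2$ directly and applies Gordon's (for the infimum) and Slepian's (for the supremum) Gaussian comparison inequalities over the set $\{\vb:\|\Sigma^{1/2}\vb\|_2=1,\ \|\vb\|_1\leq r\}$. There the $\ell_1$ constraint is paired by duality with $\E\|\Sigma^{1/2}\mathbf{h}\|_\infty$ for a standard Gaussian $\mathbf{h}$, and since each coordinate of $\Sigma^{1/2}\mathbf{h}$ is $\mathcal{N}(0,\Sigma_{ii})$ this expectation is $O(\sqrt{\rho(\Sigma)\log p})$ --- this is precisely where $\max_i\Sigma_{ii}$, rather than any eigenvalue of $\Sigma$ or its submatrices, enters. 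Concentration of the Lipschitz functional $X\mapsto\inf_{\vb}\|X\vb\|_2$ (resp.\ $\sup$) gives the $1-e^{-cn}$ probability, and a peeling argument over the radius $r=\|\vb\|_1/\|\Sigma^{1/2}\vb\|_2$ (not over coordinate blocks) gives uniformity in $\uu$. If you want to salvage your approach, you would either have to accept the weaker constant $\max_{|T|\leq s}\lambda_{\max}(\Sigma_{TT})$ (which suffices for some downstream uses but is not the stated theorem), or replace Step 2 by the comparison-inequality argument above, at which point Step 1's covering bound becomes unnecessary.
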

The following Corollary follows immediately from the above Theorem.
\begin{corollary}
\label{cor:gaussian_rsc}
Suppose the rows of $X \in \mathbb{R}^{n \times p}$ are sampled from a Gaussian distribution with covariance $\Sigma$. Then with probability at least $1-e^{-cn}$, $X$ satisfies RSC, RSS properties at sparsity level $k$, with strong convexity constraint $\alpha_k$ and strong smoothness constraint $L_k$ given by
\[
\alpha_k = \frac{1}{4}\lambda_{min}(\Sigma) - c_2\rho(\Sigma)\frac{k\log{p}}{n},
\]
\[
L_k = 2\lambda_{max}(\Sigma) + c_2\rho(\Sigma)\frac{k\log{p}}{n},
\]
where $\rho(\Sigma) = \max_{i}\Sigma_{ii}$.
\end{corollary}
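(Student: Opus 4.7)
The claim is a direct specialization of the preceding Agarwal--Negahban--Wainwright inequalities to sparse vectors, so the plan is to simply bound the nuisance $\|\uu\|_1^2$ term by $\|\uu\|_2^2$ using sparsity, and then read off the RSC/RSS constants. I would start by observing that in the RSC/RSS definitions the direction vector is $\uu := \w_2 - \w_1$ with $\|\w_1\|_0 \leq k_1$ and $\|\w_2\|_0 \leq k_2$ where $k_1 + k_2 = k$, hence $\|\uu\|_0 \leq k$. By Cauchy--Schwarz restricted to the support of $\uu$ this gives $\|\uu\|_1 \leq \sqrt{\|\uu\|_0}\,\|\uu\|_2 \leq \sqrt{k}\,\|\uu\|_2$, so $\|\uu\|_1^2 \leq k\|\uu\|_2^2$. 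Simultaneously, the elementary eigenvalue sandwich yields $\lambda_{\min}(\Sigma)\|\uu\|_2^2 \leq \|\Sigma^{1/2}\uu\|_2^2 \leq \lambda_{\max}(\Sigma)\|\uu\|_2^2$.

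Next, plug these two observations into the two high-probability inequalities given by the Agarwal et al.\ theorem (which hold simultaneously for \emph{all} $\uu$ with probability $\geq 1 - e^{-cn}$). For the lower bound I obtain
\[
\frac{1}{n}\|X\uu\|_2^2 \;\geq\; \tfrac{1}{2}\lambda_{\min}(\Sigma)\|\uu\|_2^2 \;-\; c_1\rho(\Sigma)\frac{k\log p}{n}\|\uu\|_2^2,
\]
and dividing by $2$ and matching the RSC definition gives $\alpha_k = \tfrac14 \lambda_{\min}(\Sigma) - c_2 \rho(\Sigma)\tfrac{k\log p}{n}$ with $c_2 = c_1/2$. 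For the upper bound the same substitution yields
\[
\frac{1}{n}\|X\uu\|_2^2 \;\leq\; 4\lambda_{\max}(\Sigma)\|\uu\|_2^2 \;+\; c_1\rho(\Sigma)\frac{k\log p}{n}\|\uu\|_2^2,
\]
and halving delivers $L_k = 2\lambda_{\max}(\Sigma) + c_2\rho(\Sigma)\tfrac{k\log p}{n}$, matching the claim.

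Since the sparsity-to-$\ell_1$ step holds deterministically and uniformly over all $k$-sparse $\uu$, and since the probability statement $1-e^{-cn}$ is already built into the Agarwal et al.\ theorem, no extra union bound or concentration argument is required and the exceptional set is the same one as in the cited theorem. There is no real obstacle here: the whole content of the corollary is the sparse $\Rightarrow$ $\|\uu\|_1^2 \leq k\|\uu\|_2^2$ bookkeeping that converts the $\ell_1$-tolerance form of Gaussian RE into the $\ell_2$-tolerance form demanded by the RSC/RSS definitions used by IHT. The only minor subtlety worth flagging explicitly is that the RSC/RSS of Definitions~\ref{def:rsc}--\ref{def:rss} are stated at sparsity level $k = k_1 + k_2$ with $\uu = \w_2 - \w_1$, so one must be careful that the sparsity-$k$ bound applies to $\uu$ and not to $\w_1$ or $\w_2$ individually; this is immediate but would be the place to be pedantic when writing the full proof.
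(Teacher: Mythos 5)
Your proposal is correct and matches what the paper intends: the corollary is stated as an immediate consequence of the Agarwal et al.\ bounds, and your argument — using $\|\uu\|_1^2 \leq k\|\uu\|_2^2$ for $k$-sparse differences, the eigenvalue sandwich $\lambda_{\min}(\Sigma)\|\uu\|_2^2 \leq \|\Sigma^{1/2}\uu\|_2^2 \leq \lambda_{\max}(\Sigma)\|\uu\|_2^2$, and halving to reconcile the $\tfrac{1}{n}$ versus $\tfrac{1}{2n}$ normalizations in the RSC/RSS definitions — is exactly the bookkeeping the paper leaves implicit. The constants and the probability statement carry over as you describe, so nothing is missing.
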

\subsection{Main Argument}
We first prove the following more general result for any $\gamma \in (1, \log{\tn})$ and any $T > 0$. Theorem~\ref{thm:real_hd} then readily follows by substituting $\gamma = \frac{2\log{\tn}}{\log\log{\tn}}$ and $T = O\left(\log\left(\frac{\tn}{k}\frac{\|\w_0-\w^*\|_{\Sigma}}{\sigma}\right)\right)$ in the following Theorem.
\begin{theorem}
\label{thm:real_hd_intd}
Let $\{\x_i, y_i\}_{i = 1}^n$ be $n$ observations generated from the oblivious adversary model and let $\w^*$ be such that $\|\w^*\|_0 \leq k^*$. Suppose \alg-HD  is run for $T$ iterations.
Suppose the sparsity $k$ in the call to IHT is such that $k= \frac{\Omega(1)}{(1-\alpha)^4}\frac{\lambda_{max}(\Sigma)^2}{\lambda_{min}(\Sigma)^2}k^*$ and the hyperparameters of \adaht are set as follows:
\[
 a= 1/18, \quad \gamma \in (1, \log{\tn}),\quad  \intl_t = 18\sqrt{(2\hat{\sigma}^2 + 2\beta^{2(t-1)}\hat{d}_0^2)\log{p}},
 \]
  with $\beta \geq \zeta$, where $\tn = \frac{n}{T+1}$.
For any $\alpha < 1 - \frac{c\gamma}{\log{\tn}}$, let $\tn$
 be such that
$$\tn^{1-2/\gamma} \geq c_1\max\left\lbrace\frac{\mu^2}{(1-\zeta)^2(1-\alpha)^2}, \frac{\nu^2\log^2{\tn}}{\gamma^2}\right\rbrace k\log^2{p},$$
for some universal constant $c_1 > 0$.
   Then the iterates $\{\w_{t}\}_{t = 1}^T$ produced by \alg-HD satisfy
\begin{equation*}
         \displaystyle \|\w_{t}-\w^*\|_{\Sigma} \leq \beta^{t}\|\w_0-\w^*\|_{\Sigma}+ \left(\sum_{i = 0}^{t-1}\zeta^i\right)O\left(\frac{\mu\sigma \tn^{1/\gamma}}{(1-\alpha)} \sqrt{\frac{k\log^2{p}}{\tn}}\right).
\end{equation*}
with probability greater than $1-T/p^6$.
\end{theorem}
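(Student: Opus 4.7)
The plan is to follow the same template as the proof of Theorem~\ref{thm:real}, making the changes needed to replace the closed-form OLS step by an IHT approximate minimizer and to pay only $\log p$ factors (rather than $\log p$ raised to powers of dimension) via sparse union bounds. A key observation is that \adaht and hence Lemma~\ref{prop:HT_properties} only looks at residuals and is oblivious to how $\w_{t-1}$ was computed; so with the new interval length $\intl_t=18\sqrt{(2\hat\sigma^2+2\beta^{2(t-1)}\hat d_0^2)\log p}$ (and assuming inductively $\intl_t \geq 18\sqrt{(\sigma^2+\|\dw_{t-1}\|^2)\log p}$), the partition $Q_1,Q_2,Q_3,Q_4$ still satisfies $Q_1\cap S_t=\emptyset$, $S^*_t\subseteq Q_2\subseteq S_t=Q_2\cup Q_4$, and $|Q_3\cup Q_4|\leq \gamma\tn/(j_t\log\tn)$, with the stronger $\log p$ tail now giving us enough room to union-bound later over $\binom{p}{k+k^*}$ supports.

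Next I would characterize $\w_t$ via Theorem~\ref{thm:iht_guarantees}. On the points in $S_t$ we have $\y_{t,S_t}=X_{t,S_t}\w^{*}+\bb^{*}_{t,S_t}+\epb_{t,S_t}$, and since $\w^*$ is $k^*$-sparse with $k\geq 32(L/\alpha)^2 k^*$ (guaranteed by the choice $k=\Omega((\lambda_{\max}/\lambda_{\min})^2 k^*/(1-\alpha)^4)$ combined with Corollary~\ref{cor:gaussian_rsc} applied to the $|S_t|\geq (1-\alpha)\tn$ Gaussian rows at sparsity $2k+k^*$), IHT returns $\w_t$ with objective close to that of $\w^*$. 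Expanding this optimality yields
\begin{equation*}
\tfrac{1}{|S_t|}\|X_{t,S_t}(\w_t-\w^*)\|_2^2 \;\leq\; \tfrac{2}{|S_t|}\langle X_{t,S_t}(\w_t-\w^*),\,\bb^{*}_{t,S_t}+\epb_{t,S_t}\rangle + \epsilon,
\end{equation*}
for arbitrarily small $\epsilon$ after $O(\log(1/\epsilon))$ inner iterations. The left-hand side is lower bounded by $c\,\lambda_{\min}(\Sigma)\|\w_t-\w^*\|_\Sigma^2$ via RSC applied to the $(k+k^*)$-sparse vector $\w_t-\w^*$.

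For the right-hand side, I would mimic the $T_1+T_2+T_3$ decomposition by splitting the sum over $S_t=Q_2\cup Q_4$. Write $\langle X_{t,S_t}(\w_t-\w^*),\vb\rangle=\langle \w_t-\w^*,X_{t,S_t}^{\!\top}\vb\rangle$ and apply Cauchy--Schwarz after restricting to the $(k+k^*)$-sparse support of $\w_t-\w^*$, giving a bound of $\|\w_t-\w^*\|_2\cdot \max_{|U|=k+k^*}\|(X_{t,S_t}^{\!\top}\vb)_U\|_2$. For the $Q_2$ contribution (analogues of $T_1,T_2$), the vectors $\bb^*$ and $\epb$ restricted to $Q_2$ are independent of $X_t$, so a Gaussian concentration bound on each fixed sparse support followed by a union bound over $\binom{p}{k+k^*}\leq p^{k+k^*}$ supports upgrades the $\sqrt{p}$ in Theorem~\ref{thm:real} to $\sqrt{(k+k^*)\log p}$. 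For the $Q_4$ contribution (analogue of $T_3$), I would re-run the argument of Lemmas~\ref{lem:expectation_T3}--\ref{lem:t3_concentration}: the expectation bound $\|\E[T_{3,j}]\|_2=O(\gamma\tn/\log\tn)\|\dw_{t-1}\|_2$ is unaffected (it only uses independence of $\eta_{i,t}$ from the data), and the deviation bound in Lemma~\ref{lem:concentration_T3_aux3} is redone restricted to each $(k+k^*)$-sparse coordinate projection and union-bounded, giving the scale $\sqrt{(k+k^*)\log p}\,\intl_t\,\tn^{1/(2\gamma)}$ instead of $\sqrt{p}\,\intl_t\,\tn^{1/(2\gamma)}$.

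Assembling the two sides produces the one-step inequality
\begin{equation*}
\|\dw_t\|_\Sigma \;\leq\; \tfrac{c\gamma}{(1-\alpha)\log\tn}\|\dw_{t-1}\|_\Sigma + O\!\left(\tfrac{\tn^{1/\gamma}}{1-\alpha}\sqrt{\tfrac{k\log p}{\tn}}\right)\intl_t + O\!\left(\tfrac{\sigma}{1-\alpha}\sqrt{\tfrac{k\log^2 p}{\tn}}\right),
\end{equation*}
the exact sparse analogue of Lemma~\ref{lem:real_intd_lemma}, after which the induction proceeds as in Section~\ref{sec:aux_real_induction}: the base case uses $\hat\sigma\geq\sigma$, $\hat d_0\geq\|\dw_0\|$ to verify the interval-length hypothesis; the inductive step uses the sample-complexity assumption $\tn^{1-2/\gamma}\gtrsim \max\{\mu^2/((1-\zeta)^2(1-\alpha)^2),\nu^2\log^2\tn/\gamma^2\}\,k\log^2 p$ together with the bound on $\|\dw_t\|$ to show $2\hat\sigma^2+2\beta^{2t}\hat d_0^2\geq \sigma^2+\|\dw_t\|^2$, so the $\intl_{t+1}$ hypothesis carries through; finally a union bound over $T$ iterations (each of failure probability $\leq p^{-7}$) gives the claim. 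The main obstacle is the sparse $T_3$ bound: the concentration argument of Lemma~\ref{lem:concentration_T3_aux3} needs to be reproved on each fixed $(k+k^*)$-sparse support while (a) preserving the crucial $\gamma/\log\tn$ contraction coming from $|Q_4|\leq \gamma\tn/(j_{t+1}\log\tn)$ and (b) not blowing up the resulting bound when union-bounded over $p^{k+k^*}$ supports, which is exactly why the interval length is tuned with $\log p$ rather than $\log\tn$.
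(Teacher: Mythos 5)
Your proposal is correct and follows essentially the same route as the paper's proof: verify RSC/RSS of $X_{t,S_t}$ via Corollary~\ref{cor:gaussian_rsc}, convert the IHT optimality guarantee into $\|\tX_{t,S_t}\dw_t\|_2^2 \leq 2\langle[\bb_t^*+\epb_t]_{S_t},\tX_{t,S_t}\dw_t\rangle$, lower-bound the left side by restricted eigenvalues, restrict the right side to the sparse support and redo the $T_1,T_2,T_3$ bounds with a union bound over $\binom{p}{O(k+k^*)}$ supports (whence the extra $\log p$), and then run the same induction as in Theorem~\ref{thm:real}. The only cosmetic difference is that the paper fixes the support set $K_t=\mathrm{supp}(\w_t-\w^*)\cup\mathrm{supp}(\w_{t-1}-\w^*)$ and union-bounds over its possible values, whereas you take a supremum over all $(k+k^*)$-sparse supports; these are equivalent.
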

The proof of Theorem~\ref{thm:real_hd_intd} uses the same arguments as in the proof of Theorem~\ref{thm:real}. So we only focus on the  key differences in the proof.
That is, we use the same induction based argument for getting the final bound. So, we only need to prove the following lemma: 
\begin{lemma}
	Consider the setting of Theorem~\ref{thm:real_hd_intd}. Let $\intl_t  \geq  18\sqrt{(\sigma^2 + \|\dw_{t-1}\|^2)\log{p}}.$ Then, $\forall t\in [T]$, w.p. $\geq 1-1/p^6$: 
	 $$\|\dw_t\|_2 = O\left(\frac{\gamma}{(1-\alpha)\log{\tn}}\right)\|\dw_{t-1}\|_2 + O\left(\frac{\tn^{1/\gamma}}{1-\alpha} \sqrt{\frac{k\log{p}}{\tn}}\right)\intl_t + O\left(\frac{\sigma}{1-\alpha}\sqrt{\frac{\alpha k\log^2{p}}{\tn}}\right),$$
	 where $\dw_t=\Sigma^{1/2}(\w_t-\w^*)$. 
\end{lemma}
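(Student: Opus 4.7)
The plan is to mimic the proof of Lemma~\ref{lem:real_intd_lemma} step by step, making two structural changes required by the sparse/high-dimensional setting. First, we lose the closed-form OLS expression $\dw_t = -(\tX_{t,S_t}^T\tX_{t,S_t})^{-1}(T_1-T_2-T_3)$; we replace it by an oracle inequality derived from the IHT convergence guarantee (Theorem~\ref{thm:iht_guarantees}) together with restricted strong convexity. Second, every concentration bound that previously contributed a $\sqrt{p}$ factor must be replaced by its restricted-support analogue, which after a union bound over the $\binom{p}{k+k^*}\leq p^{\,k+k^*}$ sparse supports contributes $O(\sqrt{k\log p})$. The threshold $\log p$ inside $\intl_t$ (instead of $\log \tn$) is exactly what absorbs the extra $\log p$ factors needed in the tail bounds for the restricted maxima.

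First I would invoke Lemma~\ref{prop:HT_properties} (with $\log \tn$ replaced by $\log p$ where noise/initial-error tails appear) to obtain the partition $\{Q_1,Q_2,Q_3,Q_4\}$ of $[\tn]$ with $S_t^*\subseteq Q_2\subseteq S_t=Q_2\cup Q_4$, $Q_1\cap S_t=\emptyset$, and $|Q_3\cup Q_4|\leq \gamma\tn/(j_t\log\tn)$. Because $|S_t|\geq(1-\alpha)\tn$ and $\tn=\tilde\Omega(k\log p/(1-\alpha))$, Corollary~\ref{cor:gaussian_rsc} applied to $X_{t,S_t}$ yields $\alpha_{k+k^*}=\Theta(\lambda_{\min}(\Sigma))$ and $L_{k+k^*}=\Theta(\lambda_{\max}(\Sigma))$ with probability $1-e^{-c\tn}$. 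The hypothesis $k=\Omega((\lambda_{\max}/\lambda_{\min})^2 k^*/(1-\alpha)^4)$ then satisfies the $k\geq 32(L/\alpha)^2 k^*$ requirement of Theorem~\ref{thm:iht_guarantees}, so IHT converges geometrically on $(X_{t,S_t},\y_{t,S_t})$ to tolerance $\varepsilon$ we can drive down arbitrarily in $O(\log(1/\varepsilon))$ inner steps. Plugging the $k^*$-sparse vector $\w^*$ into the IHT oracle inequality and expanding $\y_{t,S_t}-X_{t,S_t}\w^*=\bb^*_{t,S_t}+\epb_{t,S_t}$ gives, after cancellation and RSC,
\[
\tn\,\alpha_{k+k^*}\|\w_t-\w^*\|_2^2 \leq \|\w_t-\w^*\|_2\cdot \sup_{|U|\leq k+k^*}\bigl\|[X_{t,S_t}^T(\bb^*_{t,S_t}+\epb_{t,S_t})]_U\bigr\|_2 + \varepsilon_{\rm IHT},
\]
using that $\w_t-\w^*$ is $(k+k^*)$-sparse and duality. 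Dividing by $\|\w_t-\w^*\|_2$, sending $\varepsilon_{\rm IHT}\to 0$, and converting to $\|\dw_t\|_2$ via $\lambda_{\min}(\Sigma),\lambda_{\max}(\Sigma)$ factors (absorbed into constants), reduces the lemma to bounding $M:=\sup_{|U|\leq k+k^*}\|[X_{t,S_t}^T(\bb^*_{t,S_t}+\epb_{t,S_t})]_U\|_2$.

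For $M$, reuse the dense decomposition $X_{t,S_t}^T(\bb^*_{t,S_t}+\epb_{t,S_t}) = T_1 + T_2 + T_3$ from Section~\ref{sec:t3_bound}, now taking the supremum over $(k+k^*)$-sparse coordinate subsets. For $T_1 = X_{t,Q_2}^T\bb^*_{t,Q_2}$ and $T_2 = X_{t,Q_2}^T\epb_{t,Q_2}$, the chi-square and sub-Gaussian bounds (Lemmas~\ref{lem:chi_conc}, \ref{lem:sg_norm_conc}) are applied at confidence $\delta = 1/(\tn^{1/\gamma}\, p^{k+k^*}\, \tn^{10})$; a union bound over the $\tn^{1/\gamma}$ possible intervals and over the $\binom{p}{k+k^*}$ supports replaces the $\sqrt{p}$ factor by $\sqrt{k\log p}$, exactly matching the $\sqrt{k\log p}$ in the target inequality. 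The bound on $T_3$ is the main obstacle, because $Q_4$ depends on $X_t$ through the thresholding decision; to handle it, we follow the two-step strategy from Section~\ref{sec:t3_bound}. The conditional-expectation piece (Lemma~\ref{lem:expectation_T3}) transfers unchanged since its bound $O(\gamma\tn/\log\tn)\|\dw_{t-1}\|_2$ is $p$-free. The fluctuation piece requires upgrading Lemma~\ref{lem:t3_concentration}: conditional on $(\dw_{t-1},\epb_t)$, $T_{3,j}-\E[T_{3,j}]$ is a sum of independent sub-Gaussian vectors by Lemma~\ref{lem:concentration_T3_aux3}, so its restricted max $\sup_{|U|\leq k+k^*}\|[\cdot]_U\|_2$ is controlled by a standard sub-Gaussian tail bound for the supremum over a $\binom{p}{k+k^*}$-size union, which inflates the dense bound by only $\sqrt{(k+k^*)\log p}$ rather than $\sqrt{p\log \tn}$. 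Collecting the three restricted bounds, dividing by $\tn\alpha_{k+k^*}=\Theta(\tn\lambda_{\min}(\Sigma))$, and union-bounding across iterations and failure events yields the stated inequality with probability at least $1-1/p^6$, completing the induction step needed to prove Theorem~\ref{thm:real_hd_intd}.
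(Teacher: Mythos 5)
Your proposal is correct and follows essentially the same route as the paper: establish RSC/RSS for $X_{t,S_t}$ via Corollary~\ref{cor:gaussian_rsc}, invoke the IHT guarantee of Theorem~\ref{thm:iht_guarantees} to get the oracle inequality $\|\tX_{t,S_t}\dw_t\|_2^2 \leq 2\langle[\bb_t^*+\epb_t]_{S_t},\tX_{t,S_t}\dw_t\rangle$, lower-bound the left side by restricted strong convexity, restrict the right side to the sparse support of $\w_t-\w^*$ and $\w_{t-1}-\w^*$, and then redo the $T_1,T_2,T_3$ bounds with a union bound over the $\binom{p}{O(k)}$ candidate supports, which is exactly where the paper's extra $\log p$ factor comes from. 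The only cosmetic difference is that you phrase the support restriction as a supremum over all $(k+k^*)$-sparse index sets $U$, while the paper names the specific random set $K_t$ with $|K_t|\leq 2(k+k^*)$ and union-bounds over its possible values; these are the same argument.
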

\begin{proof}
	Similar to proof of Theorem~\ref{thm:real}, we 
   divide $(X_{t}, \y_{t})$ into mutually exclusive sets $Q_1, Q_2, Q_3, Q_4$
  $$Q_1 = \left\lbrace i: |\bb_{t}^{*}(i)| >  \tau_{t} + \frac{5}{18}\intl_{t}\right\rbrace, \quad Q_2 = \left\lbrace i: |\bb_{t}^{*}(i)| <  \tau_{t} - \frac{5}{18}\intl_{t}\right\rbrace,$$ $$Q_3 = \left\lbrace i: |\bb_{t}^{*}(i) -\tau_{t}| \leq \frac{5}{18}\intl_{t}, \text{ and } |\y_{t}(i) - \left\langle \x_{t,i}, \w_{t-1}\right\rangle | \geq \tau_{t} + \eta_{i,t}\intl_{t}\right\rbrace,$$ $$Q_4 = \left\lbrace i:
|\bb_{t}^{*}(i) -\tau_{t}| \leq \frac{5}{18}\intl_{t}, \text{ and } |\y_{t}(i) - \left\langle \x_{t,i}, \w_{t-1}\right\rangle | < \tau_{t} + \eta_{i,t}\intl_{t}\right\rbrace.$$
Since $\intl_{t} \geq  18\sqrt{(\sigma^2 + \|\dw_{t-1}\|_2^2)\log{p}}$, using similar argument as in Lemma~\ref{prop:HT_properties}, it is easy to verify that the sets $Q_1, Q_2, Q_3, Q_4$ satisfy the following key properties, with probability at least $1-1/p^{10}$:
 \begin{enumerate}
     \item $Q_1 \cap S_{t}^* = \{\}, \quad Q_1 \cap S_{t} = \{\}, \quad S_{t}^* \subseteq Q_2 \subseteq S_{t}.$
 \item $S_{t} = Q_2 \cup Q_4$.
 \item $ |S_{t}| \geq |Q_2| \geq (1-\alpha)\tn$,
 \end{enumerate}
 We use the above properties to first show that the input $(X_{t,S_{t}},\y_{t,S_{t}})$ to IHT satisfies RSC, RSS properties. Since $S_{t}^* \subseteq S_{t}$ we have: 
 \begin{equation*}
          \displaystyle\frac{1}{|S_{t}|} \|X_{t,S_{t}}\uu\|_2^2 \geq \displaystyle\frac{|S_{t}^*|}{|S_{t}|} \left( \frac{1}{|S_{t}^*|} \|X_{t,S_{t}^*}\uu\|_2^2\right)  \geq  \displaystyle\frac{1-\alpha}{|S_{t}^*|} \|X_{t,S_{t}^*}\uu\|_2^2.
 \end{equation*}
 Since $S_{t}^*$ is chosen by an oblivious adversary, the rows of $X_{t,S_{t}^*}$ are i.i.d and follow a Gaussian distribution with covariance $\Sigma$. Using Corollary~\ref{cor:gaussian_rsc} on the RHS of the above equation we obtain
 \[
   \inf_{\uu:\|\uu\|_0 \leq k}\frac{1}{|S_{t}|} \frac{\|X_{t,S_{t}}\uu\|_2^2}{\|\uu\|_2^2} \geq (1-\alpha) \left(\frac{1}{4}\lambda_{min}(\Sigma) - c_2\rho(\Sigma)\frac{k\log{p}}{(1-\alpha)\tn}\right).
 \]
This shows that $X_{t,S_{t}}$ satisfies RSC property with $\alpha_k=(1-\alpha) \left(\frac{1}{4}\lambda_{min}(\Sigma) - c_2\rho(\Sigma)\frac{k\log{p}}{(1-\alpha)\tn}\right)$. A similar argument shows that $X_{t,S_{t}}$ satisfies RSS property
 \begin{equation*}
          \displaystyle\frac{1}{|S_{t}|} \|X_{t,S_{t}}\uu\|_2^2 \leq \displaystyle\frac{\tn}{|S_{t}|} \left( \frac{1}{\tn} \|X_{t}\uu\|_2^2\right)
          \leq \displaystyle\frac{\|X_{t}\uu\|_2^2}{(1-\alpha)\tn}
          \leq \displaystyle \frac{1}{1-\alpha}\left(2\lambda_{max}(\Sigma) + c_2\rho(\Sigma)\frac{k\log{p}}{(1-\alpha)\tn}\right).
 \end{equation*}
We now use the convergence properties of IHT presented in Theorem~\ref{thm:iht_guarantees} to obtain a bound on $\|\dw_{t}\|_2$.
Suppose the sparsity $k$ in the call to IHT in Algorithm~\ref{alg:torrent} is such that $k \geq \frac{c}{(1-\alpha)^4}\kappa^2(\Sigma)k^*$, where $\kappa(\Sigma)\coloneqq \frac{\lambda_{max}(\Sigma)}{\lambda_{min}(\Sigma)}$. Then,
from Theorem~\ref{thm:iht_guarantees} we know that $\w_{t}$, the output of IHT, satisfies
 \begin{equation*}
    \displaystyle\frac{1}{2\tn}\|\y_{t,S_{t}}-X_{t,S_{t}}\w_{t}\|^2_2\leq  \displaystyle\min_{\|\w\|_0 \leq k^*}\frac{1}{2\tn}\|\y_{t,S_{t}} - X_{t,S_{t}}\w\|^2_2 \leq \displaystyle\frac{1}{2\tn}\|\y_{t,S_{t}} - X_{t,S_{t}}\w^*\|^2_2.
 \end{equation*}
 Rearranging terms in the above expression gives us
 \begin{equation}
 \label{eqn:hd_descent_intd1}
     \begin{array}{lll}
 \|\tX_{t,S_{t}}\dw_{t}\|_2^2 \leq 2\left\langle [\bb_{t}^*+\epb_{t}]_{S_{t}}, \tX_{t,S_{t}}\dw_{t}\right\rangle.
     \end{array}
 \end{equation}
We now use the Restricted Eigenvalue property of $\tX_{t,S_{t}^*}$ to lower bound the LHS of the above equation. Using Lemma~\ref{lem:re_tight} in Appendix~\ref{sec:std_concentration} we obtain the following lower bound on $\|\tX_{t,S_{t}}\dw_{t}\|_2^2$, which holds with probability at least $1-1/p^{10}$
 \[
 \frac{1}{|S_{t}^*|}\|\tX_{t,S_{t}}\dw_{t}\|_2^2 \geq  \frac{1}{|S_{t}^*|} \|\tX_{t,S_{t}^*}\dw_{t}\|_2^2 \geq \|\dw_{t}\|_2^2\left(1 - \sqrt{\frac{c(\Sigma)(k+k^*)\log{p}}{(1-\alpha)\tn}}\right)^2,
 \]
 for some constant $c(\Sigma)$, which depends on $\Sigma$. We now bound RHS of \eqref{eqn:hd_descent_intd1}. Let $K_{t}$ denote the set of non-zero indices of $\w_{t}-\w^*$ and $\w_{t-1}-\w^*$: 
\[
K_{t} = \text{supp}(\w_{t}-\w^*) \cup \text{supp}(\w_{t-1}-\w^*).
\]
Note that $|K_{t}| \leq 2(k+k^*)$.
 Let $X_{t,S_{t},K_{t}} \in \mathbb{R}^{|S_{t}|\times |K_{t}|}$ be a sub-matrix obtained from $X_{t}$ by selecting rows corresponding to $X_{t}$ and columns corresponding to $K_{t}$. And let $\Sigma_{K_{t}}$ be the sub-matrix of $\Sigma$ restricted to rows and columns corresponding to $K_{t}$.
 \begin{multline*}
 \left\langle \bb_{t}^*+\epb_{t}, \tX_{t,S_{t}}\dw_{t}\right\rangle = \left\langle X_{t,S_{t}}^T[\bb_{t}^*+\epb_{t}]_{S_{t}}, \w_{t}-\w^*\right\rangle  = \left\langle X_{t,S_{t},K_{t}}^T[\bb_{t}^*+\epb_{t}]_{S_{t}}, [\w_{t}-\w^*]_{K_{t}}\right\rangle\\
 \leq \|\tX_{t,S_{t},K_{t}}^T[\bb_{t}^*+\epb_{t}]_{S_{t}}\|_2\|\Sigma_{K_{t}}^{1/2}[\w_{t}-\w^*]_{K_{t}}\|_2 
 = \|\tX_{t,S_{t},K_{t}}^T[\bb_{t}^*+\epb_{t}]_{S_{t}}\|_2 \|\dw_{t}\|_2.
 \end{multline*}
 Plugging the previous two results in \eqref{eqn:hd_descent_intd1}, and using assumption about $\tn$, we get (w.p. $\geq 1-2/p^{10}$):
\[
\|\dw_{t}\|_2 \leq \frac{c}{(1-\alpha)\tn} \|\tX_{t,S_{t},K_{t}}^T[\bb_{t}^*+\epb_{t}]_{S_{t}}\|_2,
\]
for some universal constant $c$.
The rest of the proof focuses on bounding $\|\tX_{t,S_{t},K_{t}}^T[\bb_{t}^*+\epb_{t}]_{S_{t}}\|_2$ and is similar to the proofs of Theorem~\ref{thm:real}.
Using sets $Q_{1}, Q_2, Q_3, Q_4$, we rewrite $\tX_{t,S_{t},K_{t}}^T[\bb_{t}^*+\epb_{t}]_{S_{t}}$ as
\begin{equation*}
        \tX_{t,S_{t},K_{t}}^T[\bb_{t}^*+\epb_{t}]_{S_{t}} =  -\underbrace{\left(\sum_{i \in Q_2}\bb_{t}^*(i) \tx_{t, i, K_{t}}\right)}_{T_{1}}
         -\underbrace{\left(\sum_{i \in Q_2}\epb_{t}(i) \tx_{t, i, K_{t}}\right)}_{T_2} -\underbrace{\left(\sum_{i \in Q_4}\left(\bb_{t}^*(i) + \epb_{t}(i) \right)\tx_{t, i, K_{t}}\right)}_{T_3},
\end{equation*}
So we have: 
\[
\|\dw_{t}\|_2 \leq \frac{c}{(1-\alpha)\tn}\left(\|T_1\|_2 + \|T_2\|_2 + \|T_3\|_2\right).
\]
We now bound each of the terms in the RHS of the above equation. Note that since $K_{t}$ is a random quantity, we take a union bound over all possible ${p \choose |K_{t}|}$ choices of $K_{t}$ while bounding these terms. So we have an additional $\log{p}$ term in our bounds.

Using similar techniques as in the proof of Theorem~\ref{thm:real} for bounding $\|T_1\|_2, \|T_2\|_2, \|T_3\|_2$, we can show that the following hold with probability at least $1-1/p^7$: 	
\begin{equation}
\label{eqn:t1_bound_base_real_hd}
\frac{1}{\tn}\|T_1\|_2 = O\left(\tn^{1/\gamma}\sqrt{\frac{k\log{p}}{\tn}}\right)\intl_t\,, \quad
\frac{1}{\tn}\|T_2\|_2 =  O\left(\sigma\sqrt{\frac{\alpha k\log^2{p}}{\tn}}\right).
\end{equation}
\begin{equation}
\label{eqn:t3_bound_base_real_hd}
\begin{array}{lll}
     \frac{1}{\tn}\|T_3\| &\leq& \displaystyle \frac{1}{\tn}\|\mathbb{E}[T_3|\dw_{t-1}, \epb_{t}]\|_2 + \frac{1}{\tn}\|T_3 - \mathbb{E}[T_3|\dw_{t-1}, \epb_{t}]\|_2  \vspace{0.1in}\\
     &=&  \displaystyle O\left(\frac{\gamma }{\log{\tn}}\right)\|\dw_{t-1}\|_2 + O\left(\tn^{1/2\gamma}\sqrt{\frac{\gamma k\log{p}}{\tn}}\right)\intl_t.
\end{array}
\end{equation}
Combining the bounds in Equations~\eqref{eqn:t1_bound_base_real_hd} and~\eqref{eqn:t3_bound_base_real_hd}, we get the following bound, which holds with probability at least $1-1/p^6$: 
\[
\|\dw_{t}\|_2 = O\left(\frac{\gamma}{(1-\alpha)\log{\tn}}\right)\|\dw_{t-1}\|_2 + O\left(\frac{\tn^{1/\gamma}}{1-\alpha} \sqrt{\frac{k\log{p}}{\tn}}\right)\intl_t + O\left(\frac{\sigma}{1-\alpha}\sqrt{\frac{\alpha k\log^2{p}}{\tn}}\right)
\]
\end{proof}

\section{Proof of Theorem~\ref{thm:heavy_tails}}\label{sec:app_heavy}
We first introduce some notation that we use in the proof.
\paragraph{Notation.} Recall that $\hat{d}_0$ is an approximate upper bound of $\|\dw_0\|_2$ which satisfies the following inequality: $\|\dw_0\|_2 \leq \hat{d}_0 \leq \nu \|\dw_0\|_2.$
Define the tail probability of the noise distribution as: $\alpha_{\rho} = \mathbb{P}(|\epsilon| > \rho).$
Note that the key idea behind adapting Algorithm~\ref{alg:torrent} to heavy-tailed setting is to consider all the points with noise magnitude less than $\rho$ as un-corrupted points and try to use only these points to estimate the parameter vector. Accordingly, we define $S_t^*$, the set of ``un-corrupted'' points in $(X_t, \y_t)$  as $S_t^* = \{i: |\epb_t(i)| \leq \rho\},$
where $\epb_t$ is the noise vector corresponding to the points in $(X_t, \y_t)$.
Let $S_t$ be the output of Algorithm~\ref{alg:ht_oracle} in the $t^{th}$ iteration of Algorithm~\ref{alg:torrent}.
The rest of the notation is same as in Theorem~\ref{thm:real}, which we recall next. Finally, we define $\zeta := \frac{c\gamma}{(1-\alpha_{\rho})\log{\tn}}$, for some constant $c > 0$.
\subsection{Intermediate Results}
The following Lemma, which is similar to Lemma~\ref{prop:HT_properties}, provides condition on $\intl_t$ which ensures all the ``uncorrupted'' points lie to the left of the $j_t^{th}$ interval.
\begin{lemma}[Interval Length]
\label{prop:HT_properties_heavy}
Consider the $t^{th}$ iteration of \alg-FC. Suppose \adaht is run with the interval length $\intl_t$ such that $
\intl_t \geq 18\left(\frac{\rho}{4} + \|\dw_{t-1}\|\sqrt{\log{\tn}}\right),$
and $a = 1/18$ and $\gamma \in (1, \log{\tn})$.
Define sets $Q_1, Q_2$, which are subsets of points in $(X_{t}, \y_{t})$, as follows:
\[
Q_1 = \left\lbrace i: |\epb_{t}(i)| >  (j_t - 2/9)\intl_t \right\rbrace
\quad \text{and} \quad
Q_2 = \left\lbrace i: |\epb_{t}(i)| < (j_t - 7/9)\intl_t  \right\rbrace.
\]
 Then the following statements hold with probability at least $1-1/\tn^7$:
 \[
 Q_1 \cap S_t = \{\}, \quad S_t^* \subseteq Q_2 \subseteq S_t.
 \]
 Moreover, all the points in $(Q_1\cup Q_2)^c$ fall in the $j_t^{th}$ interval.
\end{lemma}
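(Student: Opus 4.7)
The plan is to mirror the proof of Lemma~\ref{prop:HT_properties}, with $\epb_t$ playing the role that $\bb_t^*$ played there, and the threshold $\rho$ playing the role of the sub-Gaussian tail bound on $\epsilon$. In the heavy-tailed setting the residual satisfies
$$\rb_t(i) = \epb_t(i) + \langle \tx_{t,i}, \dw_{t-1}\rangle,$$
since $\y_t = X_t\w^* + \epb_t$ with no separate adversarial component. Hence the central quantity to control is $\max_i |\langle \tx_{t,i}, \dw_{t-1}\rangle|$.

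Since \alg uses fresh samples each round, $\tx_{t,i} \sim \mathcal{N}(0, I)$ is independent of $\dw_{t-1}$, so each inner product is Gaussian with variance $\|\dw_{t-1}\|^2$. A standard sub-Gaussian tail bound together with a union bound over the $\tn$ indices gives, with probability at least $1-1/\tn^7$, that $\max_i |\langle \tx_{t,i}, \dw_{t-1}\rangle| \leq c\,\|\dw_{t-1}\|\sqrt{\log \tn}$ for a universal constant $c$. Combined with the hypothesis $\intl_t \geq 18(\rho/4 + \|\dw_{t-1}\|\sqrt{\log \tn})$, this forces $\max_i |\langle \tx_{t,i}, \dw_{t-1}\rangle| \leq \intl_t/18$, so
$$\bigl||\rb_t(i)| - |\epb_t(i)|\bigr| \leq \intl_t/18 \quad \text{uniformly in } i.$$

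Given this uniform estimate, the rest is a case analysis driven by the interval in which $|\rb_t(i)|$ lies. For $i \in Q_1$, the bound yields $|\rb_t(i)| > (j_t - 2/9 - 1/18)\intl_t = (j_t - 5/18)\intl_t$, which exceeds both $(j_t-1)\intl_t$ and the largest possible randomized threshold $\tau_t + a\intl_t = (j_t - 8/18)\intl_t$; hence $i$ is neither to the left of the $j_t^{th}$ interval nor retained by the randomized test in lines 10--15 of \adaht, so $i \notin S_t$. For $i \in Q_2$, the bound gives $|\rb_t(i)| < (j_t - 7/9 + 1/18)\intl_t = (j_t - 13/18)\intl_t$, which sits below the smallest possible randomized threshold $(j_t - 10/18)\intl_t$, so $i$ is added to $S_t$ regardless of $\eta_{i,t}$. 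For $i \in (Q_1 \cup Q_2)^c$ the same two-sided estimate sandwiches $|\rb_t(i)|$ strictly between $(j_t-1)\intl_t$ and $j_t \intl_t$, placing $i$ in the $j_t^{th}$ interval. Finally, $S_t^* \subseteq Q_2$ because $\intl_t \geq 18\rho/4$ and $j_t \geq 1$ yield $(j_t - 7/9)\intl_t \geq (2/9)\cdot 18\rho/4 = \rho$, so the defining condition $|\epb_t(i)| \leq \rho$ of $S_t^*$ forces $i \in Q_2$.

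The main hurdle is the arithmetic bookkeeping of the constants $1/18$, $2/9$, $7/9$: they must line up so that each inequality above is strict with enough slack to absorb the $\intl_t/18$ fluctuation in $|\rb_t(i)|$. This is already organized in the robust-regression proof and transfers with only cosmetic changes. No new probabilistic ingredient beyond the sub-Gaussian concentration of $\langle \tx_{t,i}, \dw_{t-1}\rangle$ is needed, because this lemma is purely a geometric sorting statement once the interval $j_t$ is fixed, and in particular does not invoke the (possibly dependent) decomposition $\epsilon_i = \bar\epsilon_i + b_i^*$ used elsewhere in the heavy-tailed analysis.
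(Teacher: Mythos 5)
Your proposal matches the paper's proof, which likewise reduces the lemma to the argument of Lemma~\ref{prop:HT_properties} with $\epb_t$ playing the role of $\bb_t^*$ and the single new ingredient being the uniform bound $\|\tX_t\dw_{t-1}\|_\infty \leq 4\|\dw_{t-1}\|_2\sqrt{\log\tn}$ on the fresh-sample Gaussian perturbation. One small correction: with the standard constant this perturbation is only bounded by $(2/9)\intl_t$ rather than $\intl_t/18$ under the stated hypothesis on $\intl_t$, but your case analysis (and the paper's) has exactly enough slack to absorb a $(2/9)\intl_t$ fluctuation, so the conclusions are unaffected.
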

\begin{proof}
The proof of the Lemma uses the exact same arguments as in the proof of Lemma~\ref{prop:HT_properties} and relies on the following bound on the residual of uncorrupted points
\[
\|[\y_t-X_t\w_{t-1}]_{S_t^*}\|_{\infty} = \|[\epb_{t}+\tX_{t}\dw_{t-1}]_{S_1^*}\|_{\infty} \leq \rho + \|\tX_{t}\dw_{t-1}\|_{\infty} \leq \rho + 4\|\dw_{t-1}\|_2\sqrt{\log{\tn}},
\]
where the last inequality holds with probability at least $1-1/\tn^7$ and follows from the concentration properties of sub-Gaussian random variables.
\end{proof}

\subsection{Main Argument}
We first prove the following more general result for any $\gamma \in (1, \log{\tn})$ and any $T > 0$. Theorem~\ref{thm:heavy_tails} then readily follows from this by substituting $\gamma = \frac{2\log{\tn}}{\log\log{\tn}}$ and $T = O\left(\log\left(\frac{\tn}{p}\frac{\|\w_0-\w^*\|_{\Sigma}}{\rho}\right)\right)$.
\begin{theorem}
\label{thm:heavy_intd}
Let $\{\x_i, y_i\}_{i = 1}^n$ be $n$ observations generated from a linear model with heavy tailed noise. Suppose \alg-FC is run for $T$ iterations.
Suppose that the hyperparameters of \adaht are set as follows:
\[
 a= 1/18, \quad \gamma \in (1, \log{\tn}),\quad  \intl_t = 18\left(\frac{\rho}{\sqrt{8}} + \beta^{t-1}\hat{d}_0\sqrt{\log{\tn}}\right),
 \]
  with $\beta \geq \zeta$, where $\tn = \frac{n}{T+1}$.
For any $\alpha_{\rho} < 1 - \frac{c\gamma}{\log{\tn}}$, let $\tn$
 be such that
$$\tn^{1-2/\gamma} \geq c_1\max\left\lbrace\frac{1}{(1-\zeta)^2(1-\alpha_{\rho})^2}, \frac{\nu^2\log^2{\tn}}{\gamma^2}\right\rbrace (p\log{\tn} + \log^2{\tn}),$$
for some universal constant $c_1 > 0$.
   Then the iterates $\{\w_{t}\}_{t = 1}^T$ produced by \alg-FC satisfy:
\begin{equation}
\label{eqn:heavy_descent}
         \displaystyle \|\w_{t}-\w^*\|_{\Sigma} \leq \beta^{t}\|\w_0-\w^*\|_{\Sigma}+ \left(\sum_{i = 0}^{t-1}\zeta^i\right)O\left(\frac{\rho \tn^{1/\gamma}}{(1-\alpha_{\rho})} \sqrt{\frac{p + \log{\tn}}{\tn}}\right).
\end{equation}
with probability greater than $1-T/\tn^6$.
\end{theorem}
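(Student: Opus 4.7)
The proof plan mirrors that of Theorem~\ref{thm:real} almost verbatim, with three substitutions: the sub-Gaussian tail bound $\sigma\sqrt{\log\tn}$ on the white noise is replaced by the deterministic truncation bound $\rho$; the adversarial fraction $\alpha$ becomes the tail probability $\alpha_\rho$; and the independence of corruption from dense noise (used in a few places in the earlier proof) is weakened to joint independence of $(\bar{\epsilon},\bb^*)$ from $X$. First I would state a per-iteration contraction (the analog of Lemma~\ref{lem:real_intd_lemma}) of the form
\[
\|\dw_t\|_2 = O\!\left(\tfrac{\gamma}{(1-\alpha_\rho)\log\tn}\right)\|\dw_{t-1}\|_2 + O\!\left(\tfrac{\tn^{1/\gamma}}{1-\alpha_\rho}\sqrt{\tfrac{p+\log\tn}{\tn}}\right)\intl_t + O\!\left(\tfrac{\rho}{1-\alpha_\rho}\sqrt{\tfrac{\alpha_\rho p\log\tn}{\tn}}\right),
\]
valid whenever $\intl_t$ exceeds the Lemma~\ref{prop:HT_properties_heavy} threshold $18(\rho/4 + \|\dw_{t-1}\|\sqrt{\log\tn})$. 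Then I would close an induction identical in form to Section~\ref{sec:aux_real_induction} using the prescribed $\intl_t = 18(\rho/\sqrt{8} + \beta^{t-1}\hat{d}_0\sqrt{\log\tn})$.

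For the per-iteration lemma I would reuse the four-set partition $Q_1,Q_2,Q_3,Q_4$ of \eqref{eq:setsQ}, now interpreting $\bb_t^*$ as the truncation tail from the decomposition $\epsilon_i=\bar{\epsilon}_i+b_i^*$. Lemma~\ref{prop:HT_properties_heavy} plays the role of Lemma~\ref{prop:HT_properties}, giving $Q_1\cap S_t = \emptyset$, $S_t^*\subseteq Q_2\subseteq S_t$, and $S_t=Q_2\cup Q_4$; the crude lower bound $|S_t^*|\geq (1-\alpha_\rho)\tn/2$ w.h.p.\ follows from a Chernoff bound on the i.i.d.\ Bernoulli indicators $\mathbb{1}[|\epsilon_i|>\rho]$. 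Writing $\dw_t = -(\tX_{t,S_t}^T\tX_{t,S_t})^{-1}(T_1+T_2+T_3)$ and invoking Lemma~\ref{lem:ss} for the minimum eigenvalue works unchanged, because $S_t^*$ is a function of the noise alone and hence independent of $X_t$.

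The three term bounds largely transfer. For $T_1$ (the tail-$b^*$ sum over $Q_2$), only the magnitude bound $|b_t^*(i)|\leq j_t\intl_t\leq\tn^{1/\gamma}\intl_t$ and independence of $\x_{t,i}$ from the noise are used, so the chi-square argument delivers $\|T_1\|_2=O(\tn^{1/2+1/\gamma}\sqrt{p+\log\tn})\intl_t$. For $T_2$ (the truncated $\bar{\epsilon}$ sum over $Q_2$), I would replace the sub-Gaussian norm bound on $\|[\epb_t]_{Q_{2,j}}\|_2$ by the deterministic $\|[\bar{\epsilon}_t]_{Q_{2,j}}\|_2\leq\rho\sqrt{|Q_{2,j}|}$, yielding $\|T_2\|_2/\tn=O(\rho\sqrt{\alpha_\rho p\log\tn/\tn})$. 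For $T_3$, Lemmas~\ref{lem:expectation_T3_aux1}--\ref{lem:t3_concentration} all condition on $(\dw_{t-1},\epb_t)$; after this conditioning the indicator $\mathbb{1}[|b_t^*(i)+\bar{\epsilon}_{t,i}+\langle\tx_{t,i},\dw_{t-1}\rangle|<\tau_{t,i}]$ becomes an event purely on $(\tx_{t,i},\eta_{i,t})$ of exactly the form handled there, so the expectation and concentration estimates carry through verbatim.

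The main obstacle I anticipate is the statistical dependence between $\bar{\epsilon}_i$ and $b_i^*$ created by truncation, which breaks the clean two-factor decomposition of noise and corruption underlying Theorem~\ref{thm:real}. The resolution is precisely that every covariate-level concentration step above conditions on $(\bar{\epsilon},\bb^*)$ jointly; since this pair is (as a pair) independent of $X$, their internal dependence becomes irrelevant once the conditioning is done. The induction then closes identically to Section~\ref{sec:aux_real_induction}: at step $t$, the inductive bound on $\|\dw_{t-1}\|$ combined with the sample-complexity hypothesis $\tn^{1-2/\gamma}=\tilde{\Omega}(p\log\tn+\log^2\tn)$ ensures $\intl_t$ dominates the Lemma~\ref{prop:HT_properties_heavy} threshold, and the geometric-series argument then delivers \eqref{eqn:heavy_descent}.
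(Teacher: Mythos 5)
Your overall plan --- a per-iteration contraction lemma built on the AdaHT interval partition, followed by the induction of Section~\ref{sec:aux_real_induction} --- is exactly the paper's route, and you correctly flag the two points that genuinely need care: the randomness of $|S_t^*|$ (which you handle by a Chernoff bound; the paper is silent on this) and the fact that all covariate-level concentration is performed conditionally on the noise, so the dependence between $\bar{\epsilon}_i$ and $b_i^*$ is harmless. Your extra third term $O(\rho\sqrt{\alpha_\rho p\log\tn/\tn})$ is also benign, being dominated by the $\intl_t$ term since $\intl_t \gtrsim \rho$; the paper's per-iteration lemma simply has two terms.

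The one concrete problem is the calibration of the partition. You propose to reuse \eqref{eq:setsQ} verbatim with $\bb_t^*$ reinterpreted as the truncation tail, and then invoke Lemma~\ref{prop:HT_properties_heavy} for $Q_1\cap S_t=\emptyset$ and $Q_2\subseteq S_t$. But that lemma defines $Q_1,Q_2$ through the \emph{full} noise $|\epb_t(i)|$, not through $|b_i^*|$, and the $\tfrac{5}{18}\intl_t$ margins of \eqref{eq:setsQ} are too tight for a partition keyed to $|b_i^*|$: a point with $|b_i^*|<\tau_t-\tfrac{5}{18}\intl_t$ has residual bounded only by $|b_i^*|+|\bar{\epsilon}_i|+|\langle \tx_{t,i},\dw_{t-1}\rangle| < \tau_t-\tfrac{5}{18}\intl_t+\rho+4\|\dw_{t-1}\|_2\sqrt{\log\tn}$, and under the guarantee $\intl_t\ge 18(\rho/4+\|\dw_{t-1}\|_2\sqrt{\log\tn})$ the perturbation $\rho+4\|\dw_{t-1}\|_2\sqrt{\log\tn}$ can be as large as $\tfrac{8}{18}\intl_t$, whereas guaranteed inclusion in $S_t$ (clearing every randomized threshold $\tau_t+\eta\intl_t$ with $\eta\ge-\tfrac{1}{18}$) only tolerates a perturbation of $\tfrac{4}{18}\intl_t$. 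The symmetric computation breaks $Q_1\cap S_t=\emptyset$ as well. The paper's margin is exactly exhausted by the $4\|\dw_{t-1}\|_2\sqrt{\log\tn}$ term alone, precisely because the appendix proof never actually performs the $\bar{\epsilon}+b^*$ split: it keys $Q_1,\dots,Q_4$ directly to $|\epb_t(i)|$ and decomposes $\dw_t$ into just two terms, $\sum_{i\in Q_2}\epb_t(i)\tx_{t,i}$ (bounded via $\|[\epb_t]_{Q_{2,j}}\|_\infty\le j\intl_t$ and the chi-square argument) and the $T_3$-type sum over $Q_4$. Your argument is repaired either by adopting that convention or by widening your margins to absorb the extra $\rho$; with that fix the rest of your outline goes through as stated.
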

The proof of the Theorem uses the same arguments as in the proof of Theorem~\ref{thm:real}. Specifically, we use the same induction based argument for getting the final bound. So, we only need to prove the following lemma.
\begin{lemma}
	Consider the setting of Theorem~\ref{thm:heavy_intd}. Let $\intl_t \geq 18\left(\frac{\rho}{4} + \|\dw_{t-1}\|\sqrt{\log{\tn}}\right).$ Then, $\forall t\in [T]$, w.p. $\geq 1-1/\tn^6$: 
	 $$\|\dw_t\|_2 = O\left(\frac{\gamma}{(1-\alpha_{\rho})\log{\tn}}\right)\|\dw_{t-1}\|_2 + O\left(\frac{\tn^{1/\gamma}}{1-\alpha_{\rho}} \sqrt{\frac{p+\log{\tn}}{\tn}}\right)\intl_t,$$
	 where $\dw_t=\Sigma^{1/2}(\w_t-\w^*)$. 
\end{lemma}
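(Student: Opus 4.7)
The plan is to mirror the proof of Lemma~\ref{lem:real_intd_lemma} from the robust-regression analysis, absorbing the bounded residual noise $\bar{\epsilon}$ into the interval length. First I would instantiate the decomposition $\epsilon_i = \bar{\epsilon}_i + b_i^*$ with $|\bar{\epsilon}_i|\leq \rho$ introduced in Section~\ref{sec:heavy}, set $S_t^* = \{i : |\epsilon_i|\leq \rho\}$, and observe that $|S_t^*| \gtrsim (1-\alpha_\rho)\tn$ with high probability by a Chernoff bound, since the indicators $\mathbb{I}\{|\epsilon_i|>\rho\}$ are i.i.d.\ Bernoulli$(\alpha_\rho)$ and independent of $X_t$. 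Then, using the interval-length hypothesis $\intl_t \geq 18(\rho/4 + \|\dw_{t-1}\|_2\sqrt{\log \tn})$, Lemma~\ref{prop:HT_properties_heavy} yields the partition properties $Q_1\cap S_t = \emptyset$, $S_t^*\subseteq Q_2 \subseteq S_t$, $S_t = Q_2\cup Q_4$, and $|Q_3\cup Q_4| \leq \gamma\tn/(j_t\log\tn)$, where the $Q_\ell$'s are defined exactly as in equation~\eqref{eq:setsQ} but with the corruption vector $\bb^*$ replaced by the full noise vector $\epb_t$.

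The OLS update then gives
\[
\dw_t = -(\tX_{t,S_t}^T\tX_{t,S_t})^{-1}\bigl(T_1 - T_2 - T_3\bigr),
\]
with $T_1 = \sum_{i\in Q_2} b_i^*\, \tx_{t,i}$, $T_2 = \sum_{i\in Q_2} \bar{\epsilon}_i\, \tx_{t,i}$, and $T_3 = \sum_{i\in Q_4}(b_i^* + \bar{\epsilon}_i)\tx_{t,i}$. The eigenvalue bound $\lambda_{\min}(\tX_{t,S_t}^T\tX_{t,S_t}) \gtrsim (1-\alpha_\rho)\tn$ follows from Lemma~\ref{lem:ss} applied to $\tX_{t,S_t^*}$, since $S_t^*$ depends only on $\epb_t$ and is independent of $X_t$. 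For $T_1$ I would reuse the argument that produced inequality~\eqref{eqn:t1_bound_base_real}: on each candidate interval $j$, conditionally on $\epb_t$, the vector $T_{1,j}$ is Gaussian with covariance $\|[b^*]_{Q_{2,j}}\|_2^2 I$, and since $|b_i^*|\leq j\intl_t$ on $Q_{2,j}$, a union bound over $j\in[\tn^{1/\gamma}]$ yields $\|T_1\|_2 = O(\tn^{1/2+1/\gamma}\sqrt{p+\log \tn})\intl_t$ with high probability.

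The key departure from the robust-regression proof appears in $T_2$: the hard truncation makes $|\bar{\epsilon}_i|\leq \rho$ deterministically, so conditionally on $\epb_t$ the vector $T_2$ is Gaussian with covariance $\|\bar{\epsilon}_{Q_{2,j}}\|_2^2 I \preceq \rho^2 |Q_{2,j}| I$, giving $\|T_2\|_2/\tn = O(\rho\sqrt{(p+\log\tn)/\tn})$ after the usual union over $j$. Because the hypothesis $\intl_t \geq 18\rho/4$ forces $\rho = O(\intl_t)$, this bound is absorbed into the $\tn^{1/\gamma}\intl_t\sqrt{(p+\log\tn)/\tn}$ contribution of $T_1$, which is why the additive $\sigma\sqrt{\alpha p\log\tn/\tn}$ term from Lemma~\ref{lem:real_intd_lemma} vanishes in this setting. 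For $T_3$, I would invoke Lemmas~\ref{lem:expectation_T3} and~\ref{lem:t3_concentration} after verifying the input condition $\|\bar{\epsilon}_t\|_\infty\leq \rho\leq (2/9)\intl_t$, which now holds deterministically rather than through sub-Gaussian concentration; this yields $\|T_3\|_2/\tn = O(\gamma/\log\tn)\|\dw_{t-1}\|_2 + O(\tn^{1/2\gamma}\sqrt{\gamma p/\tn})\intl_t$. Combining the three bounds and dividing by the eigenvalue lower bound gives the stated inequality. I expect the main subtlety to be justifying that the dependence between $b^*$ and $\bar{\epsilon}$ (both measurable with respect to $\epb_t$) does not invalidate the sub-Gaussian concentration arguments for $T_1, T_2, T_3$; the resolution is that every concentration step is performed conditionally on $\epb_t$ and uses only the independence of $X_t$ from $\epb_t$, which remains intact under the decomposition.
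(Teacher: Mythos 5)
Your proposal is correct and follows essentially the same route as the paper: the paper's proof defines $Q_1,\dots,Q_4$ directly in terms of the full noise $\epb_t$, writes $\dw_t$ as the sum of a $Q_2$-term and a $Q_4$-term, bounds the $Q_2$-term exactly as you bound $T_1+T_2$ (using $\|[\epb_t]_{Q_{2,j}}\|_\infty = O(j\intl_t)$ and a union bound over $j \le \tn^{1/\gamma}$, which is precisely why the extra additive $\sigma\sqrt{\alpha p\log\tn/\tn}$ term disappears), and bounds the $Q_4$-term via Lemmas~\ref{lem:expectation_T3} and~\ref{lem:t3_concentration}. Your only departure is the cosmetic splitting of the $Q_2$ sum into the $b^*$ and $\bar\epsilon$ pieces, which the paper keeps as a single term $\sum_{i\in Q_2}\epb_t(i)\tx_{t,i}$; both give the same bound, and your observations about the Chernoff bound for $|S_t^*|$ and the harmlessness of the $b^*$--$\bar\epsilon$ dependence (everything is conditioned on $\epb_t$) are correct points that the paper treats implicitly.
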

\begin{proof}
Consider the $t^{th}$ iteration of \alg-FC. We divide $(X_{t}, \y_{t})$ into mutually exclusive sets $Q_1, Q_2, Q_3, Q_4$
$$Q_1 = \left\lbrace i: |\epb_{t}(i)| >  \tau_{t} + \frac{5}{18}\intl_{t}\right\rbrace, \quad Q_2 = \left\lbrace i: |\epb_{t}(i)| <  \tau_{t} - \frac{5}{18}\intl_{t}\right\rbrace,$$ $$Q_3 = \left\lbrace i: |\epb_{t}(i) -\tau_{t}| \leq  \frac{5}{18}\intl_{t}, \text{ and } |\y_{t}(i) - \left\langle \x_{t,i}, \w_{t-1}\right\rangle | \geq \tau_{t} + \eta_{i,t}\intl_t\right\rbrace,$$ $$Q_4 = \left\lbrace i:
|\epb_{t}(i) -\tau_{t}| \leq \frac{5}{18}\intl_{t}, \text{ and } |\y_{t}(i) - \left\langle \x_{t,i}, \w_{t-1}\right\rangle | < \tau_{t} + \eta_{i,t}\intl_t\right\rbrace.$$
Since $\intl_1 \geq 18\left(\frac{\rho}{4} + \|\dw_{t-1}\|\sqrt{\log{\tn}}\right)$, the sets defined above satisfy the following properties (this follows from Lemma~\ref{prop:HT_properties_heavy})
     $$Q_1 \cap S_{1}^* = \{\}, \quad Q_1 \cap S_{1} = \{\}, \quad S_{1}^* \subseteq Q_2 \subseteq S_{t}, \quad S_1 = Q_2 \cup Q_4.$$
     Using sets $Q_1, Q_2, Q_3, Q_4$, we now rewrite $\dw_{t}$ as
\begin{equation*}
    \begin{array}{lll}
        \dw_{t} &=&  -(\tX_{t,S_{t}}^T\tX_{t,S_{t}})^{-1}\underbrace{\left(\sum_{i \in Q_2}\epb_{t}(i) \tx_{t, i}\right)}_{T_1}
         -(\tX_{t,S_{t}}^T\tX_{t,S_{t}})^{-1}\underbrace{\left(\sum_{i \in Q_4}\epb_{t}(i) \tx_{t, i}\right)}_{T_2}.
    \end{array}
\end{equation*}
So we have
\[
\|\dw_t\|_2 \leq \frac{1}{\lb_{min}\left(\tX_{t,S_{t}}^T\tX_{t,S_{t}}\right)}\left(\|T_1\|_2 + \|T_2\|_2\right).
\]
Note that $T_1$ above corresponds to $T_1$ that appears in the proof of Lemma~\ref{lem:real_intd_lemma}. We now use similar techniques as in the proof  of Lemma~\ref{lem:real_intd_lemma} for bounding $\|T_1\|_2$.
\paragraph{Bounding $T_1$.}
Define $Q_{2,j}, T_{1,j}$ as follows
$$
Q_{2,j} = \{i: |\epb_{t}(i)| < (j-2/9)\intl_{t}\}, \quad T_{1,j} = \sum_{i \in Q_{2,j}}\epb_{t}(i) \tx_{1, i}.
$$
Note that $Q_2 = Q_{2,j_{t}}$, where $j_t$ is the bucket chosen by \alg-FC in $t^{th}$ iteration of Algorithm~\ref{alg:torrent}. 
 $\|T_1\|_2$ can be upper bounded as
\[
\|T_1\|_2 \leq \sup_{j \in [\tn^{1/\gamma}]} \|T_{1,j}\|_2.
\]
Note that the covariates ($\x$) in $Q_{2,j}$ are still distributed according to Gaussian distribution. This follows from the fact that $Q_{2,j}$ is formed based on the noise magnitude $|\epb_{t}(i)|$, which is independent of the covariates. We use this observation to derive an upper bound for $\|T_{1,j}\|$.
Using chi-square concentration result from Lemma~\ref{lem:chi_conc} in Appendix~\ref{sec:std_concentration}, we obtain the following upper bound for $T_{1,j}$, which holds with probability at least $1-\delta$
\[
\|T_{1,j}\|_2^2 \leq \left(p + O\left(\sqrt{p\log{\frac{1}{\delta}}} + \log{\frac{1}{\delta}} \right)\right)\|[\epb_{t}]_{Q_{2,j}}\|^2.
\]
Combining this result with the upper bound on $|\epb_{t}(i)|$, we obtain the following bound on $T_{1,j}$, which holds with probability at least $1-1/\tn^8$
\begin{multline}
     \|T_{1,j}\|_2 = O\left(\sqrt{p+\log{\tn}}\right)\|[\epb_{t}]_{Q_{2,j}}\|_2 = O\left(\sqrt{(p+\log{\tn})|Q_{2,j}|} \right)\|[\epb_{t}]_{Q_{2,j}}\|_{\infty}\\
     = O\left(\tn^{1/2}j\sqrt{p+\log{\tn}}\right)\intl_t=  O\left(\tn^{1/2+1/\gamma}\sqrt{p+\log{\tn}}\right)\intl_t
\end{multline}
where the third equality follows from the fact that $|Q_{2,j}| \leq \tn$.
This shows that with probability at least $1-1/\tn^7$
\[
\|T_1\|_2 =O\left(\tn^{1/2+1/\gamma}\sqrt{p+\log{\tn}}\right)\intl_t.
\]
\paragraph{Bounding $T_2$.}
$T_2$ corresponds to the term $T_3$ that appears in the proof of Lemma~\ref{lem:real_intd_lemma}. Using similar techniques as in Lemma~\ref{lem:real_intd_lemma}, $T_2$ can be bounded as
\[
\|T_2\|_2 = O\left(\frac{\gamma }{\log{\tn}}\right)\|\dw_{t-1}\|_2 + O\left(\tn^{1/2\gamma}\sqrt{\frac{\gamma (p+\log{\tn})}{\tn}}\right)\intl_t,
\]
which holds with probability at least $1-1/\tn^{8}$.
Combining the above bounds for $\|T_1\|, \|T_2\|$, we get the following bound on $\|\dw_{t}\|_2$, which holds with probability at least $1-1/\tn^6$
\[
\|\dw_t\|_2 = O\left(\frac{\gamma}{(1-\alpha_{\rho})\log{\tn}}\right)\|\dw_{t-1}\|_2 + O\left(\frac{\tn^{1/\gamma}}{1-\alpha_{\rho}} \sqrt{\frac{p + \log{\tn}}{\tn}}\right)\intl_t.
\]
\end{proof}
\section{Proof of Theorem~\ref{thm:ht}}
 Let $\mathcal{E}$ be the even that $|\epsilon| > \rho$. From the definition of $\alpha_{\rho}$, we know that $\mathbb{P}(\mathcal{E}) = \alpha_{\rho}$. We now lower bound $\mathbb{E}[|\epsilon|^{\delta}]$ as follows: 
\begin{equation*}
\mathbb{E}[|\epsilon|^{\delta}] = \mathbb{P}(\mathcal{E})\mathbb{E}\left[|\epsilon|^{\delta} | \mathcal{E}\right] +  \left(1-\mathbb{P}(\mathcal{E})\right)\mathbb{E}\left[|\epsilon|^{\delta}| \mathcal{E}^c\right] 
\geq \mathbb{P}(\mathcal{E})\mathbb{E}\left[|\epsilon|^{\delta} | \mathcal{E}\right]
\geq \alpha_{\rho}\rho^{\delta}.
\end{equation*}
Since $\mathbb{E}[|\epsilon|^{\delta}] = C$, we have: $\rho \leq \left(\frac{C}{\alpha_{\rho}}\right)^{1/\delta}.$
Finally, to prove the Theorem, we substitute the above bound on $\rho$ in Theorem~\ref{thm:heavy_tails}.
\section{Proof of Corollary~\ref{cor:heavy_tails_cauchy}}
The CDF of a Cauchy random variable $z$ with location parameter $0$ and scale parameter $\sigma$ is given by: 
$\mathbb{P}(z < \rho) = \frac{1}{\pi}\tan^{-1}\left(\frac{\rho}{\sigma}\right) + \frac{1}{2}.$  So, the tail probability $\alpha_{\rho}$ is given by
$\alpha_{\rho} = 1 - \frac{2}{\pi}\tan^{-1}\left(\frac{\rho}{\sigma}\right).$
This can equivalently be written as: $
\rho = \sigma\tan\left(\frac{(1-\alpha_{\rho})\pi}{2}\right)$. 
Replacing $\rho$ with $\sigma\tan\left(\frac{(1-\alpha_{\rho})\pi}{2}\right)$ and setting $\alpha_{\rho} = 1/2$ in the result of Theorem~\ref{thm:heavy_tails} proves the Corollary.


\section{1-d Heavy-tailed Mean Estimation}\label{sec:app_mean}
In this section we consider the 1d heavy-tailed mean estimation problem and  provide intuition for why \alg-FC estimates the mean of any distribution which is symmetric around its mean as remarked in Section~\ref{sec:heavy}. Let $\{x_i, y_i\}_{i=1}^n$, where $x_i=1$, be the input to \alg-FC. Note that $(x_i,y_i)$ are related as
\[
y_i = x_iw^* + \epsilon_i,
\]
where $\epsilon_i$ is independent of $x_i$ and $\mathbb{E}[\epsilon_i] = 0$.
Next, observe that \alg-FC is ``sign invariant''; \emph{i.e.,} changing $(x_i, y_i)$ to $(-x_i, -y_i)$ doesn't change the course of the algorithm and the iterates produced by the algorithm will exactly be the same in both cases. So we can randomly choose $n/2$ data points and flip their sign without effecting the outcome of the algorithm. Let $\{x_i',y_i'\}_{i=1}^n$ be the resulting data points, where $(x_i', y_i')$ are related as follows
\[
y_i' = x_i'w^* + \epsilon_i'.
\]
Note that the distribution of $x_i'$ is given by
\[
\mathbb{P}(x_i' = 1) = \mathbb{P}(x_i' = -1) = \frac{1}{2}.
\]
Since the distribution of $\epsilon$ is symmetric around origin, $\epsilon_i'$ has the same distribution as $\epsilon_i$.
Moreover, $\epsilon_i'$ remains independent of $x_i'$ even after the transformation; a simple calculation shows that $P(x_i' = a, \epsilon_i' = b) = P(x_i' = a)P(\epsilon_i' = b)$. This shows that $\{x_i', y_i'\}_{i=1}^n$ can be viewed as being generated from a different linear model in which the covariates are sampled from a sub-Gaussian distribution with mean $0$ and variance $1$ and the noise is independent of covariates and is sampled from a heavy tailed distribution. So, we can use the exact same arguments as in the proof of Theorem~\ref{thm:heavy_tails}, where we assumed the covariates are Gaussian, to show that \alg-FC estimates the mean of any symmetric heavy tailed distribution, with bounded first moment, at sub-gaussian rates.

\section{Additional Results}
\label{sec:aux_additional}

\subsection{Estimating $\|\dw_0\|_2$}
In this section we present techniques to estimate $\|\dw_0\|_2 = \|\w_0-\w^*\|_{\Sigma}$ in both $n > p$ and $n < p$ settings.

\subsubsection{Low dimensional regression ($n > p$)}
We begin with low dimensional setting where $n > p$ and provide techniques to estimate $\|\dw_0\|_2$ for two different initializations of $\w$.
In the following Proposition we provide a constant factor upper bound for $\|\dw_0\|_2$, when $\w$ is intialized at the OLS solution.
\begin{proposition}
\label{prop:dw0_estimate_ols}
Suppose we start Algorithm~\ref{alg:torrent} at $\w_0 = (X_{0}^TX_{0})^{-1}X_{0}^T\y_{0}$. Consider the following estimate for $\|\dw_0\|_2$:
\[
\hat{d}_0 = \frac{2c_{p,n}\sqrt{p}}{\tn}\left(\|\y_0-X_0\w_0\|_2\right),
\]
where $c_{p,n} = 4\sqrt{\frac{p\log{\tn}}{\tn}}$. If $\tn$ is such that $\tn < e^{cp}$, for some universal constant $c$, then $\hat{d}_0$ satisfies the following inequalities with probability at least $1-2/\tn^{10}$: $$\|\dw_0\|_2 \leq \hat{d}_0 \leq 4\left(1+2c_{p,n}\right)\|\dw_0\|_2.$$
\end{proposition}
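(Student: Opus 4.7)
The plan is to exploit the closed-form structure of the OLS estimator on the initial batch. Writing $\vb := \epb_0 + \bb_0^*$ for the combined noise-plus-corruption vector and $P_{X_0} := X_0(X_0^T X_0)^{-1}X_0^T$ for the projection onto the column space of $X_0$, the definition of $\w_0$ yields $\w_0 - \w^* = (X_0^T X_0)^{-1} X_0^T \vb$, and hence the two identities $X_0(\w_0 - \w^*) = P_{X_0} \vb$ and $\y_0 - X_0\w_0 = (I - P_{X_0})\vb$. Pythagoras then gives $\|\vb\|_2^2 = \|\y_0 - X_0\w_0\|_2^2 + \|P_{X_0}\vb\|_2^2$, so the task reduces to relating $\|\dw_0\|_2 = \|\Sigma^{1/2}(\w_0 - \w^*)\|_2$ to $\|\y_0 - X_0\w_0\|_2$ via the intermediaries $\|X_0(\w_0-\w^*)\|_2$ and $\|\vb\|_2$.

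Two concentration ingredients drive the rest of the argument. First, since $X_0$ has i.i.d.\ Gaussian rows with covariance $\Sigma$, sample-covariance concentration (Lemma~\ref{lem:ss}) delivers $\|X_0\uu\|_2^2/\tn = (1 \pm O(c_{p,n}))\|\Sigma^{1/2}\uu\|_2^2$ uniformly over $\uu\in\mathbb{R}^p$; the hypothesis $\tn < e^{cp}$ lets us absorb the failure-probability-driven term $\sqrt{\log(1/\delta)/\tn}$ into $c_{p,n}$. Applying this to $\uu = \w_0-\w^*$ gives $\|X_0(\w_0-\w^*)\|_2 = \sqrt{\tn}(1\pm O(c_{p,n}))\|\dw_0\|_2$. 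Second, by obliviousness, $\bb_0^*$ is independent of $(X_0,\epb_0)$ and $\epb_0$ is independent of $X_0$, so $\vb \perp X_0$. Conditioning on $\vb$, the vector $X_0^T\vb$ is Gaussian with covariance $\|\vb\|_2^2\Sigma$; combining the sample-covariance bound with chi-squared concentration (Lemma~\ref{lem:chi_conc}) applied to $\|\Sigma^{-1/2}X_0^T\vb\|_2^2 \sim \|\vb\|_2^2\,\chi_p^2$ yields $\|P_{X_0}\vb\|_2^2 = (p/\tn)(1 \pm O(c_{p,n}))\|\vb\|_2^2$.

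To conclude, I would chain the two bounds via $X_0(\w_0-\w^*) = P_{X_0}\vb$ to obtain $\|\dw_0\|_2 = (\sqrt{p}/\tn)(1 \pm O(c_{p,n}))\|\vb\|_2$. The Pythagorean identity then gives $\|\y_0 - X_0\w_0\|_2^2 = \|\vb\|_2^2(1 - O(p/\tn))$, so $\|\vb\|_2 = (1 \pm O(c_{p,n}))\|\y_0 - X_0\w_0\|_2$; substituting produces a two-sided bound $\|\dw_0\|_2 \in [1 \pm O(c_{p,n})]\cdot(\sqrt{p}/\tn)\|\y_0 - X_0\w_0\|_2$, from which the multiplicative constant $2$ built into $\hat{d}_0$ and the slack $4(1+2c_{p,n})$ on the upper side absorb the explicit concentration constants and give the claimed sandwich.

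The main obstacle is the second concentration step, since $P_{X_0}$ is not a uniformly random projection: its action on $\vb$ couples to the realization of $X_0^T X_0$. The coupling is broken by first conditioning on $\vb$ (valid since $\vb \perp X_0$ by obliviousness), reducing the analysis to controlling the quadratic form $\uu^T(X_0^T X_0)^{-1}\uu$ for the Gaussian $\uu = X_0^T\vb$; the remaining correlation between $\uu$ and the spectrum of $X_0^T X_0$ decouples to $c_{p,n}$-order because $X_0^T X_0/\tn \to \Sigma$ at that rate uniformly in direction. Careful bookkeeping of these $c_{p,n}$-level errors, together with the sub-exponential growth assumption $\tn < e^{cp}$ to bury $\log\tn$ factors inside $c_{p,n}$, then yields the stated inequalities.
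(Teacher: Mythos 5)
Your proposal is correct and rests on the same two identities and the same concentration ingredients as the paper's proof: the closed form $\w_0-\w^*=(X_0^TX_0)^{-1}X_0^T\vb$ with $\vb=\bb_0^*+\epb_0$ independent of $X_0$ by obliviousness, chi-squared concentration for the conditionally Gaussian vector $\tX_0^T\vb$, operator-norm concentration of $\tX_0^T\tX_0/\tn$ around $I$, and the assumption $\tn<e^{cp}$ to absorb $\sqrt{p\log\tn}$ into a constant multiple of $p$. The organization differs in one step. The paper bounds $\|\dw_0\|_2$ directly by splitting $(\tX_0^T\tX_0)^{-1}\tX_0^T\vb$ into a leading term $\frac{1}{\tn}\tX_0^T\vb$ and a remainder controlled by $\|(\frac{1}{\tn}\tX_0^T\tX_0)^{-1}-I\|_2$, and then separately controls $\|P_{X_0}\vb\|_2$ by viewing the column span of $X_0$ as a uniformly random $p$-dimensional subspace and invoking the Dasgupta--Gupta random-projection lemma. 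You instead derive a two-sided bound $\|P_{X_0}\vb\|_2^2=(p/\tn)(1\pm O(c_{p,n}))\|\vb\|_2^2$ from scratch, by writing $\|P_{X_0}\vb\|_2^2=\uu^T(\tX_0^T\tX_0)^{-1}\uu$ with $\uu=\tX_0^T\vb$ and using the uniform spectral bound on $(\tX_0^T\tX_0)^{-1}$ to decouple $\uu$ from the sample covariance; you then recover the residual relation $\|\y_0-X_0\w_0\|_2^2=\|\vb\|_2^2-\|P_{X_0}\vb\|_2^2$ for free from Pythagoras, and pass from $\|P_{X_0}\vb\|_2=\|X_0(\w_0-\w^*)\|_2$ to $\|\dw_0\|_2$ via one more application of covariance concentration. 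Your route avoids the external random-subspace lemma entirely and yields a slightly sharper (two-sided, $\sqrt{p/\tn}$ rather than $\sqrt{p\log\tn/\tn}$) control of the projected component; the paper's route is marginally shorter where it can cite the subspace lemma as a black box. Both arguments track constants only up to the same $O(c_{p,n})$ bookkeeping that the stated sandwich requires.
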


\begin{proof}
Recall that we start our algorithm at $\w_0 = (X_{0}^TX_{0})^{-1}X_{0}^T\y_{0}$. So $\dw_0$ is given by
\[
\dw_0 = \Sigma^{1/2}\left(\w_0 - \w^*\right) = \left(\tX_0^T\tX_0\right)^{-1}\tX_0^T(\bb^*_0 + \epb_0),
\]
where $\tX_0 = X_0\Sigma^{-1/2}$.
We first study $\|\dw_0\|_2$ and understand its lower and upper bounds. We use concentration properties of Gaussian random variables to derive these bounds. $\|\dw_0\|$ can be written as
\begin{equation}
\label{eqn:prop_dw0_intd1}
\begin{array}{lll}
    \|\dw_0\|_2 &=& \Big|\Big|\left(\tX_0^T\tX_0\right)^{-1}\tX_0^T(\bb^*_0 + \epb_0)\Big|\Big|_2  \vspace{0.1in}\\
     &=&  \Big|\Big|\underbrace{\frac{1}{\tn}\tX_0^T(\bb^*_0 + \epb_0)}_{T_1} + \underbrace{\left(\left(\tX_0^T\tX_0\right)^{-1}-\frac{1}{\tn}I\right)\tX_0^T(\bb^*_0 + \epb_0)}_{T_2}\Big|\Big|_2
\end{array}
\end{equation}
So we have
\[
\|T_1\|_2 - \|T_2\|_2 \leq \|\dw_0\|_2 \leq \|T_1\|_2 + \|T_2\|_2.
\]
Note that, conditioned on $\bb^*_0,\epb_0$, the elements of $T_1$ are i.i.d Gaussian random variables with mean $0$ and variance $\tn^{-2}\|\bb^*_0+\epb_0\|_2^2$.
Using Chi-squared concentration results (see Lemma~\ref{lem:chi_conc}) we get the following lower and upper bounds for $\|T_1\|_2$, which hold with probability at least $1-1/\tn^{10}$, for any $\bb^*_0, \epb_0$
\[
\frac{p-8\sqrt{p\log{\tn}}}{\tn^2}\|\bb^*_0+\epb_0\|_2^2 \leq \|T_1\|_2^2 \leq \frac{p+8\sqrt{p\log{\tn}}}{\tn^2}\|\bb^*_0+\epb_0\|_2^2.
\]
Note that $T_2$ acts as a remainder term and is of a smaller order than $T_1$. To bound $T_2$, we first consider the term $\left(\frac{1}{\tn}\tX_0^T\tX_0\right)^{-1}-I$. Let $\hat{\Sigma}_0 = \frac{1}{\tn}\tX_0^T\tX_0$. Then,
\[
\| \hat{\Sigma}_0^{-1}-I \|_2 \leq \max\left\lbrace |1-\lambda_{max}(\hat{\Sigma}_0^{-1})|, |1-\lambda_{min}(\hat{\Sigma}_0^{-1})| \right\rbrace.
\]
Using concentration properties of eigenvalues of covariance matrix (see Lemma~\ref{lem:ss}), we get the following bound on $\| \hat{\Sigma}_0^{-1}-I \|_2$, which holds with probability at least $1-e^{p/4}$:
\[
\| \hat{\Sigma}_0^{-1}-I \|_2 \leq \sqrt{\frac{p}{\tn}}.
\]
This gives us the following upper bound for $\|T_2\|$, which holds with probability at least $1-1/\tn^{10}$:
\[
\|T_2\|_2^2 \leq \frac{p}{\tn}\|T_1\|_2^2 \leq \left(\frac{p^2+8p\sqrt{p\log{\tn}}}{\tn^3}\|\bb^*_0+\epb_0\|_2^2\right).
\]
We now substitute the above bounds on $T_1, T_2$ in Equation~\eqref{eqn:prop_dw0_intd1}. Simplifying the resulting terms and using the assumption that $\tn < e^{cp}$, we get the following bound for $\|\dw_0\|_2$, which holds with probability at least $1-1/\tn^{10}$:
\begin{equation}
    \label{eqn:prop_dw0_intd2}
    \frac{1}{2}\left(1-2\sqrt{\frac{p}{\tn}}\right)\frac{\sqrt{p}}{\tn}\|\bb^*_0 + \epb_0\|_2 \leq \|\dw_0\|_2 \leq 2\left(1+\sqrt{\frac{p}{\tn}}\right)\frac{\sqrt{p}}{\tn}\|\bb^*_0 + \epb_0\|_2.
\end{equation}
The above bound shows that to estimate $\|\dw_0\|_2$, it suffices to have a good estimate of $\|\bb^*_0 + \epb_0\|_2$.  We now show that $\|\y_0 - X_0\w_0\|_2$ is a good estimate of $\|\bb^*_0 + \epb_0\|_2$. Note that $\y_0 - X_0\w_0$ can written as:
\[
\y_0 - X_0\w_0 = (I-P_{X_0})(\bb^*_0 + \epb_0),
\]
where $P_{X_0}$ is the projection matrix onto the column span of $X_0$ given by $P_{X_0}=X_0\left(X_0^TX_0\right)^{-1}X_0^T$. Note that $P_{X_0}$ is a random projection matrix which projects any given $n$-dimensional vector onto a random $p$-dimensional subspace, with every subspace equally likely. So for any fixed vector $\uu$, $\|P_{X_0}\uu\|_2$ has the same distribution as $\|\uu\|_2\|\vb(1:p)\|_2$, where $\vb$ is sampled uniformly from the unit sphere in $\mathbb{R}^n$ and $\vb(1:p)$ is the subvector of $\vb$ corresponding  the first $p$ coordinates.  We now use Lemma 2.2 of \citet{dasgupta2003elementary} which provides a high probability bound on $\|\vb(1:p)\|_2$, to get the following bound on $\|P_{X_0}(\bb^*_0 + \epb_0)\|_2$, which holds with probability at least $1-1/\tn^{p/2}$:
\[
\|P_{X_0}(\bb^*_0 + \epb_0)\|_2 \leq \sqrt{\frac{p\log{\tn}}{\tn}}\|\bb^*_0 + \epb_0\|_2.
\]
This gives us the following bound on the norm of the residual vector $\y_0 - X_0\w_0$:
\[
\left(1-\sqrt{\frac{p\log{\tn}}{\tn}}\right)\|\bb^*_0 + \epb_0\|_2 \leq \|\y_0 - X_0\w_0\|_2 \leq \left(1+\sqrt{\frac{p\log{\tn}}{\tn}}\right)\|\bb^*_0 + \epb_0\|_2.
\]
Substituting this in Equation~\eqref{eqn:prop_dw0_intd2} and simplifying the resulting terms, we get the following bounds for $\|\dw_0\|_2$ in terms of the residual vector $y_0 - X_0\w_0$
\[
\frac{1}{2}\left(1-4\sqrt{\frac{p\log{\tn}}{\tn}}\right)\frac{\sqrt{p}}{\tn}\|\y_0 - X_0\w_0\| \leq \|\dw_0\|_2 \leq 2\left(1+4\sqrt{\frac{p\log{\tn}}{\tn}}\right)\frac{\sqrt{p}}{\tn}\|\y_0 - X_0\w_0\|.
\]
\end{proof}
 We now consider the case where $\w_0 = 0$. In this case, $\|\dw_0\|_2 = \|\w^*\|_{\Sigma}$. So the problem of estimating $\|\dw_0\|$ is equivalent to the problem of estimating the signal strength $\|\w^*\|_{\Sigma}$. This is a well studied problem in statistics and a number of estimators have been proposed, which work under fairly mild conditions on the distributions of $\x,\epsilon$~\citep[see][and references therein]{dicker2014variance}. Here we consider the following estimator for $\|\w^*\|_{\Sigma}$:
\[
\hat{d}_0^2 = \frac{1}{\tn}\left(\|\y_0\|^2 - \|\y_0-X_0\w_{\text{OLS}}\|^2\right).
\]
The following Proposition shows that $\hat{d}_0$ provides a good approximation of $\|\w^*\|_{\Sigma}$, when the noise is not too strong compared to the signal strength.
\begin{proposition}
\label{prop:dw0_estimate_origin}
Suppose the noise and corruptions are such that
\begin{equation}
\label{eqn:aux_dw0_condition}
\left(\frac{\|\bb_0^*\|}{\sqrt{\tn}} + \sigma \right) \leq \frac{\epsilon}{2}\left(\frac{\tn}{p\log{\tn}}\right)^{1/2}\|\w^*\|_{\Sigma},
\end{equation}
for some $\epsilon \in (0,1)$. Then $\hat{d}_0$ satisfies the following inequality with probability at least $1-2/\tn^{p}$:
\[
\left(1 - \epsilon - 2\left(\frac{p\log{\tn}}{\tn}\right)^{1/4}\right)\|\w^*\|_{\Sigma}\leq \hat{d}_0 \leq \left(1 + \epsilon + 2\left(\frac{p\log{\tn}}{\tn}\right)^{1/4}\right)\|\w^*\|_{\Sigma}.
\]
\end{proposition}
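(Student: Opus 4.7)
The natural starting point is the identity
\begin{equation*}
\hat d_0^2 \;=\; \tfrac{1}{\tn}\bigl(\|\y_0\|^2-\|\y_0-X_0\w_{\rm OLS}\|^2\bigr)
\;=\; \tfrac{1}{\tn}\|P_{X_0}\y_0\|^2,
\end{equation*}
where $P_{X_0}=X_0(X_0^TX_0)^{-1}X_0^T$ is the orthogonal projection onto $\mathrm{col}(X_0)$. Writing $\y_0 = X_0\w^* + \uu_0$ with $\uu_0:=\bb_0^*+\epb_0$ and using $P_{X_0}X_0\w^* = X_0\w^*$, I would then expand
\begin{equation*}
\tn\,\hat d_0^2
\;=\;\|X_0\w^*\|^2 \;+\; 2\langle X_0\w^*,\,\uu_0\rangle \;+\; \|P_{X_0}\uu_0\|^2.
\end{equation*}
The first term is the signal, and the other two are bias/noise terms that hypothesis~\eqref{eqn:aux_dw0_condition} is designed to make negligible relative to $\|\w^*\|_{\Sigma}^2$.

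For the signal, $\x_{0,i}^T\w^*\sim\mathcal{N}(0,\|\w^*\|_\Sigma^2)$ i.i.d., so $\|X_0\w^*\|^2/\|\w^*\|_\Sigma^2\sim\chi^2_{\tn}$; by Lemma~\ref{lem:chi_conc}, w.p.\ $\ge 1-1/\tn^p$, $\bigl|\tfrac{1}{\tn}\|X_0\w^*\|^2-\|\w^*\|_\Sigma^2\bigr|\le C\sqrt{p\log\tn/\tn}\,\|\w^*\|_\Sigma^2.$ The crucial structural observation for the remaining two terms is that $\mathrm{col}(X_0)=\mathrm{col}(\tX_0)$, with $\tX_0=X_0\Sigma^{-1/2}$ having i.i.d.\ $\mathcal{N}(0,I_p)$ rows; by rotational invariance, $P_{X_0}$ is distributed as a Haar projector onto a random $p$-dimensional subspace of $\mathbb{R}^{\tn}$, and it is independent of $\uu_0$ because both $\bb_0^*$ and $\epb_0$ are independent of $X_0$. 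Therefore $\|P_{X_0}\uu_0\|^2/\|\uu_0\|^2$ has the same law as $\|\vb(1{:}p)\|^2$ for $\vb$ uniform on the unit sphere in $\mathbb{R}^{\tn}$; applying Lemma~2.2 of \citet{dasgupta2003elementary} (exactly as in the proof of Proposition~\ref{prop:dw0_estimate_ols}) together with the sub-Gaussian bound $\|\epb_0\|^2\le C\tn\sigma^2$ yields, w.p.\ $\ge 1-1/\tn^p$, $\tfrac{1}{\tn}\|P_{X_0}\uu_0\|^2 \le \tfrac{Cp\log\tn}{\tn}\bigl(\sigma+\|\bb_0^*\|/\sqrt{\tn}\bigr)^2 \le C\epsilon^2\|\w^*\|_\Sigma^2,$ where the final inequality is~\eqref{eqn:aux_dw0_condition}. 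For the cross term, conditioning on $X_0$ makes $\langle X_0\w^*,\epb_0\rangle$ sub-Gaussian with parameter $\sigma\|X_0\w^*\|$, and conditioning on $\bb_0^*$ makes $\langle X_0\w^*,\bb_0^*\rangle\sim\mathcal{N}(0,\|\w^*\|_\Sigma^2\|\bb_0^*\|^2)$; combining with the already-established $\|X_0\w^*\|\le 2\|\w^*\|_\Sigma\sqrt{\tn}$ and invoking~\eqref{eqn:aux_dw0_condition} gives $\tfrac{1}{\tn}|\langle X_0\w^*,\uu_0\rangle|\le C\epsilon\|\w^*\|_\Sigma^2$ with the same type of probability.

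Assembling the three pieces, $|\hat d_0^2-\|\w^*\|_\Sigma^2|\le \bigl(C\epsilon + C\sqrt{p\log\tn/\tn}\bigr)\|\w^*\|_\Sigma^2$, and taking square roots via $\sqrt{1+a}\le 1+\sqrt{a}$ and $\sqrt{1-a}\ge 1-\sqrt{a}$ (valid on $a\in[0,1]$) produces the claimed two-sided bound on $\hat d_0$ around $\|\w^*\|_{\Sigma}$; the $(p\log\tn/\tn)^{1/4}$ shape comes directly from this square-root step applied to the signal-concentration error $\sqrt{p\log\tn/\tn}$. The main obstacle is the projection term $\|P_{X_0}\uu_0\|^2$: unlike the signal, this piece is positively biased and cannot be cancelled, so the best one can do is make it a small multiple of $\|\w^*\|_\Sigma^2$, which is exactly why~\eqref{eqn:aux_dw0_condition} compares the noise/corruption scale $\sigma + \|\bb_0^*\|/\sqrt{\tn}$ against $(p\log\tn/\tn)^{-1/2}\|\w^*\|_\Sigma$. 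The step that makes this bias bound sharp---and the whole proposition possible---is the replacement of $P_{X_0}$ by a Haar-distributed projector independent of $\uu_0$, which is only legitimate because $\Sigma$ cancels out of the column span of $X_0$.
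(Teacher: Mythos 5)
Your proposal is correct and rests on the same three pillars as the paper's proof: the identity $\hat d_0^2=\tfrac1\tn\|P_{X_0}\y_0\|^2$, the Haar-projector bound $\|P_{X_0}(\bb_0^*+\epb_0)\|\le\sqrt{p\log\tn/\tn}\,\|\bb_0^*+\epb_0\|$ via Lemma~2.2 of \citet{dasgupta2003elementary}, and $\chi^2$ concentration for $\|X_0\w^*\|^2$, with condition~\eqref{eqn:aux_dw0_condition} doing the same job in both arguments. The one place you diverge is the assembly: you expand the square and control the cross term $\langle X_0\w^*,\bb_0^*+\epb_0\rangle$ by a separate Gaussian/sub-Gaussian concentration step, whereas the paper simply applies the triangle inequality to $\|X_0\w^*+P_{X_0}(\bb_0^*+\epb_0)\|$, so the cross term never appears and the $\epsilon$ from~\eqref{eqn:aux_dw0_condition} enters the final bound linearly for free. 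Your route works too, but be careful at the last step: applying $\sqrt{1+a}\le 1+\sqrt a$ to the $C\epsilon\|\w^*\|_\Sigma^2$ contribution of the cross term yields a $\sqrt{C\epsilon}$ term, which is weaker than the claimed $1+\epsilon+2(p\log\tn/\tn)^{1/4}$ for small $\epsilon$; using $\sqrt{1+a}\le 1+a/2$ (and $\sqrt{1-a}\ge 1-a$ on $[0,1]$) for the $\epsilon$-part while reserving the fourth-root bound for the $\sqrt{p\log\tn/\tn}$ signal-concentration part recovers the stated shape. With that adjustment the argument is complete and matches the proposition up to universal constants.
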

\begin{proof}
First note that $\hat{d}_0^2$ can be rewritten as
\[
\hat{d}_0^2 = \frac{1}{\tn}\|P_{X_0}\y_0\|_2^2 = \frac{1}{\tn}\|X_0\w^* + P_{X_0}(\bb_0^* + \epb_0)\|_2^2,
\]
where $P_{X_0}$ is the projection matrix onto the column span of $X_0$ given by $P_{X_0}=X_0\left(X_0^TX_0\right)^{-1}X_0^T$.
This gives us the following upper and lower bound for $\hat{d}_0$:
\[
\frac{1}{\sqrt{\tn}}\left(\|X_0\w^*\|_2 - \|P_{X_0}(\bb_0^* + \epb_0)\|_2\right) \leq \hat{d}_0 \leq \frac{1}{\sqrt{\tn}}\left(\|X_0\w^*\|_2 + \|P_{X_0}(\bb_0^* + \epb_0)\|_2\right).
\]
We now bound each of the terms in the upper and lower bounds of $\hat{d}_0$
\begin{itemize}
\item Recall that in Proposition~\ref{prop:dw0_estimate_ols} we showed the following bound for $\|P_{X_0}(\bb^*_0 + \epb_0)\|_2$ which holds with probability at least $1-1/\tn^{p/2}$:
\[
\|P_{X_0}(\bb^*_0 + \epb_0)\|_2 \leq \sqrt{\frac{p\log{\tn}}{\tn}}\|\bb^*_0 + \epb_0\|_2 \leq 2\sqrt{\frac{p\log{\tn}}{\tn}}\left(\|\bb_0^*\|_2 + \sigma\sqrt{\tn}\right),
\]
where the last inequality follows from concentration properties of chi-square random variables (see Lemma~\ref{lem:chi_conc}) and holds with probability at least $1-1/e^{\tn}$.
\item Note that each entry of $X_0\w^*$ is a Gaussian random variable with mean $0$ and variance $\|\w^*\|_{\Sigma}$. Using concentration properties of chi-square random variables (Lemma~\ref{lem:chi_conc}), we get the following bounds for $\frac{1}{\sqrt{\tn}}\|X_0\w^*\|_2$,
which hold with probability at least $1-1/\tn^{p/2}$:
\[
\left(\sqrt{1 - \sqrt{\frac{2p\log{\tn}}{\tn}}}\right)\|\w^*\|_{\Sigma} \leq \frac{1}{\sqrt{\tn}}\|X_0\w^*\|_2 \leq \left(\sqrt{1 + \sqrt{\frac{2p\log{\tn}}{\tn}}}\right)\|\w^*\|_{\Sigma}.
\]
\end{itemize}
Substituting these bounds in the previous equation gives us the following upper and lower bounds on $\hat{d}_0$, which hold with probability at least $1-2/\tn^{p/2}$:
\[
\hat{d}_0 \leq \|\w^*\|_{\Sigma} + 2\left(\frac{p\log{\tn}}{\tn}\right)^{1/4}\left(\|\w^*\|_{\Sigma} + \left(\frac{p\log{\tn}}{\tn}\right)^{1/4}\left(\frac{\|\bb_0^*\|}{\sqrt{\tn}} + \sigma \right) \right),
\]
\[
\hat{d}_0 \geq \|\w^*\|_{\Sigma} - 2\left(\frac{p\log{\tn}}{\tn}\right)^{1/4}\left(\|\w^*\|_{\Sigma} + \left(\frac{p\log{\tn}}{\tn}\right)^{1/4}\left(\frac{\|\bb_0^*\|}{\sqrt{\tn}} + \sigma \right)\right).
\]
Suppose the noise and corruptions are such that:
\[
\left(\frac{\|\bb_0^*\|}{\sqrt{\tn}} + \sigma \right) \leq \frac{\epsilon}{2}\left(\frac{\tn}{p\log{\tn}}\right)^{1/2}\|\w^*\|_{\Sigma},
\]
for some $\epsilon \in (0,1)$.
Then we get the following upper and lower bounds for $\hat{d}_0$:
\[
\left(1-\epsilon - 2\left(\frac{p\log{\tn}}{\tn}\right)^{1/4}\right)\|\w^*\|_{\Sigma} \leq \hat{d}_0 \leq \left(1 + \epsilon + 2\left(\frac{p\log{\tn}}{\tn}\right)^{1/4}\right)\|\w^*\|_{\Sigma}
\]
\end{proof}
\paragraph{Discussion.} When $\tn \to \infty$, we can see that $\frac{1}{1-\epsilon}\hat{d}_0$ is a good upper bound of $\|\w^*\|_{\Sigma}$
\[
\|\w^*\|_{\Sigma}\leq \frac{1}{1-\epsilon}\hat{d}_0 \leq \frac{1+\epsilon}{1-\epsilon}\|\w^*\|_{\Sigma}.
\]
Condition~\eqref{eqn:aux_dw0_condition} holds when the noise and corruptions aren't too strong compared to  the signal strength. It imposes a bound on the norm of corruptions and requires $\|\bb_0^*\|_2$ to be bounded by $\tn \|\w^*\|_{\Sigma}$. This is a very mild assumption and holds even if the adversary adds $O(\sqrt{\tn})$ corruptions to each data point.

\subsubsection{Sparse regression ($n < p$)}
In this setting we consider the case where \alg-HD is initialized at $0$ and estimate $\|\dw_0\|_2$ which is equal to the signal strength $\|\w^*\|_{\Sigma}$. The problem of estimating signal strength and noise variance in sparse regression setting is well studied~\citep[see][and references therein]{sun2012scaled, fan2012variance}. In this work we use the estimator of \citet{sun2012scaled} to first estimate the variance of $\epsilon+\bb^*$, \emph{i.e.,} $\left(\sigma^2 + \frac{\|\bb_0^*\|^2_2}{\tn}\right)$, and then use it to estimate the signal strength $\|\w^*\|_{\Sigma}$. The estimator of \citet{sun2012scaled} solves the following scaled sparse linear regression problem to estimate the noise variance
\[
(\hat{\w}_{\lambda}, \hat{\sigma}_{\lambda})\in \argmin_{\w, \sigma'} \frac{1}{2\tn\sigma'}\|\y_0-X_0\w\|_2^2 + \frac{\sigma'}{2} + \lambda\|\w\|_1.
\]
For $\lambda \geq \sqrt{\frac{2\log{p}}{\tn}}$, Theorem 1 of \citet{sun2018adaptive} shows that $\hat{\sigma}^2_{\lambda}$ is a good  estimate of variance $\left(\sigma^2 + \frac{\|\bb_0^*\|_2^2}{\tn}\right)$. To be more precise, $\hat{\sigma}_{\lambda}$ satisfies the following bound:
\begin{equation}
\label{eqn:scaled_lasso_error}
\max\left(1-\frac{\hat{\sigma}_{\lambda}}{\sigma^*}, 1- \frac{\sigma^*}{\hat{\sigma}_{\lambda}}\right) \leq c\sqrt{k^*}\lambda,
\end{equation}
where $\sigma^* = \sqrt{\sigma^2 + \frac{\|\bb_0^*\|_2^2}{\tn}}$ and $c$ is a universal constant. Using $\hat{\sigma}_{\lambda}$ we estimate $\|\w^*\|_{\Sigma}$ as:  $$\hat{d}_0^2 = \frac{1}{\tn}\|\y_0\|_2^2 - \hat{\sigma}^2_{\lambda}.$$
The following Proposition, which is similar to Proposition~\ref{prop:dw0_estimate_origin}, shows that $\hat{d}_0$ is a good estimate of $\|\w^*\|_{\Sigma}$.
\begin{proposition}
Let the noise $\epb_0$ be sampled from $\mathcal{N}(0, \sigma^2I_{\tn \times \tn})$. For any $\lambda \geq \sqrt{\frac{2\log{p}}{\tn}}$, suppose the Gaussian noise and corruptions are such that,
\begin{equation*}
\sigma^2 + \frac{\|\bb_0^*\|^2_2}{\tn} \leq  \frac{\epsilon}{\sqrt{k^*}\lambda}\|\w^*\|^2_{\Sigma},
\end{equation*}
for some $\epsilon \in (0,1)$. Then $\hat{d}_0$ satisfies the following inequality w.p $\geq 1-1/p^{10}$:
\[
\left(1-c\epsilon-c\sqrt{\frac{\log{p}}{\tn}}\right)\|\w^*\|_{\Sigma}^2\leq \hat{d}_0^2 \leq \left(1+c\epsilon + c\sqrt{\frac{\log{p}}{\tn}}\right)\|\w^*\|_{\Sigma}^2,
\]
for some universal constant $c > 0$.
\end{proposition}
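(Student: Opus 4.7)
The plan is to mirror the structure of the proof of Proposition~\ref{prop:dw0_estimate_origin}, replacing the projection-based analysis (which relies on $\tn > p$) with an argument that exploits the scaled-lasso noise estimate $\hat{\sigma}_\lambda$. The starting point is the decomposition
\begin{equation*}
\tfrac{1}{\tn}\|\y_0\|_2^2 \;=\; \tfrac{1}{\tn}\|X_0\w^*\|_2^2 \;+\; \tfrac{1}{\tn}\|\bb_0^* + \epb_0\|_2^2 \;+\; \tfrac{2}{\tn}\langle X_0\w^*,\,\bb_0^*+\epb_0\rangle,
\end{equation*}
which separates $\hat{d}_0^2 = \tfrac{1}{\tn}\|\y_0\|_2^2 - \hat{\sigma}_\lambda^2$ into a signal piece, a noise piece, and a cross term. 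Writing $(\sigma^*)^2 := \sigma^2 + \tfrac{1}{\tn}\|\bb_0^*\|_2^2$ as in \eqref{eqn:scaled_lasso_error}, we will show each of the three discrepancies
\begin{equation*}
\bigl|\tfrac{1}{\tn}\|X_0\w^*\|_2^2 - \|\w^*\|_\Sigma^2\bigr|,\quad \bigl|\tfrac{1}{\tn}\|\bb_0^*+\epb_0\|_2^2 - (\sigma^*)^2\bigr|,\quad \bigl|\tfrac{2}{\tn}\langle X_0\w^*,\bb_0^*+\epb_0\rangle\bigr|,\quad \bigl|\hat{\sigma}_\lambda^2 - (\sigma^*)^2\bigr|
\end{equation*}
is at most $c(\epsilon + \sqrt{\log p/\tn})\|\w^*\|_\Sigma^2$ with probability at least $1 - 1/p^{10}$, and then apply the triangle inequality.

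For the signal piece, since $\tfrac{1}{\|\w^*\|_\Sigma^2}\|X_0\w^*\|_2^2$ is a $\chi^2_{\tn}$ random variable, Lemma~\ref{lem:chi_conc} yields a relative deviation of $O(\sqrt{\log p/\tn})$. The noise piece is handled identically: $\tfrac{1}{\sigma^2}\|\epb_0\|_2^2$ is $\chi^2_{\tn}$, while $\|\bb_0^*\|_2^2$ is deterministic; combined with the independence of $\bb_0^*$ and $\epb_0$ and the cross term $\tfrac{2}{\tn}\langle \bb_0^*,\epb_0\rangle$ (which is zero-mean Gaussian with variance $\tfrac{4\sigma^2\|\bb_0^*\|_2^2}{\tn^2}$ and hence $O(\sigma^*\cdot\sigma/\sqrt{\tn}\cdot\sqrt{\log p})$ in magnitude), this yields the desired bound in terms of $(\sigma^*)^2$. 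For the $\langle X_0\w^*,\bb_0^*+\epb_0\rangle$ term, conditioning on $(\bb_0^*,\epb_0)$ this is Gaussian with variance $\|\w^*\|_\Sigma^2\|\bb_0^*+\epb_0\|_2^2$, so a Gaussian tail bound gives an $O(\sqrt{\log p/\tn})\|\w^*\|_\Sigma\,\sigma^*$ magnitude; the working assumption $(\sigma^*)^2 \leq \epsilon\|\w^*\|_\Sigma^2/(\sqrt{k^*}\lambda) \leq \|\w^*\|_\Sigma^2$ (using $\lambda\geq \sqrt{2\log p/\tn}$) upgrades this to $O(\sqrt{\log p/\tn})\|\w^*\|_\Sigma^2$.

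Finally, for the scaled-lasso discrepancy, \eqref{eqn:scaled_lasso_error} gives $\max\{1-\hat{\sigma}_\lambda/\sigma^*,\,1-\sigma^*/\hat{\sigma}_\lambda\} \leq c\sqrt{k^*}\lambda$; squaring and rearranging yields $|\hat{\sigma}_\lambda^2 - (\sigma^*)^2| \leq c'\sqrt{k^*}\lambda\,(\sigma^*)^2$, and invoking the hypothesis $(\sigma^*)^2 \leq \tfrac{\epsilon}{\sqrt{k^*}\lambda}\|\w^*\|_\Sigma^2$ reduces this to $c'\epsilon\|\w^*\|_\Sigma^2$. Summing the four bounds produces $|\hat{d}_0^2 - \|\w^*\|_\Sigma^2| \leq c(\epsilon + \sqrt{\log p/\tn})\|\w^*\|_\Sigma^2$, which is exactly the claimed two-sided bound.

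The main obstacle is bookkeeping rather than novel estimation: we must ensure that each cross term is actually controlled by $(\sigma^*)^2$ (not merely $\sigma^2$ alone), and that the squaring step of the scaled-lasso bound does not introduce an extra $\sqrt{k^*}\lambda$ factor that dominates the $\epsilon$ term. The hypothesis \eqref{eqn:aux_dw0_condition}-style upper bound on $(\sigma^*)^2$ is tight precisely so that the scaled-lasso error and the chi-square fluctuations balance each other; keeping track of this balance carefully is the only real step that requires attention.
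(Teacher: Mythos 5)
Your proof follows essentially the same route as the paper's: expand $\|\y_0\|_2^2$, control the quadratic and cross terms with chi-square/Gaussian concentration, use the scaled-lasso guarantee \eqref{eqn:scaled_lasso_error} to replace $\hat{\sigma}_\lambda^2$ by $(\sigma^*)^2$ up to an error $c\sqrt{k^*}\lambda\,(\sigma^*)^2$, and invoke the hypothesis to convert $\sqrt{k^*}\lambda\,(\sigma^*)^2$ into $\epsilon\|\w^*\|_{\Sigma}^2$. The only difference is the grouping of terms (the paper lumps $X_0\w^*+\epb_0$ into a single Gaussian vector $\vb$ and expands $\|\bb_0^*+\vb\|_2^2$, while you keep signal, noise, and corruption separate), which is cosmetic.

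One intermediate justification is wrong as written: you assert $(\sigma^*)^2 \le \tfrac{\epsilon}{\sqrt{k^*}\lambda}\|\w^*\|_{\Sigma}^2 \le \|\w^*\|_{\Sigma}^2$ ``using $\lambda \ge \sqrt{2\log p/\tn}$,'' but $\sqrt{k^*}\lambda$ can be arbitrarily small (e.g.\ $k^*=1$ and $\tn \gg \log p$), so $\epsilon/(\sqrt{k^*}\lambda)$ need not be at most $1$ and $(\sigma^*)^2$ may greatly exceed $\|\w^*\|_{\Sigma}^2$. This does not sink the argument, because the needed step can be repaired within your own framework: bound the cross term by AM--GM as $\sqrt{\log p/\tn}\,\|\w^*\|_{\Sigma}\sigma^* \le \tfrac{1}{2}\sqrt{\log p/\tn}\left(\|\w^*\|_{\Sigma}^2 + (\sigma^*)^2\right)$, and then absorb the $(\sigma^*)^2$ piece using $\sqrt{\log p/\tn} \le \sqrt{k^*}\lambda$ (which \emph{does} follow from $\lambda \ge \sqrt{2\log p/\tn}$ and $k^*\ge 1$) together with the hypothesis, yielding a contribution of order $\epsilon\|\w^*\|_{\Sigma}^2$. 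The same repair covers the fluctuation of the noise piece, which is of order $\sqrt{\log p/\tn}\,(\sigma^*)^2$ rather than $\sqrt{\log p/\tn}\,\|\w^*\|_{\Sigma}^2$. This is precisely how the paper's version of the computation balances the two error sources.
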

\begin{proof}
We begin by deriving lower and upper bounds for $\|\y_0\|_2^2$. Let $\vb = X_0\w^* + \epb_0$. Then $\y_0 = \bb_0^* + \vb$ and $\|\y_0\|_2^2$ can be written as:
\[
\|\y_0\|_2^2 = \|\bb_0^*\|_2^2 + \|\vb\|_2^2 + 2\vb^T\bb_0^*.
\]
Note that the elements of $\vb$ are independent and $\vb(i) \in \mathcal{N}(0, \sigma^2 + \|\w^*\|^2_{\Sigma})$. Moreover, $\vb^T\bb_0^* \in \mathcal{N}(0, \|\bb_0^*\|_2^2(\sigma^2 + \|\w^*\|^2_{\Sigma}))$. Using concentration properties of Gaussian and chi-squared random variables (see Lemma~\ref{lem:chi_conc}), we get the following bounds which hold with probability at least $1-1/p^{10}$:
\begin{equation*}
\begin{array}{c}
\|\y_0\|_2^2 \leq \left(1+c\sqrt{\frac{\log{p}}{\tn}}\right)\left(\|\bb_0^*\|_2^2 +  \tn(\sigma^2 + \|\w^*\|^2_{\Sigma})\right),\\
\|\y_0\|_2^2 \geq \left(1-c\sqrt{\frac{\log{p}}{\tn}}\right)\left(\|\bb_0^*\|_2^2 +  \tn(\sigma^2 + \|\w^*\|^2_{\Sigma})\right)
\end{array}
\end{equation*}
for some universal constant $c > 0$. Using the above bounds together with Equation~\eqref{eqn:scaled_lasso_error} gives us the following upper and lower bounds for $\hat{d}_0^2$:
\begin{equation*}
\begin{array}{c}
\hat{d}_0^2 \leq \left(1+c\sqrt{\frac{\log{p}}{\tn}}\right)\|\w^*\|_{\Sigma}^2 + c\sqrt{k^*}\lambda\sigma^{*2},\\
\hat{d}_0^2 \geq \left(1-c\sqrt{\frac{\log{p}}{\tn}}\right)\|\w^*\|_{\Sigma}^2 - c\sqrt{k^*}\lambda\sigma^{*2}.
\end{array}
\end{equation*}
The Theorem follows by observing that $\sqrt{k^*}\lambda\sigma^{*2} \leq \epsilon \|\w^*\|_{\Sigma}^2$.
\end{proof}
\subsection{Analysis of \alg-GD}
In this section we provide an informal argument for why \alg-GD obtains similar guarantees as \alg-FC. Here, we work in the oblivious adversary model described in Section~\ref{sec:setup}. Consider the update step of \alg-GD described in Algorithm~\ref{alg:torrent_gd}:
\[
\w_{t} = \w_{t-1} - \eta X_{t,S_{t}}^T(X_{t,S_{t}}\w_{t-1} - \y_{t,S_{t}}).
\]
Letting $\dw_t = \Sigma^{1/2}(\w_{t} - \w^*)$ and $\tX = X\Sigma^{-1/2}$, we rewrite the above equation as:
\[
\dw_{t} = \left(I - \eta \Sigma \tX_{t,S_{t}}^T\tX_{t,S_{t}}\right)\dw_{t-1} + \eta \Sigma \tX_{t,S_{t}}^T(\bb^*_{t,S_t} + \epb_{t,S_{t}}).
\]
\paragraph{One step progress.} We now use the same proof technique as used in the proof of Theorem~\ref{thm:real} (see Lemma~\ref{lem:real_intd_lemma}) to obtain a bound on the one-step progress made by \alg-GD.
We first bound $\|\dw_t\|$ follows:
\[
\|\dw_{t}\|_2 \leq \|I - \eta \Sigma \tX_{t,S_{t}}^T\tX_{t,S_{t}}\|_2\|\dw_{t-1}\|_2 + \|\eta\Sigma\|_2 \|\tX_{t,S_{t}}^T(\bb^*_{t,S_t} + \epb_{t,S_{t}})\|_2.
\]
Supose $\eta = \frac{1}{\tn \lambda_{max}(\Sigma)}$. Then from Lemma~\ref{lem:ss} on concentration of largest eigenvalue of covariance matrix, and the fact that $S_{t}^* \subseteq S_t$, we get the following bound (w.p $\geq 1-e^{-p}$): $$\|I - \eta \Sigma \tX_{t,S_{t}}^T\tX_{t,S_{t}}\|_2 \leq \left(1- \frac{1-\alpha}{\kappa(\Sigma)}\right),$$
where $\kappa(\Sigma) = \frac{\lambda_{max}(\Sigma)}{\lambda_{min}(\Sigma)}$.

Lets suppose the interval length we choose in \adaht is such that $\intl_{t} \geq 18\sqrt{(\sigma^2 + \|\dw_{t-1}\|_2^2)\log{\tn}}$. Then from Lemma~\ref{prop:HT_properties} we know that none of the uncorrupted points get thresholded by \adaht, in the $t^{th}$ iteration of \alg-GD. Now, consider the sets $Q_1, Q_2, Q_3, Q_4$ defined in the proof of  Lemma~\ref{lem:real_intd_lemma}. From Lemma~\ref{prop:HT_properties} we also know that $S_{t} = Q_2 \cup Q_4$. We now decompose $\|\tX_{t,S_{t}}^T(\bb^*_{t,S_t} + \epb_{t,S_{t}})\|_2$ in terms of $Q_2, Q_4$ as follows:
\[
\tX_{t,S_{t}}^T(\bb^*_{t,S_t} + \epb_{t,S_{t}}) = \underbrace{\left(\sum_{i \in Q_2}\bb_{t}^*(i) \tx_{t, i}\right)}_{T_1}
         -\underbrace{\left(\sum_{i \in Q_2}\epb_{t}(i) \tx_{t, i}\right)}_{T_2}
         -\underbrace{\left(\sum_{i \in Q_4}\left(\bb_{t}^*(i) + \epb_{t}(i) \right)\tx_{t, i}\right)}_{T_3}.
\]
Note that these are the exactly the same terms which appear in the proof of Lemma~\ref{lem:real_intd_lemma}. Using the results from Lemma~\ref{lem:real_intd_lemma}, we get the following bound on $\|\dw_t\|_2$ (w.p.$\geq 1-1/\tn^6$):
\begin{equation*}
\begin{array}{c}
\|\dw_{t}\|_2 \leq \left(1- \frac{1-\alpha}{\kappa(\Sigma)} + O\left(\frac{\gamma}{\log{\tn}}\right)\right)\|\dw_{t-1}\|_2 +  O\left(\tn^{1/\gamma} \sqrt{\frac{p + \log{\tn}}{\tn}}\right)\intl_t + O\left(\sigma\sqrt{\frac{\alpha p\log{\tn}}{\tn}}\right).
\end{array}
\end{equation*}
\paragraph{Bound on $\|\dw_t\|_2$. }
Now, suppose the interval length $\intl_t$ is chosen as in Equation~\eqref{eqn:dist_ub}, with $\beta \geq 1- \frac{1-\alpha}{\kappa(\Sigma)} +\frac{c\gamma}{\log{\tn}}$, for some universal constant $c > 0$. Then, using the same induction argument as in the proof of Theorem~\ref{thm:real}, we get the following bound on $\|\w_t\|_2$:
\begin{equation*}
\begin{array}{c}
\|\w_t-\w^*\|_{\Sigma} \leq \beta^{t}\|\w_0-\w^*\|_{\Sigma} + O\left(\frac{\mu\sigma \tn^{1/\gamma}}{(1-\beta)(1-\alpha)}\sqrt{\frac{p\log{\tn} + \log^2{\tn}}{\tn}}\right)
\end{array}
\end{equation*}
\paragraph{Discussion.} The above bound shows that \alg-GD achieves similar error guarantees as \alg-FC in oblivious adversary model. However, the fraction of corruptions that \alg-GD can tolerate depends on the condition number $\kappa(\Sigma)$
\[
\alpha \leq 1 - \frac{c\kappa(\Sigma)\gamma}{\log{\tn}}.
\]

\section{Lemmas for Theorem~\ref{thm:real}}
\label{sec:aux_thm_real}
\subsection{Proof of Lemma~\ref{lem:update_rewrite}}
From the definition of $\rb_t$ we know that: $\rb_{t} = \y_{t} - X_{t}\w_{t-1}.$ 
Substituting $\y_t$ with $X_t\w^* + \bb_t^* + \epb_t$ we get: 
\[
\rb_{t} = X_t(\w^* - \w_{t-1}) + \bb^*_t + \epb_t = \bb_t^* + \tX_t \Sigma^{1/2}(\w^* - \w_{t-1}) + \epb_t = \bb_t^* + \tX_t \dw_{t-1} + \epb_t.
\]

\subsection{Proof of Lemma~\ref{lem:bucket_number}}
The proof is based on a simple counting argument. Suppose $j_t$ is greater than $\tn^{1/\gamma}$. Let $n_j$ be the number of points in bucket $j$. We know that $\forall j \leq \tn^{1/\gamma}$, $n_j > \frac{\gamma\tn }{j\log{\tn}}$. The number of points in the first $\tn^{1/\gamma}$ buckets can be lower bounded as: 
\[
\left(\sum_{j=1}^{\tn^{1/\gamma}}n_j\right) >  \sum_{j = 1}^{\tn^{1/\gamma}} \frac{\gamma\tn }{j\log{\tn}} = \frac{\gamma\tn }{\log{\tn}}\sum_{j = 1}^{\tn^{1/\gamma}}\frac{1}{j} \geq  \frac{\gamma\tn }{\log{\tn}}\log{\tn^{1/\gamma}} = \tn.
\]
However, the total number of points in $(X_t,\y_t)$ is only $\tn$. This shows that $j_t$ can't be greater than $\tn^{1/\gamma}$.

\subsection{Proof of Lemma~\ref{prop:HT_properties}}
Let $\rb_{t}$ be the input to \adaht in $t^{th}$ iteration of \alg-FC: $\rb_{t} = \y_{t} - X_{t}\w_{t-1} = \bb^{*}_{t} + \tX_{t}\dw_{t-1} + \epb_{t}.$
Let $\rb_t^*= \tX_{t}\dw_{t-1} + \epb_{t}$. Since $\tX_t$ is independent of $\epb_t$ and since sum of two independent sub-gaussian random variables is a sub-gaussian random variable, it is easy to see that $\rb_t^*(i)$ is sub-gaussian and satisfies the following tail bounds: 
\[
\mathbb{P}(|\rb_t^*(i)| \geq t) \leq 2e^{-\frac{t^2}{2\left(\sigma^2 + \|\dw_{t-1}\|_2^2\right)}}.
\]
So we have the bound on $\|\rb_t^*\|_{\infty}$, which holds with probability at least $1-1/\tn^7$: 
\[
\|\rb_t^*\|_{\infty}= \|\tX_{t}\dw_{t-1} + \epb_{t}\|_{\infty} \leq 4\sqrt{(\sigma^2+\|\dw_{t-1}\|^2) \log{\tn}}
\]

\paragraph{Set $Q_1$.} Now, consider the residual of points in $Q_1$
\begin{multline*}
         \min_{i\in Q_1}|\rb_t(i)| = \min_{i \in Q_1}|[\bb^{*}_{t}+\tX_{t}\dw_{t-1} + \epb_{t}](i)|
         \geq \min_{i \in Q_1}|\bb^{*}_{t}(i)| - \|\tX_{t}\dw_{t-1} + \epb_{t}\|_{\infty}\\
         > (j_t-2/9)\intl_t-4\sqrt{(\sigma^2+\|\dw_{t-1}\|^2) \log{\tn}}
         \geq (j_t-4/9)\intl_t,
\end{multline*}
where the second inequality follows from the definition of $Q_1$ and the above concentration bound on $\|\rb_t^*\|_{\infty}$, and the last inequality follows from the fact that $\intl_t \geq 18\sqrt{(\sigma^2+\|\dw_{t-1}\|^2) \log{\tn}}$. We now show that any point with a residual larger than $(j_t-4/9)\intl_t$ will never be thresholded; that is, the point will never be added to $S_t$. Any point with residual larger than $(j_t-4/9)\intl_t$ can either lie to the right or inside the $j_t^{th}$ interval. If it is to the right, then it will not be added to $S_t$. If it lies in the interval, we uniformly sample $\eta \in [-1/18,1/18]$ and add the point to $S_t$ only if: $|\rb_t(i)| < (j_t-1/2 + \eta)\intl_t.$ Clearly, this can never hold for the points in $Q_1$.
This shows that $Q_1 \cap S_t = \{\}$.

\paragraph{Set $Q_2$.} Now, consider the residual of points in $Q_2$: 
\begin{multline*}
         \|[\rb_t]_{Q_2}\|_{\infty} =\|[\bb^{*}_{t}+\tX_{t}\dw_{t-1} + \epb_{t}]_{Q_2}\|_{\infty} 
         \leq \|[\bb^{*}_{t}]_{Q_2}\|_{\infty} + \|\tX_{t}\dw_{t-1} + \epb_{t}\|_{\infty} \\
         < (j_t-7/9)\intl_t + 4\sqrt{(\sigma^2+\|\dw_{t-1}\|^2) \log{\tn}} 
         \leq (j_t-5/9)\intl_t.
\end{multline*}
Using a similar argument as above, we can show that any point with residual smaller than $(j_t-5/9)\intl_t$ will always be added to $S_t$. This shows that
$Q_2 \subseteq S_t$. Since $S_t^*$ is a subset of $Q_2$, we get: $S_t^* \subseteq Q_2 \subseteq S_t.$ 
\paragraph{Set $(Q_1 \cup Q_2)^c$.} Let $\tau_t$ be the center of $j_t^{th}$ interval, which is given by: $\tau_t = (j_t - 1/2)\intl_t.$ 
Note that $(Q_1 \cup Q_2)^c$ is given by: $(Q_1 \cup Q_2)^c = \left\lbrace i: |\bb_t^*(i) - \tau_t| \leq \frac{5}{18}\intl_t\right\rbrace.$
We first obtain an upper bound for the residual of points in $(Q_1 \cup Q_2)^c$: 
\begin{multline*}
\|[\rb_t]_{(Q_1 \cup Q_2)^c}\|_{\infty} = \|[\bb_t^* +\tX_{t}\dw_{t-1} + \epb_{t} ]_{(Q_1 \cup Q_2)^c}\|_{\infty}   
\leq \|[\bb_t^*]_{(Q_1 \cup Q_2)^c}\|_{\infty} + \|\tX_{t}\dw_{t-1} + \epb_{t}\|_{\infty} \\
\leq \left(\tau_t + \frac{5}{18}\intl_t\right) + 4\sqrt{(\sigma^2+\|\dw_{t-1}\|^2) \log{\tn}} 
\leq j_t\intl_t,
\end{multline*}
Next we obtain a lower bound for the residual of points in $(Q_1 \cup Q_2)^c$: 
\begin{multline*}
\|[\rb_t]_{(Q_1 \cup Q_2)^c}\|_{\infty} = \|[\bb_t^* +\tX_{t}\dw_{t-1} + \epb_{t} ]_{(Q_1 \cup Q_2)^c}\|_{\infty}   
\geq \min_{i \in (Q_1 \cup Q_2)^c}|\bb_t^*(i)|- \|\tX_{t}\dw_{t-1} + \epb_{t}\|_{\infty} \\
\geq \left(\tau_t - \frac{5}{18}\intl_t\right) - 4\sqrt{(\sigma^2+\|\dw_{t-1}\|^2) \log{\tn}} 
\geq (j_t-1)\intl_t.
\end{multline*}
This shows that all the points in $(Q_1 \cup Q_2)^c$ fall in the $j_t^{th}$ interval.
This finishes the proof of the Proposition.

\subsection{Proof of Lemma~\ref{lem:expectation_T3_aux1}}
First note that $\tx$ can be rewritten as a sum of two independent random variables: 
\[
\tx = z\frac{\uu}{\|\uu\|} + \z_{\bot},
\]
where $z \sim \mathcal{N}(0,1), \z_{\bot} \sim \mathcal{N}(0, I - \frac{\uu\uu^T}{\|\uu\|^2_2})$ and $z \indep z_{\bot}$. So $\A$ can be rewritten as: 
\[
\A = \underbrace{\mathbb{I}(|b + z\|\uu\| | < v)bz\frac{\uu}{\|\uu\|}}_{T_5} +\underbrace{  \mathbb{I}(|b + z\|\uu\| | < v)b\z_{\bot}}_{T_6}.
\]
Since $\mathbb{E}[\z_{\bot}] = 0$ and $z \indep z_{\bot}$, it is easy to see that $\mathbb{E}[T_6] = 0$. So we have: 
\begin{equation*}
    \begin{array}{lll}
         \E[\A] &=& \mathbb{E}[T_5] =\mathbb{E}\left[\mathbb{I}(|b + z\|\uu\| | < v)bz\right]\frac{\uu}{\|\uu\|}.
    \end{array}
\end{equation*}
Since $z$ is a standard normal random variable, we get the following closed form expression for $\E[\A]$: 
\[
\E[\A] = \frac{b}{\sqrt{2\pi}}\left[e^{-\frac{\left(v+b\right)^2}{2\|\uu\|^2}} - e^{-\frac{\left(v-b\right)^2}{2\|\uu\|^2}}\right]\frac{\uu}{\|\uu\|}.
\]

\subsection{Proof of Lemma~\ref{lem:expectation_T3_aux2}}
From Lemma~\ref{lem:expectation_T3_aux1} we know that: 
\[
\E\left[\A|v\right] = \frac{b}{\sqrt{2\pi}}\left[e^{-\frac{\left( v+b\right)^2}{2\|\uu\|^2}} - e^{-\frac{\left( v-b\right)^2}{2\|\uu\|^2}}\right]\frac{\uu}{\|\uu\|}.
\]
Using this expression we now compute $\mathbb{E}[\A]$: 
\begin{equation*}
    \begin{array}{lll}
        \E\left[\A\right] &=&  \frac{b}{\sqrt{2\pi}}\mathbb{E}\left[e^{-\frac{\left( v+b\right)^2}{2\|\uu\|^2}} - e^{-\frac{\left( v-b\right)^2}{2\|\uu\|^2}}\right]\frac{\uu}{\|\uu\|}  \vspace{0.1in}\\
        &=& \displaystyle \frac{1}{\sqrt{2\pi}}\left[\int_{z=\frac{s}{\|\uu\|}}^{\frac{t}{\|\uu\|}} e^{-\frac{\left(z+b/\|\uu\|\right)^2}{2}} - e^{-\frac{\left(z-b/\|\uu\|\right)^2}{2}} dz\right]\frac{b}{t-s} \uu= c\frac{b}{t-s} \uu,
    \end{array}
\end{equation*}
for some $c$ such that $|c| \leq 1$.

\subsection{Proof of Lemma~\ref{lem:concentration_T3_aux1}}
First note that $\x$ can be rewritten as: 
$\x = z\frac{\uu}{\|\uu\|} + \z_{\bot},$ 
where $z \sim \mathcal{N}(0,1), \z_{\bot} \sim \mathcal{N}(0, I - \frac{\uu\uu^T}{\|\uu\|^2_2})$ and $z \indep z_{\bot}$. So $\A$ can be rewritten as: 
\[
\A = \underbrace{b  \mathbb{I}(|b + z\|\uu\| | < v)z\frac{\uu}{\|\uu\|}}_{T_1} +\underbrace{ b  \mathbb{I}(|b + z\|\uu\| | < v)\z_{\bot}}_{T_3}.
\]
We now show that $T_1$, $T_3$ are both sub-Gaussian random vectors.
\paragraph{$\mathbf{T_1}$.} To show that $T_1$ is a sub-Gaussian random vector, it suffices to show that $\mathbb{I}(|b + z\|\uu\| | < v)z$ is a sub-Gaussian random variable. Let $T_4 = \mathbb{I}(|b + z\|\uu\| | < v)z$.
Note that, 
\[
\mathbb{P}\left(|T_4| \geq s\right) \leq \mathbb{P}\left(|z| \geq s\right) \leq e^{-s^2/2}.
\]
Some computation shows that we get the following tail bound for $T_4$, $\mathbb{P}\left(|T_4 - \mathbb{E}[T_4]| \geq s\right) \leq 2e^{-s^2/8}.$ 
This shows that for any $\T \in \mathbb{R}^p$, we have: 
\[
\mathbb{P}\left(|\left\langle \T, T_1 - \mathbb{E}[T_1]\right\rangle| \geq s\right)  = \mathbb{P}\left(|T_4 - \mathbb{E}[T_4]| \geq \frac{\|\uu\|}{|b||\left\langle \T, \uu \right\rangle|}s\right) \leq 2e^{-\frac{s^2}{8}\frac{\|\uu\|^2}{b^2|\left\langle \T, \uu \right\rangle|^2}}
\leq 2e^{-\frac{s^2}{8}\frac{1}{b^2\|\T\|^2}}.
\]
This shows that $T_1$ is a sub-Gaussian random vector\footnote{\url{http://lear.inrialpes.fr/people/harchaoui/teaching/2013-2014/ensl/m2/lecture6.pdf}}.
\paragraph{$\mathbf{T_3}$.} It is easy to see that $\mathbb{E}[T_3] = 0$. We now bound its MGF. Let $\T \in \mathbb{R}^p$ be any vector
\begin{equation*}
\begin{array}{lll}
     \E\left[e^{\left\langle \T, T_3\right\rangle}\right] &= &  \E\left[\mathbb{I}(|b + z\|\uu\| | < v) \left(e^{b\left\langle \T, \z_{\bot}\right\rangle} - 1\right) + 1\right]\vspace{0.1in}\\
     & =& 1 + \E\left[\mathbb{I}(|b + z\|\uu\| | < v) \right]\E\left[\left(e^{b\left\langle \T, \z_{\bot}\right\rangle} - 1\right)\right] \vspace{0.1in}\\
     &=& \left(1 - \E\left[\mathbb{I}(|b + z\|\uu\| | < v) \right]\right) + \E\left[\mathbb{I}(|b + z\|\uu\| | < v) \right]\E\left[e^{b\left\langle \T, \z_{\bot}\right\rangle}\right]
\end{array}
\end{equation*}
Since $\E\left[\mathbb{I}(|b + z\|\uu\| | < v) \right] < 1$, we have: 
\[
\E\left[e^{\left\langle \T, T_3\right\rangle}\right]  \leq \E\left[e^{b\left\langle \T, \z_{\bot}\right\rangle} \right] \leq  e^{\frac{b^2}{2}\left(\|\T\|^2 - \frac{\left\langle \T, \uu \right\rangle^2}{\|\uu\|^2}\right)} \leq e^{\frac{b^2\|\T\|^2}{2}}.
\]
%
This shows that $T_3$ is a sub-Gaussian random vector.

We now use the result that the sum of two dependent sub-Gaussian random vectors is also a sub-Gaussian random vector. As a result $T_1 + T_3$ is also a sub-Gaussian random vector, which satisfies the following for any $\T \in \mathbb{R}^p$: 	
\[
\E\left[e^{\left\langle \T, \A-\E[\A]\right\rangle}\right] \leq e^{\frac{cb^2\|\T\|^2}{2}},
\]
for some universal constant $c > 0$.
This shows that $\A$ is a sub-Gaussian random vector.

\subsection{Proof of Lemma~\ref{lem:concentration_T3_aux2}}
First note that $\E\left[e^{\left\langle \T, \A-\E[\A]\right\rangle }\right]$ can be written as: 
\[
\E\left[e^{\left\langle \T, \A-\E[\A]\right\rangle }\right] = \E_{v}\left[\E\left[e^{\left\langle \T, \A-\E[\A|v]\right\rangle }\Big| v\right]e^{\left\langle \T, \E[\A|v]-\E[\A]\right\rangle }\right].
\]
Using Lemma~\ref{lem:concentration_T3_aux1} to bound $\E\left[e^{\left\langle \T, \A-\E[\A|v]\right\rangle }\Big| v\right]$, we get: 
\begin{equation}
\label{eqn:aux_lem_t3_conc}
\E\left[e^{\left\langle \T, \A-\E[\A]\right\rangle }\right] \leq e^{\frac{cb^2\|\T\|^2}{2}} \E_{v}\left[e^{\left\langle \T, \E[\A|v]-\E[\A]\right\rangle }\right]
\end{equation}
We now bound the expectation in the RHS of the above equation.
From Lemma~\ref{lem:expectation_T3_aux1} we know that $\mathbb{E}\left[\A|v\right] = \frac{bq}{\sqrt{2\pi}} \frac{\uu}{\|\uu\|}$, for some random variable $q \in [-1,1]$.
This shows that $\left\langle \T, \E[\A|\eta]-\E[\A]\right\rangle$ is a bounded random variable which satisfies: 	
\[
|\left\langle \T, \E[\A|\eta]-\E[\A]\right\rangle| = \Big|\frac{\left\langle \T, \uu\right\rangle }{\|\uu\|}bq'\Big|,
\]
for some random variable $q' \in [-2,2]$. Since every  bounded random variable is sub-Gaussian we have: 
$\E_{v}\left[e^{\left\langle \T, \E[\A|v]-\E[\A]\right\rangle }\right] \leq e^{\frac{c_3b^2\|\T\|^2}{2}}.$ Substituting this in Equation~\eqref{eqn:aux_lem_t3_conc} we get: $\E\left[e^{\left\langle \T, \A-\E[\A]\right\rangle }\right] \leq e^{\frac{c_4b^2\|\T\|^2}{2}}.$ This shows that $\A$ is a sub-Gaussian random variable.

\subsection{Proof of Lemma~\ref{lem:concentration_T3_aux3}}
Define random variable $\A_i$ as: $\A_i =  \bb(i)  \mathbb{I}(|\bb(i) + \left\langle \tx_{i}, \uu \right\rangle | < v_i)\tx_{i}.$ 
Note that $\A = \sum_{i \in Q}\A_i$. Since $\{\A_i\}_{i \in Q}$ are independent sub-Gaussian random variable, $\A$ is also a sub-Gaussian random vector which satisfies the following inequality for any $\T \in \mathbb{R}^p$: 
\[
\E\left[e^{\left\langle \T, \A-\E[\A]\right\rangle}\right] \leq e^{\frac{c\|\bb_Q\|_2^2\|\T\|^2}{2}},
\]
for some universal constant $c$.
To get the tail bound for the norm of $\A$, we the following result from \citet{hsu2012tail}. Suppose $\A$ is a sub-Gaussian vector which satisfies the following for any $\T \in \mathbb{R}^p$: $\E\left[e^{\left\langle \T, \A - \mathbb{E}[\A]\right\rangle}\right] \leq e^{\|\T\|^2\sigma^2/2}.$ 
Then $\|\A\|$ satisfies the following tail bound for any $s > 0$: 
\[
\mathbb{P}\left(\|\A\|^2 \geq \sigma^2(p + 2\sqrt{ps} + 2s) + \|\E[\A]\|^2\left(1 + 4\sqrt{\frac{s}{p}} + \frac{4s}{p}\right)^{1/2}\right) \leq e^{-s}.
\]
Setting $s = p\log{\tn}$ and $\sigma = \|\bb_Q\|$ gives us the required bound.


\section{Some Concentration Results}
\label{sec:std_concentration}
In this section we state some well-known concentration results of Gaussian and sub-gaussian random variables.
\begin{lemma}[\citet{laurent2000adaptive}]
\label{lem:chi_conc}
Let $Y_1, \dots Y_m$ be $i.i.d$ standard normal random variables. Let $a_1 \dots a_m$ be non-negative. Let $Z = \sum_{i \in [m]}a_i\left(Y_i^2 - 1\right)$. Then, the following holds for any positive $t$: 
\[
\mathbb{P}(Z \geq 2\|a\|_2\sqrt{t} + 2\|a\|_{\infty}t) \leq e^{-t},\quad 
\mathbb{P}(Z \leq -2\|a\|_2\sqrt{t}) \leq e^{-t}.
\]
\end{lemma}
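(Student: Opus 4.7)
This is the Laurent--Massart chi-square deviation inequality, which is a classical result; my plan is to reproduce the standard MGF-based (Chernoff--Cram\'er) proof. The core idea is that $Z = \sum_i a_i(Y_i^2-1)$ is a sum of independent centered weighted chi-square random variables, which are sub-exponential, so a one-sided Bernstein-type tail follows from carefully tilting the distribution and optimizing.

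I would start with the upper tail. For any $\lambda \in (0, 1/(2\|a\|_\infty))$, Markov's inequality gives $\mathbb{P}(Z \geq t) \leq e^{-\lambda t}\,\mathbb{E}[e^{\lambda Z}]$. Independence of the $Y_i$ factorizes the MGF, and the explicit chi-square MGF yields
\[
\log \mathbb{E}[e^{\lambda Z}] = \sum_{i=1}^m \Bigl[-\lambda a_i - \tfrac{1}{2}\log(1 - 2\lambda a_i)\Bigr].
\]
The key analytic ingredient is the elementary inequality $-x - \tfrac{1}{2}\log(1-2x) \leq \tfrac{x^2}{1-2x}$ for $x \in [0, 1/2)$, obtained by comparing power series term by term. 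Applying this with $x = \lambda a_i$ and bounding $1-2\lambda a_i \geq 1 - 2\lambda \|a\|_\infty$ uniformly, I get
\[
\mathbb{P}(Z \geq t) \leq \exp\!\left(-\lambda t + \frac{\lambda^2 \|a\|_2^2}{1 - 2\lambda \|a\|_\infty}\right).
\]
Optimizing the RHS by choosing $\lambda = t/(2\|a\|_2^2 + 2\|a\|_\infty t)$ (which lies in the admissible range) yields a Bernstein-type tail $\mathbb{P}(Z \geq t) \leq \exp\!\bigl(-\tfrac{t^2}{4\|a\|_2^2 + 4\|a\|_\infty t}\bigr)$. The stated clean form then follows by setting the exponent equal to $-s$ and solving the quadratic $t^2 = 4s\|a\|_2^2 + 4s\|a\|_\infty t$ for $t$; the positive root satisfies $t \leq 2\|a\|_2\sqrt{s} + 2\|a\|_\infty s$ by $\sqrt{A+B} \leq \sqrt{A}+\sqrt{B}$.

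For the lower tail, I would take $\lambda < 0$ with $\mu := -\lambda > 0$, and use the sharper inequality $\mu a_i - \tfrac{1}{2}\log(1+2\mu a_i) \leq \mu^2 a_i^2$ (valid for all $\mu a_i \geq 0$, with no denominator since the logarithm is well-behaved on the positive side). This gives the Gaussian-type bound $\mathbb{P}(Z \leq -t) \leq \exp(-t^2/(4\|a\|_2^2))$, which rearranges to $\mathbb{P}(Z \leq -2\|a\|_2\sqrt{s}) \leq e^{-s}$ with no $\|a\|_\infty$ term, as stated.

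There is no genuine obstacle here — the proof is purely the Chernoff method with explicit chi-square MGFs — but the one step that deserves care is the Taylor bound $-x - \tfrac{1}{2}\log(1-2x) \leq x^2/(1-2x)$, since this is precisely what separates the asymmetry between the two tails (the upper tail inherits a sub-exponential $\|a\|_\infty t$ term through the shrinking denominator, while the lower tail remains sub-Gaussian). Everything else is routine algebra and the quadratic-formula conversion at the end.
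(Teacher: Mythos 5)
The paper does not actually prove this lemma---it is quoted verbatim from Laurent and Massart---so I am judging your argument against the original source's proof, which is indeed the Chernoff/MGF route you outline. Your MGF computation, the series inequality $-x-\tfrac12\log(1-2x)\le x^2/(1-2x)$, the resulting bound $\mathbb{P}(Z\ge t)\le\exp\bigl(-t^2/(4\|a\|_2^2+4\|a\|_\infty t)\bigr)$, and the entire lower-tail argument are all correct.

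However, the final conversion step of the upper tail is wrong, and the error is in the direction of an inequality. The positive root of $t^2=4s\|a\|_2^2+4s\|a\|_\infty t$ is
\[
t_* \;=\; 2\|a\|_\infty s+\sqrt{4\|a\|_\infty^2 s^2+4\|a\|_2^2 s}\;\ge\; 2\|a\|_\infty s+2\|a\|_2\sqrt{s},
\]
so $t_*$ is \emph{larger} than the Laurent--Massart threshold, not smaller; applying $\sqrt{A+B}\le\sqrt{A}+\sqrt{B}$ only gives $t_*\le 4\|a\|_\infty s+2\|a\|_2\sqrt{s}$. Since the exponent $t\mapsto t^2/(4\|a\|_2^2+4\|a\|_\infty t)$ is increasing, evaluating your Bernstein bound at $u_0=2\|a\|_2\sqrt{s}+2\|a\|_\infty s$ yields an exponent strictly less than $s$ (a direct computation gives $s\cdot\frac{\|a\|_2^2+2\|a\|_\infty\|a\|_2 s^{1/2}+\|a\|_\infty^2 s}{\|a\|_2^2+2\|a\|_\infty\|a\|_2 s^{1/2}+2\|a\|_\infty^2 s}<s$), so the stated inequality does not follow. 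What your argument actually proves is $\mathbb{P}(Z\ge 2\|a\|_2\sqrt{s}+4\|a\|_\infty s)\le e^{-s}$. The missing idea is that the constant $2$ on the $\|a\|_\infty$ term requires optimizing the Chernoff exponent $\lambda u-\lambda^2\|a\|_2^2/(1-2\lambda\|a\|_\infty)$ \emph{exactly} over $\lambda\in(0,1/(2\|a\|_\infty))$, not plugging in the Bernstein-suboptimal $\lambda=t/(2\|a\|_2^2+2\|a\|_\infty t)$: the exact supremum equals $\frac{v}{b^2}h_1(bu/v)$ with $v=2\|a\|_2^2$, $b=2\|a\|_\infty$ and $h_1(x)=1+x-\sqrt{1+2x}$, and inverting $h_1$ gives precisely the threshold $2\|a\|_2\sqrt{x}+2\|a\|_\infty x$ (this is the Birg\'e--Massart calculation that Laurent and Massart invoke). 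I will add that for every use of this lemma in the present paper the constants sit inside $O(\cdot)$, so your weaker factor-$4$ version would suffice downstream, but as a proof of the lemma as stated it falls short.
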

\begin{lemma}[\citet{hsu2012tail}]
\label{lem:sg_norm_conc}
Suppose $\A$ is a sub-Gaussian vector which satisfies the following for any $\T \in \mathbb{R}^p$: $\E\left[e^{\left\langle \T, \A - \mathbb{E}[\A]\right\rangle}\right] \leq e^{\|\T\|^2\sigma^2/2}.$ 
Then $\|\A\|$ satisfes the following tail bound for any $s > 0$: 
\[
\mathbb{P}\left(\|\A\|^2 \geq \sigma^2(p + 2\sqrt{ps} + 2s) + \|\E[\A]\|^2\left(1 + 4\sqrt{\frac{s}{p}} + \frac{4s}{p}\right)^{1/2}\right) \leq e^{-s}.
\]
\end{lemma}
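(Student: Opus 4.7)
The plan is to prove this well-known sub-Gaussian norm concentration bound (from Hsu-Kakade-Zhang) by the Chernoff/Laplace method, with the key technical ingredient being a Gaussian dualization that reduces a squared-norm MGF to a Gaussian chi-squared MGF.

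First I would reduce to bounding the centered vector $\Z := \A - \mathbb{E}[\A]$ by expanding
\[
\|\A\|^2 \;=\; \|\Z\|^2 \;+\; 2\langle \mathbb{E}[\A], \Z\rangle \;+\; \|\mathbb{E}[\A]\|^2,
\]
so the problem splits into (i) a squared-norm tail for the centered sub-Gaussian vector, (ii) a one-dimensional sub-Gaussian tail for the cross term, and (iii) a deterministic offset. Applying the hypothesis with $\T = s\,\mathbb{E}[\A]$ shows that $\langle \mathbb{E}[\A], \Z\rangle$ is a mean-zero sub-Gaussian scalar with parameter $\|\mathbb{E}[\A]\|^2\sigma^2$, so $|2\langle \mathbb{E}[\A], \Z\rangle| \le 2\sqrt{2s}\,\sigma\,\|\mathbb{E}[\A]\|$ with probability at least $1-2e^{-s}$.

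For the main quadratic piece, I would invoke the Gaussian dualization identity: for any $\lambda \in (0, \sigma^{-2})$ and an independent $\g \sim \mathcal{N}(0, I_p)$,
\[
\exp\!\bigl(\tfrac{\lambda}{2}\|\Z\|^2\bigr) \;=\; \mathbb{E}_{\g}\!\left[\exp\bigl(\sqrt{\lambda}\,\langle \g, \Z\rangle\bigr)\right].
\]
Taking $\mathbb{E}_{\Z}$ on both sides, swapping via Fubini, and applying the sub-Gaussian hypothesis pointwise in $\g$ with $\T = \sqrt{\lambda}\g$ bounds the inner expectation by $\exp(\lambda\sigma^2\|\g\|^2/2)$. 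Integrating the Gaussian chi-squared MGF yields
\[
\mathbb{E}\!\left[\exp\!\bigl(\tfrac{\lambda}{2}\|\Z\|^2\bigr)\right] \;\le\; (1 - \lambda\sigma^2)^{-p/2}.
\]
A Chernoff bound $\mathbb{P}(\|\Z\|^2 \ge t) \le e^{-\lambda t/2}(1-\lambda\sigma^2)^{-p/2}$, optimized over $\lambda \in (0,\sigma^{-2})$ (the Laurent-Massart choice), gives the clean tail $\mathbb{P}(\|\Z\|^2 \ge \sigma^2(p + 2\sqrt{ps} + 2s)) \le e^{-s}$.

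Finally I would combine the two high-probability events by a union bound and collect terms: on the intersection,
\[
\|\A\|^2 \;\le\; \sigma^2(p + 2\sqrt{ps} + 2s) \;+\; 2\sqrt{2s}\,\sigma\,\|\mathbb{E}[\A]\| \;+\; \|\mathbb{E}[\A]\|^2.
\]
To match the stated form, I would absorb the cross term into the mean contribution by noting the algebraic identity $1 + 4\sqrt{s/p} + 4s/p = (1 + 2\sqrt{s/p})^2$, so $\|\mathbb{E}[\A]\|^2 (1 + 4\sqrt{s/p} + 4s/p)^{1/2} = \|\mathbb{E}[\A]\|^2 + 2\sqrt{s/p}\,\|\mathbb{E}[\A]\|^2$; rewriting the cross-term bound in the same form via AM-GM and some slack produces the claimed expression. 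The main obstacle is purely algebraic bookkeeping: matching the Chernoff optimum to the exact constants $p + 2\sqrt{ps} + 2s$, and massaging the cross-term plus offset into the precise $(1+4\sqrt{s/p}+4s/p)^{1/2}$ factor. The probabilistic content is entirely captured by the decoupling identity and the standard Chernoff argument.
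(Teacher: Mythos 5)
The paper offers no proof of this lemma --- it is imported verbatim from \citet{hsu2012tail} --- so your proposal can only be judged against the argument in that reference, which it partially reconstructs. Your core technical ingredient is exactly right and is the heart of the HKZ proof: the Gaussian decoupling identity $\exp(\tfrac{\lambda}{2}\|\vb\|^2)=\E_{\mathbf{g}}[\exp(\sqrt{\lambda}\langle \mathbf{g},\vb\rangle)]$, Fubini, the sub-Gaussian hypothesis applied pointwise in $\mathbf{g}$, and a Chernoff bound with the Laurent--Massart choice of $\lambda$. That part is correct and gives $\mathbb{P}(\|\A-\E[\A]\|^2\geq \sigma^2(p+2\sqrt{ps}+2s))\leq e^{-s}$.

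The gap is in the final assembly. You split $\|\A\|^2$ into the centered quadratic, the cross term, and the offset, control the first two on separate events, and then claim that ``AM-GM and some slack'' recovers the stated expression. There is no slack: the Chernoff bound already uses the full budget $\sigma^2(p+2\sqrt{ps}+2s)$, so absorbing the cross term $2\sqrt{2s}\,\sigma\|\E[\A]\|$ via $2ab\leq \lambda a^2+b^2/\lambda$ with $\lambda=\sqrt{sp}$ (the choice forced by matching the $2\sqrt{s/p}\,\|\E[\A]\|^2$ coefficient, since $(1+4\sqrt{s/p}+4s/p)^{1/2}=1+2\sqrt{s/p}$) leaves an irreducible extra $\sigma^2\sqrt{sp}$, and the union bound degrades the probability to $2e^{-s}$. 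So your route proves a genuinely weaker statement, e.g.\ $\mathbb{P}(\|\A\|^2\geq \sigma^2(p+3\sqrt{ps}+2s)+\|\E[\A]\|^2(1+2\sqrt{s/p}))\leq 2e^{-s}$, not the lemma as quoted. The fix, which is what \citet{hsu2012tail} do, is to keep the mean inside a single MGF computation: with $\mu=\E[\A]$,
\begin{equation*}
\E\Bigl[e^{\frac{\lambda}{2}\|\A\|^2}\Bigr]
=\E_{\mathbf{g}}\,\E_{\A}\Bigl[e^{\sqrt{\lambda}\langle\mathbf{g},\A-\mu\rangle+\sqrt{\lambda}\langle\mathbf{g},\mu\rangle}\Bigr]
\leq \E_{\mathbf{g}}\Bigl[e^{\frac{\lambda\sigma^2\|\mathbf{g}\|^2}{2}+\sqrt{\lambda}\langle\mathbf{g},\mu\rangle}\Bigr]
=(1-\lambda\sigma^2)^{-p/2}\,e^{\frac{\lambda\|\mu\|^2}{2(1-\lambda\sigma^2)}},
\end{equation*}
and a single Chernoff optimization of this expression produces both the $\sigma^2(p+2\sqrt{ps}+2s)$ term and the $\|\mu\|^2(1+2\sqrt{s/p})$ term simultaneously with failure probability exactly $e^{-s}$. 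I would add that the weaker bound you actually establish is sufficient for every application of this lemma in the paper (the bounds on $T_2$ and $T_{3,j}$ only use it up to constants), but as a proof of the stated lemma the last step as written does not go through.
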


\begin{lemma}
\label{lem:ss}
Let $X \in \mathbb{R}^{n \times p}$ be the matrix of covariates with columns sampled from $\mathcal{N}(0, I)$. Then, for any $\delta > 0$, with probability at least $1-\delta$, the following statements are true: 
\[
\lb_{max}\left(\frac{1}{n}X^TX\right) \leq 1 + \frac{1}{2}\sqrt{\frac{p}{n}} + \sqrt{\frac{\log{\frac{2}{\delta}}}{n}}, \quad 
\lb_{min}\left(\frac{1}{n}X^TX\right) \geq 1 - \frac{1}{2}\sqrt{\frac{p}{n}} - \sqrt{\frac{\log{\frac{2}{\delta}}}{n}}.
\]
\end{lemma}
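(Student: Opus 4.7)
The plan is to apply Gaussian concentration of measure to the extreme singular values of $X$, together with Gordon's comparison inequality to control their expectations. The classical observation driving both steps is that $\sigma_{\max}(\cdot)$ and $\sigma_{\min}(\cdot)$, viewed as functions of the $np$ entries of $X$, are $1$-Lipschitz with respect to the Frobenius norm; this follows from the variational characterization $\sigma_{\max}(X) = \sup_{\|u\|=\|v\|=1} u^T X v$ (and the dual formula for $\sigma_{\min}$) combined with $\|\cdot\|_{\mathrm{op}} \leq \|\cdot\|_F$.

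First I would invoke the Tsirelson--Ibragimov--Sudakov Gaussian concentration inequality: since the entries of $X$ are i.i.d.\ $\mathcal{N}(0,1)$, the Lipschitz property yields, for every $t > 0$,
$$\Pr\bigl(|\sigma_{\max}(X) - \E\sigma_{\max}(X)| \geq t\bigr) \leq 2 e^{-t^2/2},$$
and the analogous tail for $\sigma_{\min}(X)$. Second, I would apply Gordon's minimax theorem (Slepian's lemma suffices for the upper tail) to pin down the expectations: $\E\sigma_{\max}(X) \leq \sqrt{n}+\sqrt{p}$ and $\E\sigma_{\min}(X) \geq \sqrt{n}-\sqrt{p}$ (the latter under $n \geq p$). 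Choosing $t = \sqrt{2\log(2/\delta)}$ and taking a union bound over the two tails gives, with probability at least $1-\delta$,
$$\sqrt{n} - \sqrt{p} - t \;\leq\; \sigma_{\min}(X), \qquad \sigma_{\max}(X) \;\leq\; \sqrt{n} + \sqrt{p} + t.$$
Squaring and dividing by $n$ converts these singular-value bounds into bounds on $\lambda_{\max}(n^{-1}X^T X)$ and $\lambda_{\min}(n^{-1}X^T X)$.

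The real work is not in the concentration (which is textbook) but in matching the precise numerical constants quoted in the statement. The squared form $(1 \pm \sqrt{p/n} \pm t/\sqrt{n})^2$ that naturally emerges contains cross terms and a $p/n$ term that the displayed bound has either absorbed into a leading $\tfrac{1}{2}\sqrt{p/n}$ factor or dropped under an implicit $p \ll n$ regime. Thus the main obstacle I anticipate is purely bookkeeping: either sharpening Gordon's bound, appealing to the Davidson--Szarek tail inequality in its non-squared form, or accepting slightly looser constants (which would not affect any downstream use of the lemma, since all invocations in the paper only need the bound up to universal constants).
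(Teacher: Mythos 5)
The paper offers no proof of this lemma at all --- it is parked in the ``Some Concentration Results'' appendix as a standard fact --- so there is nothing to compare your argument against, and your route (Lipschitz concentration of the extreme singular values of a Gaussian matrix plus Gordon/Slepian for their expectations, i.e.\ the Davidson--Szarek bound) is exactly the standard way to prove a statement of this type. That part of your proposal is fine.

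The problem is the step you defer as ``purely bookkeeping'': it is not bookkeeping, because the bounds as displayed in the lemma are false and no amount of constant-chasing will recover them. Your argument yields $\lambda_{\max}(n^{-1}X^TX) \leq \bigl(1+\sqrt{p/n}+t/\sqrt{n}\bigr)^2 = 1 + 2\sqrt{p/n} + \cdots$, and this leading constant $2$ is tight: by Bai--Yin/Marchenko--Pastur, when $p/n \to c$ the largest eigenvalue of $n^{-1}X^TX$ converges almost surely to $(1+\sqrt{c})^2 = 1+2\sqrt{c}+c$, which exceeds the claimed $1+\tfrac{1}{2}\sqrt{c}$ for every $c>0$. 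The lower bound fails symmetrically: $\lambda_{\min}$ concentrates near $(1-\sqrt{c})^2$, which for $p<n$ lies strictly below $1-\tfrac{1}{2}\sqrt{c}$ (take $c=1/4$: the smallest eigenvalue is about $1/4$ while the claimed floor is $3/4$). The stated lemma is evidently a garbled version of the singular-value form --- the bounds $1 \pm \tfrac{1}{2}\sqrt{p/n} \pm \sqrt{\log(2/\delta)/n}$ would be of the right shape for $\sqrt{\lambda_{\max}}$ and $\sqrt{\lambda_{\min}}$ of $n^{-1}X^TX$ (up to the exact constants), not for the eigenvalues themselves. So rather than hoping to ``sharpen Gordon's bound,'' you should commit to the corrected statement your proof actually delivers, namely $\lambda_{\max}(n^{-1}X^TX) \leq \bigl(1+\sqrt{p/n}+\sqrt{2\log(2/\delta)/n}\bigr)^2$ and the matching lower bound for $\lambda_{\min}$; every invocation of the lemma in the paper (e.g.\ in bounding $\lambda_{\min}(\tX_{t,S_t^*}^T\tX_{t,S_t^*})$ or in Proposition~\ref{prop:dw0_estimate_ols}) only uses it up to universal constants, so this substitution is harmless downstream.
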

\begin{lemma}[\citet{bhatia2015robust}]
\label{lem:sss}
Let $X \in \mathbb{R}^{n \times p}$ be the matrix of covariates with columns sampled from $\mathcal{N}(0, I)$. Then, for any $k > 0$, with probability at least $1-\delta$, the following statement is true: 
\[
\max_{|S| = k}\ \lb_{max}\left(X_S^TX_S\right) \leq k\left( 1 + 3e\sqrt{6\log{\frac{en}{k}}}\right) +  O\left(\sqrt{np + n\log{\frac{1}{\delta}}}\right),
\]
\[
\min_{|S| = k}\ \lb_{min}\left(X_S^TX_S\right) \geq n - (n-k)\left( 1 + 3e\sqrt{6\log{\frac{en}{n-k}}}\right) -  \Omega\left(\sqrt{np + n\log{\frac{1}{\delta}}}\right).
\]
\end{lemma}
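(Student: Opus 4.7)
My plan is to establish both bounds via a union bound over all $k$-subsets combined with sharp single-set concentration for the extreme singular values of a Gaussian matrix. Throughout I will use the standard fact (via an $\varepsilon$-net covering argument, or equivalently Gordon's theorem) that for a $k \times p$ matrix $G$ with i.i.d.\ $\mathcal{N}(0,1)$ entries, $\sigma_{\max}(G) \le \sqrt{k} + \sqrt{p} + t$ with probability at least $1 - e^{-t^2/2}$, and the analogous lower tail for $\sigma_{\min}$ when $k \ge p$.

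For the upper bound on $\max_{|S|=k}\lambda_{\max}(X_S^\top X_S)$, I would fix $S$, apply the above singular value bound to $X_S$, and then union-bound over all $\binom{n}{k} \le (en/k)^k$ choices of $S$ by taking $t^2 = 2k\log(en/k) + 2\log(1/\delta)$. Squaring $(\sqrt{k}+\sqrt{p}+t)^2$ and collecting terms yields $k + p + t^2 + 2\sqrt{kp} + 2(\sqrt{k}+\sqrt{p})\,t$. The dominant $k$-scale piece is $k + 2\sqrt{k}\cdot t$, which becomes $k(1 + c\sqrt{\log(en/k)})$ for a universal $c$ and matches the advertised $k(1 + 3e\sqrt{6\log(en/k)})$ once the constants in the singular value tail are tracked carefully. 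All remaining terms ($p$, $t^2$, $2\sqrt{kp}$, $2\sqrt{p}\,t$) can be absorbed into $O(\sqrt{np + n\log(1/\delta)})$ using $k \le n$ and $\sqrt{kp} \le \sqrt{np}$.

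For the lower bound on $\min_{|S|=k}\lambda_{\min}(X_S^\top X_S)$, the key observation is the additive decomposition $X^\top X = X_S^\top X_S + X_{S^c}^\top X_{S^c}$, which via Weyl's inequality gives $\lambda_{\min}(X_S^\top X_S) \ge \lambda_{\min}(X^\top X) - \lambda_{\max}(X_{S^c}^\top X_{S^c})$. Taking a minimum over $S$ converts the right-hand side into $\lambda_{\min}(X^\top X) - \max_{|S^c|=n-k}\lambda_{\max}(X_{S^c}^\top X_{S^c})$; the second piece is precisely the quantity controlled in the first part with $n-k$ substituted for $k$, producing the $(n-k)(1 + 3e\sqrt{6\log(en/(n-k))})$ term. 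Combining this with the Wishart lower bound $\lambda_{\min}(X^\top X) \ge n - O(\sqrt{np} + \sqrt{n\log(1/\delta)})$ from Lemma~\ref{lem:ss} gives the claimed inequality.

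The most interesting step is the Weyl's inequality reduction in the second part: it converts a min-over-subsets of a min-eigenvalue (a bilateral quantity that is awkward to attack directly by a net argument, since one cannot easily union bound over all directions uniformly over all subsets) into a global min-eigenvalue minus a max-over-subsets of a max-eigenvalue, both of which are one-sided and amenable to standard concentration plus union bound. The main technical nuisance is book-keeping the constants, in particular the $3e\sqrt{6}$ factor, all the way through the squaring and the union bound; a loose estimate on $\binom{n}{k}$ or a suboptimal choice of $t$ inflates the constant, so the argument should keep the leading $\sqrt{k\log(en/k)}$ term clean and dump everything else into the additive $O(\sqrt{np + n\log(1/\delta)})$ remainder.
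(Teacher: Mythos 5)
The paper never proves this lemma --- it is quoted verbatim from \citet{bhatia2015robust}, whose argument bounds the \emph{expectation} of $\max_{|S|=k}\lambda_{\max}(X_S^TX_S)$ by reducing $\sup_{\|v\|=1}\max_{|S|=k}\sum_{i\in S}\langle x_i,v\rangle^2$ to expected sums of top order statistics of Gaussians (that computation is where the constant $3e\sqrt{6}$ comes from), and then obtains the additive $O(\sqrt{np+n\log(1/\delta)})$ deviation from Gaussian concentration of the $1$-Lipschitz map $X\mapsto\max_{|S|=k}\sigma_{\max}(X_S)$. Your Weyl-inequality reduction for the lower bound, $\lambda_{\min}(X_S^TX_S)\ge\lambda_{\min}(X^TX)-\max_{|T|=n-k}\lambda_{\max}(X_T^TX_T)$, is correct and is exactly the reduction used there.

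The gap is in your upper bound. The union bound over $\binom{n}{k}$ subsets forces $t^2=2k\log(en/k)+2\log(1/\delta)$, and the expansion of $(\sqrt{k}+\sqrt{p}+t)^2$ contains the term $t^2$ itself, which is of order $k\log(en/k)$. You list $t^2$ among the terms ``absorbed into $O(\sqrt{np+n\log(1/\delta)})$,'' but $k\log(en/k)$ can be $\Theta(n)$ (take $k=n/e$), which is not $O(\sqrt{np})$ when $p\ll n$; nor is it $O(k\sqrt{\log(en/k)})$ once $\log(en/k)$ is large. Concretely, with $p=1$ and $k=\sqrt{n}$ your route yields a leading term of order $\sqrt{n}\,\log n$, while the lemma claims $O(\sqrt{n\log n})+O(\sqrt{n\log(1/\delta)})$. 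This is not fixable by a sharper choice of $t$ or a tighter count of subsets: any argument that controls each subset separately and union-bounds must pay $\log\binom{n}{k}\approx k\log(en/k)$ additively after squaring, so it can only deliver a leading term $k\cdot O(\log(en/k))$ --- a full power of $\sqrt{\log(en/k)}$ worse than advertised. (Indeed, for $p=1$ the quantity in question is the sum of the top $k$ squared Gaussians, whose expectation genuinely scales as $k\log(en/k)$; reaching the stated $k\sqrt{\log(en/k)}$ leading term requires the expectation-of-the-supremum / order-statistics route of the original reference, not a per-subset tail bound.)
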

\begin{lemma}[Proposition 2.1, \citet{vershynin2012close}]
\label{lem:cov_vershynin}
Consider independent random vectors $X_1, \dots X_n$ in $\mathbb{R}^p$, which have sub-gaussian distribution with parameter $L$: $\mathbb{P}\left(|\left\langle X, u\right\rangle| \geq t\right) \leq e^{-t^2/L^2}, \text{ for } t > 0, u \in \mathbb{S}^{p-1}.$ 
Then for any $\delta > 0$ with probability at least $1-\delta$ one has: 
\[
\Big|\Big|\frac{1}{n}\sum_{i = 1}^nX_iX_i^T - \mathbb{E}[X_iX_i^T]\Big|\Big|_2 \leq 4L\sqrt{\frac{p\log{\frac{2}{\delta}}}{n}}.
\]
\end{lemma}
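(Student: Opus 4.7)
The plan is to establish this operator-norm bound by the classical route: convert the norm into a supremum of quadratic forms over $\mathbb{S}^{p-1}$, replace that supremum by a maximum over a finite $\varepsilon$-net, and then use Bernstein's inequality for sub-exponential sums pointwise on the net. The sub-gaussian hypothesis enters precisely in making each quadratic form $u^T (X_i X_i^T) u = \langle X_i,u\rangle^2$ sub-exponential.

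Write $M := \frac{1}{n}\sum_{i=1}^n X_iX_i^T - \mathbb{E}[X_iX_i^T]$. Since $M$ is symmetric, I would start from $\|M\|_2 = \sup_{u\in\mathbb{S}^{p-1}} |u^T M u|$ and invoke the standard fact that for a $1/4$-net $\mathcal{N}\subset\mathbb{S}^{p-1}$ one has $\|M\|_2 \leq 2\max_{u\in\mathcal{N}}|u^T M u|$, with $|\mathcal{N}| \leq 9^p$ by a volumetric covering argument. Fixing $u\in\mathcal{N}$, the centred variables $Z_i(u) := \langle X_i,u\rangle^2 - \mathbb{E}\langle X_i,u\rangle^2$ are sub-exponential with $\psi_1$-parameter of order $L^2$: the tail $\mathbb{P}(|\langle X_i,u\rangle|\geq t)\leq e^{-t^2/L^2}$ translates into $\mathbb{P}(\langle X_i,u\rangle^2 \geq s)\leq e^{-s/L^2}$, which is precisely the sub-exponential tail at scale $L^2$.

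Bernstein's inequality for sums of independent centred sub-exponential variables then gives, for each fixed $u$,
\begin{equation*}
\mathbb{P}\!\left(\left|\tfrac{1}{n}\sum_{i=1}^n Z_i(u)\right| \geq t\right) \leq 2\exp\!\left(-c\, n\, \min\{t^2/L^4,\; t/L^2\}\right)
\end{equation*}
for an absolute constant $c>0$. Taking a union bound over the $9^p$ elements of $\mathcal{N}$ and choosing $t$ of order $L^2\sqrt{p\log(2/\delta)/n}$ (with the leading constant chosen large enough to absorb $\log 9$) makes the per-point failure probability at most $\delta\cdot 9^{-p}$; combining with the factor of $2$ from the net discretization then yields the claimed bound $\|M\|_2 \leq 4L\sqrt{p\log(2/\delta)/n}$ with probability at least $1-\delta$.

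The main obstacle will be pinning down the explicit constant $4$ in the conclusion. Getting \emph{some} bound of this shape from the $\varepsilon$-net plus Bernstein template is routine, but matching the exact coefficient requires a sharp form of Bernstein, a sufficiently fine net (perhaps a $1/8$-net rather than a $1/4$-net, to tighten the post-discretization factor), and careful bookkeeping when absorbing $\log 9$ into the leading constant. A secondary point to check is that we are in the regime where the quadratic branch $t^2/L^4$ of Bernstein controls the failure probability; this is what gives the $\sqrt{p/n}$-type rate rather than a degraded $p/n$ rate, and it holds whenever $n \gtrsim p\log(1/\delta)$, which is the interesting regime for the application.
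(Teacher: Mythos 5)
The paper does not prove this lemma at all: it is imported verbatim as Proposition~2.1 of \citet{vershynin2012close} and used as a black box, so there is no in-paper argument to compare against. Your net-plus-Bernstein sketch is exactly the standard proof underlying that citation, and it is sound in outline: the reduction $\|M\|_2 \le 2\max_{u\in\mathcal N}|u^TMu|$ over a $1/4$-net of cardinality at most $9^p$, the observation that $\langle X_i,u\rangle^2$ is sub-exponential at scale $L^2$, Bernstein on the quadratic branch, and a union bound absorbing $\log 9$ into the constant. One substantive caveat: the bound your argument actually produces is proportional to $L^2\sqrt{p\log(2/\delta)/n}$, not $L\sqrt{p\log(2/\delta)/n}$ as displayed in the lemma. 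By homogeneity (rescaling $X\mapsto cX$ multiplies $L$ by $c$ but the covariance deviation by $c^2$) the correct dependence must be quadratic in $L$, so the exponent~$1$ in the paper's transcription is a normalization slip in the cited statement rather than something you could recover; you should not expect your Bernstein bound, however carefully you tune the net resolution and the constant in front of $t$, to close that gap. Your final remark about needing the quadratic (rather than linear) branch of Bernstein to dominate, which holds once $n\gtrsim p\log(1/\delta)$, is also the right thing to check and is satisfied in the regime where the lemma is invoked.
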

\begin{lemma}[\citet{raskutti2010restricted, zhou2009restricted}]
\label{lem:re_tight}
Let $X\in\mathbb{R}^{n \times p}$ be a matrix with rows $\{\x_i\}_{i = 1}^n$ sampled from a Gaussian distribution with mean $0$ and covariance $\Sigma$. Then $X$ satisfies the following restricted eigenvalue (RE) property with probability at least $1-1/p^{10}$: 
\[
\forall \uu, \quad \frac{\|X\uu\|}{\sqrt{n}} \geq \|\Sigma^{1/2}\uu\|_2 - \sqrt{\frac{\rho(\Sigma)\log{p}}{n}}\|\uu\|_1,
\]
where $\rho(\Sigma) = \max_{i}\Sigma_{ii}$ is a constant that depends on $\Sigma$.
\end{lemma}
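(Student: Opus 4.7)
The plan is to adapt the classical Gordon/Gaussian-width argument for restricted eigenvalues of Gaussian designs. First I would reduce to a standardized matrix: writing $X \stackrel{d}{=} Z\Sigma^{1/2}$ with $Z \in \mathbb{R}^{n\times p}$ having i.i.d.\ $\mathcal{N}(0,1)$ entries, we have $\|X\uu\|_2 = \|Z\Sigma^{1/2}\uu\|_2$. Since both sides of the target inequality are positively $1$-homogeneous in $\uu$, it suffices to prove, uniformly over $r \ge 0$,
\begin{equation*}
\inf_{\uu \in T_r}\|Z\Sigma^{1/2}\uu\|_2 \;\ge\; \sqrt{n} - C\,r\sqrt{\rho(\Sigma)\log p}, \qquad T_r := \{\uu : \|\Sigma^{1/2}\uu\|_2 = 1,\ \|\uu\|_1 \le r\},
\end{equation*}
which, after dividing by $\sqrt{n}$ and rearranging, is exactly the stated bound for every $\uu \in T_r$.

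Next I would apply Gordon's Gaussian min--max comparison inequality to the spherical index set $\Sigma^{1/2}T_r$, obtaining
\begin{equation*}
\mathbb{E}\Bigl[\inf_{\uu \in T_r}\|Z\Sigma^{1/2}\uu\|_2\Bigr] \;\ge\; \sqrt{n} - \mathbb{E}\Bigl[\sup_{\uu \in T_r}\langle \mathbf{g},\Sigma^{1/2}\uu\rangle\Bigr],
\end{equation*}
where $\mathbf{g}\sim \mathcal{N}(0,I_p)$. The width term is controlled by Hölder: $\mathbb{E}\sup_{\uu\in T_r}\langle \Sigma^{1/2}\mathbf{g},\uu\rangle \le r\,\mathbb{E}\|\Sigma^{1/2}\mathbf{g}\|_\infty$, and since $(\Sigma^{1/2}\mathbf{g})_j$ is centered Gaussian with variance $\Sigma_{jj} \le \rho(\Sigma)$, a standard max-of-Gaussians estimate yields $\mathbb{E}\|\Sigma^{1/2}\mathbf{g}\|_\infty \le c\sqrt{\rho(\Sigma)\log p}$. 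This produces exactly the advertised slack at scale $r$.

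To upgrade from expectation to high probability, I would use Gaussian concentration. The map $Z\mapsto \inf_{\uu\in T_r}\|Z\Sigma^{1/2}\uu\|_2$ is $1$-Lipschitz in $Z$ (Frobenius norm), because every $\Sigma^{1/2}\uu$ with $\uu\in T_r$ has unit Euclidean length; Borell--TIS thus gives a sub-Gaussian tail with variance $1$ around the mean. To make the inequality hold simultaneously for all $\uu$, I would perform a dyadic peeling over geometric scales $r_k = 2^k r_0$: at each scale the failure probability is $\exp(-\Omega(\log p))$, and summing over the $O(\log p)$ relevant scales (scales beyond $r \asymp \sqrt{n/(\rho(\Sigma)\log p)}$ make the RHS vacuously non-positive) keeps the cumulative failure probability below $1/p^{10}$ after choosing the concentration radius $\asymp\sqrt{\log p}$, while only inflating the absolute constant $C$.

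The main obstacle will be executing the peeling/homogenization cleanly so that the single-scale bound at $r_k$ extends to every $\uu$ in the shell $T_{r_k}\setminus T_{r_{k-1}}$ with only a constant multiplicative loss, and verifying that the Gordon comparison genuinely applies in the non-isotropic setting. The subtle point is that after the $\Sigma^{1/2}$-whitening the index set is truly a subset of the unit Euclidean sphere (so the leading $\sqrt{n}$ term is sharp and independent of $\Sigma$), while the $\ell_1$ constraint $\|\uu\|_1\le r$ morphs into a Gaussian width weighted by the diagonal entries of $\Sigma$ — and it is precisely this weighting that yields the $\rho(\Sigma)$ factor, rather than some cruder $\|\Sigma\|$-type dependence.
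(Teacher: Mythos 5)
The paper never proves this lemma: it is imported verbatim (up to constants) from \citet{raskutti2010restricted, zhou2009restricted}, so there is no in-paper argument to compare against. Your outline is essentially the proof given in those cited sources: whiten via $X \stackrel{d}{=} Z\Sigma^{1/2}$, apply Gordon's comparison to the spherical set $\Sigma^{1/2}T_r$, bound the Gaussian width by $r\,\mathbb{E}\|\Sigma^{1/2}\mathbf{g}\|_\infty \lesssim r\sqrt{\rho(\Sigma)\log p}$, concentrate via Borell--TIS using the $1$-Lipschitz property of the infimum, and peel over dyadic $\ell_1$-radii — this is exactly the Raskutti--Wainwright--Yu argument, and it is sound. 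Two caveats worth noting. First, the conclusion you obtain carries absolute constants (a deficit $\sqrt{n}-\mathbb{E}\|\mathbf{g}_n\|_2=O(1)$ and a constant $C$ on the width/concentration terms); the lemma as stated in the paper has constant $1$ on both terms, but this is itself a loose restatement — the original theorem reads $\tfrac14\|\Sigma^{1/2}\uu\|_2 - 9\sqrt{\rho(\Sigma)}\sqrt{\log p/n}\,\|\uu\|_1$ — and the downstream use in Theorem~\ref{thm:real_hd} only needs the constant-factor version, so your "inflating $C$" is the right resolution; your observation that $\|\Sigma^{1/2}\uu\|_2^2 \le \rho(\Sigma)\|\uu\|_1^2$ forces $r \ge \rho(\Sigma)^{-1/2}$ is exactly what lets the additive concentration slack be absorbed into the $r$-dependent term. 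Second, in the peeling step the number of dyadic shells is $O(\log(n/\log p))$, not $O(\log p)$, so you should either budget the per-scale failure probability as $p^{-c}$ with $c$ large enough relative to $\log_p n$ (harmless under the paper's regime $n \le e^{cp}$) or, as in the original reference, take the per-scale deviation radius proportional to $\sqrt{n}$ so the failure probability is $e^{-\Omega(n)}$ per scale and the union is immediate; this is bookkeeping, not a gap in the approach.
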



\section{Experiments}
\label{sec:exps}
\begin{figure}[t]
    \centering
    \includegraphics[width=0.245\textwidth]{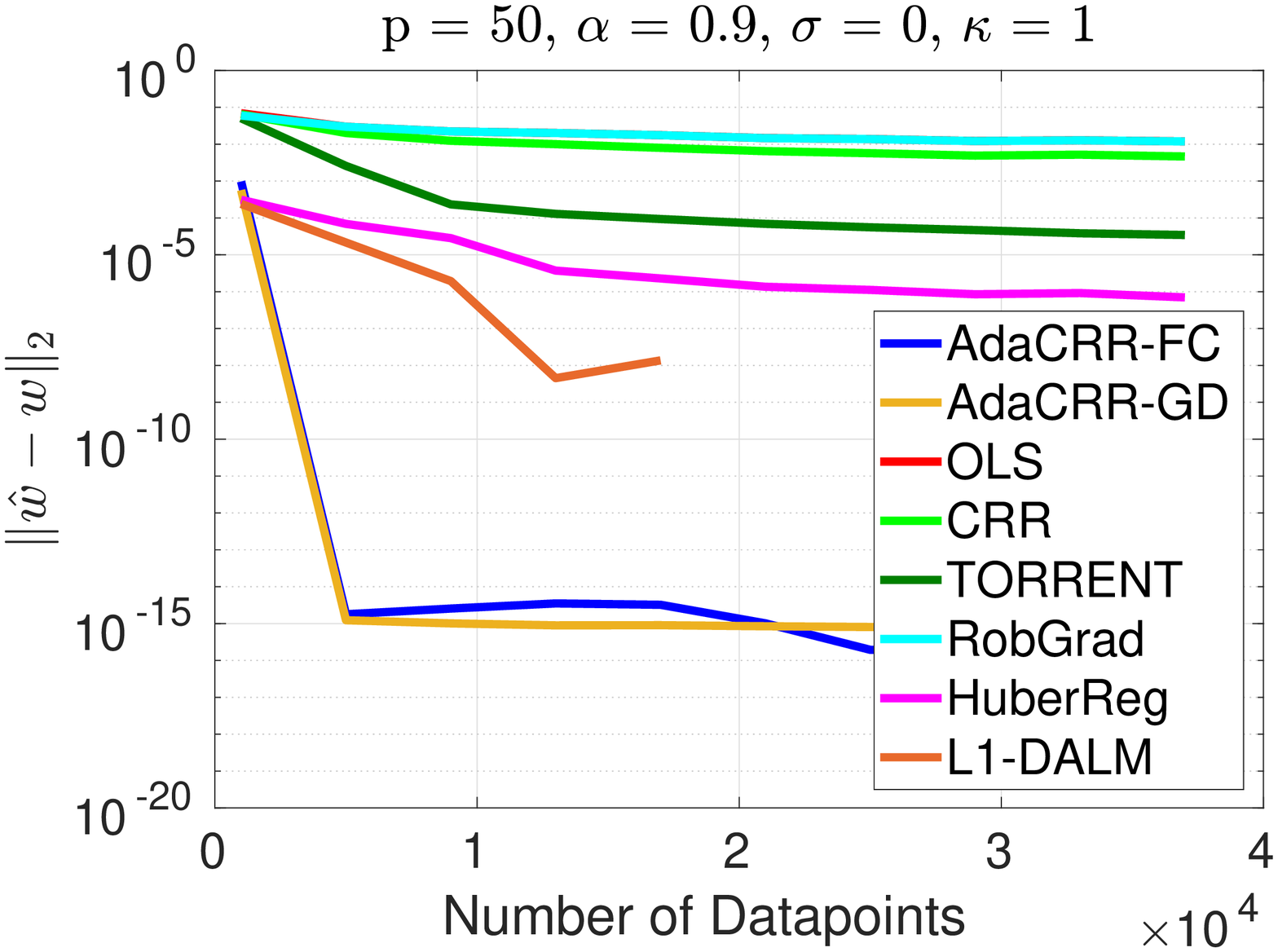}
    \includegraphics[width=0.245\textwidth]{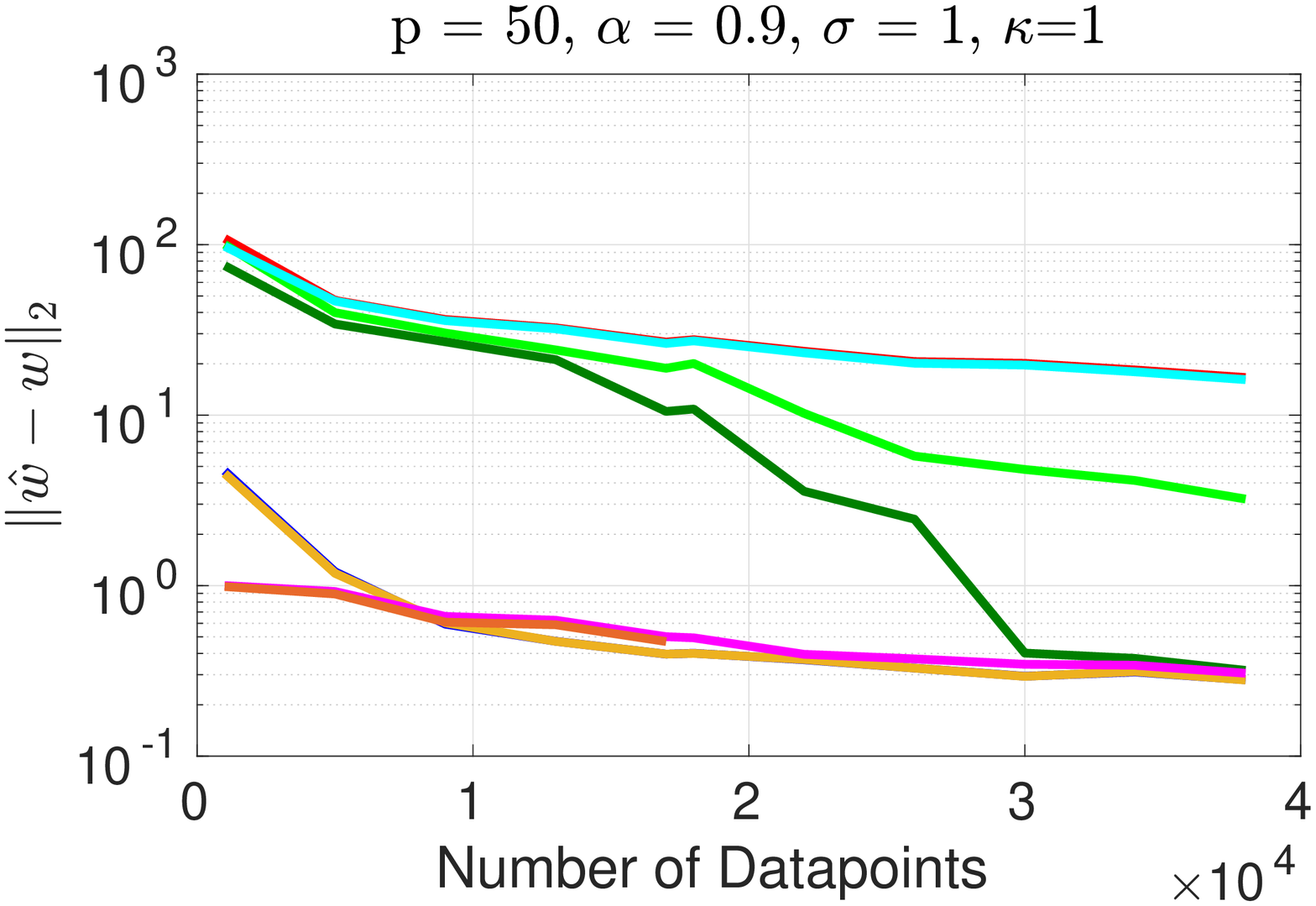}
    \includegraphics[width=0.245\textwidth]{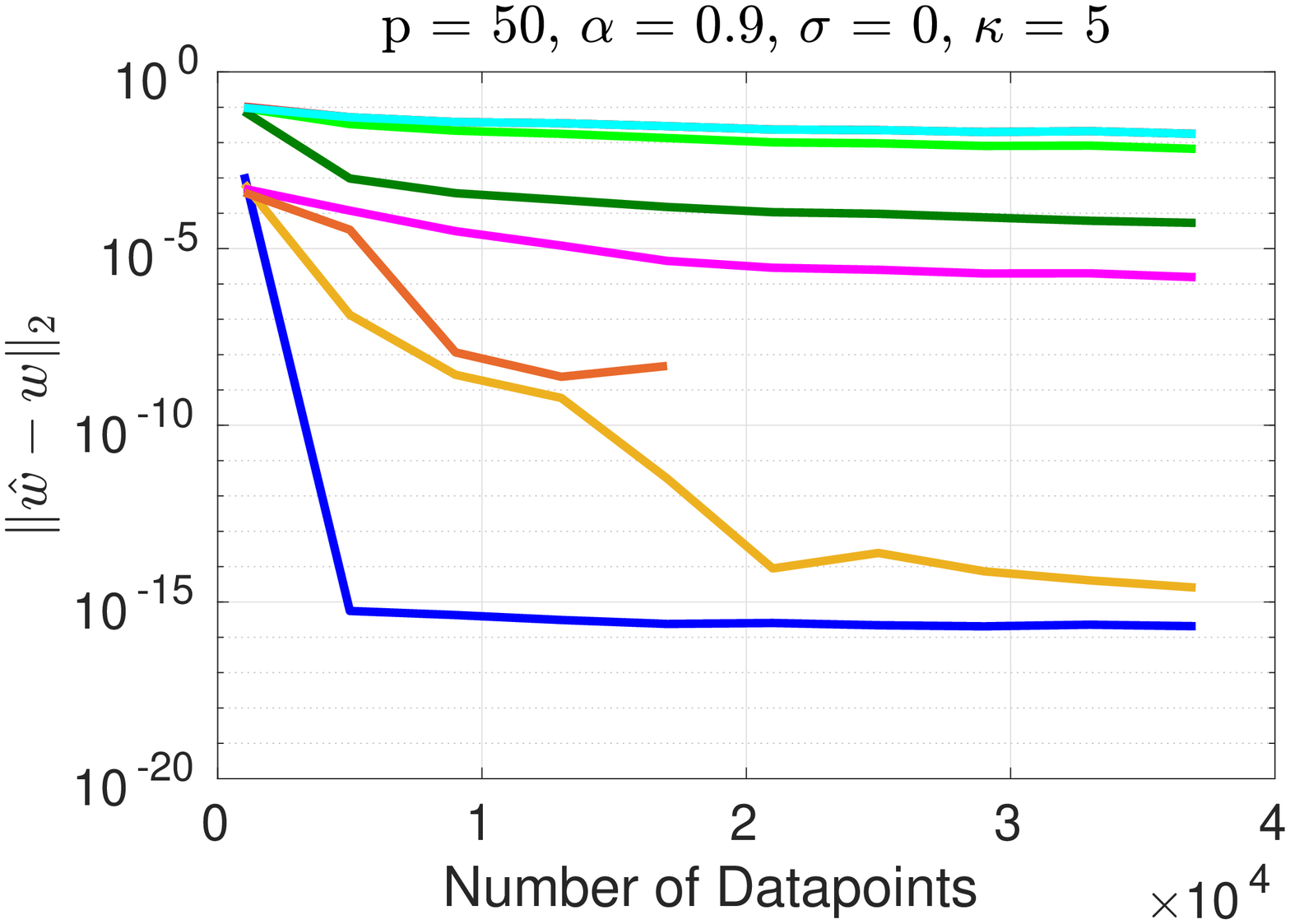}
    \includegraphics[width=0.245\textwidth]{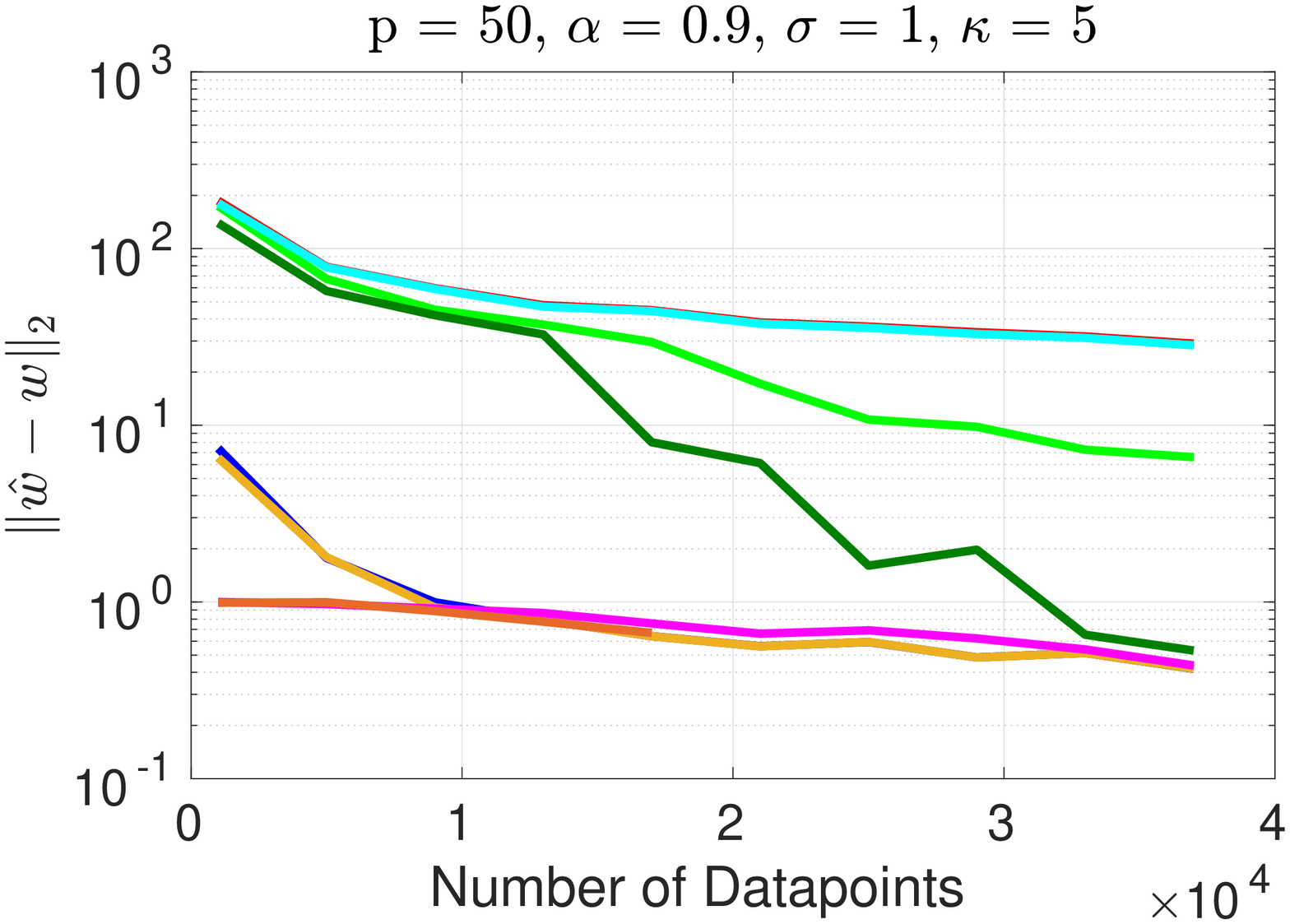}
    \caption{Performance of various estimators as we vary $n$ in the robust regression setup. In the noiseless setting $\sigma = 0$, \alg consistently outperforms baseline methods while in the setting with $\sigma =1$, Huber regression is able to compete with \alg in terms of recovery guarantees.}
    \label{fig:n_exp}
\end{figure}

\begin{figure}[t]
    \centering
     \includegraphics[width=0.245\textwidth]{plots/alpha_exp_noiseless.eps}
    \includegraphics[width=0.245\textwidth]{plots/alpha_exp_noisy.eps}
     \includegraphics[width=0.245\textwidth]{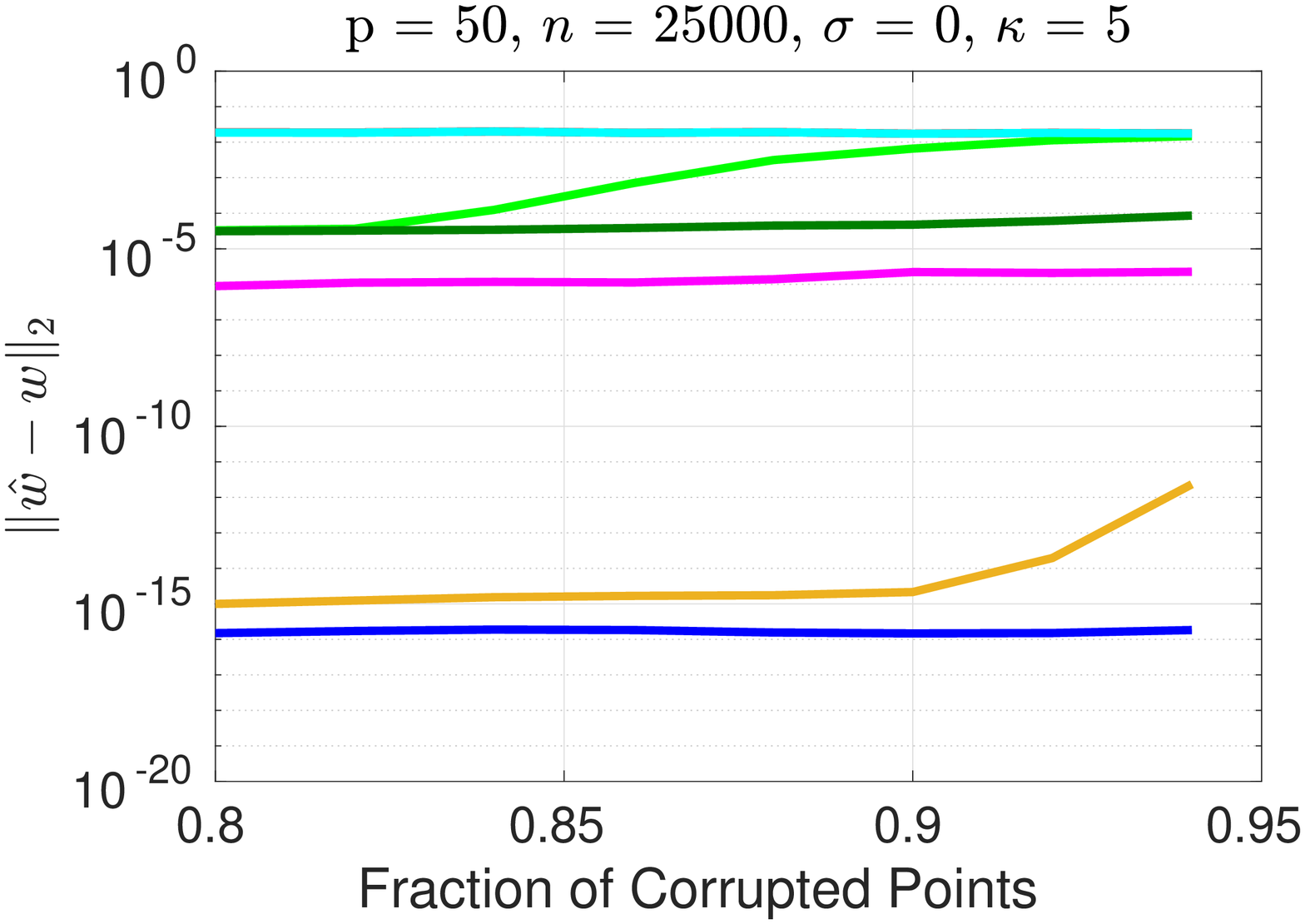}
    \includegraphics[width=0.245\textwidth]{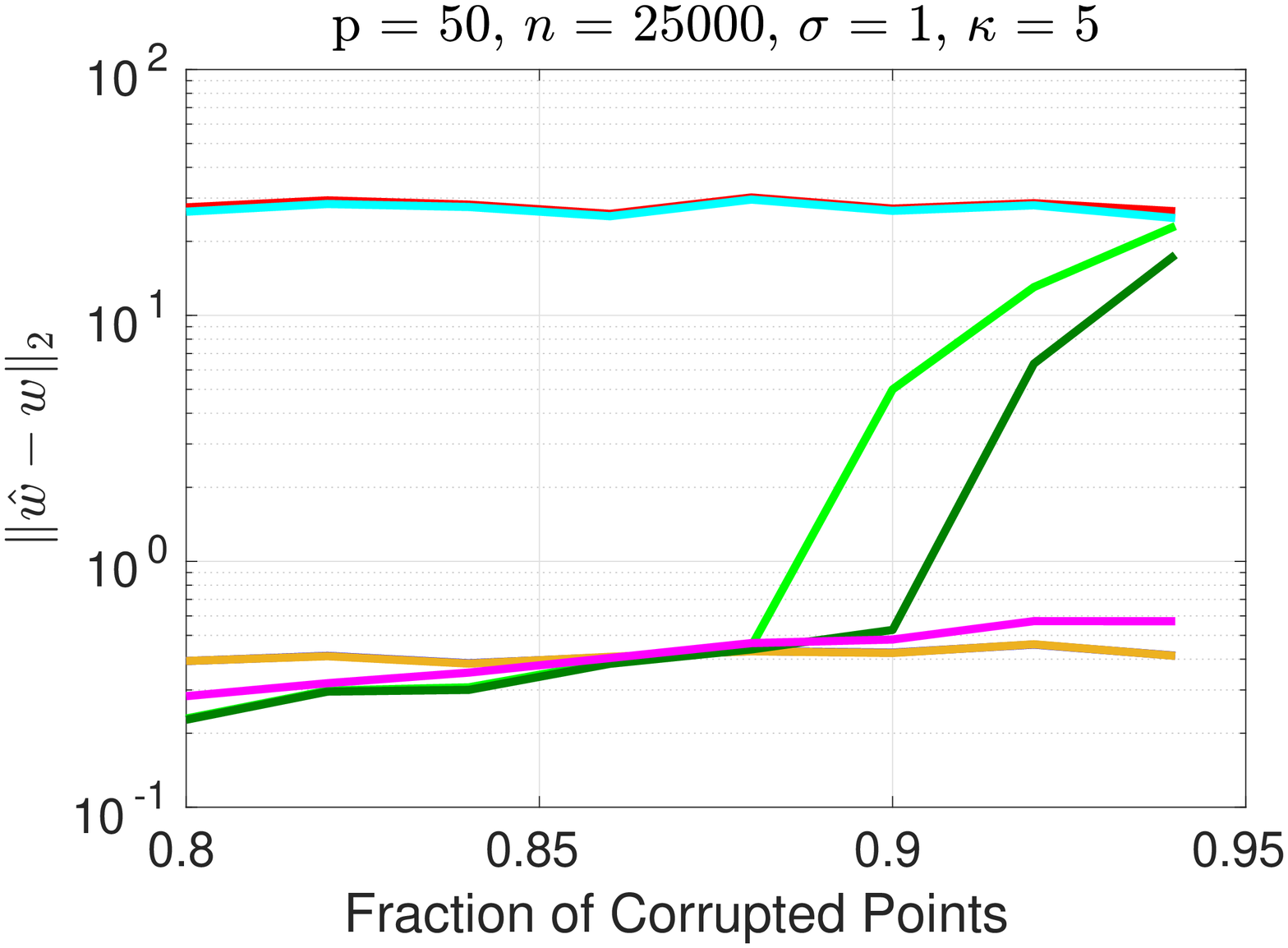}
    \caption{Performance of various estimators as we vary fraction of corruptions $\alpha$ in the robust regression setting. As stated in Theorem~\ref{thm:rr}, \alg is able to tolerate $1-o(1)$ fraction of outliers while most other competing methods tend to have a smaller breakdown point as compared to \alg.}
    \label{fig:alpha_exp}
\end{figure}
In this section, we compare the performance of \alg with baselines, for the problems of robust regression  as well as heavy-tailed regression.

\subsection{Robust Regression}\label{sec:exp_rr}
In this section, we focus on experiments related to the robust regression problem described in Section~\ref{sec:setup}. We begin by describing the problem setup as well as the baseline algorithms before proceeding to a discussion of the experimental findings.

\paragraph{Experiment Setup.} For all the experiments, the feature vectors $\x \sim \mathcal{N}(0, \Sigma)$. For experiments with condition number $\kappa(\Sigma) = 1$, $\Sigma$ was set to identity; whenever the condition number $\kappa(\Sigma) > 1$, $\Sigma$ was chosen to be a diagonal matrix with entries uniformly sampled from $(0, 1)$ and rescaled to ensure that the maximum and the minimum entry are $1$ and $1/\kappa$ respectively. Further for any value of $\alpha > 0.5$, the corruption vector when $\sigma > 0$ was set as follows: a random set of $n/4$ corruptions were set to $1000$, another $n/4$ were set to $\sqrt{1000}$ and the remaining were uniformly sampled from $(0, 10)$. When $\sigma = 0$, we set the corruption vector as follows: a random set of $n/4$ corruptions were set to $1$, another $n/4$ were set to $1/\sqrt{n}$ and the remaining were set to $1/n$. Each experiment was repeated for 5 runs and we report the mean performance of the methods across these runs.

\paragraph{\alg.} We use gradient descent to solve the least squares optimization problem that arises in the update step of $\w$ in \alg-FC, and take $5$ gradient descent steps. Since the least squares loss is strongly convex and smooth, a small number of GD steps is enough to take us close to the optimal solution.

\paragraph{Baseline Algorithms.} We compare the two variants of our proposed estimator \alg-FC and \alg-GD with various baseline alogrithms including a) Ordinary Least Squares (OLS) estimator, b) \tor estimator \citep{bhatia2015robust}, c) CRR estimator \citep{bhatia2017consistent}, d) Robust Gradient estimator (\rob) \citep{prasad2018robust} using the mean estimator of \citet{lai2016agnostic} to robustly estimate the gradients, e) Huber regression and f) $\ell_1$-regression (L1-DALM). For \tor, CRR and \rob, we use the implementations provided by the authors. For Huber Regression we compared the matlab in-built implementation, which uses an iteratively rewieghted least squares solver (IRLS), with the library by \citet{msmatlab}, which has various solvers for Huber Regression (\emph{e.g.,} sub-gradient solver, L-BFGS solver), and found the L-BFGS solver to be more stable in its recovery properties than the other techniques. For $\ell_1$ regression, we use the DALM solver by \cite{yang2013fast}; for a detailed evaluation of various solvers $\ell_1$-regression  we refer the reader to \cite{bhatia2015robust}. The hyperparameters of the the baseline algorithms were tuned using a \emph{fine grid-search} while those for \alg were \emph{fixed} to a default setting: a) the hyper-parameters $a, \gamma$ in \adaht are set  as $1/18$, $4$ respectively, b) $\beta$ in the interval length computation is set to $0.98$, d) $\hat{d}_0$ - the upper bound of $\|\dw_0\|_2$ - is estimated using the techniques described in Appendix~\ref{sec:aux_additional}, c) the upper bound of $\sigma$ is set as $\hat{\sigma} = 2\sigma$,  and d) the interval length was computed as $\hat{I}_t = 3\sqrt{2\hat{\sigma}^2 + 2\beta^{2(t-2)}\hat{d}_0^2}$. Note that this expression differs from the expression in Equation~\eqref{eqn:dist_ub} in the leading constants. This is because the expression in Equation~\eqref{eqn:dist_ub} is an optimistic estimate of the ``optimal'' interval length, which ensures none of the un-corrupted points are left out. In practice, one can use smaller interval lengths and still ensure this property holds.

\paragraph{Recovery guarantees.} Figures~\ref{fig:n_exp} and~\ref{fig:alpha_exp} show the performance of various estimators for the above designed setup where we measure the performance in terms of the estimation error $\|\hat{w}-w^* \|_2$.\\
Figure~\ref{fig:n_exp} exhibits this performance as we vary the total number of datapoints, $n$, keeping other parameters fixed. a) In the noiseless setting, $\sigma = 0$, we obser that both \alg-FC and \alg-GD are consistently able to achieve much lower error rate as compared to other baseline methods, even when they were run to convergence. The L1-DALM method could not be scaled to more than 20000 datapoints since it required the computation of a $n\times n$ matrix which lead to memory errors. b) In the noisy setting, $\sigma = 1$, the recovery error of Huber regression method was competitive with those of \alg while all other methods typically had worse performance in comparison. \\
Figure~\ref{fig:alpha_exp} shows the recover error with variation in the level of corruption $\alpha$, keeping other parameters fixed. Both in the noisy and noiseless settings, \alg is able to tolerate much higher levels of corruption as compared to other baselines, with huber regression becoming competitive in the setting with $\sigma=1$. This is in accordance with the theoretical guarantees of \alg which show that it can tolerate a level of corruption $1-o(1)$.

\begin{figure}[t]
    \centering
    \includegraphics[scale=0.22]{plots/time_noisy.eps}
    \includegraphics[scale=0.22]{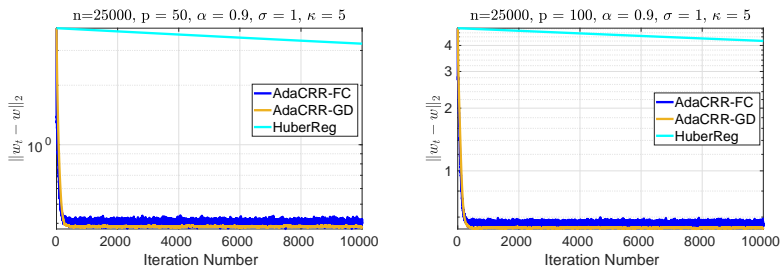}
    \caption{Error vs. iteration number for robust regression. Corruptions are sampled uniformly from $[0, 100]$. Both the algorithms are intialized at the same point. \alg can be upto 100x faster as compared to Huber regression while achieving much better recovery errors. }
    \label{fig:time}
\end{figure}

\paragraph{Computational Efficiency.} Figure~\ref{fig:time} compares the computational efficiency of \alg with Huber regression, which had competitive recovery error in a few settings. It shows the variation in recovery error $\|\hat{w} -w^* \|_2$ with the number of iteration of the algorithm for $d=50,\ 100$. Each iteration of \alg-GD is a simple matrix-vector multiplication with cost $O(nd)$ which is atleast that incurred by the LBFGS solver for huber regression. \alg-GD and \alg-FC can be upto 100x faster as compared to stable solvers for huber regression and are in accordance with the theoretically proven linear convergence for the optimization procedure.

\begin{figure}[t]
    \centering
    \includegraphics[scale=0.2]{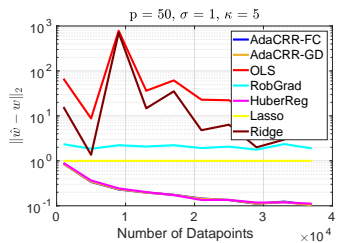}
    \caption{Performance of various estimators on linear regression with cauchy noise. Both \alg and Huber regression are able to offer consistent estimators for this regression problem where the noise variables do not have a well-defined first moment. As before, \alg exhibits linear convergence to the optima and can be orders of magnitude faster than Huber regression.}
    \label{fig:cauchy_noise}
\end{figure}

\subsection{Heavy-tailed Regression}\label{sec:exp_ht}
In this section, we focus on experiments related to the heavy-tailed regression problem described in Section~\ref{sec:setup} with an emphasis on Cauchy regression, wherein the noise variables $\epsilon_i$ are sampled i.i.d. from a Cauchy distribution.

\paragraph{Experiment Setup.} For all the experiments, the feature vectors $x \sim \mathcal{N}(0, \Sigma)$ where the $\Sigma$ matrix was set similar to the robust regression experiments in Section~\ref{sec:exp_rr}. We set the  scale parameter ($\sigma$) of the Cauchy distribution to $ 1$. Each experiment was repeated for 5 runs and we report the mean performance of the methods across these runs.

\paragraph{Baseline Algorithms.} We compare the two variants of our proposed estimator \alg-FC and \alg-GD with various baseline alogrithms including a) Ordinary Least Squares (OLS) estimator, b) Robust Gradient estimator (\rob) \citep{prasad2018robust} which uses the median-of-means estimator for estimating mean gradient, c) Huber regression d) Sparse regression (\textsc{Lasso})~\citep{tibshirani1996regression}, e) Ridge Regression and f) $\ell_1$-regression (L1-DALM). For \rob, we use the implementations provided by the authors. The hyperparameters of the the baseline algorithms were tuned using a fine grid-search while those for \alg were fixed to a default setting described above, with $\rho = 0.3$.

\paragraph{Recovery guarantees.} Figure~\ref{fig:cauchy_noise} shows the variation of parameter recovery error $\|\hat{w}-w^*\|_2$ as the number of data points $n$ are varied for the above setup. \alg-FC, \alg-GD and the huber regression estimator are able to achieve consistent estimation of the parameter vector with the error exhibiting a decreasing trend with increasing datapoints even when the Cauchy noise does not have a well-defined first moment. Other baseline estimators do not exhibit this consistency and tend to flatten out in terms of recovery error. The high fluctuations in the OLS and Ridge estimators can be attributed to the heavy-tailed Cauchy distribution.

\end{document}